\newtheorem{propo}{Proposition}[section]
\newtheorem{corollary}[propo]{Corollary}
\newtheorem{theorem}{Theorem}
\newtheorem{remark}[propo]{Remark}
\newtheorem{lemma}[propo]{Lemma}
\newcommand{\dataset}{{\cal D}}
\newcommand{\fracpartial}[2]{\frac{\partial #1}{\partial  #2}}
\newsavebox{\@brx}
\newcommand{\llangle}[1][]{\savebox{\@brx}{\(\m@th{#1\langle}\)}%
  \mathopen{\copy\@brx\kern-0.5\wd\@brx\usebox{\@brx}}}
\newcommand{\rrangle}[1][]{\savebox{\@brx}{\(\m@th{#1\rangle}\)}%
  \mathclose{\copy\@brx\kern-0.5\wd\@brx\usebox{\@brx}}}
\def\<{\langle}
\def\>{\rangle}
\def\ones{\mathds 1}
\def\id{\mathbf I}
\newcommand{\prob}[1]{{ \mathbb{P}\left\{ #1 \right\} }}
\newcommand{\expect}[1]{\mathbb{E}\left[ #1 \right]}
\def\E{\mathbb E}
\def\reals{\mathbb R}
\def\Z{\mathbb Z}
\def\ind{\mathbb{I}}
\def\diag{{\rm diag}}
\newcommand{\abs}[1]{\left| #1 \right|}
\newcommand{\vertiii}[1]{{\left\vert\kern-0.25ex\left\vert\kern-0.25ex\left\vert #1 
    \right\vert\kern-0.25ex\right\vert\kern-0.25ex\right\vert}}
\newcommand{\triplenorm}[1]{{\vert\kern-0.25ex \vert\kern-0.25ex \vert #1 
    \vert\kern-0.25ex \vert\kern-0.25ex \vert}}
\def\top{T}
\newcommand{\bb}{{\alpha}}
\newcommand{\hTheta}{\widehat{\Theta}}
\newcommand{\hL}{\widehat{L}}
\newcommand{\hN}{\widehat{N}}
\newcommand{\tell}{{\tilde \ell}}
\newcommand{\cL}{{\cal L}}
\newcommand{\cG}{{\cal G}}
\newcommand{\cT}{{\cal T}}
\newcommand{\cA}{{\cal A}}
\newcommand{\bB}{{\mathbb B}}
\newcommand{\calL}{{\cal L}}
\newcommand{\calA}{{\cal A}}
\newcommand{\calP}{{\cal P}}
\newcommand{\calB}{{\cal B}}
\newcommand{\calS}{{\cal S}}
\newcommand{\tV}{\widetilde{V}}
\newcommand{\tk}{\tilde{k}}
\newcommand{\tH}{\tilde{H}}
\newcommand{\tW}{\tilde{W}}
\newcommand{\tX}{\tilde{X}}
\newcommand{\tY}{\tilde{Y}}
\newcommand{\tZ}{\tilde{Z}}
\newcommand{\txi}{{\tilde{\xi}}}
\newcommand{\Fnorm}[1]{\vertiii{#1}_{\rm F}}
\newcommand{\fnorm}[1]{\vertiii{#1}_{\rm F}}
\newcommand{\Lnorm}[1]{\vertiii{#1}_{\rm L}}
\newcommand{\Lnucnorm}[1]{\vertiii{#1}_{\rm L\text{-}nuc}}
\newcommand{\hLnucnorm}[1]{\vertiii{#1}_{\rm{\hat{L}}\text{-}nuc}}
\newcommand{\nucnorm}[1]{\vertiii{#1}_{\rm nuc}}
\newcommand{\opnorm}[1]{\left\| #1 \right\|_2}
\newcommand{\lnorm}[2]{\vertiii{#1}_{{#2}}}
\newcommand{\Iprod}[2]{\llangle #1, #2 \rrangle}
\newcommand{\indc}[1]{\mathbb{I}\left( {#1 } \right) }
\newcommand{\trace}[1]{{\rm tr}\left(#1\right)}
\title{Learning from Comparisons and Choices }
\author{
Sahand Negahban\thanks{Statistics Department, Yale, 
\texttt{sahand.negahban@yale.edu}}, \;
Sewoong Oh\thanks{Department of Industrial and Enterprise Systems Engineering, University of Illinois at Urbana-Champaign, 
\texttt{swoh@illinois.edu}}, \; %
Kiran K. Thekumparampil\thanks{Department of Electrical and Computer Engineering, University of Illinois at Urbana-Champaign, 
\texttt{thekump2@illinois.edu}}, and \;
%University of Illinois at Urbana-Champaign\\
%\texttt{\{swoh,thekump2\}@illinois.edu}\\
%\And
Jiaming Xu\thanks{Krannert School of Management, Purdue University, 
\texttt{xu972@purdue.edu}}
%%Dept. of Statistics\\
%The Wharton School, UPenn\\
%\texttt{jiamingx@wharton.upenn.edu}\\
}
\date{}
\begin{document}
\maketitle

%\vspace{-0.2cm}
\begin{abstract}%   <- trailing '%' for backward compatibility of .sty file
When tracking user-specific online activities, 
each user's preference is revealed  
 in the form of choices and comparisons. 
For example, a user's purchase history is a record of her choices, i.e.~which item was chosen among a subset of offerings. 
A user's preferences can be observed 
either  explicitly as in movie ratings or implicitly as in viewing times of news articles. 
Given such individualized ordinal data in the form of comparisons and choices, 
we address the problem of collaboratively learning  
representations of the users and the items. 
The learned features can be used to predict a user's preference of an unseen item 
to be used in recommendation systems. 
This also allows one to compute similarities 
among users and items to be used for categorization and search. 
Motivated by the empirical successes of the MultiNomial Logit (MNL) model 
in marketing and transportation, and also more recent successes in 
word embedding and crowdsourced image embedding,
 we pose this problem as learning the MNL model parameters that best explain the data. 
We propose a convex relaxation for learning the MNL model, 
and show that it is minimax optimal up to a logarithmic factor 
by comparing its performance to a fundamental lower bound. 
This characterizes the minimax sample complexity of the problem, and  
proves that the proposed estimator cannot be improved upon other than by a logarithmic factor. 
Further, the analysis identifies how the accuracy depends on the topology of sampling via the spectrum of the sampling graph. 
This  provides a guideline for 
designing surveys when one can choose which items are to be compared. 
This is accompanied by numerical simulations on synthetic and real data sets, confirming our theoretical predictions.  
\end{abstract}

%\begin{keywords}
%Collaborative Ranking, Nuclear Norm Minimization, Multi-Nomial Logit Model
%\end{keywords}

\section{Introduction} 
\label{sec:intro} 

Given data on how users compared subsets of items,  
we address the fundamental problem of learning a 
representation of users and items. 
Such data can be observed in the form of choices (e.g.~which item was bought) or 
in the form of comparisons (e.g.~which items are rated higher). 
From such ordinal data on the items, 
we want to find low dimensional representations, which we call (latent) features, 
that explain crucial aspects of the users' choices.  
Once learned, 
these features can be used to predict each user's preference over 
items that the user has not seen yet, 
which can  be used in recommendation systems and revenue management. 
These learned features also provide an embedding of the users and items on the same Euclidean space 
that allows us to directly quantify similarities via distances, that can be used to categorize and cluster. 
These embeddings can reveal the underlying structure of data such as images. 
Such an embedding of a discrete set of objects based on ordinal data has 
recently gained tremendous attraction mainly due to  word embeddings based on 
co-occurrence  data and  their successes in 
numerous downstream natural language processing tasks \cite{mikolov2013distributed}. 

The fundamental question in such a representation learning is:   
what makes one representation better than the others?   
Our guiding principle is that 
a good representation is the one that defines a generative model  that best explains the given data 
in the maximum likelihood sense.
To this end, we focus on a parametric generative model known as 
MultiNomial Logit (MNL) model, widely used and studied in revenue management. 
The MNL model  has a natural interpretation of human choices as 
an outcome of maximizing a utility by agents with noisy perception of the utility, also known as  
{\em random utility model} in \cite{WB02,SPX12}, defined as follows. 
Each user and item has a latent low-dimensional feature $u_i\in\reals^r$ and $v_j\in\reals^r$ respectively.   
The  true utility of an item is the inner product of these two features $\Theta_{ij} \triangleq \Iprod{u_i}{v_j} = \sum_k u_{ik}v_{jk}$. 
The inherent low-rank structure of $\Theta=[\Theta_{ij}]$ 
captures the collaborative nature of the problem, 
where users with similar preferences in the past are  likely to prefer similar items in the future. 

When presented with a set of items, 
a user reveals a noisy ordering of the items sorted according to her perceived utilities of the items, 
each of which is perturbed by an i.i.d.~noise added to the true utility $\Theta_{ij}$. 
The MNL model is a special case where the noise follows the standard Gumbel distribution, and is one of the most popular models in  choice theory for its simplicity and empirical success \cite{McF73,MT00}. 
The MNL model has several important properties,  
making this model realistic in various domains, including marketing 
\cite{GL83}, transportation \cite{McF80,BL85}, biology \cite{BIOLOGY}, sports games \cite{TNK18} and natural language processing \cite{MCCD13}. 
The MNL model 
$(i)$ satisfies the `independence of irrelevant alternatives' in social choice theory \cite{Ray73}; 
$(ii)$ has a maximum likelihood estimator (MLE) which is a convex program in $\Theta$; and 
$(iii)$ has a simple characterization of sequential (random) choices as follows.   
Let $\prob{a>\{b,c,d\}}$ denote the probability $a$ was chosen as the best alternative among the set $\{a,b,c,d\}$. 
Then, the probability that user $i$ reveals a linear order $(a>b>c>d)$ is $\prob{a>\{b,c,d\}}\prob{b>\{c,d\}}\prob{c>d}$, 
where $\prob{a>\{b,c,d\}} = e^{\Theta_{ia}}/(e^{\Theta_{ia}}+e^{\Theta_{ib}}+e^{\Theta_{ic}}+e^{\Theta_{id}})$. 
Essentially the user is modeled as making a sequence of  choices, choosing the best alternative first and then 
making choices on the remaining ones. 
We give the precise definition of the MNL model in Section \ref{sec:model}  for pairwise comparisons and 
in Section \ref{sec:kwise} for higher order comparisons and choices. 
Beyond its success in 
classical applications such as transportation and marketing, 
the MNL model and its variants are being rediscovered and successfully applied 
in more modern applications such as embedding images using crowdsourcing \cite{tamuz2011adaptively} 
and word embedding \cite{mikolov2013distributed}, 
whose connections we make precise in 
Section \ref{sec:discussion}.

Motivated by recent advances in learning low-rank models, e.g.~\cite{negahban2009unified,DPVW14}, we ask the fundamental question of learning the MNL model 
from data on comparisons and choices.  
We provide a general framework using convex relaxations for learning the model. 
As data is collected in various forms on modern social computing systems, 
we consider the following four canonical scenarios: 
\begin{itemize} 
  \item {\em Pairwise comparisons.} 
    The most simple and canonical piece of ordinal data one can collect from a user at a time is a pairwise comparison; 
    given two options, we ask the user which one is better. Such data is prevalent in the real world 
    and is the most popular scenario studied  in ranking literature, e.g.~\cite{shah2014better}.
    However, one significant aspect of the real data that has not been addressed in the literature is irregularities in the sampling. 
    Consider an online seller with various products, say cars and watches. 
     It does not make sense to ask a user to compare a car and a watch; 
     one cannot sample an  outcome of a comparison between a watch and a car. 
    However, knowing a user's preference on cars can help in learning her preference on watches. 
    We want to propose a model and design an inference algorithm that can 
    take into account such  restrictions in sampling.  
    We further want to quantify the gain in using all such data together in inference, as opposed to 
    running inference in each category separately.  
    To this end, we propose a new model for sampling that we call {\em graph sampling}. 
    This model explains such irregularities in the real world data.  
      We propose a novel inference algorithm tailored for the given sampling pattern. 
    Our analysis captures precisely how the accuracy depends on the different topologies of the sampling.
    
  \item {\em Higher order comparisons. } 
    Consider an online market that 
    collects each of its  user's preference as a ranking over a subset of items that is `seen' by the user. 
    Such data can be obtained by directly asking to compare some items, or 
    by indirectly tracking online activities on 
    which items are viewed, how much time is spent on the page or 
    how the user rated the items. 
    However, collecting such comparisons over multiple items might come at a cost. 
    We, therefore, want to quantify the gain in the accuracy of 
    the inference when higher order comparison outcomes are collected. 
    We characterize the optimal trade-off between accuracy and the number of items compared,  
    and show that our proposed algorithm seamlessly generalizes to this setting and also achieves the optimal trade-off.

  \item {\em Customer choices.} 
    One of the most widely applicable data collection scenarios is customer purchase history. 
    Online and offline service providers can track each customer on 
    which subset of items is offered and which item is chosen. 
    Given historical data on such choices on best-out-of-a-subset, 
    we extract features on the users and items that best explains the collected data.

  \item {\em Bundled choices.} 
    Another data collection scenario that is gaining interest 
    recently is bundled choices \cite{bundle1,BKT18}. 
    Typical choice models assume that the willingness to buy an item is independent of what else the user bought. 
    In many cases, however, we make `bundled' purchases: 
    we buy particular ingredients together for one recipe or  we buy two connecting flights. 
    One choice (the first flight) has a significant impact on the other (the connecting flight).  
    In order to optimize the assortment (which flight schedules to offer) for maximum expected revenue, 
    it is crucial to accurately predict the willingness of the consumers to purchase bundled items, 
    based on past history.   
    We propose a model that can  capture such interacting preferences for bundled items (e.g. jeans and shirts),   
    and use this model to extract the features of the items in each category from 
    historical bundled purchase data. 
    Both our inference algorithm and the analyses extend to this setting, achieving the optimal trade-off 
    between  sample size and accuracy. 
\end{itemize} 

\bigskip\noindent
{\bf Contribution.} 
We first study the canonical scenario of pairwise comparisons 
from the MNL model in Section \ref{sec:graphsampling}. 
Our contribution in the modeling is 
a new sampling scenario we call {\em graph sampling} 
that captures how different pairs of items have varying likelihood of being compared together. 
Our algorithmic contribution is 
a convex relaxation with a new regularizer using a variation of 
the standard  nuclear norm tailored for the graph sampling topology. 
Our theoretical contribution is 
in the analysis of the proposed estimator and a matching 
fundamental lower bound (up to a poly-logarithmic factor). 
This $(a)$ characterizes the minimax sample complexity of the problem; 
$(b)$ proves that the proposed estimator cannot be improved upon; 
and $(c)$ identifies  how the accuracy depends on 
the  topology of sampling. This in turn provides a guideline for 
designing surveys when one has a choice on  which pairs are to be compared. 
This is accompanied by experiments on synthetic and real data sets confirming our theoretical predictions.

This framework is extended to  higher order comparisons in 
Section \ref{sec:kwise}. We establish minimax optimality (up to a poly-logarithmic factor) of our estimator and identify the 
fundamental trade-off between accuracy and sample size. When each user provides 
a total linear ordering among $k$ items, we show that the required sample size effectively is reduced by a factor of $k$. 
When the user provides her best choice (as in purchase history) instead of the total linear ordering, 
we extend our framework and establish minimax optimality in Section \ref{sec:customer}. 
We also consider a bundled purchase scenario in Section \ref{sec:bundle}, 
where customers buy pairs of items from each of the two categories. 
We extend our framework and establish minimax optimality under the bundled purchase setting. 
We present experimental results on both synthetic and real-world data sets confirming our theoretical predictions and showing 
the improvement of the proposed approach in predicting users' choices\footnote{Code for our experiments are available at  https://github.com/POLane16/Nucnorm-Ranking.}.

Technically, 
we borrow analysis tools from 
1-bit matrix completion \cite{DPVW14}, matrix completion \cite{NW11}, and restricted strong convexity \cite{negahban2009unified}, 
and  crucially utilize the Random Utility Model (RUM) \cite{Thu27,Mar60,Luce59} 
interpretation (outlined in Section \ref{sec:MNL}) of the MNL model to 
prove both the upper bound and the fundamental limit.  This could be of interest to analyzing more general class of RUMs.

\bigskip\noindent
{\bf Notations.} 
We use $\fnorm{A}$ and $\lnorm{A}{\infty}$%=\max_{i,j}|A_{ij}|$ 
to denote  the 
Frobenius norm and the $\ell_\infty$ norm, 
$\nucnorm{A}=\sum_{i}\sigma_i(A)$ to denote the nuclear norm 
where $\sigma_i(A)$ denotes the $i$-th singular value, 
and $\lnorm{A}{2}=\sigma_1(A)$ for the spectral norm.
We use $\llangle u,v \rrangle = \sum_i u_iv_i$ 
and $\|u\|$ to denote the inner product and the Euclidean norm. 
All ones vector is denoted by $\ones$, $\id$ denotes the identity matrix and 
$\ind{(A)}$ is the indicator function of the event $A$.
The set of the first $N$ integers are denoted by $[N]=\{1,\ldots,N\}$.

%====================================================================================================
\subsection{Related Work} 

{\bf Bradley-Terry and Plackett-Luce models.} 
The simplest form of the MNL model is when all users are sharing the same feature vector such 
that each item is parametrized by a scalar value. 
This is known as Bradley-Terry (BT) model when pairwise comparisons are concerned and 
Plackett-Luce (PL) model when higher order comparisons are concerned. 
This has been proposed and rediscovered several times in the last century  
\cite{Zer29,Thu27,BT55,Luce59,Pla75,McF73,McF80} in the context of 
ranking teams in sports games, ranking items based on surveys, and ranking routes in transportation systems.   
Unlike the general MNL model, maximum likelihood estimator for the BT and PL models are 
naturally convex programs. 
However, learning the BT model has first been addressed in \cite{Ford57} 
where the convergence of the iterative algorithm is analyzed, without explicitly 
 relying on the convexity of the problem. 
A new algorithm based on Majorize-Minimize framework was proposed in \cite{Hun04}. 
First sample complexity of learning BT model was provided in \cite{NOS12} 
where a novel estimator, called Rank Centrality, of the BT parameters was proposed. 
The authors construct a random walk over a graph where the nodes are the items and 
the transition probability is constructed from the comparisons outcomes. 
This spectral approach is proven to achieve a minimax optimal sample complexity. 
This has been a building block for several ranking algorithms, 
which further process the Rank Centrality to get better accuracy on top of it \cite{CS15,JKSO16,jang2017optimal,CFM17}. 
For higher order comparisons, the sample complexity of learning PL model was provided in \cite{HOX14,shah2014better}, 
where the Maximum Likelihood (ML) estimator is shown to achieve the minimax optimality. 
Later, \cite{MG15} made the connection between the spectral approach of Rank Centrality 
and the ML estimator precise by providing a unifying random walk view to the problem. 
This led to a novel Accelerated Spectral Ranking algorithm introduced in 
\cite{APA18}, which not only finds the parameters of the PL model more efficiently in computation, 
but also achieves optimal sample complexity under general sampling graphs. 
Recently, \cite{16BKM} treat the learning problem as solving a noisy linear system, and propose an algorithm that is 
amenable to on-line, distributed and asynchronous variants. 
\cite{vojnovic2016parameter} analyzes a more general class of random utility models 
 known as  Thurstone models, 
 and provide the minimax sample complexity by analyzing the ML estimator.  
Note that the ML estimators for Thurstone models in general are computationally intractable.

\bigskip\noindent
{\bf Generalized BT and PL models.}
As studied in  \cite{RA14}, 
the BT model covers a subset of probabilistic models over comparisons. 
There is a hierarchy of models with increasing complexity and descriptive power. 
One popular extension is the mixture of BT or PL models. 
It is known that any choice model can be approximated  arbitrarily close 
with a mixed PL model with sufficient number of mixture components \cite{MT00}. 
The sample complexity of learning a mixed PL model was analyzed in 
\cite{OS14} where a tensor decomposition for learning a mixture model was proposed and analyzed 
under some separation conditions between the weights of the mixtures.
For a mixture of two PL model, \cite{CKT18} shows identifiability and uniqueness of the mixture weights,  
when all marginal probability over all possible rankings among two items and three items are known. 
In a crowdsourced setting, \cite{chen2013pairwise} models pairwise comparisons using a mixture of PL models consisting of hammer distribution, which reports the true output of a comparison, and spammer distribution, which reports the exact opposite of a comparison. 
A different approach that tackles the problem by learning to  cluster the users based on the pairwise comparisons is proposed in 
\cite{RXR15}.
The MNL model we study in this paper can be thought of as a  generalization of the mixed PL models, 
where each user has her own preference. To make learning feasible, we inherently impose similarities among users 
via a low-rank condition. Note that a mixed PL model with $r$ mixture is a special case of the MNL model with rank $r$, 
where each user's membership is encoded as a $r$-dimensional feature in standard basis.
In the context of collaborative ranking, 
 algorithms for learning the MNL model from  
 pairwise comparisons have been proposed in \cite{PNZSD15}. 
Instead of nuclear norm regularization as we propose in this paper, \cite{PNZSD15} proposes solving a convex relaxation of maximizing the likelihood over matrices with bounded nuclear norm. 
Under the standard assumption of uniformly chosen pairs, 
it is shown that this approach achieves statistically optimal generalization error rate, instead of Frobenius norm error that we analyze.

\bigskip\noindent
{\bf Beyond BT and PL models.}
Modeling choice is an important problem where the ultimate goal is to 
find the right parametric model to capture human choices. 
 \cite{RU16,BGG13} use Markov chains to model choices with the parameters in the transition matrix defining the probability model. Ideal point model \cite{massimino2018you, kazemi2018comparison} assumes that the pairwise comparisons of two items by a user depends on their distance from an ideal item (ideal point) for the user in some metric embedding space of the items. 
Novel nonparametric models have also been proposed to model human choices, for example 
\cite{SBGW16, pananjady2017worst,FJO18} uses strong stochastic transitivity to model pairwise choices 
and \cite{FJS09} uses  distribution over all permutations with sparse support to model higher order choices.
We also note that in the context of (non-collaborative) ranking, \cite{gleich2011rank} has proposed nuclear norm minimization based algorithm when comparisons between all pairs items are modeled as a low-rank skew-symmetric matrix.
Other non-parametric approaches to solving ranking include empirical risk minimization. \cite{clemenccon2005ranking} analyses risk minimization of U-statistics and a more feasible surrogate convex loss minimization to estimate ranking. \cite{katz2017nonparametric} assumes that rating of an item by a user is a Lipschitz function of the user-item pair and analyses a nonparametric collaborative ranking algorithm from partial observation of such ratings.

While we are interested in the (parameters of) full ranking over all items, 
 there have been several recent works which aim to only approximately rank the items, such as retrieving only the top-$m$ items \cite{CS15, JKSO16, jang2017optimal} or partitioning the items into ordered buckets of fixed size \cite{katariya2018adaptive,heckel2018approximate}.
%========================================================================================
\section{Model and Approach for Pairwise Comparisons}
\label{sec:model}
The MultiNomial Logit (MNL) model is one of the most popular models that 
explains how people make choices when given multiple options and 
is widely used in behavioral psychology and revenue management.
For brevity, we focus our discussion on data collected in the form of pairwise comparisons in Sections \ref{sec:model} and \ref{sec:graphsampling}, 
and defer the discussion of the MNL model in its full generality to Sections \ref{sec:kwise} and \ref{sec:bundle} .
We give a precise definition of the model for paired comparisons and provide a novel algorithmic solution to learn this model from samples.

\subsection{MultiNomial Logit (MNL) Model for Pairwise Comparisons}
\label{sec:MNL}
Let $\Theta^*$ be a $d_1 \times d_2$ dimensional matrix capturing preferences of $d_1$ users on $d_2$ items. The probability with which a user, $i \subseteq [d_1]$, when presented with two items $j_1, j_2 \subseteq [d_2]$, prefers item $j_1$ over item $j_2$ is,
\begin{align}
\prob{j_1 > j_2} = \frac{e^{\Theta^*_{ij_1}}}{e^{\Theta^*_{ij_1}} + e^{\Theta^*_{ij_2}}}\,.
\label{eq:defbtl}
\end{align}

This implies that, more preferred items (as per the ordering of $\Theta^*_{ij}$) are more likely to be ranked higher, with the randomness in choices captured by the probabilistic model. 

If we do not impose any further constraints on $\Theta^*$, one entry of $\Theta^*$ is not related in any way to any other entries. 
This implies that one user's preference is completely independent of others' and no efficient learning is possible. 
Each user's preference has to be learned separately. 
On the other hand, in real applications, it is reasonable to say that preferences of users depend only on a handful of factors for example, quality, price, and aesthetics. 
We do not know which features affect users' choices, but we assume that there are  $r$-dimensional latent features for each of the users and items that govern such choices, and that $r\ll d_1,d_2$.  
This assumption mathematically captures the conventional belief that 
 when two people have similar preferences over a subset of items, they tend to have similar tastes on other items as well. 
 Formally, MNL model  assumes that $\Theta^*$ is a rank $r$ matrix with $r \ll d_1, d_2$. 
In this paper, we do not impose a hard constraint on the rank and 
provide general results for matrices of any rank. 
In this case, we identify how the accuracy depends on the rate of decay of the singular values. 

This MNL model has many roots. 
In revenue management, 
this has been proposed as  a special case of Random Utility Model (RUM). 
RUM explains choices that a person makes as the result of maximizing perceived random utilities associated with the set of alternatives presented. In the case of MNL, each decision maker and each alternative are associated with an $r$-dimensional vector, $u_i$ and $v_j$, resulting in a low-rank $\Theta^*$ if $\Theta^*_{ij} = \Iprod{u_i}{v_j}$. The perceived utility of the item $j$ for decision maker $i$ is,
\begin{align}
U_{ij} = \Iprod{u_i}{v_j} + \xi_{ij}\,,
\end{align}
where $\xi_{ij}$'s are i.i.d. random variables following the standard Gumbel distribution.
Different choices of distributions give different variants of RUMs. 
In our analyses, we utilize this RUM interpretation of the MNL model to 
prove a particular concentration in Section \ref{sec:kwise_hessian3_proof}, for example. 
The model in Equation.~\eqref{eq:defbtl} has also been re-discoverd several times in the literature 
\cite{Zer29,Thu27,Luce59,BT55} in several domains. 

\subsection{Low-rank Regularization using Nuclear Norm Minimization}
\label{sec:opt}

Given the low-rank structure of the model, a natural but inefficient approach is to minimize the negative of the log likelihood, $\cL(\cdot)$, regularized by the rank: 
\begin{align}
  \widehat{\Theta} \,\; \in\; \, \arg \min_{\Theta \in \reals^{d_1\times d_2 }} \,-\cL(\Theta) \,+\, \lambda \, {\rm rank}(\Theta),
\end{align} 
for some parameter $\lambda>0$. 
As this rank minimization is a notoriously challenging problem, we instead solve a convex relaxation of it.
Note that the nuclear norm ball is the convex hull of rank-$1$ matrices \cite{RFP10}. 
Analogous to $l_1$-norm in the case of sparse vectors, nuclear norm is a tight convex surrogate for low-rank solutions. We propose the following  nuclear norm regularized optimization problem,
\begin{align}
  \widehat{\Theta} \,\; \in\; \, \arg \min_{\Theta \in \Omega} \,-\cL(\Theta) \,+\, \lambda \, \nucnorm{\Theta},
  \label{eq:nucnorm-opt}
\end{align} 
where $\Omega$ is a convex constraint which takes care of identifiability and Lipschitz smoothness conditions. Nuclear norm regularization has been widely used \cite{RFP10} for rank minimization; however, provable guarantees exist only for quadratic loss functions $\calL(\Theta)$ \cite{CR09,NW11}. Our analyses extend such results to a convex loss, by first proving that $-\cL(\cdot)$ satisfies restricted strong convexity property with high probability.
Similar to how (non-collaborative) rank aggregation has been generalized to any strongly log-concave distribution 
in \cite{shah2014better}, 
our analysis can naturally be extended to a general class of strongly log-concave distributions.
We give the expression for the log likelihood in Equation.~\eqref{eq:defpairlikelihood} for pairwise comparisons.

%--------------------------------------------------------------------------------------------------------------
\section{Learning MNL Model from  Pairwise Comparisons under Graph Sampling}
\label{sec:graphsampling}

\noindent
{\bf Probabilistic model for sampling.} In order to provide performance guarantees on the proposed approach, we need to specify 
how we sample the pairs that are to be compared.  
We provide a novel sampling model, which we call {\em graph sampling} with respect to a weighted  graph $\cG$.  
This naturally generalizes Bernoulli sampling typically studied under matrix completion literature \cite{CR09,KMO10IT,NW11,JNS13}, and 
the resulting analysis captures how the performance depends on the topology of the samples. 
Note that the proposed graph sampling is different from 
deterministic sampling graphs studied in \cite{HOX14, shah2016estimation}. 
This is analytically tractable only in 
the simpler case of estimating the weight vector of the PL model where there is only one user and the ML estimator is a convex program.  
However, such deterministic sampling is notoriously hard to handle for matrix estimation, even in the simpler case of matrix completion \cite{bhojanapalli2014universal}. 
Hence, we introduce a probabilistic model that allows enough flexibility to capture the interesting aspects of sampling biases, i.e.~grouping.

Precisely,
we have  a weighted undirected graph $\cG=([d_2],E,\{P_{j_1,j_2}\}_{(j_1,j_2) \in E})$ 
with $d_2$ nodes, which represent items, a set of edges $E$ and the edge weight $P_{j_1,j_2}$ between nodes $j_1$ and $j_2$. 
The weights can be written in a symmetric matrix $P\in\reals^{d_2\times d_2}$, 
and $P_{j_1,j_2}+P_{j_2,j_1}=2P_{j_1,j_2} $ represent the 
probability with which the pair $(j_1, j_2)$ is chosen for comparison. 
Note that $P_{j,j} = 0\,\;, \forall j \in [d_2]$, $P_{j_1,j_2} = P_{j_2,j_1}$ and $\sum_{j_1,j_2 \in [d_2]} P_{j_1,j_2} = 1$.
We assume we get i.i.d. samples from first choosing a random user among $[d_1]$ users, 
 and then choosing a pair $(j_1,j_2)$ of items at random from $P$, and finally 
 getting a random comparison from the MNL model, i.e.  
the probability with which user $i$ prefers item $j_1$ over item $j_2$ is $\exp{\Theta^*_{ij_1}} / \left( \exp{\Theta^*_{ij_1}} + \exp{\Theta^*_{ij_2}} \right)$. 

One of the most important aspects of real-world data that is captured by this graph sampling model is grouping. 
Consider two groups of items, say, cars and phones. 
It does not make sense to ask an individual to compare a phone with a brand of a car 
(i.e.~direct comparison is not feasible), 
but knowing an individual's preference on cars can help in learning her preference on phones.  
In graph sampling terms, 
we are sampling from a graph 
$\cG$ consisting of  two disjoint cliques: one for cars and another for phones. 
By analyzing such a sampling scenario, we want to characterize the gain in  
using the data from both groups of items together, although there are no inter-group comparisons.

In the preference matrix $\Theta^*$, the values in the set of columns corresponding to each connected component in the sampling graph can be 
arbitrarily shifted together, without changing the pairwise comparisons outcome distributions. 
This is because adding the same constant to those items that are compared does not change the probability (for those items within the same group), i.e.~
\begin{eqnarray*}
  \prob{j_1 > j_2} \;=\; \frac{e^{\Theta^*_{ij_1}}}{e^{\Theta^*_{ij_1}} + e^{\Theta^*_{ij_2}}} \;=\; 
    \frac{e^{\Theta^*_{ij_1}+c }}{e^{\Theta^*_{ij_1}+c} + e^{\Theta^*_{ij_2}+c}} \;\,,
\end{eqnarray*}
and adding different constants to those items that are not in the same group does not change the probability of the outcome as those items are  never compared. 
Hence, to handle this unidentifiability, we let a centered version of $\Theta^*$ represent all those shifted versions defining the same probability distribution. 
Formally,  let a zero-one vector $g_k \in \{0, 1\}^{d_2}$ denote the group membership such that 
$g_{i,k} = 1$ if item $j$ is in group $k$, else $g_{i,k}=0$. Note that, by definition, 
no item can be present in more than one group, that is, $\sum_{k=1}^G g_k = \ones$, where $G$ is the number of groups. 
We define an equivalence class of $\Theta^*$ which represent the same probabilistic model as 
\begin{align}
[\Theta^*] \;=\; \Big\{\Theta^* + \sum_{k=1}^G u_k g_{k}^\top \;\text{ for all } u_k \in \reals^{d_1} \Big\}\;. 
\end{align}
To overcome the identifiability issue, we represent each equivalence class with the centered matrix satisfying 
\begin{align}
\label{eq: graph_def_group}
\Theta^* g_k = 0,\;\;\; \forall\; k \in \{1,2, \ldots, G\}
\end{align} 

As matrices with large ``spikiness'' are known to be hard to estimate \cite{NW11}, 
we capture the dependence of the sample complexity on the spikiness as measured by 
 $\bb := \lnorm{\Theta^*}{\infty}$. 
 This captures the dynamic range of the underlying preference matrix. 
For a related problem of matrix completion, where the loss $\calL(\theta)$ is quadratic, 
either a similar condition on $\ell_\infty$ norm is required or 
another condition on incoherence is required.

\bigskip
\noindent
\textbf{Graph Laplacian.} The performance of our approach depends on the sampling graph $P$ via its 
 graph Laplacian defined as 
\begin{align}
\label{eq:graph_laplacian}
  L \;=\; \diag(P \ones) - P\,
\end{align}
where $\diag(P \ones)$ is a diagonal matrix with $\sum_v P_{u, v}$ in the diagonals. 
Notice that, $L$ is singular and the nullspace is spanned by vectors $\{g_k\}_{k=1}^{G}$. Let $\sigma_{\max}(L) = \opnorm{L}$ and $\sigma_{\min}(L)$ be the smallest eigenvalue of $L$ discounting the $G$ zero-valued eigenvalues. Since the graph has $G$ disconnected maximal components and $L$ is real symmetric, by spectral theorem, $L = U \Sigma U^\top$, where $U$ is a matrix of size $d_2 \times (d_2 - G)$ and its $d_2 - G$ columns form an orthonormal set, and $\Sigma$ is a diagonal matrix such that its diagonal elements are the singular values of $L$. Let $L^\dagger : = U \Sigma^{-1} U^\top$ and $L^{x} := U \Sigma^{x} U^\top$ for all $x \in \reals$. We also define the Laplacian induced norms of matrices as,
\begin{align*}
\Lnorm{\Theta} := \Fnorm{\Theta L^{1/2}}, \text{ and, } \Lnucnorm{\Theta} := \nucnorm{\Theta L^{1/2}} \;.
\end{align*}
These Laplacian induced norms are more appropriate to analyze and quantify the distance between the estimated matrix $\hTheta$ and $\Theta^*$. 

When items $k(i)$, $l(i)$ are chosen for comparison by user $j(i)$ as the $i$-th pair of items, we capture this choice with the matrix $X^{(i)} =  e_{j(i)} (e_{k(i)} - e_{l(i)})^\top$. The outcome of the comparison is represented by $y_i$, with $y_i=1$ when item $k(i)$ wins over item $l(i)$ and $y_i=0$ if otherwise. 
The log-likelihood of the comparison outcomes with respect to a parameter matrix $\Theta$ is,
\begin{align}
  \cL(\Theta) \; =\;  \frac1n \sum_{i=1}^n \left[ y_i \Iprod{\Theta}{X^{(i)}} - \log\left(1 + \exp\left(\Iprod{\Theta}{X^{(i)}}\right) \right) \right]\;.
  \label{eq:defpairlikelihood}
\end{align}
We propose and analyze the following convex optimization problem,
\begin{align}
\label{eq:graph_obj}
  \hTheta  \; \in \; \underset{\Theta \in \Omega_\bb}{\text{argmin}} -\calL(\Theta) + \lambda \Lnucnorm{\Theta},
\end{align}
where,
\begin{align}
\label{eq:defgraphomega}
\Omega_\bb \; =\;  \left\lbrace \, \Theta \in \reals^{d_1 \times d_2} \;|\; \lnorm{\Theta}{\infty} \leq \bb, \Theta g_k = 0,\; \forall\, k\in [G] \, \right\rbrace,
\end{align}
with an appropriately chosen  $\lambda = 8\sqrt{2}\max\left\{ \sqrt{\frac{ \sigma \log (2d)}{n}}, \frac{\sigma_{\min}(L)^{-1/2}\log(2d)}{n} \right\}$ with $\sigma = \max \{(d_2-G)/d_1,1\}$, where $d=(d_1+d_2)/2$. 
In practice, the sampling probability distribution $P$ and the corresponding Laplacian $L$ might not be known. 
In those cases, we propose using the empirical sampling probability distribution $\hat P$ and corresponding empirical Laplacian $\hat L$ instead. 
We describe this version of the algorithm formally in Section \ref{sec:robust}, where 
we empirically demonstrate the robustness of this approach. 
Further, in experiments with real data sets, we 
use the empirical Laplacian Section~\ref{sec:food}.

%====================================================================================================
\subsection{Performance Guarantee}
\label{sec:main_ub}

We consider the graph sampling scenario where each sample is i.i.d., 
the $\ell$-th sample consists of user $i_\ell$ chosen uniformly at random, pair of items $(j_{1,\ell},j_{2,\ell})$ chosen 
according to the sampling graph $\cG=([d_2],E,P)$, and the resulting outcome $y_\ell$ distributed as the MNL model with parameter $\Theta^*$. 

\begin{theorem} \label{thm:graph_ub}
  Under the graph sampling with respect to $\cG=([d_2],E,P)$ with a graph Laplacian 
  $L$, and under the MNL preference model with  
  preference matrix $\Theta^*$, solving the optimization problem in \eqref{eq:graph_obj} with $n$ i.i.d.~samples  achieves, 
  with probability greater than $1-1/4d^3$, 
  \begin{align}
  \frac{1}{d_1}{\Fnorm{\left(\Theta^* - \hTheta\right) L^{1/2}}}^2 \le 36 \lambda \left( \bb + \frac1{\psi(2\bb)} \right) \bigg(&\sqrt{2r}   \Fnorm{\left(\Theta^* - \hTheta\right) L^{1/2}} + \nonumber \\& \sum^{\min\{d_1,d_2 - G\}}_{j = r+1}\sigma_j(\Theta^* L^{1/2})\bigg),
  \end{align}
  for any $r\in \{ 1,2,\ldots, \min\{d_1,d_2-G\}\}$, any 
   $\lambda \geq 8\sqrt{2}\max\left\{ \sqrt{\frac{ \sigma \log (2d)}{n}}, \frac{\sigma_{\min}(L)^{-1/2}\log(2d)}{n} \right\}$ 
   where $\sigma= \max\{(d_2-G)/d_1, 1\}$ and $d=(d_1+d_2)/2$, 
   $\psi(x) \triangleq {e^x}/(1+e^x)^2$, 
   and  for 
  $n \leq \min \{2^2 \left(d_1 \sigma_{\min}(L)^{-1}\right)^{2/3}\,\log(2d), \\ 2^6 d_1^2 \sigma^2\,\log(2d)\}$.  
\end{theorem}
\noindent
We provide a proof in Appendix \ref{sec:graph_ub_proof}. The above bound holds for any $r$, where $r$ allows us to trade off the two types of errors: the estimation error and the approximation error. 
Concretely, the above bound shows a natural splitting of the error into two terms;
the first term corresponding to the {\em estimation error}  for the top rank-$r$ component of $\Theta^*$ and 
the second term corresponding to the {\em approximation error} for how well one can approximate $\Theta^*$ with a rank-$r$ matrix. 
If we know the singular values of $\Theta^*$, we can optimize over $r$ to get the tightest bound. 
If $\Theta^*$ is exactly low-rank then applying a matching rank in the bound gives the following guarantee.

\begin{corollary}[{\bf Exact rank-$r$ matrix}]
  \label{coro:graph_ub} 
  Under the same hypothesis as in Theorem \ref{thm:graph_ub} with a choice of 
  $\lambda = c_0 \max\left\{ \sqrt{\frac{ \sigma \log (2d)}{n}}, \frac{\sigma_{\min}(L)^{-1/2}\log(2d)}{n} \right\} $ 
  for some $c_0>0$, if $\Theta^*$ is exactly rank $r$,  there exists a positive constant $c_1$ such that  the proposed estimator achieves,  
  \begin{align}
  \frac{1}{\sqrt{d_1}}{\Fnorm{\left(\Theta^* - \hTheta\right) L^{1/2}}} \le c_1 \left (\bb+\frac{1}{\psi(2\bb)} \right)\sqrt{r} \max\bigg\{&\sqrt{\frac{\sigma\; d_1 \log (2d)}{n}}, \nonumber\\ &\frac{ \sqrt{\left( \sigma_{\min}(L)^{-1} d_1\right)}\; \log(2d)}{n} \bigg\},
  \label{eq:main_ub}
  \end{align}
  with probability at least $1-2/(d_1+d_2)^{3}$ 
  and $\sigma= \max\{(d_2-G)/d_1, 1\}$.
\end{corollary}
The second term in the maximization is 
an artifact of the weakness of current analysis technique 
and does not reflect the actual error. 
This is confirmed in our simulation results on graphs with very small spectral gap in 
Figures \ref{fig:graph_errorVn}.(b), (d), and (f), where the error in Laplacian-induced norm error does not decrease with spectral gap of $L$ as the line graph has a much smaller spectral gap compared to a complete graph, for example. In fact, for a special $\Theta^*$ in Figure \ref{fig:graph_errorVn}.(d) it is the other way, for which we do not have a theoretical explanation. 

The number of entries in $\Theta^*$ is $d_1d_2$ and we want to rescale the 
Frobenius norm error appropriately by $1/\sqrt{d_1d_2}$. 
As a typical scaling of $L^{1/2}$ is $1/\sqrt{d_2}$ in spectral norm, 
we only need to rescale the Laplacian-induced norm error by $1/\sqrt{d_1}$ in the 
left-hand side of the above bound. For a rank-$r$ $\Theta^*$, 
the number of degrees of freedom in describing it is $r(d_1+d_2)-r^2=O(r(d_1+d_2))$. 
The above theorem shows that the total number of samples $n$ needs to scale as 
$O(r(d_1+d_2)\log d)$ in order to achieve an arbitrarily small error. 
This is only a poly-logarithmic factor larger than the degrees of freedom. 
In Section \ref{sec:graph_info_lb} we make this comparison precise by 
providing a lower bound that matches the upper bound up to a logarithmic factor. 

The upper-bound constraint in Theorem \ref{thm:graph_ub} on the number of samples $n$ can be met for large enough $d_1$ and $d_2$.  
For simplicity, assume that $d_1=d_2=d$ and  $r$ is a constant. 
Since $\sigma_{\min}(L) = O(1/d_2)$, the upper-bound on $n$ becomes $O(\max\{d^2, d^{4/3}\})$. 
For large enough $d$, the upper bound on the RHS of Eq. \eqref{eq:main_ub} 
can be made arbitrarily small with $n$ only scaling as $O(r\,d)$. 
This is significantly smaller than the upper-bound of $O(d^2)$ on $n$. 
Further, in the experiments in Section \ref{sec:graph_expts}, we show that $n$ has no practical upper-bound constraint since the error decreases at the same rate as predicted, for arbitrarily large values of $n$. This constraint may not be necessary and might be a by-product of the proof techniques.

The dependence 
on the dynamic range $\bb$, however, is sub-optimal. 
It is expected that the error increases with $\bb$, since the $\Theta^*$ scales as $\bb$, but the exponential dependence in the bound 
seems to be a weakness of the analysis (for example as seen from numerical experiments in the right panel of Figure \ref{fig:kwise}). 
Although the error increase with $\bb$, numerical experiments suggest that it only increases at most linearly. 
However, tightening the scaling with respect to $\bb$ is a challenging problem, and 
such sub-optimal dependence is also present in 
existing literature for learning even  simpler models, such as the Bradley-Terry model \cite{NOS12} 
 or the Plackett-Luce model \cite{HOX14}, which are special cases of the MNL model studied in this paper.

Another issue is that the underlying matrix might not be exactly low rank. 
It is more realistic to assume that it is approximately low rank. 
Following \cite{NW11} we formalize this notion with ``$\ell_q$-ball'' of matrices defined as 
\begin{eqnarray}
  \bB_q(\rho_q) &\equiv & \{ \Theta\in\reals^{d_1\times d_2} \,|\, \sum_{j\in[\min\{d_1,d_2\}]}^{} |\sigma_j(\Theta^*)|^q\leq \rho_q  \} \;. \label{eq:defBq}
\end{eqnarray}
When $q=0$, this is a set of rank-$\rho_0$ matrices. 
For $q\in(0,1]$, this is set of matrices whose singular values decay relatively fast. 
By optimizing the choice of $r$ in Theorem \ref{thm:graph_ub}, we get the following result. 

\begin{corollary}[{\bf Approximately low-rank matrices}]
  Suppose $\Theta^* \in \bB_q(\rho_q)$ for some $q\in(0,1]$ and $\rho_q>0$. 
  Under the hypotheses of Theorem \ref{thm:graph_ub}, 
   with a choice of 
  $\lambda = c_0 \max\bigg\{\sqrt{\frac{ \sigma \log (2d)}{n}}, \frac{\sigma_{\min}(L)^{-1/2}\log(2d)}{n} \bigg\} $  
  for some constant $c_0>0$ 
  there exists a constant $c_1>0$ such that 
    solving the optimization \eqref{eq:graph_obj}  
  achieves  with probability at least $1-2/(d_1+d_2)^{3}$, 
% \begin{eqnarray}
% \frac{1}{\sqrt{d_1}} \Lnorm{\widehat{\Theta}-\Theta^\ast} \,\leq\, 
%  \frac{c_1 \, \sqrt{  \rho_q}}{\sqrt{d_1d_2}}  \left( \Big(\bb+ \frac{1}{\phi(2\bb)}\Big) \,
%  \sqrt{\frac{d_1d_2(\sigma \,d_1\, \log (2d)) }{n}} \right)^{\frac{2-q}{2}}\;. 
% \end{eqnarray}
  \begin{eqnarray}
  \frac{1}{\sqrt{d_1}}{\Fnorm{\left(\Theta^* - \hTheta\right) 
  L^{1/2}}}
  %\frac{1}{\sqrt{d_1}} \Lnorm{\widehat{\Theta}-\Theta^\ast} 
   \,\leq\, 
   \frac{c_1 \, \sqrt{  \rho_q}}{\sqrt{d_1}}  \left( \Big(\bb+ \frac{1}{\psi(2\bb)}\Big) \,
   \sqrt{\frac{d_1^2 \, \sigma \, \log (2d)) }{n}} \right)^{\frac{2-q}{2}}\;,
  \label{eq:graph_appxlowrank}
  \end{eqnarray}
  provided $n \geq \sigma \, \log (2d) / \sigma_{\min}(L)$.
  \label{cor:graph_appxlowrank}
\end{corollary}

This is a strict generalization of Corollary \ref{coro:graph_ub}. 
For $q=0$ and $\rho_0=r$, this recovers the exact low-rank estimation bound up to a factor of two. 
For approximate low-rank matrices in an $\ell_q$-ball, we lose in the error exponent, which reduces from one to $(2-q)/2$.
% A proof of this Corollary is provided in Appendix \ref{sec:kwise_cor_proof}. 

%====================================================================================================
\subsection{Information-theoretic Lower Bound} 
\label{sec:graph_info_lb}
For a polynomial-time algorithm of convex relaxation, we gave in the previous section a bound on the  achievable error. We next compare this to the fundamental limit of this problem, 
by giving a lower bound on the achievable error  by any algorithm (efficient or not). 
A simple parameter counting argument indicates that 
it requires the number of samples to scale as the number of degrees of freedom i.e.,   
$n \propto r(d_1+d_2)$, 
to estimate a $d_1\times d_2$ dimensional matrix of rank $r$. 
We construct an appropriate packing over the set of low-rank matrices with 
bounded entries in $\Omega_\bb$ defined as \eqref{eq:defgraphomega}, 
and  show that no algorithm can accurately estimate 
 the true matrix with high probability using the generalized Fano's inequality. 
This provides a constructive argument to lower bound the minimax error rate, which in turn establishes that the bounds 
in Theorem \ref{thm:graph_ub} is sharp up to a logarithmic factor, and proves no other algorithm can significantly improve over the nuclear norm minimization. 

\begin{theorem}
  Suppose $\Theta^*$ has a rank $r$. Under the previously described graph based sampling model, there exists a  
   constant $c>0$ such that
  \begin{align}
  \inf_{\hTheta} \sup_{\Theta^*\in\Omega_\bb}\E\Big[
  \frac{1}{\sqrt{d_1}}{\Fnorm{\left(\Theta^* - \hTheta\right) L^{1/2}}}
  %\frac{1}{\sqrt{d_1}}\Lnorm{\hTheta-\Theta^*} 
  \Big] \geq c\,\min\bigg\{&e^{-\bb} \sqrt{\frac{\,r\,d_1}{n}}  
  \,,\, \nonumber\\ &\bb 
  \max\left\{\sqrt{\frac{r}{\trace{L^\dagger_r}}} \,,\,
  \frac{d_2}{\sqrt{d_1 \log d}}
  \right\}
  \Bigg\} \;,
  \end{align}
  where the infimum is taken over all measurable functions over the observed comparison results and $L_r^\dagger$ is the pseudo inverse of the rank $r$   
  approximation of the graph Laplacian.
  \label{thm:graph_lb}
\end{theorem}

A proof of this theorem is provided in Appendix \ref{sec:graph_lb_proof}. 
The term of primary interest in this bound is the first one, 
which shows the scaling of the (rescaled) minimax rate as $\sqrt{rd_1/n}$ and matches the upper bound in \eqref{eq:main_ub} up to a logarithmic factor.
It is the dominant term in the bound whenever 
the number of samples is larger than 
$n\geq d_1 \max \{\trace{L_r^\dagger}\,, d_1 \log d / d_2^2 \}$. 
As suggested in numerical simulations 
 on graphs with very small spectral gap in 
Figures \ref{fig:graph_errorVn}.(b), (d), and (f), 
the dependence in $\trace{L_r^\dagger}$ is an artifact of the weakness of the 
current analysis technique. 
Here we note that, while the lower bound in Theorem \ref{thm:graph_lb} is in expectation, 
the upper bound in Theorem \ref{thm:graph_ub} is a high-probability result. 
The upper bound can immediately be translated into a bound in expected error with an 
additional term scaling as  $\alpha \sigma_{\rm max}(L)^{1/2} \sqrt{d_2} d^{-3}$, which is smaller than other terms in the bound. 

%====================================================================================================
\subsection{Performance Guarantee and Lower Bound for Complete Graph} 
\label{sec:graph_complete_up_lb}

It follows from a simple relation $\vert\kern-0.25ex\vert\kern-0.25ex\vert (\Theta^* - \hTheta) L^{1/2}\vert\kern-0.25ex\vert\kern-0.25ex\vert_F 
\ge \sigma_{min}^{1/2} \vert\kern-0.25ex\vert\kern-0.25ex\vert\Theta^* - \hTheta\vert\kern-0.25ex\vert\kern-0.25ex\vert_F $, which is true since $\Theta^*$, $\hTheta$ are in the range space of $L$, that  
the above upper bounds automatically give the error bound in the Frobenius norm. 
When the sampling graph is uniform, i.e. a complete graph with equal weights $P_{j_1,j_2}=1/d_2(d_2-1)$, $\forall j_1 \neq j_2$, 
Frobenius norm is the right metric and we show matching upper and lower bounds.

\begin{corollary}[{\bf Complete graph upper-bound}]
\label{cor:graph_complete_ub} 
Under the same hypothesis as in Corollary \ref{coro:graph_ub}, if $\cG$ is a complete graph, with a choice of 
  $\lambda = c_0 \max\left\{ \sqrt{\frac{ \sigma \log (2d)}{n}}, \frac{\sqrt{(d_2-1)}\log(2d)}{n} \right\} $ 
  for some $c_0>0$, if $\Theta^*$ is exactly rank $r$,  there exists a positive constant $c_1$ such that  the proposed estimator achieves,  
  \begin{equation}
  \frac{\Fnorm{\Theta^* - \hTheta }}{\sqrt{d_1(d_2-1)}}{} \le c_1 \left (\bb+\frac{1}{\psi(2\bb)} \right) \sqrt{r}\max\Bigg\{ \sqrt{\frac{\sigma\; d_1 \log (2d)}{n}}, \frac{ \sqrt{(d_2-1) d_1}\; \log(2d)}{n} \Bigg\},
  \label{eq:complete_ub}
  \end{equation}
  with probability at least $1-2/(d_1+d_2)^{3}$ 
  and $\sigma= \max\{(d_2-1)/d_1, 1\}$.
\end{corollary}

\begin{corollary}[{\bf Complete graph lower-bound}]
\label{thm:graph_complete_lb} 
Suppose $\Theta^*$ has rank $r$. Under the previously described graph based sampling model with graph being a complete graph, there is a universal numerical constant $c>0$ such that
\begin{align}
  \inf_{\hTheta} \sup_{\Theta^*\in\Omega_\bb} \E\Big[ \frac{1}{\sqrt{d_1\,(d_2-1)}}\Fnorm{\hTheta-\Theta^*} \Big] \geq c\,\min\bigg\{&e^{-\bb} \sqrt{\frac{\,r\,d_1}{n}}  
  \,,\, \nonumber \\ &\bb \max\left\{ \frac{1}{\sqrt{(d_2-1)}}
  \,,\, \frac{d_2}{\sqrt{d_1 \log d}} \right\}
  \Bigg\} \;,
\end{align}
where the infimum is taken over all measurable functions over the observed comparison results.
\end{corollary}

%====================================================================================================
%====================================================================================================
\subsection{Experiments} \label{sec:graph_expts}

We provide a first-order method to solve the proposed convex optimization, and provide numerical experiments using this algorithm. 
We present two simulation results followed by an experiment on real data.

For the synthetic experiments,  we generate random rank-$r$ matrices of dimension $d\times d$, of the form
$\Theta^*=UV^T$ with $U\in\reals^{d\times r}$ and $V\in\reals^{d\times r}$ entries generated i.i.d 
from uniform distribution over $[0,1]$. 
Then the connected-component-mean is subtracted form each connected component, and then the whole matrix is scaled such that the largest entry is $\bb=5$. Note that this operation does not increase the rank of the matrix $\Theta$. This is because this de-meaning can be written as $\Theta - \sum_{k} \Theta g g^\top/(g^\top\ones)$ and both terms in the operation are of the same column space as $\Theta$ which is of rank $r$.

\label{sec:graph_expt}
\subsubsection{Algorithm} \label{sec:graph_algo}

Let $\Theta'  \triangleq \Theta L^{1/2}$. 
As the nuclear norm regularizer in \eqref{eq:graph_obj} 
is non differentiable, 
we use the proximal gradient descent \cite{ANW10,cai2010singular}.  
At each iteration, we apply the following two operations on the current estimate, $\Theta_t'$, of $\Theta^* L^{1/2}$,
\begin{align}
\widetilde{\Theta'}_{t+1} &= \Theta_t' - \eta_{t} \nabla_\Theta \cL(\Theta'_t L^{-1/2})L^{-1/2} &&\text{(gradient descent)}\\
{\Theta'}_{t+1} &= M_t (\Gamma_t - \eta_t \lambda \id )^+ N_t^T &&\text{(singular value shrinkage and thresholding)}
\end{align}
where $M_t \Gamma_t  N_t^T := \widetilde{\Theta'}_t$ is the singular value decomposition of $\widetilde{\Theta'}_t$, such that $\Gamma_t$ is a diagonal matrix with positive entries, $(\cdot)^+$ is the entry-wise thresholding operation $\max(0, x)$, and $\eta_t$ is an appropriate step-size. Constraint of zero row sum, is taken care of by initializing the descent algorithm with $\Theta_0' = 0$, since rows of gradients sum to zero.
In practice we do not know the value of $\alpha$, and hence in experiments we do not enforce the $\| \Theta \|_{\infty}\leq\alpha$ constraint. 

Another issue in the implementation is that the convergence rate can be significantly slower for some  graph topologies. 
We accelerate the proximal gradient descent with the following (modified) Barzilai-Borwein (BB) rule \cite{barzilai1988two} for choosing the step-size $\eta_t$,
\begin{align}
\eta_t =\begin{cases}
    \frac{\lnorm{\Theta'_t - {\Theta'}_{t-1}}{2}^2}{\Iprod{\Theta'_t - {\Theta'}_{t-1}}{\nabla_{\Theta'}\cL'(\Theta'_t) - \nabla_{\Theta'}\cL'({\Theta'}_{t-1})}}, &\text{when $t$ is odd}  \\
    \frac{\Iprod{{\Theta'}_t - {\Theta'}_{t-1}}{\nabla_{\Theta'}\cL'(\Theta'_t) - \nabla_{\Theta'}\cL'({\Theta'}_{t-1})}}{\lnorm{\nabla_{\Theta'}\cL'(\Theta'_t) - \nabla_{\Theta'}\cL'({\Theta'}_{t-1})}{2}^2}, &\text{when $t$ is even}
  \end{cases}\,,
\end{align}
where $\nabla_{\Theta'}\cL'(\Theta') := \nabla_\Theta \cL(\Theta' L^{-1/2})L^{-1/2}$. We stop the descent algorithm whenever an upper bound of the KKT error is smaller than $10^{-5}$.

\subsubsection{The Role of the Topology of the Sampling Pattern}
\begin{figure}
  \centering
  \subcaptionbox{RMSE for i.i.d.  $\Theta^*_{ij}$\label{fig:graph_errVn_iid}}[.48\linewidth][c]{%
% \label{fig:graph_errVn_iid}
    \includegraphics[width=.5\linewidth]{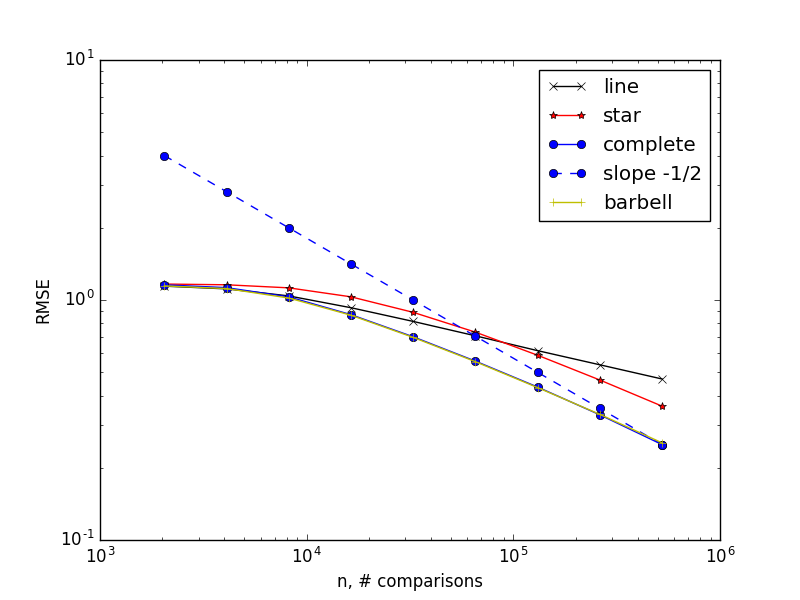}}\quad
  \subcaptionbox{L-RMSE for i.i.d.  $\Theta^*_{ij}$\label{fig:graph_errLVn_iid}}[.48\linewidth][c]{%
    \includegraphics[width=.5\linewidth]{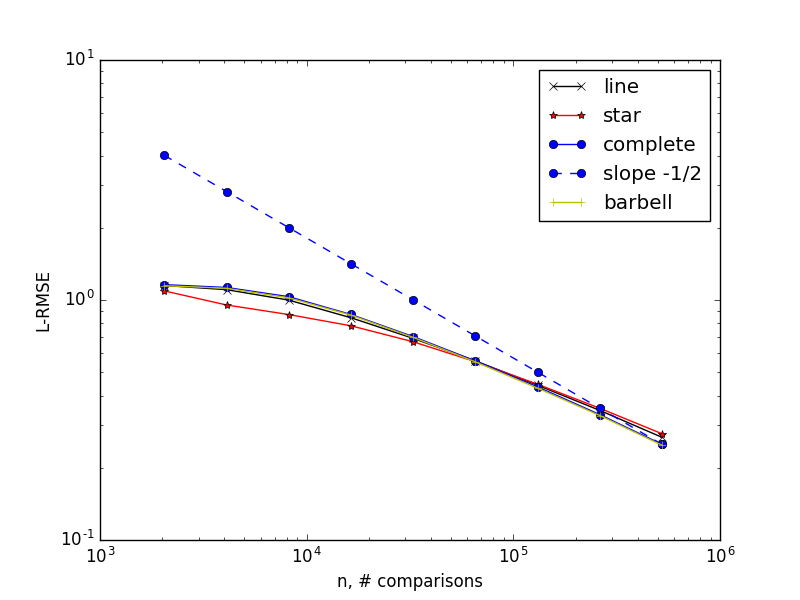}}

  \bigskip

  \subcaptionbox{RMSE for barbell bias $\Theta^*_{ij}$\label{fig:graph_errVn_barbell}}[.48\linewidth][c]{%
    \includegraphics[width=.5\linewidth]{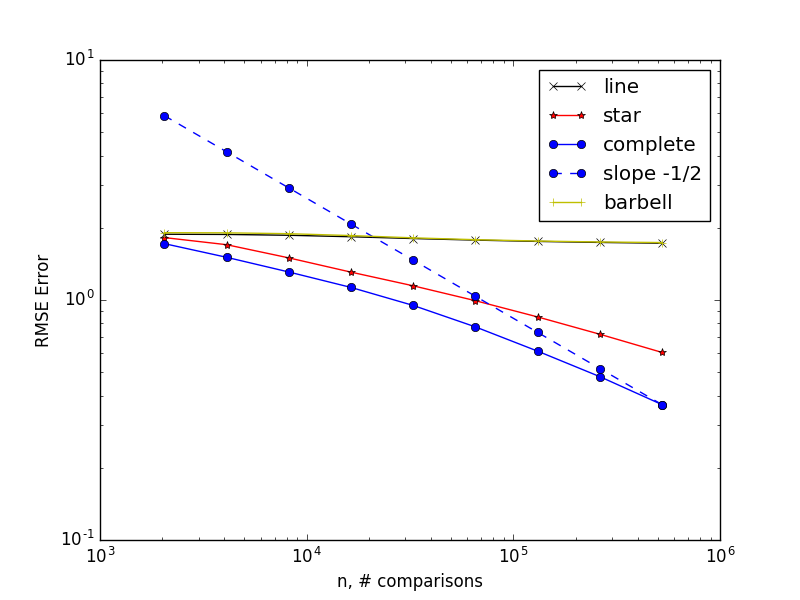}}\quad
  \subcaptionbox{L-RMSE for barbell bias $\Theta^*_{ij}$\label{fig:graph_errLVn_barbell}}[.48\linewidth][c]{%
    \includegraphics[width=.5\linewidth]{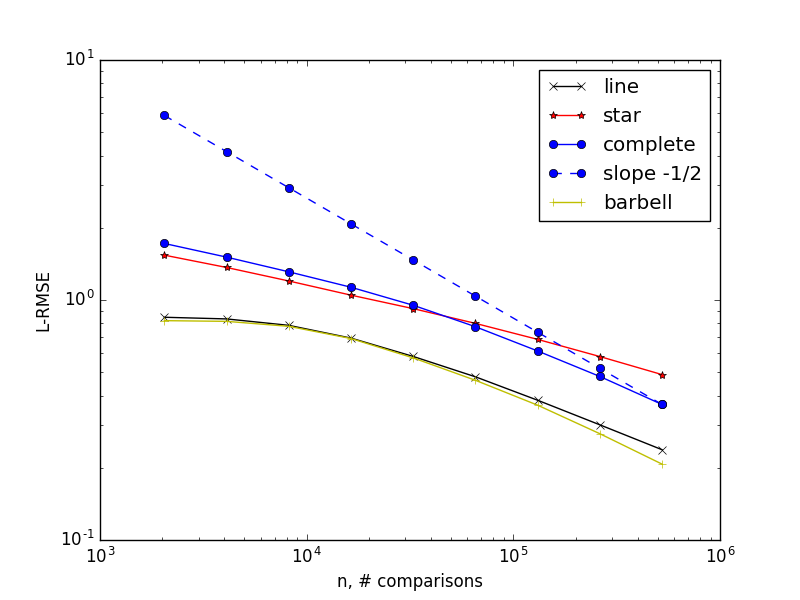}}    

  \bigskip

  \subcaptionbox{RMSE for line bias $\Theta^*_{ij}$\label{fig:graph_errVn_line}}[.48\linewidth][c]{%
    \includegraphics[width=.5\linewidth]{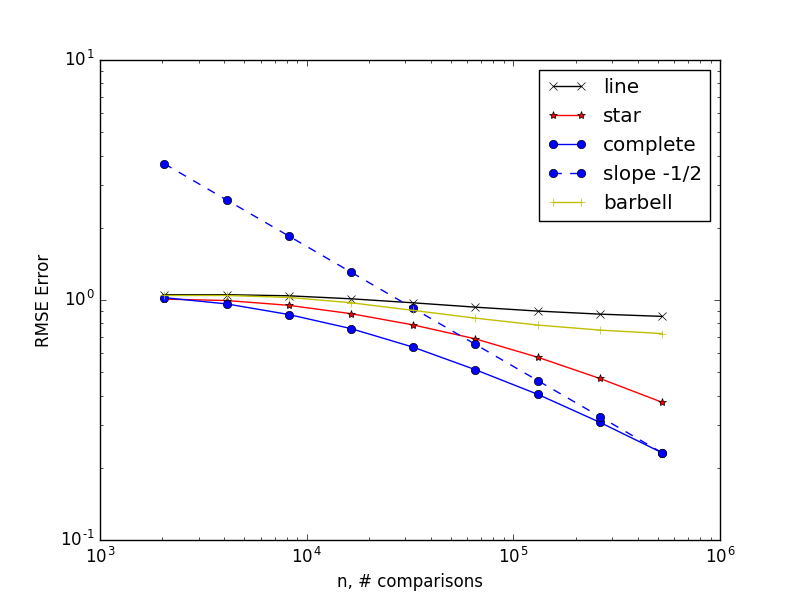}}\quad
  \subcaptionbox{L-RMSE for line bias $\Theta^*_{ij}$\label{fig:graph_errLVn_line}}[.48\linewidth][c]{%
    \includegraphics[width=.5\linewidth]{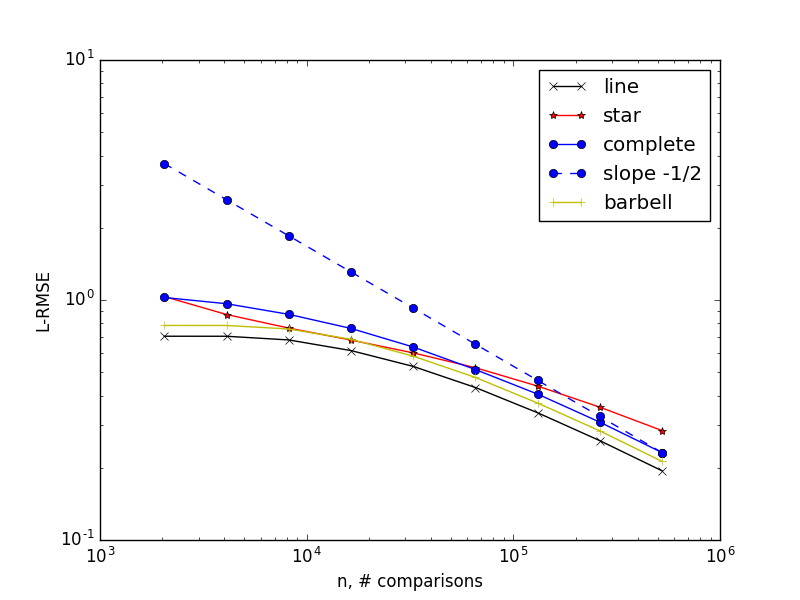}}
    
    \caption{Graphs with small spectral gap achieve significantly larger Frobenius norm error (RMSE) ${\Fnorm{\Theta - \hTheta}}/{\sqrt{d_1 d_2}}$, whereas the Laplacian-induced norm error (L-RMSE) ${\Fnorm{(\Theta - \hTheta) L^{1/2}}}/{\sqrt{d_1}}$ is not sensitive to the spectral gap.}
    \label{fig:graph_errorVn}
\end{figure}

In figure \ref{fig:graph_errorVn}, we plot the error of our nuclear norm minimization based algorithm versus number of samples (in log-scale), $n$ for $d_1=d_2=300$, $r=4$, $\bb=5.0$, $G=1$.
We consider two errors here; root mean squared error (RMSE) = ${\Fnorm{\Theta - \hTheta}}/{\sqrt{d_1 d_2}}$ 
and Laplacian induced RMSE (L-RMSE)= ${\Fnorm{(\Theta - \hTheta) L^{1/2}}}/{\sqrt{d_1}}$. We plot these errors for four topologies of varying spectral gaps.
As discussed Section \ref{sec:main_ub}, 
we do not expect the L-RMSE error to change much as we change the topology of sampling. 
However, as seen from the simple relation 
$\vert\kern-0.25ex\vert\kern-0.25ex\vert (\Theta^* - \hTheta) L^{1/2}\vert\kern-0.25ex\vert\kern-0.25ex\vert_F 
\ge \sigma_{min}^{1/2} \vert\kern-0.25ex\vert\kern-0.25ex\vert\Theta^* - \hTheta\vert\kern-0.25ex\vert\kern-0.25ex\vert_F $
Frobenius norm error is more sensitive to the topology of the sampling pattern, captured via the spectral gap, i.e.~$\sigma_{\rm min}(L)$. Specifically we use the following graph topologies.
\begin{itemize}
  \item {\bf Complete graph.} We first consider  a uniform sampling over a complete graph where 
  $P_{j_1,j_2}=1/d_2(d_2-1)$ for all $j_1,j_2\in[d_2]$. The resulting spectral gap is $1/(d_2-1)$, which is the maximum possible value. Hence, complete graphs are optimal for learning MNL models, compared in the error metric of the Frobenius norm for fairness. 

  \item {\bf Star graph.} Here we choose one item to be the center, and every other items can only be compared to this center item uniformly at random. Let item 1 be the center one, then $P_{j_1,1}=P_{1,j_2}=1/2(d_2-1)$. Standard spectral analysis shows that 
  the spectral gap is $\Theta(1/d_2)$, and thus the graph is near-optimal for learning MNL models. 
    
  \item {\bf Line graph.} Next, we consider a line graph with $d_2-1$ edges where 
  $P_{j,j+1}=P_{j+1,j}=1/2(d_2-1)$. It has a spectral gap of $\Theta(1/d_2^2)$, and is strictly sub-optimal for learning MNL models.
  
  \item {\bf Barbell graph.} Consider two equal sized groups of items. 
  Within each group the sub-graph is complete, and between the groups there is a single edge connecting one of the node from group one and one of the node from group two. 
  Each edge is chosen uniformly at random for comparisons. The resulting spectral gap is $\Theta(1/d_2^2)$, and this graph too is strictly sub-optimal for learning MNL models. 
  
\end{itemize}
\noindent
First in sub-figures \ref{fig:graph_errVn_iid}, \ref{fig:graph_errLVn_iid}, we plot RMSE and L-RMSE errors for different graphs using randomly generated $\Theta^*_{ij}$. We see that L-RMSE curves for different graphs are the same (and slopes in log-scale are as expected approaches $-1/2$ with more samples). 
Further, we do not see any significant difference w.r.t the graph topology even when 
error is measured in Frobenius norm.
The reason is that since $\Theta^*_{ij}$'s are generated i.i.d., the empirical distributions of any large sub-group of items would be similar. Thus, the means of the two cliques of the barbell graph or the means of the items on the two far ends of the line graph are similar. Thus although barbell and line graphs have small spectral gap (high mixing-time), its effect is minimized because these sub-groups can individually be solved without having them to mix since the empirical distributions of the $\Theta^*_{ij}$ in the two sub-groups are similar.

To illustrate the role of the topology of the graph, 
we choose specific $\Theta^*$ which depends on the topology of the graph as guided by our analysis on the lower bound (Theorem \ref{thm:graph_lb}) 
in sub-figures \ref{fig:graph_errVn_barbell}, \ref{fig:graph_errLVn_barbell}.  
The items are divided into two sets (corresponding to each side of barbell graph), such that corresponding $\Theta^*_{ij}$ are i.i.d. inside a set but have similar but shifted means across the sets. We call this type of preference data as \textit{barbell biased}. As expected from theoretical analyses, $L$-RMSE behave similar to the i.i.d. case. 
However, we see the Frobenius norm error significantly worse in the case of line and barbell shaped graphs, as expected from the Frobenius error bound. 
In sub-figures \ref{fig:graph_errVn_line}, \ref{fig:graph_errLVn_line}, 
we simulate  \textit{line biased} preference data $\Theta^*$. 
Items are ordered (in the order of the line graph), such that $\Theta^*_{ij}$'s have similar distributions but their means get shifted in an arithmetic progression as you go down the ordering. Again, Frobenius norm error is significantly larger for line and barbell graphs as spectral gaps are small. 

\subsubsection{The Gain in Inference over Multiple Groups of Items}
\begin{figure}
  \begin{center}
  \includegraphics[width=0.7\textwidth]{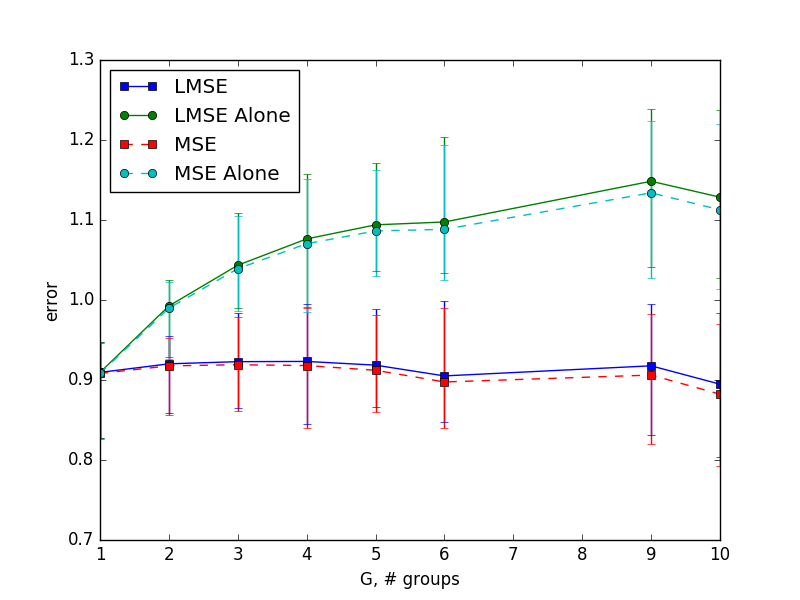}
  \hspace{0.4cm}
  \end{center}
  \caption{As the number of groups increase, the gain in joint inference increases.}
  \label{fig:graph_errVG}
\end{figure}

Consider $G$ groups of items such that, within each group, every pair of items is uniformly likely to get compared, but items from different group are never compared with each each other. 
As a baseline, one can run inference on each group separately. 
On the other hand, we propose running inference on all the $G$ groups jointly. 
Let $\hTheta$ be the estimate of $\Theta^*$ when solving the groups together, and 
let $\bar{\Theta}$ be the estimate when groups are estimated separately. 
Let $L$, $L^{(k)}$ be the graph Laplacians of the whole graph and $k$-th connected component (group) respectively. 
suppose, for simplicity, that 
$d_1=d_2$ and the groups are equally sized complete sub-graph components,
\begin{align}
 L &= \frac1{(d_2-G)}\left(\id_{d_2 \times d_2} - \frac{G}{d_2} \sum_{k=1}^G g_k g_k^T \right) \, \text{, and,}\\
 L^{(k)} &= \frac1{(d_2/G-1)}\left(\id_{\frac{d_2}{G} \times \frac{d_2}{G}} - \frac{G}{d_2} \ones \ones^T\right)\,. \label{eq:L_alone}
\end{align}
\\ According to Theorems \ref{thm:graph_ub} and \ref{thm:graph_lb}, the L-RMSE error of $\hTheta$ satisfies,
\begin{align}
  \frac{1}{\sqrt{d_1}}{\Fnorm{\left(\Theta^* - \hTheta\right) L^{1/2}}} = \frac{1}{\sqrt{d_1 (d_2-G)}}{\Fnorm{\Theta^* - \hTheta}} = \tilde{O} \left( \sqrt{\frac{r d_1}{n}}\right)\,.
\end{align} 
Similarly L-RMSE error (with respect to the full Laplacian $L$) of $\bar\Theta$ satisfies,
\begin{align}
  \frac{1}{d_1}{\Fnorm{\left(\Theta^* -  \bar\Theta \right) L^{1/2}}}^2 &= \frac{1}{d_1 (d_2-G)}{\Fnorm{\Theta^* - \bar\Theta}}^2 \\
  &\overset{(a)}{=} \frac{(d_2/G-1)}{(d_2-G)} \sum_{k=1}^G \frac{\Fnorm{\left(\Theta^*_{k} - \bar\Theta_k \right) (L^{(k)})^{1/2}}^2}{d_2} \\
  &= \frac{1}{G} \sum_{k=1}^G \tilde{O} \left(\frac{r d_1}{n/G} \right) \\
  &=  \tilde{O} \left(\frac{G r d_1}{n} \right)\,,
\end{align} 
where $(a)$ follows from Eq. \eqref{eq:L_alone} and assuming $\Theta^*_k,\bar\Theta_k$ are sub-matrices restricted to the columns in group $k$.
Thus the estimation errors when running joint inference and separate inference for each group are of the order of $O_G(1)$ and $O_G(\sqrt{G})$ respectively. That is, a user's preference in one group of items will be useful in inferring the same user's preference in another group of items. We illustrate this gain of joint inference in Figure \ref{fig:graph_errVG}.
Concretely, 
the sampling graph $\cG$ has $G$ groups where each component is a complete graph and $d_1=d_2=360$, $r=4$, $\bb=5.0$, $n=2^{14}$. Figure \ref{fig:graph_errVG} plots the L-RMSE (RMSE) $= {\Fnorm{(\Theta - \hTheta) L^{1/2}}}/{\sqrt{d_1}}\; ({\Fnorm{(\Theta - \hTheta)}}/{\sqrt{d_1\,d_2}})$ errors vs.~$G$, when all the groups are solved together (labelled as LMSE and MSE) or when the groups are solved separately (labelled as LMSE Alone and MSE Alone) using our algorithm. We see that solving the components together keeps the error relatively similar as the number of groups increase, but if we solve the groups separately the error increases with number of groups, although it is at a lower rate than predicted by the upper bound.

\subsubsection{Robustness to the Mismatched $L$-nuc Norm Regularizer}
\label{sec:robust}

In practical scenarios, one might not have access to the sampling graph Laplacian $L$. 
We propose using empirical Laplacian $\hat L$ defined as 
\begin{eqnarray}
	\hat L \;\; \triangleq \;\;  {\rm diag}\big( \hat P \ones \big) - \hat P\;,
\end{eqnarray}
where $\hat P \in \reals^{d_2 \times d_2}$ is the empirical distribution of sampled pairs in the given data. 
Under the experimental setting from Figure~\ref{fig:graph_errLVn_iid}, 
we run additional experiments with this empirical Laplacian $\hat L$ in the optimization: 
minimize $-\cL(\Theta) + \lambda \hLnucnorm{\Theta}$\;.
Figure~\ref{fig:Laplacian} illustrates that 
the effect on the performance of not knowing the true $L$ is marginal. 
Both approaches achieve the same error. 

\begin{figure}[t]
  	\centering
   	\includegraphics[width=.8\linewidth]{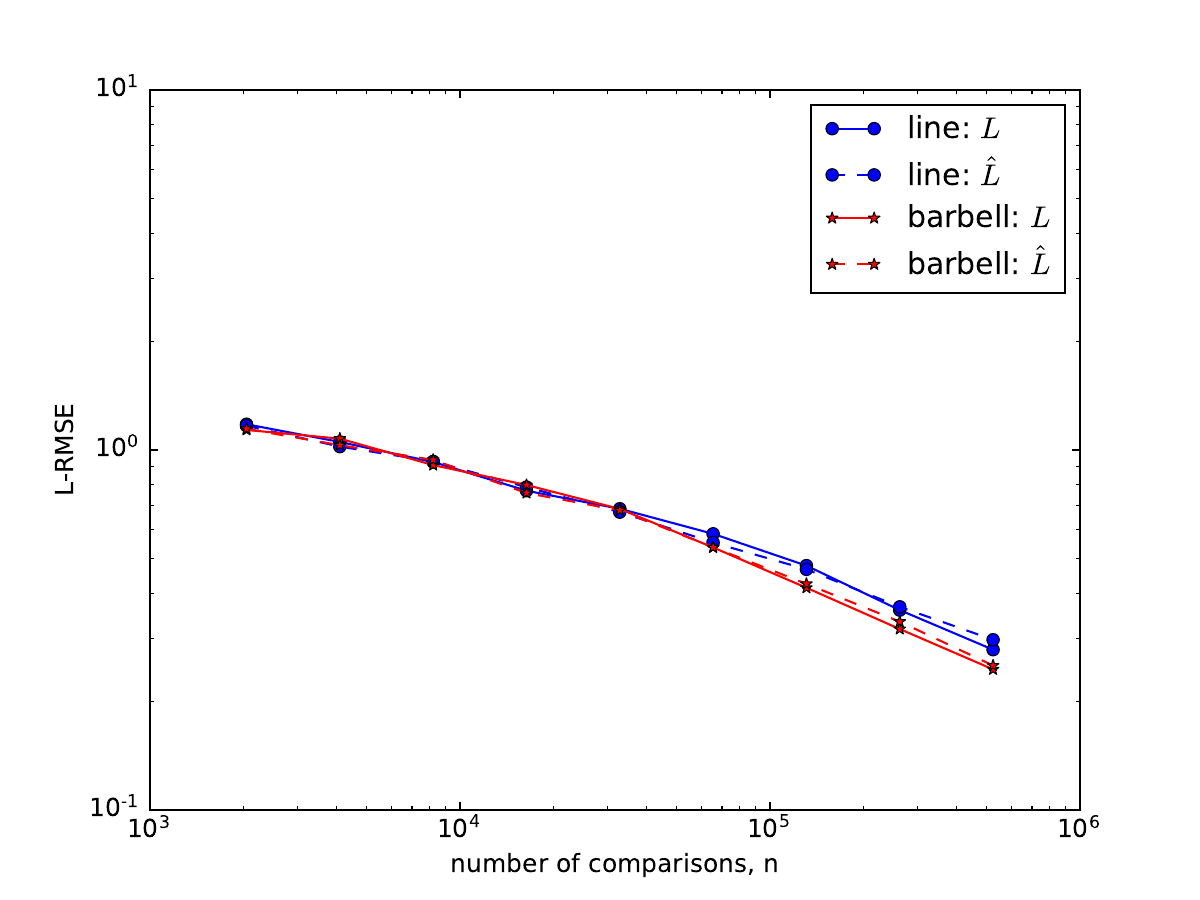}
  	\caption{L-RMSE for various sampling graphs when true Laplacian $L$ is known (solid) and when empirical Laplacian $\hat{L}$ is used (dashed)}. 
	\label{fig:Laplacian}
\end{figure}
\subsubsection{Real data: Food100}
\label{sec:food}
To showcase the practicality of our nuclear norm based algorithm \eqref{eq:graph_obj} we apply our algorithm to the Food100 Data set\footnote{Data set is from \url{https://vision.cornell.edu/se3/projects/cost-effective-hits}.} \cite{wilber2014cost}. In the data set, $n=250320$ triplets, denoted by $\{(a_i, b_i, c_i)\}_i$, of $3$ distinct food dishes from a selection of $d=100$  were sampled. Then in a crowdsourcing setting, users were asked if, $a_i$ is more similar to $b_i$ than to $c_i$. The goal is to learn an low-dimensional embedding of the $100$ food items where the above similarities are captured. We model the problem as learning an MNL model, parameterized by $\Theta^*$, which gives the following probability distribution for $i$-th user's answer,
\begin{align*}
\prob{a_i \text{ is more similar to } b_i \text{ than to } c_i} \;\; = \;\; \frac{e^{\Theta^*_{a_i, b_i}}}{e^{\Theta^*_{a_i, b_i}} + e^{\Theta^*_{a_i, c_i}} }\;.
\end{align*}
This is  the same model  as the pairwise comparisons from Section \ref{sec:model}, 
except for the fact that instead of a user (row) comparing two items (columns), 
here we compare a food item (row) to two other food items (columns). 
We implement three different algorithms: our nuclear norm based algorithm (`nucnorm'), 
unregualrized ($\lambda=0$) likelihood maximization (`fullrank') and 
maximum likelihood based algorithm to learn rank-$1$ Plackett-Luce model \cite{Luce59,Pla75} (`plackett').

In Figure~\ref{fig:food_likelihood}, we plot the mean log-likelihood of the learned model versus fraction $x$ of the data used for training for the various algorithms for testing (a) and training (b) data. If $x$ fraction of the data is used for training, we use the rest ($1-x$) of the data for testing. For the nuclear norm minimization, we estimate the Laplacian $L$ using the empirical distribution of the triplets and $\lambda$ is chosen to be $0.1\, \sqrt{\log(d)/2\,d\,xn}$.

In the Fig.~\ref{fig:food_likelihood}(b) (to the left) on the testing data set, we see that our MNL model based nuclear norm regularized algorithm clearly outperforms both unregularized algorithm and the Placket-Luce model estimator, especially when there is less training data. In fact, the mean likelihood 
($\log (P_{\rm model}(\text{test data}))$) on the testing data remains relatively the same when we decrease the size of the training data, which supports our claim that real data has low-rank structure. 
In the Fig.~\ref{fig:food_likelihood}(b) (right) on the training data sets, 
the non-regularized approach of `fullrank' achieves higher likelihood on the training data, 
indicating that it overfits to training data. 

\begin{figure}[h]
  \centering
  \subcaptionbox{Mean log-likelihood on testing data set}[.48\linewidth][c]{
    \includegraphics[width=.5\linewidth]{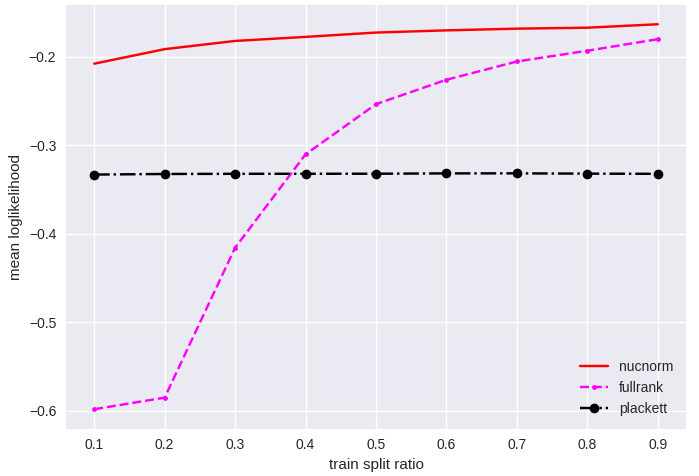}}\quad
  \subcaptionbox{Mean likelihood on training data set}[.48\linewidth][c]{%
    \includegraphics[width=.5\linewidth]{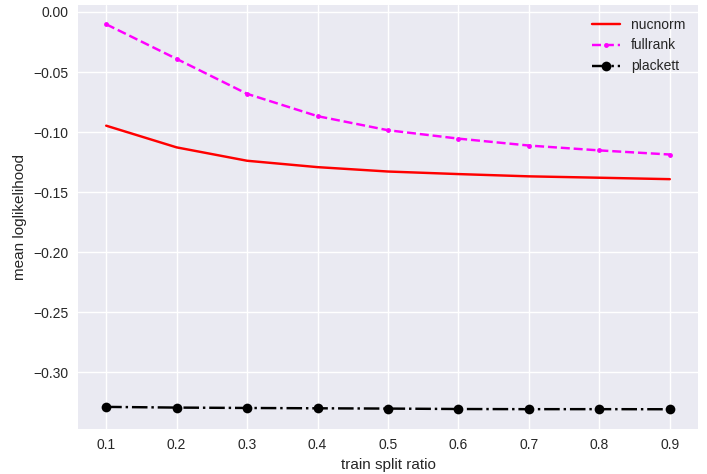}}
    
    \caption{Mean log-likelihood vs fraction of the total data used for training. Our nuclear norm regularized algorithm (`nucnorm') fits the test data better than both unregularized algorithm (`fullrank') and Plackett-Luce model based estimation, especially when training data is small in size.}
    \label{fig:food_likelihood}
\end{figure}

\begin{figure}[h]
  	\centering
   	\includegraphics[width=.8\linewidth]{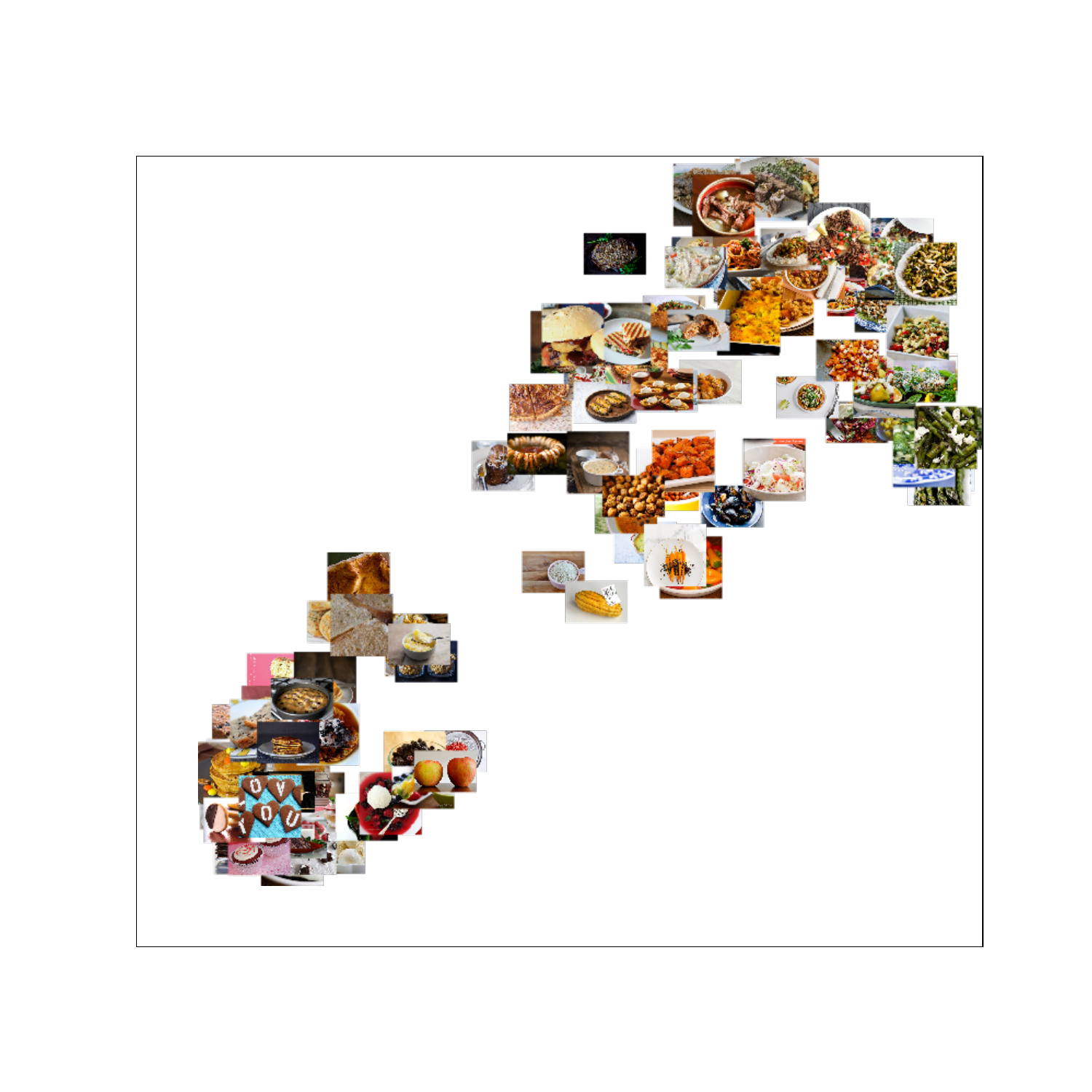}
   	\put(-310,157){Dessert}
   	\put(-235,230){Meat}
   	\put(-68,185){Salad}
%   \put(-202,97){Dessert}
%   \put(-140,180){Meat}
%   \put(-52,115){Salad}
    \caption{Food100: t-SNE embedding of the columns of the learned MNL parameter $\hTheta$. Desserts (bottom left) are seperated from other dishes. Meat dishes are also separated from vegetable dishes.}
    \label{fig:food_embed}
\end{figure}

In Fig.~\ref{fig:food_embed} we plot the t-SNE embedding \cite{maaten2008visualizing} of the columns of the estimated MNL parameter matrix $\hTheta$ when all the data is used for training. 
The desserts (left bottom) are separated from other dishes, 
and meat dishes and salad dishes form two clusters (top right). 

%=================================================================================================

%====================================================================================================
\section{Learning the MNL Model under Higher Order Comparisons} 
\label{sec:kwise}

Higher order comparisons, where a subset of $k$ items are 
offered to a user who then provides a 
complete ranking (total linear ordering) of those item, is a natural generalization 
of pairwise comparisons, that captures some aspect of heterogeneous and complex modern data sets.  
We refer to such scenarios as $k$-wise comparisons or $k$-wise rankings. 
The MNL model generalizes to such comparisons. 
Let $\Theta^*$ be the $d_1\times d_2$ dimensional matrix 
capturing the preference of $d_1$ users on $d_2$ items, 
where the rows and columns correspond to users and items, respectively. 
In this $k$-wise ranking setting, when a user $i$ is 
presented with a set, $S_i\subseteq [d_2]$, of $k$ alternatives she reveals her preferences 
as  a  ranked list over those items. 
To simplify the notations, we assume that all the users compare the same number $k$ of items, 
but the analysis naturally generalizes to the case when the size might differ from  a user to a user and when each user provides more than one $k$-wise ranking.
Let $v_{i,\ell} \in S_i$ denote the (random) $\ell$-th best choice of user $i$. Each user gives a ranking,  independent of other users' rankings, from 
\begin{align}
   \prob{v_{i,1},\ldots,v_{i,k}|\text{$S_i$ is presented to user $i$}} & = \, \prod_{\ell = 1}^k \frac{e^{\Theta^*_{i,v_{i,\ell}}}}{\sum_{j\in S_{i,\ell}} e^{\Theta^*_{i,j}} } %\\ 
% &=\, \frac{e^{\Theta^*_{i,v_{i,1}}}}{e^{\Theta^*_{i,v_{i,1}}} + \cdots + e^{\Theta^*_{i,v_{i,k}}} } \,\cdot\, \cdots \,\cdot\,\frac{e^{\Theta^*_{i,v_{i,k-1}}}}{e^{\Theta^*_{i,v_{i,k-1}}} + e^{\Theta^*_{i,v_{i,k}}} } \;,
  \;, \label{eq:defkwiseMNL}
\end{align}
where 
with $S_{i,\ell} \equiv S_i \setminus \{ v_{i,1}, \ldots, v_{i,\ell-1} \}$ and $S_{i,1} \equiv S_i$. 
For a user $i$, the $i$-th row of $\Theta^*$ 
represents the underlying preference vector of 
the user, and the more preferred items are more likely to be ranked higher.

Similar to the pairwise comparisons, the  distribution \eqref{eq:defkwiseMNL} is independent of shifting each row of $\Theta^*$ by a constant. Since we can only estimate $\Theta^*$ up to this equivalent class, we search for the one 
whose rows sum to zero, i.e. $\sum_{j\in[d_2]} \Theta^*_{i,j}=0$ for all $i\in[d_1]$.
For capturing the ``spikiness'' \cite{NW11} of $\Theta^*$, we define $\alpha \equiv \max_{i, j_1,j_2}  |\Theta_{ij_1}^*- \Theta^*_{ij_2}| $ to denote the dynamic range of 
the underlying $\Theta^*$, such that when $k$ items are compared, we always have 
\begin{eqnarray}
 \frac1k e^{-\bb} \;\leq\; \frac1{1 + (k-1) e^{\bb}} \;\leq\; \prob{v_{i,1} = j} \;\leq\; \frac1{1 + (k-1) e^{-\bb}} \;\leq\; \frac1k  e^\bb \;,
 \end{eqnarray}
for all $j\in S_i$, all $S_i\subseteq [d_2]$ satisfying $|S_i|=k$ and all $i\in[d_1]$.  
We do not make any assumptions on $\bb$ other than that 
$\bb=O(1)$ with respect to $d_1$ and $d_2$. 
Given this definition, we solve the following optimization 
\begin{align}
\label{eq:kwiseopt}
\widehat{\Theta} \,\in\, \arg \min_{\Theta \in \Omega_\bb} \,-\cL(\Theta) \,+\, \lambda \nucnorm{\Theta},
\end{align}
where,
\begin{align}
  \cL(\Theta)\,=\, \frac{1}{k\, d_1} \sum_{i=1}^{d_1}  \sum_{\ell=1}^{k} \left( \llangle \Theta, e_{i} e_{v_{i,\ell} }^\top    \rrangle - \log \left( \sum_{j \in S_{i,\ell} } \exp\left( \llangle \Theta, e_{i} e_j^\top \rrangle \right) \right) \right) \;,
  \label{eq:defkwiseL}
\end{align}
over 
 \begin{eqnarray}
  \Omega _\bb = \Big\{ A \in \reals^{d_1 \times d_2} \,\big|\, \lnorm{A}{\infty} \le \bb, \text{ and }\forall i\in[d_1]\text{ we have }\sum_{j\in[d_2]} A_{ij}=0 \Big\} \;.\label{eq:defkwiseomega}
\end{eqnarray}
Note that unlike graph sampling for pairwise comparisons, 
we assume that each user is presented a subset of $k$ items and provides 
a complete ranking over those $k$ items. 
This choice of sampling scenario, together with independent choices of the items in subset $S_i$'s, is crucial for getting a bound that is tight in its scaling with respect to not only $d_1$, $d_2$, and $r$, but also $k$, 
as  a certain independence is required to apply the symmetrization technique (in Lemma \ref{lmm:kwise_hessian2}) which gives us the desired tight bound on the error.  
It trivially follows from our analysis that 
  one can relax the assumptions in the sampling scenario significantly (e.g.~sampling without replacement, heterogeneous sampling probabilities for each item-user pair, etc.), 
  and the only change in the upper bound of Eq.~\eqref{eq:kwiselambda} will be a weaker dependence $k$. 

%====================================================================================================
\subsection{Performance Guarantee} 

We provide an upper bound on the resulting error of our convex relaxation, 
when a {\em multi-set} of items $S_i$ presented to user $i$ is drawn uniformly at random 
with replacement. 
Precisely, for a given $k$, $S_i = \{ j_{i,1},\ldots, j_{i,k}\}$ where 
$j_{i,\ell}$'s are independently drawn uniformly at random over the $d_2$ items. 
Further, if an item is sampled more than once, i.e. if  there exists 
$j_{i,\ell_1}=j_{i,\ell_2}$ for some $i$ and $\ell_1\neq \ell_2$, then we assume that the user 
treats these two items as if they are two distinct items with the same MNL weights 
$\Theta_{i,j_{i,\ell_1}}^*=\Theta_{i,j_{i,\ell_2}}^*$. The resulting preference is therefore always over $k$ items 
(with possibly multiple copies of  the same item), and distributed according to \eqref{eq:defkwiseMNL}. 
For example, if $k=3$, it is possible to have $S_i=\{j_{i,1}=1,j_{i,2}=1,j_{i,3}=2\}$, in which case the resulting 
ranking can be $ (v_{i,1} = j_{i,1},v_{i,2}=j_{i,3}, v_{i,3}=j_{i,2} )$ with probability $ (e^{\Theta^*_{i,1}})/(2\,e^{\Theta^*_{i,1}}+e^{\Theta^*_{i,2}}) \times (e^{\Theta^*_{i,2}})/(e^{\Theta^*_{i,1}}+e^{\Theta^*_{i,2}}) $. 
Such a sampling with replacement is necessary for the analysis, where 
we require independence in the choice of the items in $S_i$ 
in order to apply the symmetrization technique (e.g. \cite{BLN13}) to bound the expectation of the deviation (cf. Appendix  \ref{sec:kwise_hessian3_proof}).  Similar sampling assumptions have been made in existing analyses on learning 
low-rank models from noisy observations, e.g. \cite{NW11}. 
Let $d \equiv (d_1+d_2)/2$, and  let $\sigma_j(\Theta^*)$ denote the $j$-th singular value of the matrix $\Theta^*$. Define  
\begin{eqnarray}
  \lambda_0 &\equiv& e^{2\bb} \sqrt{\frac{d_1\log d + d_2\,(\log d)^2 (\log 2d)^4}{k\,d_1^2 \,d_2}} \;.  
  \label{eq:kwiselambda}
\end{eqnarray} 

\begin{theorem}
\label{thm:kwise_ub}
Under the described sampling model, assume $24 \, \leq k \leq \min\{ d_1^2 \log d, (d_1^2 + d_2^2)/(2d_1) \log d ,\\ \;(1/e)\,d_2(4\log d_2 + 2 \log d_1)\}$, and
$\lambda \in [480 \lambda_0, c_0 \lambda_0]$
with  any constant $c_0=O(1)$ larger than 480. Then, solving the optimization \eqref{eq:kwiseopt} achieves 
\begin{eqnarray}
% \frac{1}{\sqrt{d_1d_2}} \fnorm{\widehat{\Theta}-\Theta^\ast} &\leq& 
% 288 \sqrt{2}   e^{4 \bb}  c_0 \lambda_0 \sqrt{r d_1 d_2}    + \sqrt{ 288  e^{4\bb}  c_0 \lambda_0   \sum^{\min\{d_1,d_2\}}_{j=r+1}\sigma_j(\Theta^*) }
% \frac{1}{\sqrt{d_1d_2}} \fnorm{\widehat{\Theta}-\Theta^\ast} \,\leq\, 
% 288 \sqrt{2}   e^{6 \bb}  c_0 \sqrt{\frac{r(d_1\log d + d_2(\log d)^2) }{k\,d_1}}    + \sqrt{ 288  e^{4\bb}  c_0 \lambda_0   \sum^{\min\{d_1,d_2\}}_{j=r+1}\sigma_j(\Theta^*) }
    \frac{1}{d_1d_2} \fnorm{\hTheta-\Theta^\ast}^2  \; \leq \; 288\sqrt{2}  \, e^{4\bb}  c_0 \lambda_0 \sqrt{r} \,  \fnorm{ \hTheta-\Theta^\ast } + 288 e^{4\bb} c_0 \lambda_0 \sum_{j=r+1}^{\min\{d_1,d_2\}} \sigma_j(\Theta^*) \;, 
  \label{eq:kwise_ub}
\end{eqnarray}
for any $r\in \{1,\ldots, \min\{d_1,d_2\} \}$ 
with probability at least $1-2d^{-3} - d_2^{-3}$ where $d=(d_1+d_2)/2$.
\end{theorem}

A proof   is provided in Appendix \ref{sec:kwise_ub_proof}. 
This bound holds for all values of $r$ and one could potentially optimize over $r$. 
We show such results in the following corollaries.

\begin{corollary}[{\bf Exact low-rank matrices}]
  Suppose $\Theta^*$ has rank at most $r$. 
  Under the hypotheses of Theorem \ref{thm:kwise_ub}, 
  solving the optimization \eqref{eq:kwiseopt} with the choice of the regularization parameter 
  $\lambda\in[480\lambda_0,c_0\lambda_0]$ achieves  with probability at least $1-2d^{-3} - d_2^{-3}$, 
  \begin{eqnarray}
  \frac{1}{\sqrt{d_1d_2}} \fnorm{\widehat{\Theta}-\Theta^\ast} \,\leq\, 
  288 \sqrt{2}   e^{6 \bb}  c_0 \sqrt{\frac{r(d_1\log d + d_2\,(\log d)^2 (\log 2d)^4) }{k\,d_1}} \;.  
  \label{eq:kwise_lowrank}
  \end{eqnarray}
  \label{cor:kwise_lowrank}
\end{corollary}
%\vspace{-0.2cm}
The number of entries is $d_1d_2$ and we rescale the Frobenius norm error appropriately by $1/\sqrt{d_1d_2}$. 
For a rank-$r$ matrix $\Theta^*$ with
$r(d_1+d_2)-r^2 = O(r(d_1+d_2))$ degrees of freedom, 
the above theorem shows that the total number of samples, which is
$(k \, d_1)$, needs to scale as $O(rd_1(\log d) + r d_2 \,(\log d)^2 (\log 2d)^4)$ 
in order to achieve an arbitrarily small error. This is only poly-logarithmic 
factor larger than the degrees of freedom. 
In Section \ref{sec:kwise_lb}, we provide a lower bound on the error directly, that matches the upper bound up to a logarithmic factor. 
The dependence 
on the dynamic range $\bb$ is sub-optimal. 
 The exponential dependence in the bound 
seems to be a weakness of the analysis, as seen from numerical experiments in the right panel of Figure \ref{fig:kwise}. Although the error increase with $\bb$, numerical experiments suggests that it only increases at most linearly. 
A practical issue in achieving the above rate is the choice of $\lambda$, since 
the dynamic range $\bb$ is not known in advance. 
Figure \ref{fig:kwise} illustrates that the error is not sensitive to the choice of $\lambda$ for a wide range. 

For approximately low-rank matrices in $\ell_q$-ball defined in 
\eqref{eq:defBq}, 
optimizing the choice of $r$ in Theorem \ref{thm:kwise_ub}, we get the following result. 
This is a strict generalization of Corollary \ref{cor:kwise_lowrank} and  
a proof of this Corollary is provided in Appendix \ref{sec:kwise_cor_proof}. 

\begin{corollary}[{\bf Approximately low-rank matrices}]
  Suppose $\Theta^* \in \bB_q(\rho_q)$ for some $q\in(0,1]$ and $\rho_q>0$. 
  Under the hypotheses of Theorem \ref{thm:kwise_ub}, 
  solving the optimization \eqref{eq:kwiseopt} with the choice of the regularization parameter 
  $\lambda \in [480\lambda_0,c_0\lambda_0]$ achieves 
    with probability at least $1-2d^{-3}$, 
  \begin{eqnarray}
  \frac{1}{\sqrt{d_1d_2}} \fnorm{\widehat{\Theta}-\Theta^\ast} \,\leq\, 
   \frac{2\sqrt{\rho_q}}{\sqrt{d_1d_2}}  \left(  288 \sqrt{2}  c_0 e^{6 \bb} \,\sqrt{\frac{d_1d_2(d_1\log d + d_2\,(\log d)^2 (\log 2d)^4) }{k\,d_1}} \right)^{\frac{2-q}{2}}\;. 
  \label{eq:kwise_appxlowrank}
  \end{eqnarray}
  \label{cor:kwise_appxlowrank}
\end{corollary}

%====================================================================================================
\subsection{Information-theoretic Lower Bound for  Low-rank Matrices} 
\label{sec:kwise_lb} 
A simple parameter counting argument indicates that 
it requires the number of samples to scale as the degrees of freedom i.e.,   
$k d_1 \propto r(d_1+d_2)$, 
to estimate a $d_1\times d_2$ dimensional matrix of rank $r$. 
By applying Fano's inequality with appropriately chosen hypotheses, the following lower bound establishes that the bound in Theorem \ref{thm:kwise_ub} is sharp up to a logarithmic factor.

\begin{theorem}
Suppose $\Theta^*$ has rank $r$. 
Under the described sampling model, for large enough $d_1$ and $d_2\geq d_1$, 
there is a universal numerical constant $c>0$ such that 
\begin{eqnarray}
  \inf_{\hTheta} \sup_{\Theta^*\in\Omega_\bb} \E\Big[ \frac{1}{\sqrt{d_1d_2}}\fnorm{\hTheta-\Theta^*} \Big] &\geq& c\,\min\left\{ \bb e^{-\bb} \sqrt{\frac{\,r\,d_2}{k\,d_1}}  \,,\,  \frac{\bb d_2}{\sqrt{d_1d_2 \log d}}\right\} \;,
\end{eqnarray}
where the infimum is taken over all measurable functions over the observed ranked lists  
$\{(v_{i,1},\ldots,v_{i,k})\}_{i\in[d_1]}$. 
\label{thm:kwise_lb}
\end{theorem}
A proof of this theorem is provided in Appendix \ref{sec:kwise_lb_proof}. 
The term of primary interest in this bound is the first one, 
which shows the scaling of the (rescaled) minimax rate as $\sqrt{r(d_1+d_2)/(kd_1)}$ (when $d_2\geq d_1$), and matches the upper bound in \eqref{eq:kwise_ub}.
It is the dominant term in the bound whenever 
the number of samples is larger than the degrees of freedom by a logarithmic factor, i.e.,  
$kd_1 >  r(d_1+d_2) \log d$, ignoring the dependence on $\bb$.
This is a typical regime of interest, where the sample size is comparable to the latent dimension of the problem.
In this regime,  
Theorem \ref{thm:kwise_lb} establishes that the upper bound in 
Theorem \ref{thm:kwise_ub} is minimax-optimal up to a logarithmic factor in the dimension $d$. 

\subsection{Rank Breaking for Higher Order Comparisons} 
\label{sec:rb}

A common approach in practice to 
handle higher order comparisons is {\em rank breaking}, 
which refers to the practice 
of breaking the higher order comparisons into a set of 
pairwise comparisons and applying an estimator tailored for pairwise comparisons treating each pair as independent 
\cite{ACPX13,APX14}. 
When the higher order comparison is given as partial rankings (as opposed to total linear ordering as we assume) then rank breaking can be inconsistent, and special 
algorithms are needed for weighted rank breaking \cite{KO16,KO16nips}. However, when $k$-wise rankings (also called total linear  orderings)  are observed as we assume, simple and standard rank breaking achieves a similar performance as the higher order estimator in \eqref{eq:kwiseopt}. 
 Assume that $u_{i,m}$, $i \in [d_1],\;m \in [k]$, denotes the $m$-th element observed by the $i$-th user. 
 Concretely, in rank breaking,  we convert the $k$-wise ranking data into pairwise ranking data and then we solve the following optimization problem: 
\begin{align}
\calL (\Theta) = \frac{1}{d_1 {k \choose 2}} \sum_{i \in [d_1]} \sum_{(m_1, m_2) \in \calP_0} \left( \Theta_{i,\; h_i\left(m_1,m_2\right)} - \log\left( \exp\left(\Theta_{i,\; u_{i,m_1}}\right) + \exp\left(\Theta_{i,\; u_{i,m_2}}\right)\right) \right)\;,\label{eq:pairwise-LL}
\end{align}
where $\calP_0 = \{(i,j) :\; 1 \leq i < j \leq k\}$, and $h_i\left(m_1,m_2\right)$ and $l_i\left(m_1,m_2\right)$ is defined as the higher and lower ranked index among $u_{i, m_1}$ and $u_{i, m_2}$ respectively. Then modified optimization problem becomes,
\begin{align}
{\hTheta} \; \in \; \arg \min_{\Theta \in \Omega_{\alpha}} -\calL(\Theta) + \lambda \nucnorm{\Theta} \label{eq:pairwise-opt}
\end{align}
Let $d \equiv (d_1+d_2)/2$, and  let $\sigma_j(\Theta^*)$ denote the $j$-th singular value of the matrix $\Theta^*$. Define  
\begin{eqnarray}
  \label{eq:pairwise-lambda}
  \lambda_0 &\equiv&  \sqrt{\frac{d \log d}{k\,d_1^2 \,d_2}} \;.
\end{eqnarray} 
With this choice of regularization coefficient, we get the following 
upper bounds on the rank breaking estimator \eqref{eq:pairwise-opt} that are comparable to 
the upper bounds of $k$-wise ranking estimator in 
Theorem \ref{thm:kwise_ub} and Corollary \ref{cor:kwise_lowrank}. 

\begin{theorem}
\label{thm:pairwise_ub}
Under the described sampling model, assume $  2(c+4) \log d \, \leq k \leq \max\{ d_1, d_2^2/d_1\} \log d$, $d_1 \geq 4$, and
$\lambda \in [ 2\sqrt{32(c+4)}\lambda_0, c_p \lambda_0]$
with  any constant $c=O(1)$ larger than $2\sqrt{32(c+4)}$. Then, solving the optimization \eqref{eq:pairwise-opt} achieves 
\begin{eqnarray}
    \frac{1}{d_1d_2} \fnorm{\hTheta-\Theta^\ast}^2  \; \leq \; 144\sqrt{2}  \, e^{2\bb}  c \lambda \sqrt{r} \,  \fnorm{ \hTheta-\Theta^\ast } + 144 e^{2\bb} c \lambda \sum_{j=r+1}^{\min\{d_1,d_2\}} \sigma_j(\Theta^*) \;, 
  \label{eq:pairwise_ub}
\end{eqnarray}
for any $r\in \{1,\ldots, \min\{d_1,d_2\} \}$ 
with probability at least $1-2d^{-c} - 2d^{-2^{13}}$ where $d=(d_1+d_2)/2$.
\end{theorem}

A proof   of this theorem is provided in Appendix \ref{sec:pairwise-theorem}, and the following corollary follows for rank-$r$ matrices. 

\begin{corollary}[{\bf Exact low-rank matrices}]
  Suppose $\Theta^*$ has rank at most $r$. 
  Under the hypotheses of Theorem \ref{thm:pairwise_ub}, there exists a constant $c_1>0$ such that 
  solving the optimization \eqref{eq:pairwise-opt} with the choice of the regularization parameter 
  $\lambda \in [ 2\sqrt{32(c+4)}\lambda_0, c \lambda_0]$ achieves   with probability at least $1-2d^{-c} - 2d^{-2^{13}}$, 
  \begin{eqnarray}
  \frac{1}{\sqrt{d_1d_2}} \fnorm{\widehat{\Theta}-\Theta^\ast} \,\leq\, 
  144 \sqrt{2}   e^{2 \bb}  c_1 \sqrt{\frac{rd\log d}{k\,d_1}} \;.  
  \label{eq:pairwwise_lowrank}
  \end{eqnarray}
  \label{cor:pairwwise_lowrank}
\end{corollary}

\subsection{Experiments}

We provide results from numerical experiments on both synthetic and real data sets. 
\subsubsection{Algorithm} \label{sec:kwise_algo}
Similar to the case of pairwise comparisons in Section \ref{sec:graph_algo}, we use proximal gradient descent \cite{ANW10,cai2010singular} along with modified Barzilai-Borwein (BB) step-size selection rule \cite{barzilai1988two} with the initial point $\Theta_0 = 0$. Each iteration of the algorithm applies the following two operations on the current estimate, $\Theta_t$, of $\Theta^*$,
\begin{align}
\widetilde{\Theta}_{t+1} &= \Theta_t - \eta_{t} \nabla_\Theta \cL(\Theta_t) &&\text{(gradient descent)}\\
{\Theta}_{t+1} &= M_t (\Gamma_t - \eta_t \lambda \id )^+ N_t^T &&\text{(singular value shrinkage and thresholding)}
\end{align}
where $M_t \Gamma_t  N_t^T := \widetilde{\Theta}_t$ is the singular value decomposition of $\widetilde{\Theta}_t$, such that $\Gamma_t$ is a diagonal matrix with positive entries, $(\cdot)^+$ is the entry-wise thresholding operation $\max(0, x)$, and $\eta_t$ is an BB step-size calculated as,
\begin{align}
\eta_t =\begin{cases}
    \lnorm{\Theta_t - {\Theta}_{t-1}}{2}^2/\Iprod{\Theta_t - {\Theta}_{t-1}}{\nabla_{\Theta}\cL(\Theta_t) - \nabla_{\Theta}\cL({\Theta}_{t-1})}, &\text{when $t$ is odd}  \\
    \Iprod{{\Theta}_t - {\Theta}_{t-1}}{\nabla_{\Theta}\cL(\Theta_t) - \nabla_{\Theta}\cL({\Theta}_{t-1})}/\lnorm{\nabla_{\Theta}\cL(\Theta_t) - \nabla_{\Theta}\cL({\Theta}_{t-1})}{2}^2, &\text{when $t$ is even}
  \end{cases}\,.
\end{align}

\subsubsection{Simulation: Higher Order Comparisons}
\label{sec:kwise_expt}
The left  panel of Figure \ref{fig:kwise} confirms the scaling of the error rate as predicted by Corollary \ref{cor:kwise_lowrank}. 
The lines merge to a single line when the sample size is rescaled appropriately (inset). 
We make a choice of $\lambda= \sqrt{ (\log d)/(kd^2)}$.
This choice is independent of $\bb$ and is smaller than proposed in Theorem \ref{thm:kwise_ub}. 
We generate the random rank-$r$ true MNL parameters matrices of dimension $d\times d$ using the process mentioned in Section \ref{sec:graph_algo}.
The root mean squared error (RMSE) is plotted where 
${\rm RMSE} = (1/\sqrt{d_1\,d_2}) \triplenorm{ \Theta^*-\hTheta}_{\rm F} $.   
We implement and solve the convex optimization \eqref{eq:kwiseopt} 
using proximal gradient descent method as analyzed in \cite{ANW10}. 
The right panel in Figure \ref{fig:kwise} illustrates that the actual  
error is insensitive to the choice of $\lambda$ for a broad range of 
$\lambda\in[\sqrt{ (\log d)/(kd^2)},2^{8}\sqrt{ (\log d)/(kd^2)}]$, after which it increases with $\lambda$.
\begin{figure}[h]
  \begin{center}
  \includegraphics[width=.45\textwidth]{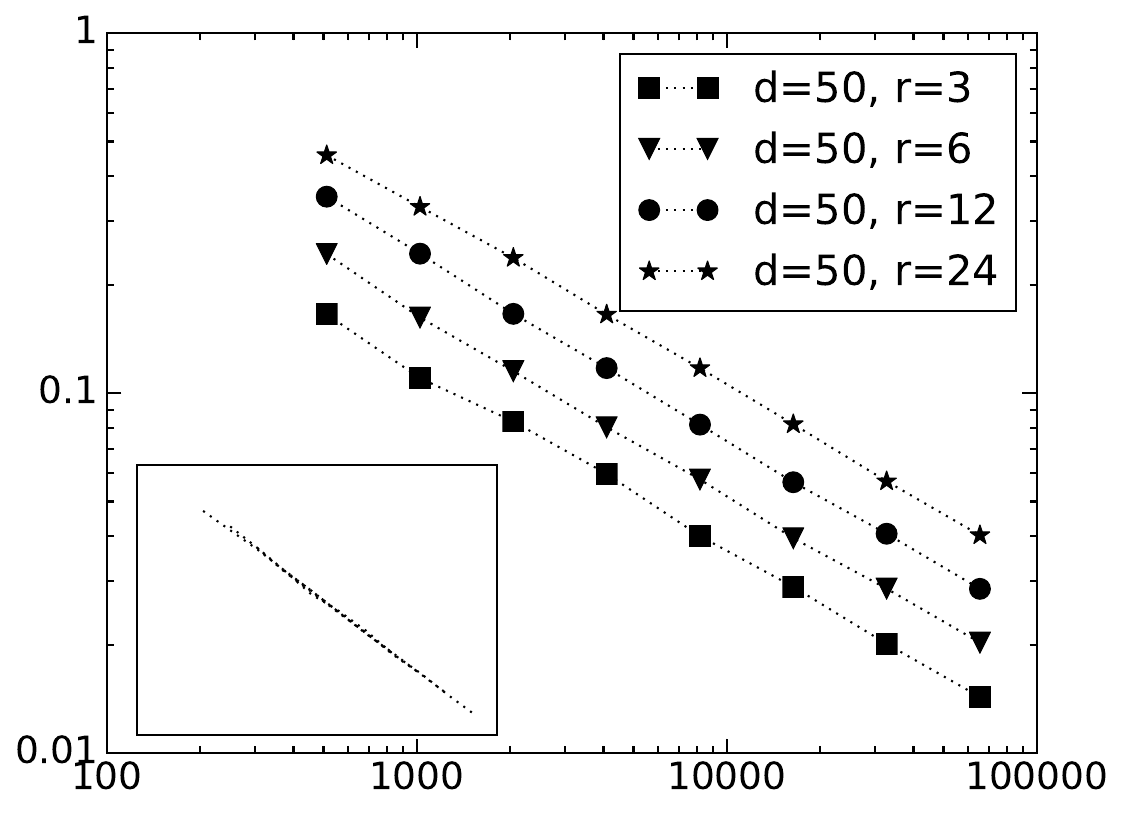}
  \put(-215,160){\small RMSE}
  \put(-135,-10){\small sample size $k$}
% \includegraphics[width=.33\textwidth]{}
% \put(-115,90){\small RMSE}
% \put(-70,-5){\small rank $r$}
  \hspace{0.4cm}
  \includegraphics[width=.49\textwidth]{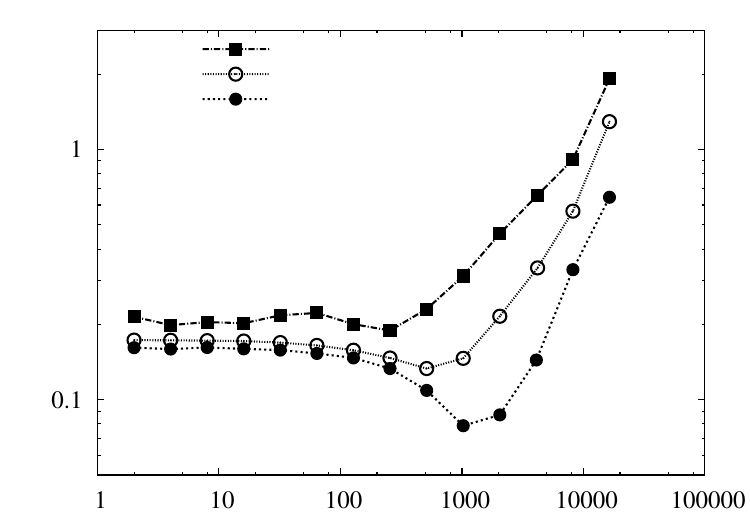}
  \put(-225,160){\small RMSE}
  \put(-125,-5){\small $\frac{\lambda}{\sqrt{(\log d) / (k d^2)}}$}
  \put(-195,144){\tiny $\bb=15$}
  \put(-195,137){\tiny $\bb=10$}
  \put(-195,130){\tiny $\bb=5$}
  \end{center}
  \caption[Error vs. number of samples and lambda for $k$-wise rankings]{ The (rescaled) RMSE scales as $\sqrt{r(\log d)/k}$ as expected from Corollary  \ref{cor:kwise_lowrank} for fixed $d=50$ (left).  In the inset, the same data is plotted versus rescaled sample size $k/(r\log d)$. 
      The (rescaled) RMSE is stable for a broad range of $\lambda$ and $\bb$ for fixed $d=50$ and $r=3$ 
      (right).  }
  \label{fig:kwise}
\end{figure}
%\vspace{-0.2cm}

\subsubsection{Simulation: Rank Breaking}
In this section we compare the higher order $k$-wise comparison algorithm \eqref{eq:kwiseopt} (`kwise') with the pairwise rank breaking algorithm \eqref{eq:pairwise-opt} (`kbreak'). We use the same setting as in Section \ref{sec:kwise_expt}, where we observe samples from $k$-wise ranking from an underlying true MNL model and the aim is to recover the true parameter $\Theta^*$ of the model. We use $\lambda= 0.45\,\sqrt{ (\log d)/(kd^2)}$ and $\lambda= 0.1\,\sqrt{ (\log d)/(kd^2)}$ for $k$-wise and pairwise rank breaking algorithms respectively. In Fig.~\ref{fig:rankbreak} we plot the RMSE for both the algorithm for $d=50$ and $r=3,12$. We note that the even though the RMSE decreases in the rate as predicted by the theorem, we see that pairwise rank breaking is worser than the higher order $k$-wise algorithm which directly uses the $k$-wise rankings. This is consistent with the experimental observation made previously in \cite{HOX14}. Further we note that rank breaking is much slower than the other algorithm, since gradient computation of the former takes $O(k^2)$ time whereas for the latter it can be computed in $O(k)$ time.
\begin{figure}[h!]
  \begin{center}
  \includegraphics[width=.52\textwidth]{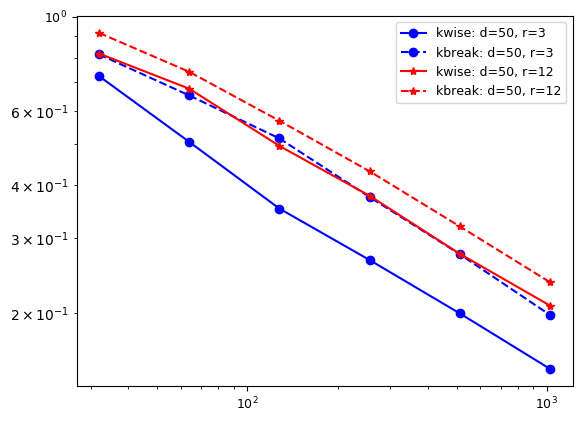}
  \put(-260,165){\small RMSE}
  \put(-135,-3){\small sample size $k$}
  \end{center}
  \caption{Rank Breaking: RMSE error versus number of samples per user $k$. $k$-wise (`kwise') algorithm performs better than the rank breaking (`kbreak') approach.}
  \label{fig:rankbreak}
\end{figure}

\subsubsection{Real data: Jester}
Jester data set\footnote{Data set is from \url{http://eigentaste.berkeley.edu/dataset/}.} \cite{goldberg2001eigentaste} has $24,982$ users, each rating a subset of $100$ jokes on continuous scale of $[-10, 10]$. As the scale is continuous, we derive  ordinal data from the scores (ties broken uniformly at random). We use only the $7200$ users who rated all the jokes for our experiments. For each user, $k=100x$ jokes were randomly selected uniformly at random for training, rest of the $100-k=100(1-x)$ jokes where used for testing, where $x$ is the fraction of jokes selected for training. We implment four algorithms: nuclear norm minimization (`nucnorm') \eqref{eq:kwiseopt}, unregularized ($\lambda = 0$) log-likelihood maximization (`fullrank'), rank-$1$ Plackett-Luce model estimation (`plackett'), and rank breaking algorithm (`rankbreak') \eqref{eq:pairwise-opt}. We use $\lambda= 0.7\,\sqrt{ (0.5 \log (d_1 d_2))/(k d_1 \sqrt{d_1 d_2})}$ and $\lambda= 0.16\,\sqrt{ (0.5 \log (d_1 d_2))/(k d_1 \sqrt{d_1 d_2})}$ for $k$-wise and pairwise rank breaking algorithms respectively. In Fig.~\ref{fig:jester} (a) we plot the multiplicative bias in the mean log-likelihood on the testing data versus the fraction $x$ of training data used. 
For each model in \{`nucnorm', `fullrank', `plackett', `rankbreak'\}, 
we plot in the y-axis 
 $$
 \frac{\log (P_{\rm model}(\text{test data})) - \log (P_{\rm fullrank}(\text{test data})) } {\abs{\log(P_{\rm fullrank}(\text{test data}))}} \;,
 $$
using fullrank model as a baseline as it has the least test likelihood.
Plackett-Luce model achieves the best performance when sample size is small, 
as this simplest model avoids overfitting. 
However, for most regimes of sample size, both the nuclear norm minimization and rank breaking 
achieve similar performance improving upon the others. 

The same trend holds when we measure the perfomrance in 
the normalized Spearman's footule distance \cite{diaconis1977spearman} $F(\pi_1, \pi_2) \in [0, 1]$ between two rank-lists $\pi_1$, $\pi_2$ of length $k$: 
\begin{align*}
F(\pi_1, \pi_2) = \frac{2}{k^2} \sum_{i=1}^k \abs{\pi_1(i) - \pi_2(i)}
\end{align*}
In Fig.~\ref{fig:jester} (b) we plot the average normalized Spearman's footrule distance between the ground truths and  the most likely ranking on the testing data under the estimated model parameters. 
We see that $k$-wise nuclear norm minimization and rank breaking algorithms perform the best in recovering the true ranking, except when the fraction of training data used is very small so that the rank-1 Plackett-Luce recovers better ranking. 

\begin{figure}[h!]
  \centering
  \label{fig:jester_likelihood}
  \subcaptionbox{Multiplicative bias in the mean log-likelihood}[.48\linewidth][c]{
    \includegraphics[width=.47\linewidth]{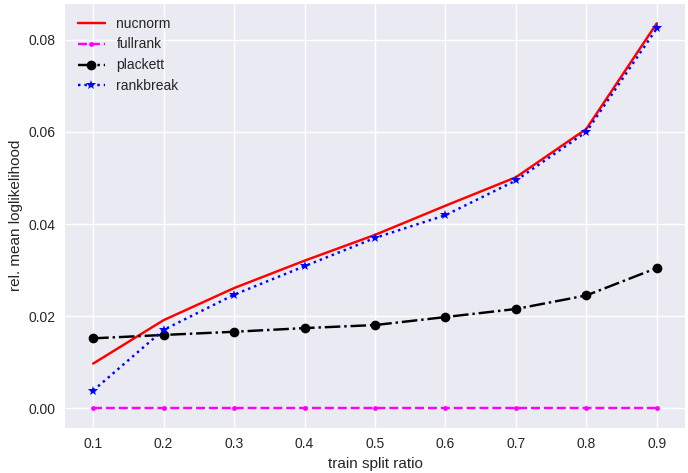}}\quad
  \label{fig:jester_footrule}
  \subcaptionbox{Spearman's footrule distance}[.48\linewidth][c]{%
    \includegraphics[width=.47\linewidth]{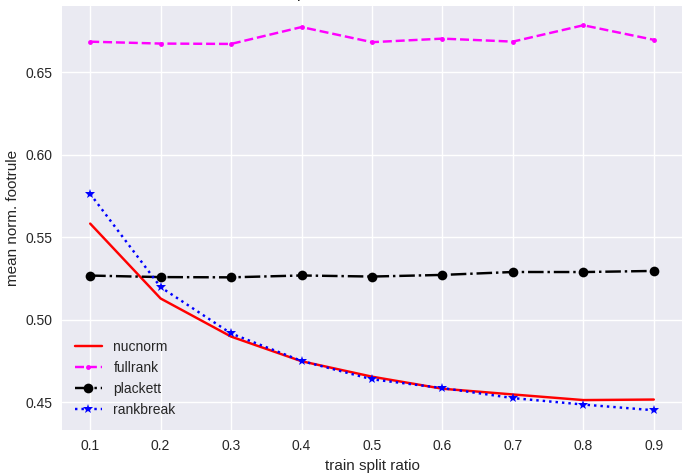}}
    
    \caption{Jester data set: Performance on test data vs.~fraction of the total data used for training. 
    The proposed nuclear norm regularized algorithm (`nucnorm') and rank breaking (`rankbreak')
    improves upon both the unregularized algorithm (`fullrank') and the Plackett-Luce model estimation (`plackett') for most regimes of the sample size.}
    \label{fig:jester}
\end{figure}

\subsubsection{Real data: Irish Election} 
\label{sec:irish}

The Irish Election data set\footnote{Data set is from \url{https://projecteuclid.org/euclid.aoas/1231424218\#supplemental}.} is an opinion poll conducted among 1083 participants during the 1997 Irish presidential election campaign \cite{GM09}. Each participant responded with a ranking the of their top 1, 2, 3, 4, or 5 choices from the 5 candidates: Banotti, McAleese, Nally, Roche, and Scallon. For our experiments we use only the 807 participants who gave their top-5 choices, i.e.~full-rankings of all the candidates. Next we divide these participants into 60 (2x3x5x2) group according to a Cartesian product of four categorizations: sex (male/female), marital status (single/married/widowed+divorced), social class (F/AB/C1/C2/DE)\footnote{Social classes are F: farmer, AB: middle class. C1: lower middle class, C2: skilled working class, and DE: other working class.}, location (rural/city+town). We assume that within each group the responses of its member follow the same distriubtion and these distributions of all all the groups are captured by an MNL model with parameter $\Theta^* \in \reals^{60 \times 5}$. We implement three algorithms: nuclear norm minimization (`nucnorm') \eqref{eq:kwiseopt}, unregularized ($\lambda = 0$) log-likelihood maximization (`fullrank'), and rank-$1$ Plackett-Luce model estimation (`plackett'). We use $\lambda= 0.8\,\sqrt{ (0.5 \log (d_1 d_2))/(k d_1 \sqrt{d_1 d_2})}$. If $x$ randomly sampled fraction of the data is used for training, then rest of the data is used for testing.
In Fig.~\ref{fig:irish_likelihood} we plot the mean log-likelihood 
($\log (P_{\rm model}(\text{test data}))$) on the testing data versus the fraction of training data used. We see that nuclear norm minimization and Plackett-Luce model estimation tie for the first place and both improves significantly upon the un-regularized full-rank MNL model estimation. 
In Fig.~\ref{fig:irish_embed} we plot the t-SNE \cite{maaten2008visualizing} embedding of the rows of the estimated parameter matrix $\hTheta$ when all the data is used for training. In Fig.~\ref{fig:irish_embed_marital} the markers represent the marital status of the group: single/married/divorced+widowed. 
In Fig.~\ref{fig:irish_embed_social} the markers represent the social class of the groups. 
We see that married (left) and divorced+widowed (right) groups are clearly separated in the embedding, indicating that marital status influences the preference of candidates. 
However, we see that the social classes are less influential. 

\begin{figure}[h!]
%\centering
\begin{minipage}{.48\textwidth}
  \begin{center}
  \includegraphics[width=1\linewidth]{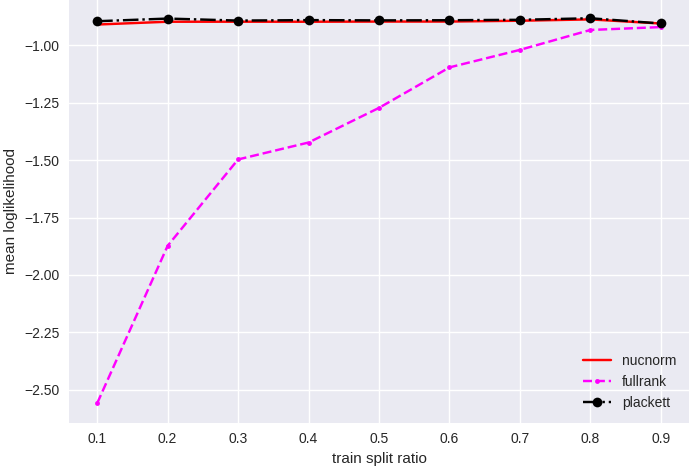}
  \end{center}
  \caption{Irish Election: Mean log-likelihood on the test data versus fraction of the data used for training. Nuclear norm minimization and Plackett-Luce model estimator tie for the best performance.}
  \label{fig:irish_likelihood}
\end{minipage}%
\hfill
\begin{minipage}{.48\textwidth}
  \centering
  \includegraphics[width=1\linewidth]{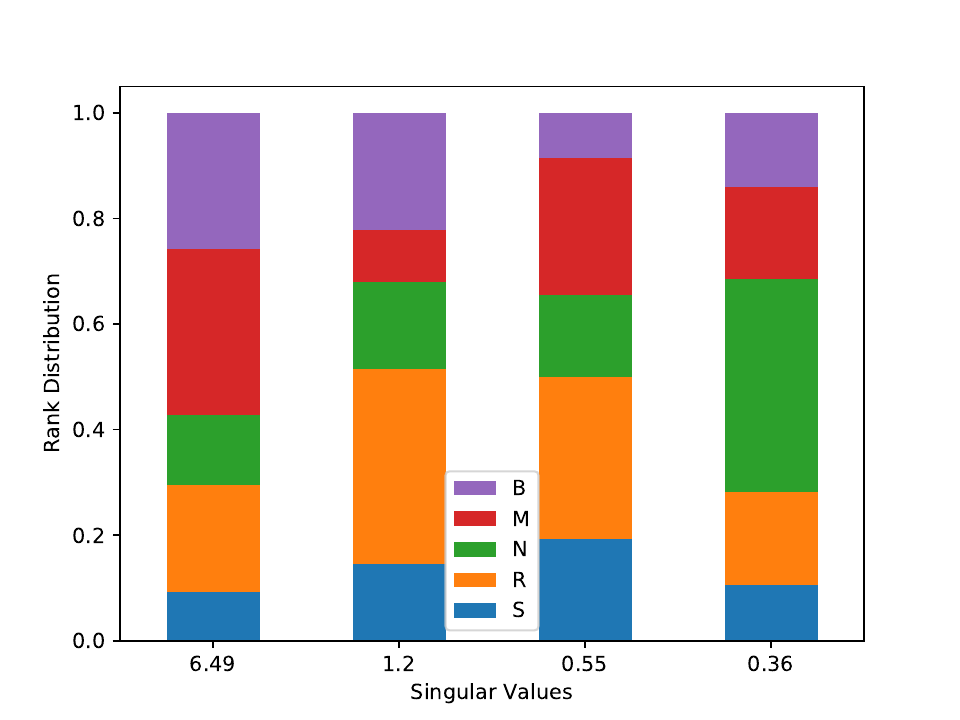}
  \captionof{figure}{Irish Election: Each bar corresponds to rank distribution of one of the singular 4 values (x-axis) of the $\hTheta$. Heights of the partitions represent the probability with which the distribution ranks the corresponding candidate as first (Section \ref{sec:irish}).}
  \label{fig:irish_blocs}
\end{minipage}
\end{figure}
In Fig.~\ref{fig:irish_blocs} we represent the voting characteristics of the top $4$ right singular vectors $\{\hat{v}_j\}_{j=1}^4$ of $\hTheta$, which has a rank of $4$ when all the data is used for training. The each stacked bar corresponds to the singular value $\sigma_j$ marked on the x-axis. Partition of a bar represents the choice model distribution of the corresponding singular vector: $\exp(v_j)/(\ones^T \exp(v_j))$, where $\exp(v_j)$ is the element-wise exponentiation operator. We see that there are $2$ majors voting ``basis" distributions; one favoring McAleese and another favoring Roche. Similar voting blocs have been observed earlier \cite{GM09}. Even though Placket-Luce model estimator achieves the similar likelihood as our nuclear norm regularized algorithm, the latter helps us in identifying voter ``basis'', solely from rank data without using the side-information on the voters as in \cite{GM09}.
\begin{figure}[h!]
\centering
\begin{subfigure}{0.5\textwidth}
  \centering
  \includegraphics[width=1\linewidth]{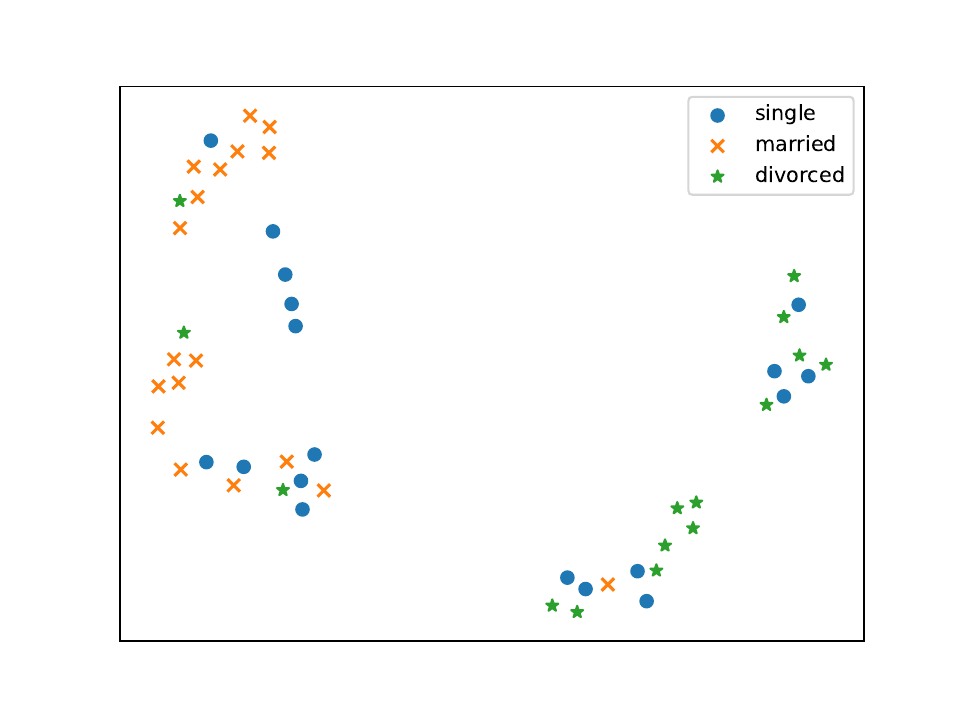}
  \caption{Martial status}
  \label{fig:irish_embed_marital}
\end{subfigure}%
\begin{subfigure}{.5\textwidth}
  \centering
  \includegraphics[width=1\linewidth]{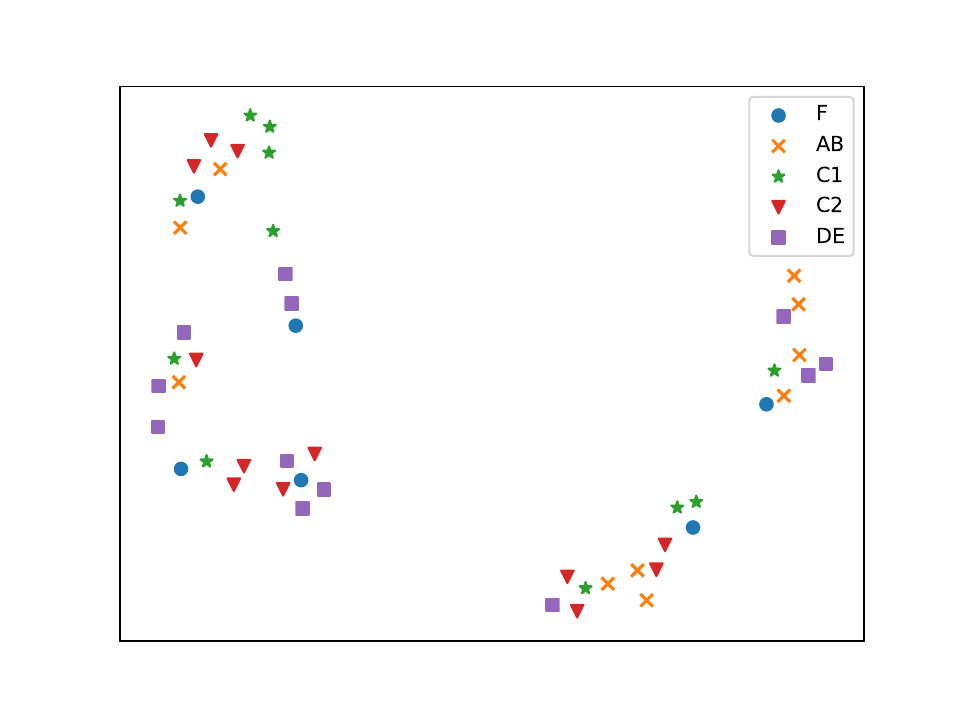}
  \caption{Social class}
  \label{fig:irish_embed_social}
\end{subfigure}
\caption{Irish Election: t-SNE embedding of rows of the estimated parameter matrix $\hTheta$. The markers correspond to the marital or social status (F: farmer, AB: middle class. C1: lower middle class, C2: skilled working class, DE: other working class) of the rows.}
\label{fig:irish_embed}
\end{figure}

\section{Learning the MNL Model from Choices} 
\label{sec:bundle}

Choice modeling  has had widespread success in numerous 
application domains such as transportation and marketing \cite{Tra86,GL83}. 
Choice models stem from 
revenue management 
to tackle the fundamental problem of 
maximizing expected revenue where the expectation is taken over 
a probabilistic choice model that is learned from historical purchase data.
Revenue management has focused on designing 
 efficient solvers for the optimization problem with exact or approximation guarantees, 
and has less to do with 
{\em learning} the parameters of probabilistic choice model of interest.

In this section, we tackle this unexplored  domain of learning choice models from samples with 
provable guarantees on the sample complexity.  
In particular, we study learning the MNL model from choices. 
We study two types of choices under the MNL model  
that together include all practical scenarios of interest:  
{\em bundled choice} and 
{\em consumer choice}. 

\bigskip\noindent
{\bf Bundled choice.} 
We consider a novel scenario of significant practical interest: choice modeling from bundled purchase history. 
In this setting, we assume that we have bundled purchase history data from $n$ users.  
Precisely, there are two categories of interest with $d_1$ and $d_2$ alternatives in each category respectively. For example, there are $d_1$ tooth pastes to choose from and $d_2$ tooth brushes to choose from. 
For the $i$-th user, a subset $S_i \subseteq [d_1]$ of alternatives from the first category is presented along with a subset $T_i\subseteq[d_2]$ of alternatives from the second category. 
We use $k_1$ and $k_2$ to denote the number of alternatives presented to a single user, i.e. $k_1=|S_i|$ and $k_2=|T_i|$, and we assume that the number of alternatives presented to each user is fixed, to simplify notations. However, the analysis naturally generalizes if the number differs from a user to another user. 
Given these sets of alternatives, each user makes a `bundled' purchase, {of an item from $S_i$ and another item from $T_i$ together,}
and we use $(u_i,v_i)$ to denote these bundled pair of alternatives (e.g. a tooth brush and a tooth paste) purchased by the $i$-th user. 
Each user makes a choice of the best alternative, independent of other users's choices, according to the MNL model as 
\begin{eqnarray}
  \prob{(u_i,v_i)=(j_1,j_2)} &=& \frac{e^{\Theta^*_{j_1,j_2}}}{ \sum_{j'_1\in S_i,j'_2\in T_i} e^{\Theta^*_{j_1',j_2'}}}\;,
  \label{eq:defbundleMNL}
\end{eqnarray}
for all $j_1\in  S_i$ and $j_2 \in T_i$. 
We emphasize here that the preference matrix is indexed by 
items of type one (in the rows) and items of type two (in the columns). 
We are taking the existing standard MNL model over user-item pairs to 
propose a novel choice model for bundled purchases over two types of items. 
One could go beyond paired bundled choices and include the user identity as another dimension, 
or add other types of items and consider higher order bundled purchases. 
This would require MNL model over higher order tensors, which is outside the scope of this paper, but are interesting generalizations. The main challenge in learning such tensor MNL models is that 
nuclear norm of a higher order tensor is not a computable quantity and hence 
minimizing the nuclear norm is not algorithmically feasible \cite{YC14}. 
Efficient methods exist based on alternating minimizations, 
but existing analysis tools can handle only quadratic losses  \cite{JO14completion}.

The distribution \eqref{eq:defbundleMNL} is independent of shifting all the values of $\Theta^*$ by a constant. Hence, there is an equivalent class of $\Theta^*$ that gives the same distribution for the choices:
%\begin{eqnarray}
  $ [\Theta^*] \equiv \{A \in \reals^{d_1\times d_2}\,|\, A = \Theta^* + c \ones \ones^T \text{ for some }c\in\reals  \}\;.$ 
%\end{eqnarray}
Since we can only estimate $\Theta^*$ up to this equivalent class, we search for the one 
that sums to zero, i.e. $\sum_{j_1\in[d_1],j_2\in[d_2]} \Theta^*_{j_1,j_2}=0$. 
Let $\bb=\max_{j_1,j_1' \in [d_1],j_2,j_2'\in[d_2]} |\Theta^*_{j_1,j_2}-\Theta^*_{j_1',j_2'}|$, denote the dynamic range of the underlying $\Theta^*$, such that when $k_1\times k_2$ alternatives are presented, we always have
\begin{eqnarray}
  \frac{1}{k_1k_2}e^{-\bb} \;\leq\; \prob{(u_i,v_i)=(j_1,j_2)} \;\leq\; \frac{1}{k_1k_2}e^{\bb}\;,
\end{eqnarray}
for all $(j_1,j_2)\in S_i\times T_i$ and for all $S_i \subseteq [d_1]$ and $T_i \subseteq [d_2]$ such that $|S_i|=k_1$ and $|T_i|=k_2$. We do not make any assumptions on $\bb$ other than that $\bb=O(1)$ with respect to $d_1$ and $d_2$. 
 Assuming $\Theta^*$ is well approximated by a low-rank matrix, we solve the following convex relaxation, given the observed bundled purchase history $\{(u_i,v_i,S_i,T_i)\}_{i\in[n]}$:  
 \begin{eqnarray}
  \hTheta &\in& \arg\min_{\Theta\in\Omega_\bb} \calL(\Theta) + \lambda \nucnorm{\Theta} \;,
  \label{eq:bundleopt}
 \end{eqnarray}
 where the negative log likelihood function according to \eqref{eq:defbundleMNL} is %given by
\begin{eqnarray}
  \cL(\Theta) &=& -\frac{1}{n} \sum_{i=1}^{n} \left( \llangle \Theta, e_{u_i} e_{v_i }^\top    \rrangle - \log \left( \sum_{j_1 \in S_i,j_2\in T_i } \exp\left( \llangle \Theta, e_{j_1} e_{j_2}^\top \rrangle \right) \right) \right),\text{ and }\\
  \label{eq:defbundleL}
  \Omega_{\bb} &\equiv & \Big\{ A \in \reals^{d_1 \times d_2}   \,\big|\, \lnorm{A}{\infty} \leq \bb \text{, and }\sum_{j_1\in[d_1],j_2\in[d_2]} A_{j_1,j_2}=0 \Big\} \;. 
  \label{eq:defbundleomega}
\end{eqnarray}

Compared to collaborative ranking, 
$(a)$ rows and columns of $\Theta^*$ correspond to an alternative from the first and second category, respectively; 
$(b)$ each sample corresponds to the purchase choice of a user which follow the MNL model with $\Theta^*$; 
$(c)$ each person is presented subsets $S_i$ and $T_i$ of items from each category; 
$(d)$ each sampled data represents the most preferred bundled pair of alternatives.%\vspace{-0.2cm}

\bigskip\noindent
{\bf Customer choice.}
The standard customer choice can be thought of as 
either a special case of {\em bundled choice} or as a special case of {\em higher order comparisons}. 
We consider the standard customer choice data from purchase history. 
In this setting, we assume that we have purchase history data from $d_1$ users over $d_2$ 
 alternatives. 
The $i$-th  sample is i.i.d.~with user $u_i$ chosen uniformly at random and 
  a subset $S_i \subseteq [d_2]$ of alternatives of size $k$. 
  We fix $k$ in order to be efficient in the notations and any variable size offerings can be handled seamlessly. 
  We assume $S_i$ is chosen uniformly at random with replacement, in a similar way as 
  bundled choice and higher order comparisons.  
  
  Given these sets of alternatives, the user $u_i$ makes a `choice' 
and we use $v_i$ to denote the purchased alternative
 by the $i$-th (sampled) user. 
Each user makes a choice of the best alternative, independent of other users's choices, according to the MNL model as 
\begin{eqnarray}
  \prob{v_i=j_2|u_i=j_1} &=& \frac{e^{\Theta^*_{j_1,j_2}}}{ \sum_{j'_2\in S_i} e^{\Theta^*_{j_1,j_2'}}}\;,
  \label{eq:defcustomerMNL}
\end{eqnarray}
for all $j_2\in  S_i$. 
Up to the fact that we index rows by users and not items of one category, 
this is a special case of the {\em bundled choice} model where we fix $k_1=1$. 
Mathematically, all of our results  under consumer choices are derived as corollaries from 
our results under bundled choices, but given the prevalent interest in customer choice models, 
we emphasize the implications of our framework under customer choice models in a separate section (see Section \ref{sec:customer}).

%=================================================================================================
\subsection{Learning the MNL Model from Bundled Choices }
%\vspace{-0.1cm}
We provide an upper bound on the error achieved by our convex relaxation, 
when the {\em multi-set} of alternatives $S_i$ from the first category  and $T_i$ from the second category 
are drawn uniformly at random with replacement from $[d_1]$ and $[d_2]$ respectively. Precisely, for given $k_1$ and $k_2$, we let $S_i = \{ j_{1,1}^{(i)},\ldots,j_{1,k_1}^{(i)} \}$ 
and $T_i=\{j_{2,1}^{(i)},\ldots, j_{2,k_2}^{(i)}\}$, where 
$j^{(i)}_{1,\ell}$'s and $j^{(i)}_{2,\ell}$'s are independently drawn uniformly at random over 
the $d_1$ and $d_2$ alternatives, respectively. Similar to the previous section, this 
sampling with replacement is necessary for the analysis. 
Define  
\begin{eqnarray}
  \lambda_0=   \sqrt{\frac{e^{2\bb} \max\{d_1,d_2\} \log d}{n\,d_1\,d_2}} \;. 
  \label{eq:bundle_deflambda}
\end{eqnarray} 
\begin{theorem}
\label{thm:bundle_ub}
Under the described sampling model, assume $16e^{2\bb} \min\{d_1,d_2\} \log d  \, \leq n$ and \\ $n \leq \min\{ d^5 ,k_1k_2 \max\{d_1^2,d_2^2\}   \}\log d $, and
$\lambda \in [8 \lambda_0, c_1 \lambda_0]$
with  any constant $c_1=O(1)$ larger than \\ $\max\{8,128/\sqrt{\min\{k_1,k_2\}}\}$. Then, solving the optimization \eqref{eq:bundleopt} achieves 
\begin{eqnarray}
    \frac{1}{d_1d_2} \fnorm{\hTheta-\Theta^\ast}^2  \; \leq \; 
    48 \sqrt{2}  \, e^{2\bb}  c_1 \lambda \sqrt{r} \,  \fnorm{ \hTheta-\Theta^\ast } + 48 e^{2\bb}   c_1 \lambda \sum_{j=r+1}^{\min\{d_1,d_2\}} \sigma_j(\Theta^*) \;, 
  \label{eq:bundle_ub}
\end{eqnarray}
for any $r\in\{1,\ldots, \min\{d_1,d_2\}\}$ 
with probability at least $1-2d^{-3}$ where $d=(d_1+d_2)/2$.
\end{theorem}
%\vspace{-0.2cm}
A proof  is provided in Appendix \ref{sec:bundle_ub_proof}. 
Optimizing over $r$ gives the following corollaries. 
\begin{corollary}[{\bf Exact low-rank matrices}]
  Suppose $\Theta^*$ has rank at most $r$. 
  Under the hypotheses of Theorem \ref{thm:bundle_ub}, 
  solving the optimization \eqref{eq:bundleopt} with the choice of the regularization parameter 
  $\lambda\in[8\lambda_0 , c_1\lambda_0 ]$ achieves   with probability at least $1-2d^{-3}$, 
  \begin{eqnarray}
  \frac{1}{\sqrt{d_1d_2}} \fnorm{\widehat{\Theta}-\Theta^\ast} \,\leq\, 
  48 \sqrt{2}   e^{3 \bb}  c_1 \sqrt{\frac{r(d_1 + d_2) \log d }{n }} \;.  
  \label{eq:bundle_lowrank}
  \end{eqnarray}
  \label{cor:bundle_lowrank}
\end{corollary}
%\vspace{-0.cm}
\noindent
This corollary shows that the number of samples $n$ needs to scale as $O(r(d_1+d_2)\log d)$ in order to achieve an arbitrarily small error. This is only a logarithmic factor larger than the number of degrees of freedom. 
For approximately low-rank matrices in an $\ell_1$-ball 
as defined in \eqref{eq:defBq}, we show 
an upper bound on the error, whose error exponent reduces from one to $(2-q)/2$.

\begin{corollary}[{\bf Approximately low-rank matrices}]
  Suppose $\Theta^* \in \bB_q(\rho_q)$ for some $q\in(0,1]$ and $\rho_q>0$. 
  Under the hypotheses of Theorem \ref{thm:bundle_ub}, 
  solving the optimization \eqref{eq:bundleopt} with the choice of the regularization parameter 
  $\lambda \in [8\lambda_0 , c_1\lambda_0]$ achieves 
    with probability at least $1-2d^{-3}$, 
  \begin{eqnarray}
  \frac{1}{\sqrt{d_1d_2}} \fnorm{\widehat{\Theta}-\Theta^\ast} \,\leq\, 
   \frac{2\sqrt{\rho_q}}{\sqrt{d_1d_2}}  \left(  48 \sqrt{2}  c_1 e^{3 \bb} \,\sqrt{\frac{ d_1d_2(d_1+d_2)\log d }{n }} \right)^{\frac{2-q}{2}}\;. 
  \label{eq:bundle_appxlowrank}
  \end{eqnarray}
  \label{cor:bundle_appxlowrank}
\end{corollary}
%\vspace{-0.4cm}
This follows from the same line of proof as in the proof of Corollary \ref{cor:kwise_appxlowrank} in Appendix 
\ref{sec:kwise_cor_proof}.  
%=================================================================================================
%\subsection{Information-theoretic lower bound} 
%\label{sec:bundle_lb} 
We next, provide a fundamental  lower bound on the error, that matches the upper bound up to a logarithmic factor. 
\begin{theorem}
  \label{thm:bundle_lb}
Suppose $\Theta^*$ has rank $r$. Under the described sampling model, 
there is a universal constant $c>0$ such that that 
the minimax rate where the infimum is taken over all measurable functions over the observed purchase history 
$\{(u_i,v_i,S_i,T_i)\}_{i\in[n]}$ is lower bounded by 
\begin{eqnarray}
  \inf_{\hTheta} \sup_{\Theta^*\in\Omega_\bb} \E\Big[ \frac{1}{\sqrt{d_1d_2}}\fnorm{\hTheta-\Theta^*} \Big] &\geq& c\,\min\left\{ \sqrt{\frac{ e^{-5\bb}\,r\,(d_1+d_2)}{n}}  \,,\,  \frac{\bb (d_1+d_2)}{\sqrt{d_1d_2 \log d}} \right\} \;.
  \label{eq:bundle_lb}
\end{eqnarray} 
\end{theorem}
%\vspace{-0.2cm}
We provide a proof in Appendix \ref{sec:bundle_lb_proof}.
The first term is dominant, and  
when the sample size is comparable to the latent dimension of the problem, 
Theorem \ref{thm:bundle_ub} is minimax optimal up to a logarithmic factor. 
We emphasize here that the bound in \eqref{eq:bundle_lowrank} 
and the matching lower bound in \eqref{eq:bundle_lb} 
do not depend on the size of the offerings $k_1$ and $k_2$. 
It is independent of how large $k_1$ and $k_2$ are because, we only observe one choice, and intuitively the information we get 
scales at best by a factor of $\log (k_1k_2)$. 
The theorems prove that there is no essential gain in learning from large offerings. 
One might be tempted to 
stop at proving an upper bound that scales as $O(\sqrt{k_1k_2 r(d_1+d_2)\log d / n})$, 
which is larger than \eqref{eq:bundle_lowrank} by a factor of $\sqrt{k_1k_2} $. 
Such a loose bound follows if one ignores the tight concentration analysis that we do using 
the symmetrization technique (e.g. in Lemma \ref{lmm:kwise_hessian2}).
Getting the tight dependency in $k_1$ and $k_2$ is one of the crucial technical challenges we overcome in this paper.  

%\vspace{-0.2cm}

%=================================================================================================
\subsection{Learning the MNL Model from Customer Choices }
\label{sec:customer}
The results for the {\em customer choice} model follow immediately from the results in 
{\em bundled choice} model by simply setting $k_1=1$, and we explicitly write those corollaries in this section for completeness. 
The proposed estimator is minimax optimal up to a logarithmic factor under the standard customer choice model of sampling. 

\begin{corollary}
\label{thm:customer_ub}
Under the described sampling model, assume $16e^{2\bb} \min\{d_1,d_2\} \log d  \, \leq n \leq \min\{ d^5\,\log d, \\k \max\{d_1^2,d_2^2\}\,\log d\} $, and
$\lambda \in [8 \lambda_0, c_1 \lambda_0]$
with  any constant $c_1=O(1)$ larger than $128$. Then, solving the optimization \eqref{eq:bundleopt} achieves 
\begin{eqnarray}
    \frac{1}{d_1d_2} \fnorm{\hTheta-\Theta^\ast}^2  \; \leq \; 
    48 \sqrt{2}  \, e^{2\bb}  c_1 \lambda \sqrt{r} \,  \fnorm{ \hTheta-\Theta^\ast } + 48 e^{2\bb}   c_1 \lambda \sum_{j=r+1}^{\min\{d_1,d_2\}} \sigma_j(\Theta^*) \;, 
  \label{eq:customer_ub}
\end{eqnarray}
for any $r\in\{1,\ldots, \min\{d_1,d_2\}\}$ 
with probability at least $1-2d^{-3}$ where $d=(d_1+d_2)/2$.
\end{corollary}

\begin{corollary}[{\bf Exact low-rank matrices}]
  Suppose $\Theta^*$ has rank at most $r$. 
  Under the hypotheses of Theorem \ref{thm:customer_ub}, 
  solving the optimization \eqref{eq:bundleopt} with the choice of the regularization parameter 
  $\lambda\in[8\lambda_0 , c_1\lambda_0 ]$ achieves   with probability at least $1-2d^{-3}$, 
  \begin{eqnarray}
  \frac{1}{\sqrt{d_1d_2}} \fnorm{\widehat{\Theta}-\Theta^\ast} \,\leq\, 
  48 \sqrt{2}   e^{3 \bb}  c_1 \sqrt{\frac{r(d_1 + d_2) \log d }{n }} \;.  
  \label{eq:customer_lowrank}
  \end{eqnarray}
  \label{cor:customer_lowrank}
\end{corollary}

\begin{corollary}[{\bf Approximately low-rank matrices}]
  Suppose $\Theta^* \in \bB_q(\rho_q)$ for some $q\in(0,1]$ and $\rho_q>0$. 
  Under the hypotheses of Theorem \ref{thm:customer_ub}, 
  solving the optimization \eqref{eq:bundleopt} with the choice of the regularization parameter 
  $\lambda \in [8\lambda , c_1\lambda]$ achieves 
    with probability at least $1-2d^{-3}$, 
  \begin{eqnarray}
  \frac{1}{\sqrt{d_1d_2}} \fnorm{\widehat{\Theta}-\Theta^\ast} \,\leq\, 
   \frac{2\sqrt{\rho_q}}{\sqrt{d_1d_2}}  \left(  48 \sqrt{2}  c_1 e^{3 \bb} \,\sqrt{\frac{ d_1d_2(d_1+d_2)\log d }{n }} \right)^{\frac{2-q}{2}}\;. 
  \label{eq:customer_appxlowrank}
  \end{eqnarray}
  \label{cor:customer_appxlowrank}
\end{corollary}

We emphasize again that the bound in \eqref{eq:customer_lowrank} 
does not depend on the size of the offerings $k$. 
It is significantly easier to 
stop at proving an upper bounds that scale as $O(\sqrt{k r(d_1+d_2)\log d / n})$, 
which are larger than \eqref{eq:customer_lowrank} by a factor of $\sqrt{k} $. 
Such a loose bound follows if one ignores the tight concentration analysis that we do using 
the symmetrization technique (e.g. in Lemma \ref{lmm:kwise_hessian2}).
Getting the tight dependency in $k$  is one of the crucial technical challenges we overcome in this paper.  

\subsection{Experiments}
We applied our algorithm to a real world choice data set. The implementation is similar to that of higher order comparisons (see Section \ref{sec:kwise_algo}). 

\subsubsection{Real data: Extended Bakery}
\begin{figure}[h!]
\centering
\begin{subfigure}{0.5\textwidth}
  \centering
  \includegraphics[width=1\linewidth]{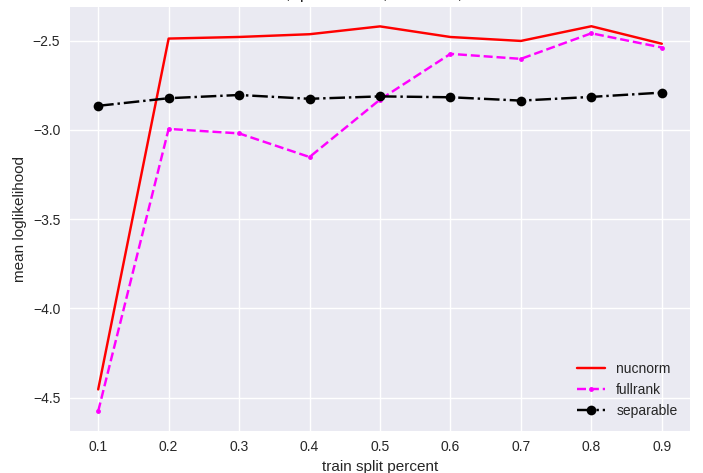}
  \caption{Cakes and Drinks}
  \label{fig:bakery_4_0}
\end{subfigure}%
\hfill
\begin{subfigure}{.5\textwidth}
  \centering
  \includegraphics[width=1\linewidth]{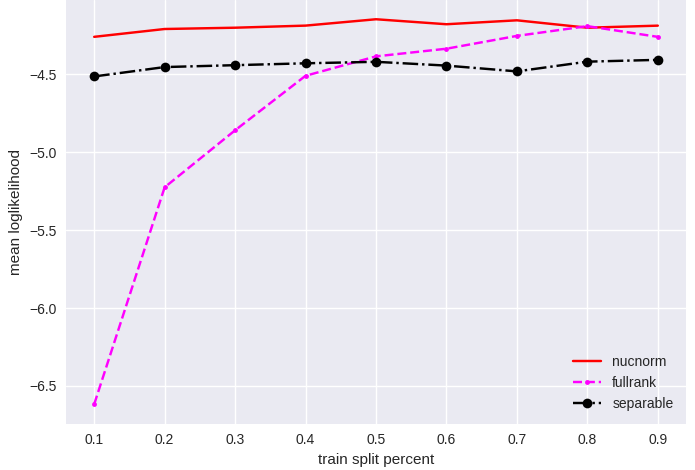}
  \caption{Tarts and Drinks}
  \label{fig:bakery_4_1}
\end{subfigure}
\vskip\baselineskip
\begin{subfigure}{0.5\textwidth}
  \centering
  \includegraphics[width=1\linewidth]{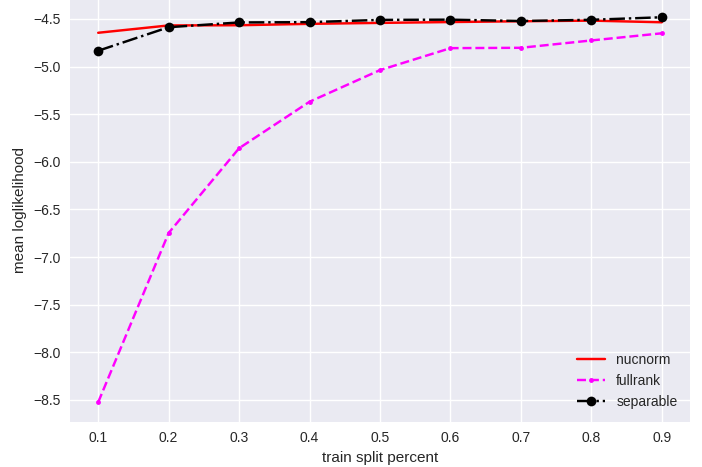}
  \caption{Cookies and Drinks}
  \label{fig:bakery_4_2}
\end{subfigure}%
\hfill
\begin{subfigure}{.5\textwidth}
  \centering
  \includegraphics[width=1\linewidth]{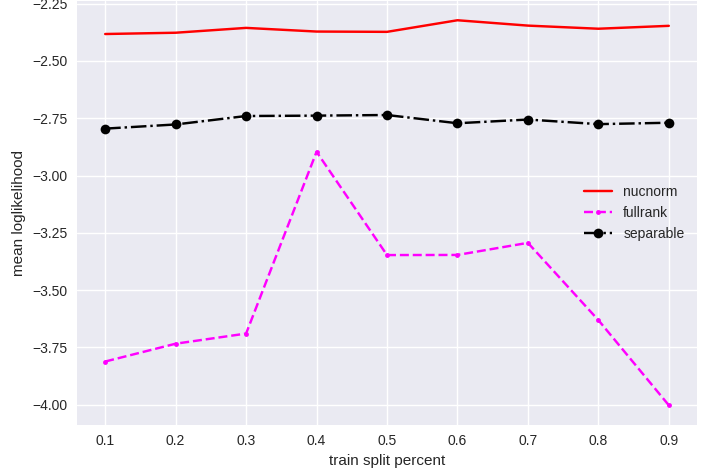}
  \caption{Pastries and Drinks}
  \label{fig:bakery_4_3}
\end{subfigure}
\caption{Bakery: Mean log-likelihood on test data versus fraction of data used for training. 
Nuclear norm minimization improves upon the un-regularized likelihood maximization and separable model, for most of the regimes we consider.}
\label{fig:bakery_likelihood}
\end{figure}
Extended Bakery data set \footnote{Data set is from \url{https://github.com/arbenson/discrete-subset-choice/tree/master/data}.} \cite{BKT18} consists of details of 75,000 purchases at a bakery from a selection of 50 items, specifically each purchase is recorded by the set of items bought together. We use only the 13,579 purchases where a pair of items where bought. We divide the items into five categories: cakes (1-10), tarts (11-21), cookies (22-30), pastries (31-40) and drinks (41-50). We study four cases of bundled pairs, cakes and drinks, tarts and drinks, cookies and drinks, and pastries and drinks. These cases have 1503, 910, 500 and 1791 purchases in them respectively. We model the data with an MNL model parameterized by a matrix $\Theta^*$, such that rows and columns corresponds to first and second categories respectively. 
For every purchase we assume that the subset of alternatives presented is the universal choice set, that is $k_1 = d_1$ and $k_2 = d_2$, so that the purchase of item $j_1$ from category 1 and $j_1$ from category 2 has a probability of $\exp(\Theta^*_{j_1, j_2})/{\sum_{j_1',j_2'=1}^{d_1, d_2} \exp(\Theta^*_{j_1', j_2'})}$, where $d_1, d_2$ are number of items in category 1 and 2 respectively. We also fit the separable model proposed in \cite{BKT18}, 
which is a simpler model with $\Theta^*_{j_, j_2} = a^*_{j_1} + b^*_{j_2}$, where $a \in \reals^{d_1}, b \in \reals^{d_2}$. 

In Fig.~\ref{fig:bakery_likelihood} we plot the mean log-likelihood on the test data versus the fraction of the data used for our nuclear norm minimization based algorithm (`nucnorm'), un-regularized log-likelihood maximization algorithm (`fullrank'), and maximum likelihood estimator for the separable model (`separable') over 10 trials. We see that nuclear norm minimization outperforms the `fullrank' and `separable' algorithms. This is consistent with the marketing practice, of providing different prices for bundled combinations of products, which uses the rationale that, worth of a bundle of products might be different from the sum of the worths of the individual products constituting the bundle. We also note that the un-regularized algorithm (`fullrank') has the worst performance and high variance and separable model fits the samples almost as good as the nuclear norm minimization in the case of cookies and drinks in the large training data regime.

%====================================================================================
\section{Conclusion}
\label{sec:discussion}

The sample complexity of learning one of the most popular choice models known as MultiNomial Logit model has not been addressed in the literature. 
The main challenge is in the inherent low-rank structure of the parameter to be learned, which leads to a non-convex likelihood maximized problem.  
Thanks to recent advances in learning low-rank matrices, in particular in 1-bit matrix completion \cite{DPVW14}, matrix completion \cite{NW11}, and restricted strong convexity \cite{negahban2009unified}, 
we have a polynomial time algorithm and the technical tools to characterize the fundamental sample complexity of learning MNL from samples. 
This provides a novel algorithm   to learn a low-dimensional representation of users and items from 
users' historical comparisons and choices. 
We study three types of data, pairwise comparison, higher order comparison, and choices, 
and  take the first principle approach of identifying the  fundamental limits and also developing efficient algorithms matching those fundamental trade offs. 
 We provide a unifying framework to learn the latent preferences by solving a convex program. 
For each of the data types, accompanied by natural sampling scenarios, we show that our framework achieves a minimax optimal performance, 
and hence cannot be improved upon other than a small logarithmic factor.  
This opens a new door to learn representations from comparisons and choices, 
and we propose new research directions and challenges below. 
Beyond the low-rank model studied in this paper, 
recent advances in modeling data in a matrix form such as 
low algebraic dimension by \cite{ongie2018tensor} and 
non-parametric approximation by \cite{borgs2017iterative} 
can provide new research directions for modeling choice. 

\bigskip\noindent{\bf Efficient implementations via non-convex optimization.} 
 Nuclear norm minimization, while polynomial-time, is still slow. 
We want first-order methods that are efficient with provable guarantees. 
Two main challenges are providing a good initialization 
to start such non-convex approaches and analyzing gradient descent on the likelihood maximization which is non-convex.
 
 Recent advances in non-convex optimization with rank-constraints have developed via a sequence of 
 innovations that can be summarized as follows, in a number of example problems including 
 matrix completion, robust PCA, matrix sensing, phase retrieval.
 First, a convex relaxation of nuclear norm minimization is analyzed, e.g. \cite{CR09}. 
 Then, a more efficient two-step non-convex optimization approach is proposed with provable guarantees 
 where a global initialization step is followed by a first-order method e.g. \cite{KMO10IT,KMO10JMLR}. 
 Next, first-order methods starting at any initialization point is analyzed via understanding the geometry 
 and checking the stationary points of the objective function e.g.~\cite{GLM16}.  
This recipe, spurred by the advances in the matrix completion problem, has been repeated for several interesting problems involving low rank matrices, over the last decade  and over numerous publications by collective effort of the machine learning community. 

For the problem of learning MNL, we are at the first stage of this progression where we propose a convex relaxation and provide minimax optimal guarantees. 
 We currently do not have the analysis tools to follow up in analyzing an efficient non-convex optimization problem, although writing the algorithm and implementing is straight forward, and also has been proposed in \cite{PNZSD15}. It is a promising research direction to 
 overcome the challenges in analyzing non-convex optimization methods for the MNL likelihood objective function.

\bigskip\noindent
{\bf Assumption on sampling with replacement.} 
As mentioned earlier, 
we assume sampling with replacement, where we can ask a user to compare the same pair more than once, 
and also we can ask a user to compare two copies of the identical item. 
Although such sampling with replacement does not happen in practice, 
the number of such collisions is also very low with high probability under the proposed model. 
Further, such assumption is critical for getting an upper bound that is tight not only in $r$, $d_1$ and $d_2$, 
but also in $k$ for higher order comparisons and choices. 
If, instead, one is interested in sampling without replacement, then one either can resort to 
proving a loose bound that is weaker in its dependence in $k$ (and follows trivially as a corollary of the proof of our results) 
or needs to invent new innovative concentration bounds that do not rely on the powerful symmetrization.
The first option is trivial, so we do not provide such corollaries in this paper, and the second option provides an interesting but technically challenging question of resolving between sampling with replacement and sampling without replacement. This we believe is outside the scope of this paper.

\bigskip\noindent
{\bf Modern data analysis applications.} 
As learning representation from ordinal data is of fundamental interest, 
there are numerous exciting applications that both the algorithmic framework and also the analysis techniques we develop could be naturally extended to. 
We present two such examples. 
First is a recent application of embedding objects with crowdsourced similarity measures, first proposed in \cite{tamuz2011adaptively}.  
Consider a crowdsourcing setting where you have $d$ images and want to learn similarities among those images such that one can embed those images 
in a lower dimensional Euclidean space. One can show to a person a triplet of images $(i_1,i_2,i_3)$ and ask 
whether the image in the middle is more similar to the one in the left or the right. 
 A natural model proposed in \cite{tamuz2011adaptively}  is to assume that there exists a similarity parameter matrix $\Theta\in \reals^{d\times d}$ such that 
 \begin{eqnarray*}
  \prob{\text{$i_1$ is more similar to $i_2$ than $i_3$ is to $i_2$}} \; = \;  \frac{e^{\Theta_{i_2,i_1}}}{e^{\Theta_{i_2,i_1}}+e^{\Theta_{i_2,i_3}}} \;.
 \end{eqnarray*}
A heuristic algorithm is proposed to learn a low-rank $\Theta$ without guarantees. 
Given the similarity of this model to MNL in \eqref{eq:defbtl}, 
both our algorithm and also the analysis will go through to provide a tight characterization of the sample complexity of this problem. 

The second application is in word embedding \cite{mikolov2013distributed}, where 
the goal is to find embeddings for English words in a lower dimensional Euclidean space. 
The most successful word embedding has been based on fitting a low-rank matrix $\Theta\in\reals^{d\times d}$ where $d$ is the size of the vocabulary, 
over an MNL-type model: 
 \begin{eqnarray*}
  \prob{\text{word $i$ and word $j$ appear within distance ten}|\text{word $j$ appear in a sentence}} \; = \;  \frac{e^{\Theta_{ij} }}{\sum_{i'} e^{\Theta_{i'j} }} \;.
 \end{eqnarray*}
As the denominator involves summation over millions of words in the vocabulary, 
efficient heuristics are proposed to learn such a model from skip-grams; a skip-gram is the count matrix counting how many times words co-appear in the same sentence 
within a predefined distance.
There are several challenges in applying our framework directly to such a setting mainly due to the size of the problem, 
but nevertheless our analysis can be applied directly to identify the fundamental minimax sample complexity of learning a word embedding from skip-grams.

\section*{Acknowledgments}
%\acks
{SN acknowledges support from NSF Grant DMS 1723128. SO and KT acknowledge support from NSF grants CCF 1553452, CNS 1527754, CCF 1705007, and RI 1815535. The authors acknowledge Yu Lu and Ruoyu Sun for helpful and fruitful discussions.}

% Manual newpage inserted to improve layout of sample file - not
% needed in general before appendices/bibliography.

%\bibliographystyle{plain}
\bibliographystyle{abbrv}
\vskip 0.2in
\bibliography{ranking}

\newpage

\appendix

%====================================================================================================
\section{Proof of the Upper Bound for Graph Sampling Theorem \ref{thm:graph_ub}}
\label{sec:graph_ub_proof}
The proof of the theorem relies on the following two lemmas. First lemma shows that the negative of the log-likelihood satisfies Restricted Strong Convexity with high probability.
\begin{lemma} \label{lem:graph_rsc}
\textbf{(Restricted Strong Convexity)} Let $R =\max \left\{\sqrt{\frac{\sigma \log(2d)}{n}}, \frac{\sigma_{\min}(L)^{-1/2}\log(2d)}{n} \right\}$ and the set $\calA(\bb) = \left\{\Theta \in \mathbf{R}^{d_1 \times d_2}, \lnorm{\Theta}{\infty} \leq \bb,  \Lnucnorm{\Theta} \le \frac{\Fnorm{\Theta L^{1/2}}^2}{16\bb d_1 R} \right\}$. Then we have,
\begin{align}
\frac1n \sum_{i=1}^n \left( \Iprod{\Theta}{X_i} \right)^2 \geq \frac1{3d_1} \Lnorm{\Theta}^2,\;\;\; \forall\; \Theta \in \calA(\bb),
\end{align}
with probability at least $1-2(2d)^{-4}$, provided that $n \leq \log{(2d)} \min \{2^2 (d_1 \sigma_{\min}(L)^{-1})^{2/3},\\ 2^6 d_1^2 \sigma^2\}$, .
\end{lemma}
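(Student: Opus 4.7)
The plan is to reduce restricted strong convexity to a uniform deviation bound, then control that deviation by a matrix-concentration/peeling argument tailored to the graph Laplacian. First I would compute the population quadratic form: for a random sample $X = e_j(e_k - e_l)^\top$ drawn from the graph sampling distribution, the identity $\sum_{k,l} P_{k,l}(x_k - x_l)^2 = 2 x^\top L x$ gives
\begin{align*}
\E\bigl[\Iprod{\Theta}{X}^2\bigr] \;=\; \frac{1}{d_1}\sum_{i=1}^{d_1}\sum_{k,l} P_{k,l}(\Theta_{i,k}-\Theta_{i,l})^2 \;=\; \frac{2}{d_1}\Lnorm{\Theta}^2.
\end{align*}
Hence it suffices to show $|Z_n(\Theta) - \E Z_n(\Theta)| \le \tfrac{5}{3 d_1}\Lnorm{\Theta}^2$ uniformly over $\calA(\bb)$, where $Z_n(\Theta) = \tfrac{1}{n}\sum_i \Iprod{\Theta}{X_i}^2$.

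To bound this deviation uniformly I would apply standard symmetrization together with the Ledoux--Talagrand contraction principle. Because $\lnorm{\Theta}{\infty} \le \bb$ and $X_i$ is a signed difference of two standard basis vectors, we have $|\Iprod{\Theta}{X_i}| \le 2\bb$, and $t\mapsto t^2$ is $(4\bb)$-Lipschitz on $[-2\bb,2\bb]$; symmetrization and contraction then give
\begin{align*}
\E \sup_{\Theta \in \calA(\bb)} \bigl|Z_n(\Theta)-\E Z_n(\Theta)\bigr| \;\le\; 16\bb\,\E\sup_{\Theta \in \calA(\bb)} \Bigl| \Iprod{\Theta}{\tfrac{1}{n}\textstyle\sum_i \epsilon_i X_i} \Bigr|.
\end{align*}
Because every $\Theta \in \calA(\bb)$ satisfies $\Theta g_i = 0$ and so $\Theta = \Theta L^{1/2}L^{-1/2}$ on the relevant subspace, nuclear/spectral duality yields $|\Iprod{\Theta}{M}| \le \Lnucnorm{\Theta}\,\lnorm{M L^{-1/2}}{2}$, and the defining constraint $\Lnucnorm{\Theta} \le \Lnorm{\Theta}^2/(16\bb d_1 R)$ turns this into a self-bounding inequality at the scale $\Lnorm{\Theta} = t$, giving expected fluctuation of order $t^2/(d_1 R) \cdot \E \lnorm{\tfrac{1}{n}\sum_i \epsilon_i X_i L^{-1/2}}{2}$.

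The technical heart is a matrix Bernstein bound on the random operator $\tfrac{1}{n}\sum_i \epsilon_i X_i L^{-1/2}$. Using $\E[(e_k - e_l)(e_k - e_l)^\top] = 2L$, the two variance proxies compute to $\lnorm{\E[(XL^{-1/2})(XL^{-1/2})^\top]}{2} = 2(d_2 - G)/d_1$ and $\lnorm{\E[(XL^{-1/2})^\top(XL^{-1/2})]}{2} = 2$, and the almost sure bound is $\lnorm{(e_k - e_l)^\top L^{-1/2}}{2} \lesssim \sigma_{\min}(L)^{-1/2}$ (valid because $e_k - e_l$ is supported in one connected component, hence orthogonal to the nullspace $\mathrm{span}\{g_i\}$). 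Matrix Bernstein then yields $\lnorm{\tfrac{1}{n}\sum_i \epsilon_i X_i L^{-1/2}}{2} \lesssim R$ with probability at least $1 - (2d)^{-4}$. Upgrading from expectation to a uniform high-probability statement across all scales $\Lnorm{\Theta} = t$ is handled by Talagrand's inequality applied at a dyadic grid $t_j = 2^j t_0$ and a peeling/union-bound argument; the almost sure bound $\Iprod{\Theta}{X_i}^2 \le 4\bb^2$ and the moment bound $\E[\Iprod{\Theta}{X}^4] \le 4\bb^2 \E[\Iprod{\Theta}{X}^2]$ deliver the fluctuation terms at the correct rate.

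The main obstacle I anticipate is making the constants line up cleanly enough to get the crisp $1/(3 d_1)$ bound: the peeling loses factors of $2$, Talagrand's inequality introduces additive terms controlled by the upper bound on $n$ in the hypothesis, and the variance parameter $\sigma = \max\{(d_2-G)/d_1, 1\}$ must be tracked carefully through both branches of $R$ so that the $\sqrt{\sigma \log(2d)/n}$ and $\sigma_{\min}(L)^{-1/2}\log(2d)/n$ terms combine correctly. Once these book-keeping issues are handled, the deviation is bounded by $\tfrac{t^2}{d_1}$ at every scale, which is strictly less than $\tfrac{5}{3d_1}t^2$ and establishes the lemma.
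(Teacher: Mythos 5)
Your proposal is correct and follows essentially the same route as the paper's proof: compute the population mean $\E[\Iprod{\Theta}{X}^2]=\tfrac{2}{d_1}\Lnorm{\Theta}^2$, symmetrize and apply Ledoux--Talagrand contraction, pass to $\Lnucnorm{\Theta}\,\lnorm{\tfrac1n\sum_i\epsilon_i X_i L^{-1/2}}{2}$ by duality, bound the Rademacher matrix average via matrix Bernstein with the same variance proxies $2(d_2-G)/d_1$ and $2$, exploit the self-bounding constraint $\Lnucnorm{\Theta}\le\Lnorm{\Theta}^2/(16\bb d_1 R)$, and make the bound uniform by peeling over scales. The only immaterial difference is that you invoke Talagrand's inequality for the concentration of the supremum where the paper uses McDiarmid's bounded-differences inequality, and the upper bound on $n$ plays the same calibration role in both.
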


Here the upper bound on $n$ may not be necessary, but it is present due to a technical difficulty in using the peeling argument. The intuition behind the above lemma is that the empirical average uniformly
concentrates around its expectation. Proof is in Section \ref{sec:graph_rsc_proof}. The next lemma says that the gradient of the log-likelihood at the actual parameter matrix, $\Theta^*$ is controllably small.
\begin{lemma} \label{lem:graph_gradient}
\textbf{(Bounded Gradient)} Let $R =\max \left\{\sqrt{\frac{\sigma \log(2d)}{n}}, \frac{\sigma_{\min}(L)^{-1/2}\log(2d)}{n} \right\}$. The spectral norm of gradient of the log-likelihood at the actual parameter matrix, $\nabla \calL(\Theta^*)$, can be upper-bounded with high probability as follows,
\begin{align}
\prob{ \opnorm{\nabla \calL(\Theta^*) L^{-1/2}} \geq \sqrt{32}R} \leq \frac{1}{(d_1 + d_2)^3}
\end{align}
\end{lemma}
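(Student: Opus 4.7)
The plan is to express $\nabla\mathcal{L}(\Theta^*)$ as an empirical mean of i.i.d.\ mean-zero random matrices and apply a matrix Bernstein inequality after right-multiplication by $L^{-1/2}$. Differentiating the log-likelihood in \eqref{eq:defpairlikelihood} gives
\begin{equation*}
\nabla\mathcal{L}(\Theta^*) \;=\; \frac{1}{n}\sum_{i=1}^n \bigl(y_i - p_i\bigr)\,X^{(i)}, \qquad p_i \;=\; \frac{e^{\Theta^*_{j(i),k(i)}}}{e^{\Theta^*_{j(i),k(i)}}+e^{\Theta^*_{j(i),l(i)}}},
\end{equation*}
so defining $Z_i := (y_i-p_i)\,X^{(i)} L^{-1/2}$ we obtain an i.i.d.\ sum with $\E[Z_i]=0$ (by tower property, conditioning on the sampled $j(i),k(i),l(i)$). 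The quantity we need to bound is $\|(1/n)\sum_i Z_i\|_2$.

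First, I would control the almost-sure operator-norm bound $R := \max_i \|Z_i\|_2$. Since $|y_i-p_i|\le 1$ and $X^{(i)} = e_{j(i)}(e_{k(i)}-e_{l(i)})^\top$, we have $\|X^{(i)} L^{-1/2}\|_2 = \bigl((e_{k(i)}-e_{l(i)})^\top L^\dagger (e_{k(i)}-e_{l(i)})\bigr)^{1/2} \le \sqrt{2/\sigma_{\min}(L)}$, where $L^\dagger$ here means $L^{-1}$ restricted to the span of $U$. Next, the two variance proxies: because $X^{(i)}(X^{(i)})^\top = (e_{k(i)}-e_{l(i)})^\top L^{-1}(e_{k(i)}-e_{l(i)})\, e_{j(i)}e_{j(i)}^\top$ and the user index $j(i)$ is uniform on $[d_1]$ independent of the pair $(k(i),l(i))$, using $(y_i-p_i)^2\le 1$ gives
\begin{equation*}
\bigl\|\E[Z_i Z_i^\top]\bigr\|_2 \;\le\; \frac{1}{d_1}\,\E\bigl[(e_{k(i)}-e_{l(i)})^\top L^\dagger (e_{k(i)}-e_{l(i)})\bigr] \;=\; \frac{2\,\mathrm{tr}(L L^\dagger)}{d_1} \;=\; \frac{2(d_2-G)}{d_1},
\end{equation*}
where the middle equality uses $\E[(e_{k}-e_{l})(e_{k}-e_{l})^\top] = 2L$ (which follows directly from the definition $L=\mathrm{diag}(P\ones)-P$ and the sampling rule $\Pr\{\{k,l\}\}=2P_{kl}$). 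Analogously, $\E[Z_i^\top Z_i] = \E[(y_i-p_i)^2\, L^{-1/2}(e_{k}-e_{l})(e_{k}-e_{l})^\top L^{-1/2}] \preceq L^{-1/2}(2L)L^{-1/2} = 2\Pi_U$, so $\|\E[Z_i^\top Z_i]\|_2 \le 2$. Together this yields $\max\{\|\E[Z_iZ_i^\top]\|_2,\|\E[Z_i^\top Z_i]\|_2\}\le 2\sigma$ with $\sigma=\max\{(d_2-G)/d_1,1\}$.

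Finally I would apply the rectangular matrix Bernstein inequality to $\bar Z = (1/n)\sum Z_i$ on a space of dimension $d_1+(d_2-G)\le 2d$: for every $t>0$,
\begin{equation*}
\Pr\bigl\{\|\bar Z\|_2 \ge t\bigr\} \;\le\; 2d\,\exp\!\left(-\,\frac{n t^2/2}{2\sigma + \tfrac{2}{3}\sqrt{2/\sigma_{\min}(L)}\,t}\right).
\end{equation*}
Choosing $t=\sqrt{32}\,R$ with $R=\max\{\sqrt{\sigma\log(2d)/n},\,\sigma_{\min}(L)^{-1/2}\log(2d)/n\}$ balances the two regimes in the denominator: in the first regime the Gaussian piece dominates and gives exponent $\le -4\log(2d)$; in the second, the sub-exponential piece dominates and gives the same order. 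Either way the tail is at most $2d\cdot(2d)^{-4} = (2d)^{-3} \le (d_1+d_2)^{-3}$, which is the stated conclusion.

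The main obstacle I anticipate is purely bookkeeping: correctly handling the nullspace of $L$ via the pseudo-inverse $L^\dagger = U\Sigma^{-1}U^\top$ (so that $L^{-1/2}$ is really a $d_2\times(d_2-G)$ object) and propagating this through the Bernstein dimension factor, while checking that the key identity $\E[(e_k-e_l)(e_k-e_l)^\top]=2L$ together with $\mathrm{tr}(LL^\dagger)=d_2-G$ gives exactly the stated $\sigma$. No deep probabilistic step is required beyond matrix Bernstein; the content is the variance computation using the Laplacian structure and the uniform user sampling.
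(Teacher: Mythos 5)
Your proposal is correct and follows essentially the same route as the paper: the paper writes $\nabla\calL(\Theta^*)=\frac1n\sum_i(y_i-p_i)X^{(i)}$ and invokes its Lemma \ref{lem:graph_bernstein} (matrix Bernstein for $\frac1n\sum_i\varepsilon_iX^{(i)}L^{-1/2}$ with bounded, conditionally centered $\varepsilon_i$), which rests on exactly the variance computations you carry out inline — $\opnorm{X^{(i)}L^{-1/2}}\le\sqrt{2/\sigma_{\min}(L)}$, $\E[(e_k-e_l)(e_k-e_l)^\top]=2L$, $\trace{LL^\dagger}=d_2-G$, and the uniform user index giving the $1/d_1$ factor. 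The only difference is organizational (you do not factor the Bernstein step into a separate lemma), and your constant bookkeeping is consistent with the paper's.
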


\noindent
Proof the above lemma is in Section \ref{sec:graph_gradient_proof}. Let $\Delta = \hat{\Theta} - \Theta^*$.

\noindent
\textbf{Case 1: $\Delta \notin \calA(2\bb)$} Then,
\begin{align*}
\Lnorm{\Delta}^2 \leq 32\bb d_1 R \Lnucnorm{\Delta}
\end{align*}
\textbf{Case 2: $\Delta \in \calA(2\bb)$} We first write down the second order Taylor series expansion of $\calL(\hat{\Theta})$ at around $\Theta=\Theta^*$.
\begin{align}
-\calL(\hat{\Theta}) = -\calL(\Theta^*) + \Iprod{-\nabla \calL(\Theta^*)}{\Delta} + \frac1{2n} \sum_{i=1}^n \psi\left( \Iprod{\Theta^*}{X^{(i)}} + s\Iprod{\Delta}{X^{(i)}}\right) \Iprod{\Delta}{X^{(i)}}^2,
\end{align}
where $\psi(x) = {e^x}/(1+e^x)^2,\; x \in [-2\bb, 2\bb]$ and $s \in [0,1]$. Next using Lemma \ref{lem:graph_rsc} and the fact that $\psi(x)$ attains minimum at $x=2\bb$ we get,
\begin{align}
-\calL(\hat{\Theta}) + \calL(\Theta^*) + \Iprod{\nabla \calL(\Theta^*)}{\Delta} \geq  \frac1{2n} \sum_{i=1}^n \psi(2\bb) \Iprod{\Delta}{X^{(i)}}^2 \geq \frac{\psi(2\bb)}{6 d_1} \Lnorm{\Delta}^2,
\end{align}
with probability at least $1 - 1/(d_1 + d_2)^3$. Since $\hat{\Theta}$ is the minimizer for the objective function \ref{eq:graph_obj}, we have,
\begin{align*}
-\calL(\hat{\Theta}) + \lambda \Lnucnorm{\hat{\Theta}} \leq -\calL(\Theta^*) + \lambda \Lnucnorm{\Theta^*},
\end{align*}
which in turn gives us,
\begin{align}
\frac{\psi(2\bb)}{6 d_1} \Lnorm{\Delta}^2 &\leq  -\calL(\hat{\Theta}) + \calL(\Theta^*) + \Iprod{\nabla \calL(\Theta^*)}{\Delta} \nonumber \\  \\ &\leq \lambda\left(\Lnucnorm{\Theta^*} - \Lnucnorm{\hat{\Theta}}\right) + \Iprod{\nabla \calL(\Theta^*)}{\Delta} \nonumber \\
&\leq \lambda\left( \Lnucnorm{\Delta}\right) + \Iprod{\nabla \calL(\Theta^*) L^{-1/2}}{\Delta L^{1/2}} \nonumber\\
&\leq \lambda\left( \Lnucnorm{\Delta}\right) + \opnorm{\nabla \calL(\Theta^*) L^{-1/2}} \Lnucnorm{\Delta},
\end{align}
where last two inequalities follow from the triangle inequality for nuclear norm and generalized H\"{o}lder's inequality. Now we put $\lambda = 2\sqrt{32}R$ and use Lemma \ref{lem:graph_gradient} to get,
\begin{align}
\Lnorm{\Delta}^2 \leq \frac{6d_1}{\psi(2\bb)} \left(\lambda + \frac{\lambda}2 \right) \Lnucnorm{\Delta} \leq \frac{9 d_1 \lambda}{\psi(2\bb)} \Lnucnorm{\Delta},
\end{align}
with probability at least  $1 - 1/(d_1 + d_2)^3$. Combining Case 1 and 2 we get,
\begin{align*}
\Lnorm{\Delta}^2 &\leq 9 \left( \bb + \frac1{\psi(2\bb)} \right) d_1 \lambda \Lnucnorm{\Delta} \nonumber
%\\&\leq 72\sqrt{2} \left( \bb + \frac1{\psi(2\bb)} \right) d_1 R \sqrt{2r} \Lnorm{\Delta}
\end{align*}

\begin{lemma}
\label{lmm:graph_deltabound}
If $\lambda\geq 2 \lnorm{\nabla\calL (\Theta^*)}{2}$, then we have
\begin{eqnarray}
  \Lnucnorm{\Delta} &\le& 4\sqrt{2 r} \Lnorm{\Delta} + 4 \sum^{\min\{d_1,d_2-G\}}_{j = r+1}\sigma_j(\Theta^* L^{1/2}) \;, \label{eq:graph_deltabound}
\end{eqnarray}
for all $r \in[\min\{d_1,d_2-G\}]$. (Proof in Section \ref{sec:graph_deltabound})
\end{lemma}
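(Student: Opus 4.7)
The plan is to apply the standard Negahban--Wainwright decomposability machinery, but carried out in the ``Laplacian coordinates'' $\Delta L^{1/2}$. The starting point is the first-order optimality of $\widehat{\Theta}$: since $\widehat{\Theta}$ minimizes $-\calL(\Theta)+\lambda\Lnucnorm{\Theta}$ over $\Omega_\bb$ and $\Theta^*\in\Omega_\bb$, rearranging gives $\lambda\bigl(\Lnucnorm{\widehat{\Theta}}-\Lnucnorm{\Theta^*}\bigr)\le \calL(\widehat{\Theta})-\calL(\Theta^*)$, and concavity of $\calL$ further yields the upper bound $\Iprod{\nabla\calL(\Theta^*)}{\Delta}$. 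Since both $\widehat{\Theta}$ and $\Theta^*$ satisfy $\Theta g_i=0$, the rows of $\Delta$ lie in the range of $L$, so $\Delta=\Delta L^{1/2}L^{-1/2}$ with $L^{-1/2}$ the pseudo-inverse; this lets me rewrite $\Iprod{\nabla\calL(\Theta^*)}{\Delta}=\Iprod{\nabla\calL(\Theta^*)L^{-1/2}}{\Delta L^{1/2}}$ and then bound it by H\"older/nuclear--spectral duality by $\lnorm{\nabla\calL(\Theta^*)L^{-1/2}}{2}\cdot\Lnucnorm{\Delta}$. (The hypothesis should be read with this $L^{-1/2}$-weighted operator norm, consistent with the choice of $\lambda$ in the main proof of Theorem \ref{thm:graph_ub} and with Lemma \ref{lem:graph_gradient}.)

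Next I would introduce the standard subspace decomposition anchored at a best rank-$\rho$ approximation. Write $\Theta^*L^{1/2}=M+N$ where $M$ has rank $\rho$ with SVD $U_M\Sigma_M V_M^\top$ and $\nucnorm{N}=\sum_{j=\rho+1}^{\min\{d_1,d_2-G\}}\sigma_j(\Theta^*L^{1/2})$. Let $\overline{\mathcal{M}}$ be the subspace of matrices whose row space is contained in $\operatorname{col}(V_M)$ or whose column space is contained in $\operatorname{col}(U_M)$, so that for any $B$, $P_{\overline{\mathcal{M}}^\perp}(B)$ is orthogonal to $M$ in both the Frobenius and the trace-dual sense and $P_{\overline{\mathcal{M}}}(B)$ has rank at most $2\rho$. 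Applying decomposability/triangle inequalities (exactly as in Negahban--Wainwright) gives
\begin{equation*}
\Lnucnorm{\widehat{\Theta}}-\Lnucnorm{\Theta^*}\;\ge\;\|P_{\overline{\mathcal{M}}^\perp}(\Delta L^{1/2})\|_*-\|P_{\overline{\mathcal{M}}}(\Delta L^{1/2})\|_*-2\nucnorm{N}.
\end{equation*}

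Combining this lower bound with the upper bound $\Iprod{\nabla\calL(\Theta^*)}{\Delta}\le(\lambda/2)\Lnucnorm{\Delta}$ and using the trivial splitting $\Lnucnorm{\Delta}\le\|P_{\overline{\mathcal{M}}}(\Delta L^{1/2})\|_*+\|P_{\overline{\mathcal{M}}^\perp}(\Delta L^{1/2})\|_*$ produces, after rearrangement, the cone-type inequality
\begin{equation*}
\|P_{\overline{\mathcal{M}}^\perp}(\Delta L^{1/2})\|_*\;\le\;3\,\|P_{\overline{\mathcal{M}}}(\Delta L^{1/2})\|_*+4\nucnorm{N}.
\end{equation*}
Adding $\|P_{\overline{\mathcal{M}}}(\Delta L^{1/2})\|_*$ to both sides gives $\Lnucnorm{\Delta}\le 4\|P_{\overline{\mathcal{M}}}(\Delta L^{1/2})\|_*+4\nucnorm{N}$. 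Finally, since $P_{\overline{\mathcal{M}}}(\Delta L^{1/2})$ has rank at most $2r$ (taking $\rho=r$), Cauchy--Schwarz for singular values yields $\|P_{\overline{\mathcal{M}}}(\Delta L^{1/2})\|_*\le\sqrt{2r}\,\fnorm{P_{\overline{\mathcal{M}}}(\Delta L^{1/2})}\le\sqrt{2r}\,\Lnorm{\Delta}$, completing \eqref{eq:graph_deltabound}.

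The only technically delicate step is the second one: justifying the passage through $L^{-1/2}$. It requires using the group-sum-zero constraint in the definition of $\Omega_\bb$ to conclude that $\Delta$ lies in the right row subspace so that $\Delta L^{1/2}L^{-1/2}=\Delta$, and then reconciling the hypothesized bound on $\lnorm{\nabla\calL(\Theta^*)}{2}$ with the weighted version $\lnorm{\nabla\calL(\Theta^*)L^{-1/2}}{2}$ that actually controls the duality. Everything else is routine manipulation of the Negahban--Wainwright subspace decomposition applied to the matrix $\Delta L^{1/2}$ rather than $\Delta$ itself.
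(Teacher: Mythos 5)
Your proposal is correct and follows essentially the same route as the paper: the Negahban--Wainwright decomposability argument applied to $\Delta L^{1/2}$, anchored at the rank-$\rho$ truncation of $\Theta^* L^{1/2}$, combined with the optimality condition and nuclear--spectral duality through $L^{-1/2}$ to reach the cone inequality $\nucnorm{\mathcal{P}_{T^\perp}(\Delta L^{1/2})} \le 3\nucnorm{\mathcal{P}_{T}(\Delta L^{1/2})} + 4\sum_{j>\rho}\sigma_j(\Theta^* L^{1/2})$. The ``delicate step'' you flag -- that the hypothesis must be read with the $L^{-1/2}$-weighted spectral norm, justified by $\Delta g_i=0$ placing the rows of $\Delta$ in the range of $L$ -- is exactly what the paper's proof does (and its lemma statement glosses over), so your reading is the right one.
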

\noindent
Finally, utilizing the above lemma, we get,
\begin{align*}
\frac1{d_1}\Lnorm{\Delta}^2 \leq 36 \lambda \left( \bb + \frac1{\psi(2\bb)} \right) \left(\sqrt{2r} \Lnorm{\Delta} + \sum^{\min\{d_1,d_2 - G\}}_{j = r+1}\sigma_j(\Theta^* L^{1/2})\right)
\end{align*}

\subsection{Proof of Lemma \ref{lem:graph_rsc}}
\label{sec:graph_rsc_proof}
\begin{align}
&\prob{\frac1n \sum_{i = 1}^n \left( \Iprod{\Theta}{X^{(i)}}\right)^2 \geq \frac1{3d_1} \Lnorm{\Theta}^2,\; \forall\; \Theta \in \calA} \nonumber \\&= 1 - \prob{\exists\; \Theta \in \calA \text{, such that } \frac1n \sum_{i = 1}^n \left( \Iprod{\Theta}{X^{(i)}}\right)^2 < \frac1{3d_1} \Lnorm{\Theta}^2}
\end{align}
\noindent
When $\Theta \in \calA$,
\begin{align}
\Lnorm{\Theta}^2 \geq 16 \bb d_1 R \Lnucnorm{\Theta} \geq 16 \bb d_1 R \Lnorm{\Theta} \implies \Lnorm{\Theta} \geq 16 \bb d_1 R := \mu\,,
\end{align}
where the second inequality follows from $\Lnucnorm{\Theta} = \nucnorm{\Theta\,L^{1/2}} \geq \Fnorm{\Theta\,L^{1/2}} = \Lnorm{\Theta}$.

\begin{lemma} \label{lem:graph_sup_bnd}
Let $\calB(D) := \left\lbrace \Theta \in \reals^{d_1 \times d_2} | \lnorm{\Theta}\infty \leq \bb,\; \Lnorm{\Theta} \leq D,\; \Lnucnorm{\Theta} \leq \frac{D^2}{16\bb d_1 R}\right\rbrace$, and,
\\ $Z_D := \sup\limits_{\Theta \in \calB(D)} \left( -\frac1n \sum_{i = 1}^n \left( \Iprod{\Theta}{X^{(i)}}\right)^2 + \frac2{d_1} \Lnorm{\Theta}^2 \right)$, then,
\begin{align}
\prob{Z_D \geq \frac3{2d_1}D^2} \leq \exp\left( - \frac{nD^4}{32 \bb^4 d_1^2}\right).
\end{align}
\end{lemma}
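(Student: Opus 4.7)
The plan is to combine a bounded-difference concentration (McDiarmid) for $Z_D$ with a symmetrization/contraction argument that controls $\E[Z_D]$. First observe that $|\Iprod{\Theta}{X^{(i)}}|\leq 2\bb$ since $X^{(i)}=e_{j(i)}(e_{k(i)}-e_{l(i)})^{\top}$ picks out the difference of two entries of a single row of $\Theta$, each bounded by $\bb$. Hence each summand $(\Iprod{\Theta}{X^{(i)}})^{2}/n$ lies in $[0,4\bb^{2}/n]$, and changing one $X^{(i)}$ perturbs the random part of $Z_D$ (and therefore $Z_D$ itself, by the usual sup-stability) by at most $4\bb^{2}/n$. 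McDiarmid's one-sided inequality then yields
\begin{equation*}
\prob{Z_D - \E[Z_D] \geq t}\leq \exp\!\Big(-\tfrac{nt^{2}}{8\bb^{4}}\Big),
\end{equation*}
so that choosing $t=D^{2}/(2d_{1})$ reduces everything to proving $\E[Z_D]\leq D^{2}/d_{1}$; adding the two gives the target threshold $3D^{2}/(2d_{1})$ with exactly the stated exponent $nD^{4}/(32\bb^{4}d_{1}^{2})$.

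To bound $\E[Z_D]$, I would first verify by direct computation over the sampling distribution (user uniform, pair drawn from $P$) that $\E[(\Iprod{\Theta}{X^{(i)}})^{2}]=\tfrac{2}{d_{1}}\Lnorm{\Theta}^{2}$; this uses $\sum_{k,l}P_{kl}(\Theta_{ik}-\Theta_{il})^{2}=2\Theta_{i\cdot}^{\top}L\Theta_{i\cdot}$, which is where the Laplacian enters naturally. Thus the quantity inside the supremum has mean zero for each fixed $\Theta$, and by the standard symmetrization inequality
\begin{equation*}
\E[Z_D]\leq 2\,\E\sup_{\Theta\in\calB(D)}\frac{1}{n}\Big|\sum_{i=1}^{n}\varepsilon_i(\Iprod{\Theta}{X^{(i)}})^{2}\Big|,
\end{equation*}
with $\{\varepsilon_i\}$ i.i.d.\ Rademacher. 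Because the map $x\mapsto x^{2}$ is $4\bb$-Lipschitz on $[-2\bb,2\bb]$, Ledoux--Talagrand contraction removes the square at a cost of $16\bb$, giving $\E[Z_D]\leq 16\bb\,\E\sup_{\Theta\in\calB(D)}\tfrac{1}{n}|\sum_i\varepsilon_i\Iprod{\Theta}{X^{(i)}}|$.

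For the linearized supremum, I would use the identification $\Iprod{\Theta}{X^{(i)}}=\Iprod{\Theta L^{1/2}}{X^{(i)}L^{-1/2}}$, which is valid on the identifiable subspace (the constraint $\Theta g_i=0$ ensures $\Theta L^{1/2}$ does not lose information). Matrix H\"older then yields the dual pairing
\begin{equation*}
\frac{1}{n}\Big|\sum_i\varepsilon_i\Iprod{\Theta}{X^{(i)}}\Big|\leq \Lnucnorm{\Theta}\cdot\Big\|\frac{1}{n}\sum_i\varepsilon_i X^{(i)}L^{-1/2}\Big\|_{2}\leq\frac{D^{2}}{16\bb d_{1}R}\Big\|\frac{1}{n}\sum_i\varepsilon_i X^{(i)}L^{-1/2}\Big\|_{2}.
\end{equation*}
What remains is to show $\E\|\tfrac{1}{n}\sum_i\varepsilon_i X^{(i)}L^{-1/2}\|_{2}\leq R$, which is exactly the type of spectral-norm bound already handled in Lemma~\ref{lem:graph_gradient}: the summands are independent mean-zero matrices supported on at most one row, with row-/column-variance computable in terms of $L$, and matrix Bernstein (together with the boundedness of individual summands measured in $\|\cdot\|_{2}$ via $\sigma_{\min}(L)^{-1/2}$) produces the two terms $\sqrt{\sigma\log(2d)/n}$ and $\sigma_{\min}(L)^{-1/2}\log(2d)/n$ whose maximum is $R$. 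Combining these estimates yields $\E[Z_D]\leq 16\bb\cdot\tfrac{D^{2}}{16\bb d_{1}R}\cdot R=D^{2}/d_{1}$ as required.

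The main obstacle is step three, namely obtaining the spectral-norm bound with the correct $R$: one must verify the variance proxy in matrix Bernstein scales as $\sigma/(n d_1 d_2)$ using the Laplacian identity above, and also control the boundedness term through $\|L^{-1/2}\|_{2}=\sigma_{\min}(L)^{-1/2}$ on the identifiable subspace. Beyond that, the Rademacher symmetrization and the Ledoux--Talagrand contraction require standard care (taking the positive and negative parts, using the fact that $0\in\calB(D)$ so the supremum of the absolute centered empirical process equals the two-sided version up to a factor of two), but no new ideas; everything then assembles directly into the stated tail bound.
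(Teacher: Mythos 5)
Your proposal follows essentially the same route as the paper's proof: compute the mean $\tfrac{2}{d_1}\Lnorm{\Theta}^2$, apply McDiarmid with bounded differences $4\bb^2/n$ and $t=D^2/(2d_1)$, and bound $\E[Z_D]$ via symmetrization, Ledoux--Talagrand contraction, the identification $\Iprod{\Theta}{X^{(i)}}=\Iprod{\Theta L^{1/2}}{X^{(i)}L^{-1/2}}$, H\"older against $\Lnucnorm{\cdot}$, and the matrix-Bernstein spectral bound of Lemma~\ref{lem:graph_bernstein}. The only discrepancy is constant bookkeeping (your contraction factor $16\bb$ versus the paper's $4\bb$ forces you to need $\E\|\tfrac1n\sum_i\varepsilon_iX^{(i)}L^{-1/2}\|_2\le R$ where the Bernstein lemma actually delivers $4R$), which is immaterial to the structure of the argument.
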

\noindent
Above lemma is proved in Section \ref{sec:graph_sup_bnd_proof}. Let $\beta = \sqrt{\frac{10}9}$, then the sets,
\begin{align}
\calS_\ell  = \left\lbrace \Theta \in \reals^{d_1 \times d_2} | \lnorm{\Theta}{\infty} \leq \bb, \beta^{\ell - 1} \mu \leq \Lnorm{\Theta} \leq \beta^{\ell}\mu,\; \Lnucnorm{\Theta} \leq \frac{(\beta^{\ell} \mu)^2}{16 \bb d_1 R} \right\rbrace,\; \ell = 1,2,3, \ldots,
\end{align}
cover the set $\calA$, that is $\calA \subset \cup_{\ell = 1}^{\infty} \calS_\ell$ and $\calS_\ell \subseteq \calB(\beta^\ell \mu)$. This gives us,
\begin{align}
&\prob{\exists\; \Theta \in \calA \;\text{s.t.}\, \frac1n \sum_{i = 1}^n \left( \Iprod{\Theta}{X^{(i)}}\right)^2 < \frac1{3d_1} \Lnorm{\Theta}^2} \nonumber \\&\leq \sum_{\ell = 1}^ \infty \prob{\exists\; \Theta \in \calS_\ell \;\text{s.t.}\ \frac1n \sum_{i = 1}^n \left( \Iprod{\Theta}{X^{(i)}}\right)^2 < \frac1{3d_1} \Lnorm{\Theta}^2} \nonumber \\
&\leq \sum_{\ell = 1}^ \infty \prob{\exists\; \Theta \in \calB(\beta^\ell \mu) \;\text{s.t.}\ \frac1n \sum_{i = 1}^n \left( \Iprod{\Theta}{X^{(i)}}\right)^2 < \frac1{3d_1} \Lnorm{\Theta}^2}
\end{align}
\noindent
If there exists a $\Theta \in \calB({\beta^\ell \mu})$ such that $\frac1n \sum_{i = 1}^n \left( \Iprod{\Theta}{X^{(i)}}\right)^2 < \frac1{3d_1} \Lnorm{\Theta}^2$ then,
\begin{align}
Z_{\beta^\ell \mu} \geq -\frac1n \sum_{i = 1}^n \left( \Iprod{\Theta}{X^{(i)}}\right)^2 + \frac2{d_1} \Lnorm{\Theta}^2 > \frac5{3d_1} \Lnorm{\Theta}^2 \geq \frac5{3d_1} \beta^{2\ell - 2}\mu^2 = \frac3{2d_1} (\beta^{\ell}\mu)^2, \nonumber
\end{align}
which gives us,
\begin{align}
\prob{\exists\; \Theta \in \calA \ni \frac1n \sum_{i = 1}^n \left( \Iprod{\Theta}{X^{(i)}}\right)^2 < \frac1{3d_1} \Lnorm{\Theta}^2} &\leq \sum_{\ell = 1}^ \infty \prob{Z_{\beta^\ell \mu} > \frac3{2d_1} (\beta^{\ell}\mu)^2} \nonumber \\
&\overset{(a)}{\leq} \sum_{\ell = 1}^ \infty  \exp\left( - \frac{n(\beta^{\ell}\mu)^4}{32 \bb^4 d_1^2}\right)  \nonumber \\
&\overset{(b)}{\leq} \sum_{\ell = 1}^ \infty  \exp\left( - \frac{4\ell(\beta-1)n\mu^4}{32 \bb^4 d_1^2}\right)  \nonumber \\
&\overset{(c)}{\leq} 2 \exp\left( - \frac{4(\beta-1)n\mu^4}{32 \bb^4 d_1^2}\right)  \nonumber\, \\
\end{align}
where $(a)$ is from Lemma \ref{lem:graph_sup_bnd}, $(b)$ is true since $\beta^{4\ell} \geq 4\ell (\beta - 1)$ when $\beta \geq 1$ and $(c)$ is obtained by summing the geometric series, with common ratio less than $1/2$, in previous inequality. Finally if we assume that $n \leq 2^6 d_1^2 \sigma^2 \log{(2d)} \text{ and } n \leq 2^2 (d_1 \sigma_{\min}(L)^{-1})^{2/3} \log{(2d)}$, then
we have $2^2 \log (2d) \leq 4 (\beta-1)n \mu^4 / 32 \bb^4 d_1^2$ as follows
\begin{align}
\frac{4 (\beta-1)n \mu^4}{32 \bb^4 d_1^2} = \frac{4(\beta-1)n(16 \bb d_1 R)^4}{32 \bb^4 d_1^2} &= 2^{13}(\beta-1)d_1^2 \max\left\lbrace \frac{\sigma^2 \log^2 (2d)}{n}, \frac{\sigma_{\min}(L)^{-2} \log^4 (2d)}{n^3} \right\rbrace \nonumber\\ &\geq 2^2 \log (2d)
\end{align}

\subsection{Proof of Lemma \ref{lem:graph_sup_bnd}}
\label{sec:graph_sup_bnd_proof}
Notice that the $\frac2{d_1} \Lnorm{\Theta}^2$ is the mean of  $\frac1n \sum_{i = 1}^n \Iprod{\Theta}{X^{(i)}}^2$,
\begin{align*}
\expect{\frac1n \sum_{i = 1}^n \Iprod{\Theta}{X^{(i)}}^2} &= \frac{1}{d_1} \sum_{j \in [d1]} \sum_{k,l \in [d2]} (\Theta_{j,k} - \Theta_{j,l})^2 P_{k,l} \\
&= \frac{2}{d_1} \sum_{j} \sum_{k} \Theta_{j,k}^2 \sum_{l}P_{k,l} - 2 \sum_{k,l} \Theta_{j,k}\Theta_{j,l} P_{k,l} \\
&\overset{(a)}{=} \frac{2}{d_1} \sum_j \Iprod{\Theta_{j}\Theta_{j}^\top}{\diag(P_k)} - 2 \Iprod{\Theta_j\Theta_j^\top}{P} \\
&= \frac{2}{d_1} \sum_j \Iprod{\Theta_{j}\Theta_{j}^\top}{L} = \frac{2}{d_1} \Fnorm{\Theta L^{1/2}}^2
\end{align*}
where, in $(a)$ $P_k = \sum_{l \in [d2]} P_{k,l}$ and $\Theta_j$ is the $j$-th row of $\Theta$. Therefore we use the following standard technique to get a handle on supremum of deviation from mean.

First, we use bounded differences property to prove that $Z_D$ concentrates around its mean. We write $Z_D(X^{(1)},\ldots, X^{(n)})$ to represent $Z_D$ as a function of $n$ independent random variables. Now, let $X^{(i)}$ and $\tX^{(i)}$ be two realization of the $i$-th ($1 \leq i \leq n$) random parameter of $Z_D$, then,
\begin{align}
&\abs{Z_D(X^{(1)},\ldots,X^{(i)}, \ldots, X^{(n)}) - Z_D(X^{(1)},\ldots,\tX^{(i)}, \ldots, X^{(n)})} \nonumber \\
= \Bigg| &\sup_{\Theta \in \calB(D)} \left( -\frac1n \sum_{i = 1}^n \Iprod{\Theta}{X^{(i)}}^2 + \frac2{d_1} \Lnorm{\Theta}^2 \right) - \nonumber\\ &\sup_{\Theta' \in \calB(D)} \left( -\frac1n \left( \sum_{i = 1, i \neq i'}^n \Iprod{\Theta'}{X^{(i)}}^2 + \Iprod{\Theta'}{\tX^{(i')}}^2 \right) + \frac2{d_1} \Lnorm{\Theta'}^2 \right) \Bigg|
\end{align}
Now WLOG assume that $Z_D(X^{(1)},\ldots,X^{(i)}, \ldots, X^{(n)}) \geq Z_D(X^{(1)},\ldots,\tX^{(i)}, \ldots, X^{(n)})$ and the first supremum is achieved at $\bar{\Theta}$, which gives us.
\begin{align}
&= \sup_{\Theta \in \calB(D)} \left( -\frac1n \sum_{i = 1}^n \Iprod{\Theta}{X^{(i)}}^2 + \frac2{d_1} \Lnorm{\Theta}^2 \right) - \nonumber \\ &\sup_{\Theta' \in \calB(D)} \left( -\frac1n \left( \sum_{\substack{i = 1 \\ i \neq i'}}^n \Iprod{\Theta'}{X^{(i)}}^2 + \Iprod{\Theta'}{\tX^{(i')}}^2 \right) + \frac2{d_1} \Lnorm{\Theta'}^2 \right) \nonumber \\
&\leq \left( -\frac1n \sum_{i = 1}^n \Iprod{\bar{\Theta}}{X^{(i)}}^2 + \frac2{d_1} \Lnorm{\bar{\Theta}}^2 \right) - \left( -\frac1n \left( \sum_{\substack{i = 1 \\ i \neq i'}}^n \Iprod{\bar{\Theta}}{X^{(i)}}^2 + \Iprod{\bar{\Theta}}{\tX^{(i')}}^2 \right) + \frac2{d_1} \Lnorm{\bar{\Theta}}^2 \right) \nonumber \\
&\leq \sup_{\Theta \in \calB(D)} \frac1n \abs{ \Iprod{\Theta}{X^{(i)}}^2 - \Iprod{\Theta}{\tX^{(i)}}^2} \nonumber \\
&\leq \frac{4\bb^2}{n},
\end{align}
where the last inequality is true since, $\lnorm{\Theta}{\infty} \leq \bb$ for any $\Theta \in \calB(D) \subseteq \Omega_\bb$. Now we upper bound $\expect{Z_D}$ as follows,
\begin{align}
\expect{Z_D} &\overset{(a)}{\leq} 2\expect{\sup_{\Theta \in \calB(D)} \frac1n \sum_{i=1}^n \varepsilon_i \Iprod{\Theta}{X^{(i)}}^2 } \nonumber \\
       &\overset{(b)}{\leq} 4\bb\expect{\sup_{\Theta \in \calB(D)} \frac1n \sum_{i=1}^n \varepsilon_i \Iprod{\Theta L^{1/2}}{X^{(i)}L^{-1/2}} } \nonumber \\
       &\leq 4\bb \expect{\sup_{\Theta \in \calB(D)} \Lnucnorm{\Theta} \opnorm{\frac1n \sum_{i=1}^n \varepsilon_i X^{(i)}L^{-1/2}} } \nonumber \\
       &\leq 4\bb \sup_{\Theta \in \calB(D)} \Lnucnorm{\Theta} \expect{\opnorm{\frac1n \sum_{i=1}^n \varepsilon_i X^{(i)}L^{-`1/2}} }, \nonumber
\end{align}
where $(a)$ is standard symmetrization argument using i.i.d. Rademacher variables $\{\varepsilon_i\}_{i=1}^n$ and since $|\Iprod{\Theta}{X^{(i)}} | \leq 2\bb$ we use Ledoux-Talagrand contraction inequality \cite{ledoux2013probability} to obtain $(b)$.

\begin{lemma} \label{lem:graph_bernstein}
Let $R =\max \left\{\sqrt{\frac{\sigma \log(2d)}{n}}, \frac{\sigma_{\min}(L)^{-1/2}\log(2d)}{n} \right\}$. For $\{X^{(i)}\}_{i=1}^n$ as defined in the graph sampling and for a binary random variable $\varepsilon_i$ such that $\expect{\varepsilon_i | X^{(i)}} = 0$ and $\abs{\varepsilon_i} \leq 1$, we have,
\begin{align}
\prob{\opnorm{\frac1n \sum_{i=1}^n \varepsilon_i X^{(i)} L^{-1/2}} \geq \sqrt{32} R} \leq \frac{1}{(d_1 + d_2)^3},\;\;\text{and},\;\; \expect{\opnorm{\frac1n \sum_{i=1}^n \varepsilon_i X^{(i)}  L^{-1/2}}} \leq 4 R\,.
\end{align}
\end{lemma}
Proof of the lemma is in Section \ref{sec:graph_bernstein_proof}. Now using Lemma \ref{lem:graph_bernstein} we have $\expect{Z_D} \\\leq 16 R \bb \sup_{\Theta \in \calB(D)} \Lnucnorm{\Theta} \leq \frac{D^2}{d_1}$. Then using the bounded differences property and the upper bound on the mean, we get the McDiarmid's concentration,
\begin{align}
\prob{Z_D - D^2/{d_1} \geq t} &\leq \prob{Z_D - \expect{Z_D} \geq t} \nonumber \\
&\leq \exp{\left( - \frac{nt^2}{8\bb^4} \right)}\,
\end{align}
and putting $t=D^2/2d_1$ gives us the theorem.

\subsection{Proof of Lemma \ref{lem:graph_bernstein}}\label{sec:graph_bernstein_proof}

Let $W_i := \frac1n \varepsilon_i X^{(i)} L^{-1/2} = \frac1n \varepsilon_i e_{j(i)}\left(e_{k(i)} - e_{l(i)}\right)^\top L^{-1/2}$ and pseudo-inverse of $L$ be $L^\dagger$, then, $\opnorm{W_i} \leq \sigma_{\min}(L)^{-1/2}\sqrt{2}/{n}$,
\begin{align}
\expect{W_iW_i^\top} &= \expect{\frac1{n^2} e_{j(i)}\left(e_{k(i)} - e_{l(i)}\right)^\top L^{-1/2} L^{-1/2} \left(e_{k(i)} - e_{l(i)}\right)e_{j(i)}^\top} \nonumber \\
&= \expect{\frac1{n^2} e_{j(i)}e_{j(i)}^\top} \expect{\left(e_{k(i)} - e_{l(i)}\right)^\top L^{\dagger} \left(e_{k(i)} - e_{l(i)}\right)} \nonumber \\
&= \frac1{n^2d_1} \id_{d_1 \times d_1} \times 2\left( \expect{e_{k(i)}^\top L^\dagger e_{k(i)}} - \expect{e_{k(i)}^\top L^\dagger e_{l(i)}} \right) \nonumber \\
&= \frac2{n^2d_1} \left( \sum_{u \in [d_1]} P_u L^\dagger_{u,u} -  \sum_{u,v \in [d_1]} P_{u,v} L^\dagger_{u,v} \right) \id_{d_1 \times d_1} \nonumber \\
&= \frac2{n^2 d_1} \Iprod{L}{L^\dagger} \id_{d_1 \times d_1} \nonumber \\ 
&= \frac{2 (d_2-G)}{n^2 d_1} \id_{d_1 \times d_1} ,
\end{align}

\begin{align}
\expect{W_i^\top W_i} &= L^{-1/2} \expect{\frac1{n^2} \left(e_{k(i)} - e_{l(i)}\right) \left(e_{k(i)} - e_{l(i)}\right)^\top} L^{-1/2} \nonumber \\
&= \frac1{n^2} L^{-1/2} \left( \sum_{u,v=1}^{d_2} \left(e_u - e_v \right) \left(e_u - e_v\right)^\top P_{u,v} \right) L^{-1/2}\nonumber \\
&= \frac1{n^2} L^{-1/2} \left( 2L \right) L^{-1/2}\nonumber \\
%&= \frac2{n^2} U U^\top, \\
&= \frac2{n^2} (\id_{d_2 \times d_2} - \sum_{k \in [G]} g_k g_k^T/\lnorm{g_k}{2}^2)\, \text{, and, }
\end{align}
%where $U$ is the matrix of eigenvectors of $L$, and,
\begin{align}
\max \left\lbrace \opnorm{\expect{\sum_{i=1}^n W_i W_i^\top } },\ \opnorm{\expect{\sum_{i=1}^n W_i^\top W_i} } \right\rbrace &\leq \sum_{i=1}^n \max \left\lbrace \opnorm{\expect{W_i W_i^\top } },\ \opnorm{\expect{W_i^\top W_i} } \right\rbrace \\
&\leq \frac2n \sigma\,.
\end{align}
where $\sigma = \max\left\lbrace \frac{d_2 - G}{d_1} , 1\right\rbrace$. Now by Matrix Bernstein concentration theorem \cite{Tropp15} we have,
\begin{align}
\prob{\opnorm{\frac1n \sum_{i=1}^n \varepsilon_i X^{(i)} L^{-1/2}} \geq t} &\leq \exp{\left(\frac{-n t^2/2}{2\sigma + \sqrt{2} \sigma_{\min}(L)^{-1/2}t/3}\right)},\, \text{and},\\
`\expect{\opnorm{\frac1n \sum_{i=1}^n \varepsilon_i X^{(i)}  L^{-1/2}}} &\leq \sqrt{\frac{4 \sigma \log(d_1 + d_2)}{n}} + \frac{\sqrt{2 \sigma_{\min}(L)^{-1}}}{3n} \log(d_1 + d_2)\,.
\end{align}
Choosing $t=\max \left\lbrace \sqrt{\frac{24 \sigma \log(d_1+d_2)}{n}},\, \frac{16\sqrt{2\sigma_{\min}(L)^{-1}} \log(d_1 + d_2)}{n}\right\rbrace$ produces the desired result.

\subsection{Proof of Lemma \ref{lem:graph_gradient}}
\label{sec:graph_gradient_proof}
The gradient can be written down as,
\begin{align}
\nabla \calL(\Theta^*) = \frac1n \sum_{i=1}^n \left(y_i -  \frac{\exp(\Iprod{\Theta^*}{X^{(i)}})}{1 + \exp(\Iprod{\Theta^*}{X^{(i)}})}\right) X^{(i)}.
\end{align}
Then Lemma \ref{lem:graph_bernstein} directly gives the result because,
\begin{align}
\expect{y_i -  \frac{\exp(\Iprod{\Theta^*}{X^{(i)}})}{1 + \exp(\Iprod{\Theta^*}{X^{(i)}})}  \bigg| X^{(i)}} = 0\;\;\;\; \text{and}\;\;\; \left|y_i - \frac{\exp(\Iprod{\Theta^*}{X^{(i)}})}{1 + \exp(\Iprod{\Theta^*}{X^{(i)}})}\right| \leq 1. \nonumber
\end{align}

%=================================================================================================
\subsection{Proof of Lemma \ref{lmm:graph_deltabound}}
\label{sec:graph_deltabound}
Denote the singular value decomposition of $\Theta^\ast L^{1/2}$ by $\Theta^\ast L^{1/2} = U \Sigma V^\top$, where $U\in \reals^{d_1 \times d_1}$ and $V \in \reals^{d_2 \times d_2}$ are orthogonal matrices.
For a given $r\in[\min\{d_1,d_2-G\}]$, 
let $U_r=[u_1, \ldots, u_r]$  and $V_r=[v_1, \ldots, v_r]$, where $u_i \in \reals^{d_1 \times 1}$ and $v_i \in \reals^{d_2 \times 1}$ are the left and right singular vectors corresponding to the $i$-th largest singular value, respectively.
Define $T$ to be the subspace spanned by all matrices in $\reals^{d_1 \times d_2}$ of the form $U_rA^\top$ or $BV_r^\top$ for any $A\in \mathbb{R}^{d_2\times r}$ or
$B \in \reals^{d_1 \times r}$, respectively.
The orthogonal projection of any matrix $M\in \mathbb{R}^{d_1 \times d_2}$ onto the space $T$ is given by $\mathcal{P}_T(M) = U_rU_r^\top M+MV_rV_r^\top-U_rU_r^\top MV_rV_r^\top $. The projection of $M$ onto the complement space $T^\perp$ is $\mathcal{P}_{T^\perp}(M) = (I-U_rU_r^\top) M (I-V_r V_r^\top )$.
The subspace $T$ and the respective projections onto $T$ and $T^\perp$ play crucial a role in the analysis of nuclear norm minimization, since they define the sub-gradient of the nuclear norm at $\Theta^*$. We refer to \cite{CR09} for more detailed treatment of this topic. 

Let $\Delta'=\calP_{T}(\Delta L^{1/2})$ and $\Delta'' = \calP_{T^\perp} (\Delta L^{1/2})$. Notice that $\calP_{T} (\Theta^\ast L^{1/2}) = U_r \Sigma_r V_r^\top$, where
$\Sigma_r \in \reals^{r \times r}$ is the diagonal matrix formed by the top $r$ singular values. Since $\calP_{T} (\Theta^\ast L^{1/2})$ and $\Delta''$
have row and column spaces that are orthogonal, it follows from Lemma 2.3 in \cite{RFP10} that
\begin{align*}
\nucnorm{ \calP_{T} (\Theta^\ast L^{1/2}) - \Delta''} =\nucnorm{\calP_{T} (\Theta^\ast L^{1/2})} + \nucnorm{\Delta''}\;.
\end{align*}
Hence, in view of the triangle inequality,
\begin{align}
\nucnorm{\hTheta L^{1/2}}&= \nucnorm{ \calP_{T} (\Theta^\ast L^{1/2}) + \calP_{T^\perp} (\Theta^\ast L^{1/2}) - \Delta' - \Delta''} \nonumber \\
& \ge \nucnorm{ \calP_{T} (\Theta^\ast L^{1/2})- \Delta'' }  - \nucnorm{\calP_{T^\perp} (\Theta^\ast L^{1/2}) - \Delta' }\nonumber \\
& =  \nucnorm{ \calP_{T} (\Theta^\ast L^{1/2})} + \nucnorm{\Delta''} -   \nucnorm{\calP_{T^\perp} (\Theta^\ast L^{1/2}) - \Delta' } \nonumber \\
& \ge \nucnorm{ \calP_{T} (\Theta^\ast L^{1/2})} + \nucnorm{\Delta''}  - \nucnorm{\calP_{T^\perp} (\Theta^\ast L^{1/2}) } - \nucnorm{\Delta'} \nonumber \\
& =  \nucnorm{\Theta^\ast L^{1/2}} + \nucnorm{\Delta''}  -2 \nucnorm{\calP_{T^\perp} (\Theta^\ast L^{1/2}) }- \nucnorm{\Delta'}. \label{eq:graph_nucleardecomp}
\end{align}
Because $\widehat{\Theta}$ is an optimal solution, we have
\begin{align}
  \lambda \left( \nucnorm{\widehat{\Theta} L^{1/2}} - \nucnorm{\Theta^\ast L^{1/2}} \right)  &\le 
   - \calL(\Theta^\ast) + \calL(\widehat{\Theta}) \nonumber \\
   &\overset{(a)}{\le}  
  \Iprod{\Delta L^{1/2}}{\nabla \calL(\Theta^\ast)L^{-1/2} } \nonumber \\
  &\overset{(b)}{\le}
  \Lnucnorm{\Delta}  \lnorm{\nabla \calL(\Theta^\ast) L^{-1/2}}{2}  \le \frac{\lambda}{2} \Lnucnorm{\Delta},   \label{eq:graph_optimalitycondition}
\end{align}
where $(a)$ holds due to the convexity of $-\calL$; $(b)$ follows from the Cauchy-Schwarz inequality; the last inequality holds due to
the assumption that $\lambda \geq 2 \lnorm{\nabla\calL (\Theta^*)}{2}$. Combining \eqref{eq:graph_nucleardecomp} and \eqref{eq:graph_optimalitycondition} yields
\begin{align*}
2 \left(  \nucnorm{\Delta''}  -2 \nucnorm{\calP_{T^\perp} (\Theta^\ast L^{1/2}) }- \nucnorm{\Delta'} \right) \le \Lnucnorm{\Delta} \le  \nucnorm{\Delta'} +  \nucnorm{\Delta''}.
\end{align*}
Thus $  \nucnorm{\Delta'' } \le 3  \Lnucnorm{\Delta' } + 4  \nucnorm{ \calP_{T^\perp} (\Theta^\ast L^{1/2}) }$. By triangle inequality, 
\begin{eqnarray*}
  \Lnucnorm{\Delta } \le 4\nucnorm{\Delta' } + 4  \nucnorm{ \calP_{T^\perp} (\Theta^\ast L^{1/2}) }\;.
\end{eqnarray*} 
 Notice that $\Delta'=U_r U_r^\top \Delta L^{1/2} + ( I-U_rU_r^\top)  \Delta L^{1/2} V_rV_r^\top$. Both $U_r U_r^\top \Delta L^{1/2}$ and  $ (I-U_rU_r^\top)  \Delta L^{1/2} V_rV_r^\top$ have rank at most $r$.  Thus  $\Delta'$ has rank at most $2r$. Hence, $\nucnorm{\Delta' }
\le \sqrt{2r} \fnorm{\Delta'} \le \sqrt{2r} \fnorm{\Delta L^{1/2}} \le \sqrt{2r} \Lnorm{\Delta}$. Then the theorem follows because \\$\nucnorm{ \calP_{T^\perp} (\Theta^\ast L^{1/2}) } = \sum^{\min\{d_1,d_2\}}_{j=r+1}\sigma_j(\Theta^* L^{1/2} ).$

%====================================================================================================
\section{Proof of the Information-theoretic Graph Sampling Lower Bound, Theorem \ref{thm:graph_lb}}
\label{sec:graph_lb_proof}

The proof uses Fano Inequality based packing set argument to get an lower bound on the error of any (measurable) estimator. We will construct a packing set in $\Omega_\bb$ with a minimum distance of $\delta$ between any pair of elements in the packing.

Let $\{ \Theta^{(1)}, \Theta^{(2)}, \ldots, \Theta^{(M)} \}$ be a set of $M$ matrices within the set $\Omega_\bb$, satisfying \\$\Lnorm{\Theta^{(\ell_1)} - \Theta^{(\ell_1)} } \geq \delta$ for all $\ell_1, \ell_2 \in [M]$. Now, $\Theta^{(N)}$ is uniformly drawn from this set and then the comparison results (according to MNL model) of $n$ randomly chosen pairs of items, each drawn according to the probability
matrix $P$ and each compared by uniformly chosen user. Let $\hN$ be the best estimator of $N$ from the observations. Then we can show that,
\begin{eqnarray}
  \sup_{\Theta^*\in\Omega_\bb } \prob{ \Lnorm{\hTheta-\Theta^*}^2 \geq \frac{\delta^2}{2} } &\geq& \prob{\hN \neq N}\;,
  \label{eq:graph_lb_fano1}
\end{eqnarray}

Now we have converted the problem of finding the minimum estimation error, into finding the minimum probability error of a $M$-ary hypothesis testing problem. If we can prove that the above RHS is lower bounded by $1/2$, we are done.

The generalized Fano’s inequality along with data processing inequality gives us,
 \begin{eqnarray}
  \prob{\hN\neq N } &\geq& 1-\frac{\E[\, I(\hN;N) \,] + \log 2}{\log M} \\
  &\geq& 1-\frac{ {M \choose 2}^{-1} \sum_{\ell_1,\ell_2\in[M]} D_{\rm KL}(\Theta^{(\ell_1)} \| \Theta^{(\ell_2)}) +\log 2}{\log M} \label{eq:graph_fano2}\;,
 \end{eqnarray}
where $D_{\rm KL}(\Theta^{(\ell_1)} \| \Theta^{(\ell_2)})$ denotes the \textit{expected} Kullback-Leibler divergence between the probability distributions of the comparison results of the observed $nd_1$ pairs, for $N = \ell_1$ and $N = \ell_2$. The expectation is taken over different choices for the selected pairs for comparison.
%Consider the $i$-th sample , .... 
\begin{align}
  D_{\rm KL}(\Theta^{(\ell_1)} \| \Theta^{(\ell_2)}) \overset{}{=} \frac{n}{d_1} \sum_{\substack{i \in [d_1]\\ \{j,j'\} \subset [d_2]}} 2 P_{j,j'} \Bigg[ &\frac{e^{\Theta_{ij}^{(\ell_1)}}}{e^{\Theta_{ij}^{(\ell_1)}} + e^{\Theta_{ij'}^{(\ell_1)}}} \log\left(\frac{e^{\Theta_{ij}^{(\ell_1)}}\Bigg/\left(e^{\Theta_{ij}^{(\ell_1)}} + e^{\Theta_{ij'}^{(\ell_1)}}\right)}{e^{\Theta_{ij}^{(\ell_2)}}\Bigg/\left(e^{\Theta_{ij}^{(\ell_2)}} + e^{\Theta_{ij'}^{(\ell_2)}}\right)}\right) \\
&+ \frac{e^{\Theta_{ij'}^{(\ell_1)}}}{e^{\Theta_{ij}^{(\ell_1)}} + e^{\Theta_{ij'}^{(\ell_1)}}} \log\left(\frac{e^{\Theta_{ij'}^{(\ell_1)}}\Bigg/\left(e^{\Theta_{ij}^{(\ell_1)}} + e^{\Theta_{ij'}^{(\ell_1)}}\right)}{e^{\Theta_{ij'}^{(\ell_2)}}\Bigg/\left(e^{\Theta_{ij}^{(\ell_2)}} + e^{\Theta_{ij'}^{(\ell_2)}}\right)}\right)  \Bigg]
\end{align}
where $n$ is the number of pairs of items selected and compared by one random user each, $P_{j,j'}$ is half the probability with which item pair $\{j, j'\}$ is selected and the observation probabilities come from the standard MNL model. Let $x_{ijj'} \equiv e^{\Theta_{ij'}^{(\ell_1)}}/(e^{\Theta_{ij}^{(\ell_1)}} + e^{\Theta_{ij'}^{(\ell_1)}})$ and $y_{ijj'} \equiv e^{\Theta_{ij'}^{(\ell_2)}}/(e^{\Theta_{ij}^{(\ell_2)}} + e^{\Theta_{ij'}^{(\ell_2)}})$.

\begin{align}
  D_{\rm KL}(\Theta^{(\ell_1)} \| \Theta^{(\ell_2)}) &\overset{(a)}{=} n \sum_{i \in [d_1]} \frac{1}{d_1}\sum_{\{j,j'\} \subset [d_2]} 2 P_{j,j'} \Bigg[x_{ijj'} \log\frac{x_{ijj'}}{y_{ijj'}} + (1-x_{ijj'}) \log\frac{1-x_{ijj'}}{1-y_{ijj'}}\Bigg] \\
  &\overset{(a)}{\leq} n \sum_{i \in [d_1]} \frac{1}{d_1}\sum_{\{j,j'\} \subset [d_2]} 2 P_{j,j'} \Bigg[x_{ijj'} \frac{x_{ijj'} - y_{ijj'}}{y_{ijj'}} + (1-x_{ijj'}) \frac{y_{ijj'}-x_{ijj'}}{1-y_{ijj'}}\Bigg]\\
    &\overset{}{=} 2n \sum_{i \in [d_1]} \frac{1}{d_1}\sum_{\{j,j'\} \subset [d_2]} \frac{(x_{ijj'} - y_{ijj'})P_{j,j'}(x_{ijj'} - y_{ijj'})}{y_{ijj'}(1-y_{ijj'})}\\
    &\overset{(b)}{\leq} 8ne^{2\bb} \sum_{i \in [d_1]} \frac{1}{d_1}\sum_{\{j,j'\} \subset [d_2]} (x_{ijj'} - y_{ijj'})P_{j,j'}(x_{ijj'} - y_{ijj'}),
\end{align}
where $(a)$ is due to the fact that $\log(x/y) \leq (x-y)/y$ for $x/y \geq 0$ and $(b)$ is true because $|\Theta^{(\ell_2)}_{ij}| \leq \bb$ implies, $y_{ijj'} = e^{\Theta^{(\ell_2)}_{ij}}/(e^{\Theta^{(\ell_2)}_{ij}} + e^{\Theta^{(\ell_2)}_{ij'}}) \geq e^{-2\bb}/2$ which in turn implies, $y_{ijj'}(1-y_{ijj'}) \geq e^{-2\bb}(2-e^{-2\bb})/4 \geq e^{-2\bb}/4$. Let $f(z) = 1/(1+e^{-z})$, a $1$-Lipschitz function, it can be seen that $(x_{ijj'} - y_{ijj'})^2 = (f(\Theta^{(\ell_1)}_{ij} -\Theta^{(\ell_1)}_{ij'}) - f(\Theta^{(\ell_2)}_{ij} -\Theta^{(\ell_2)}_{ij'}))^2 \leq ((\Theta^{(\ell_1)}_{ij} -\Theta^{(\ell_1)}_{ij'}) - (\Theta^{(\ell_2)}_{ij} -\Theta^{(\ell_2)}_{ij'}))^2$. This gives us,
\begin{align}
  D_{\rm KL}(\Theta^{(\ell_1)} \| \Theta^{(\ell_2)}) &\overset{}{\leq} \frac{8ne^{2\bb}}{d_1} \sum_{i \in [d_1]} \sum_{\{j,j'\} \subset [d_2]} P_{u,v} ((\Theta^{(\ell_1)}_{ij} -\Theta^{(\ell_2)}_{ij}) - (\Theta^{(\ell_1)}_{ij'} -\Theta^{(\ell_2)}_{ij'}))^2, \\
  &\overset{(a)}{\leq}  \frac{8ne^{2\bb}}{d_1} \sum_{i \in [d_1]} (\Theta^{(\ell_1)}-\Theta^{(\ell_2)})_i L (\Theta^{(\ell_1)} -\Theta^{(\ell_2)})_i, \\
  &\overset{}{=} \frac{8ne^{2\bb}}{d_1} \sum_{i \in [d_1]} (\Theta^{(\ell_1)}-\Theta^{(\ell_2)})_i L (\Theta^{(\ell_1)} -\Theta^{(\ell_2)})_i, \\
  &\overset{}{=} \frac{8ne^{2\bb}}{d_1} \fnorm{(\Theta^{(\ell_1)}-\Theta^{(\ell_2)}) L^{1/2}}^2\\
  &\overset{}{=} \frac{8ne^{2\bb}}{d_1} \Lnorm{\Theta^{(\ell_1)}-\Theta^{(\ell_2)}}^2\\
\end{align}
where $(a)$ is due to the fact that $L = \diag(P_u) - P$ is the Laplacian of the probability matrix P,
and $\Theta_i$ denotes the $i$-th row of matrix $\Theta$. Combining the above with \eqref{eq:graph_fano2}, we get,
\begin{align}
  \prob{\hN\neq N} &\geq 1- \frac{ {M \choose 2}^{-1}  \sum_{\ell_1,\ell_2\in[M]}  (8 n e^{2\bb} /d_1)\Lnorm{\Theta^{(\ell_1)} - \Theta^{(\ell_2)}}^2 + \log 2}{\log M}\;.
\end{align}
The remainder of the proof relies on the following probabilistic packing.
\begin{lemma}
	\label{lem:graph_packing2} 
	For each $r\in\{1,\ldots,d_1 \}$, and for any positive 
	$\delta>0$ there exists a family of $d_1\times d_2$ dimensional matrices 
	$\{\Theta^{(1)},\ldots,\Theta^{(M(\delta))}\}$ with cardinality 
	$M(\delta) = \lfloor \exp(r d_1 /256)\rfloor$ such that each matrix is rank $r$ and the following bounds hold:
	\begin{eqnarray}
	\Lnorm{\Theta^{(\ell)}}&\leq& \delta \;, \text{ for all }\ell\in[M]\\
	\Lnorm{\Theta^{(\ell_1)} - \Theta^{(\ell_2)}} &\geq& \delta \;, \text{ for all } 
	\ell_1, \ell_2\in[M] \label{eq:graph_packing_lower2}\\
	\Theta^{(\ell)} &\in& \Omega_{\tilde{\bb}} \;, \text{ for all } \ell\in [M]\;, 
	\end{eqnarray}
	%with $\tilde{\bb}= \delta \sqrt{\trace{(L_r)^\dagger}}/\sqrt{rd_1}$, where $L_r$ is the (best) rank $r$ approximation of $L$.
	with $\tilde{\bb}=   (8\delta/d_2)\sqrt{2\log d} $ for $d=(d_1+d_2)/2$.
\end{lemma}
Now if we assume $\delta \leq \bb d_2/8\sqrt{2 \log d}$, we get $\Theta^{(\ell)} \in \Omega_\bb$ for $\ell \in [M]$. The above lemma also implies that $\fnorm{\Theta^{(\ell_1)} - \Theta^{(\ell_2)}}^2 \leq 4\delta^2$ which implies,
\begin{align}
  \prob{\hN\neq N} &\geq 1- \frac{32 n e^{2\bb} \delta^2 /d_1 + \log 2}{rd_1/256} \geq \frac12\;,
\end{align}
where the last inequality holds when $\delta \leq (e^{-\bb}/128)\sqrt{rd_1^2/n}$. Along with \eqref{eq:graph_lb_fano1}, this proves that,
\begin{align}\label{eq:graph_lb_delta1}
\inf_{\hTheta} \sup_{\Theta^*\in\Omega_\bb} \E\Big[\Lnorm{\hTheta-\Theta^*} \Big] \geq \frac{\delta}{2}\;,
\end{align}
for all $\delta \leq \min\{\bb d_2/8\sqrt{2 \log d}, (e^{-\bb}/128)\sqrt{rd_1^2/n}\}$.

Similarly using the following lemma, Lemma \ref{lem:graph_packing}, we can prove that,
\begin{align}\label{eq:graph_lb_delta2}
\inf_{\hTheta} \sup_{\Theta^*\in\Omega_\bb} \E\Big[\Lnorm{\hTheta-\Theta^*} \Big] \geq \frac{\delta}{2}\;,
\end{align}
for all $\delta \leq \min\{\bb\sqrt{rd_1}/\trace{\sqrt{(L_r)^\dagger}}, (e^{-\bb}/128)\sqrt{rd_1^2/n}\}$.
\begin{lemma}
  \label{lem:graph_packing} 
  For each $r\in\{1,\ldots,d_1 \}$, and for any positive 
  $\delta>0$ there exists a family of $d_1\times d_2$ dimensional matrices 
  $\{\Theta^{(1)},\ldots,\Theta^{(M(\delta))}\}$ with cardinality 
  $M(\delta) = \lfloor \exp(r d_1 /256)\rfloor$ such that each matrix is rank $r$ and the following bounds hold:
  \begin{eqnarray}
    \Lnorm{\Theta^{(\ell)}}&\leq& \delta \;, \text{ for all }\ell\in[M]\\
    \Lnorm{\Theta^{(\ell_1)} - \Theta^{(\ell_2)}} &\geq& \delta \;, \text{ for all } 
    \ell_1, \ell_2\in[M] \label{eq:graph_packing_lower}\\
    \Theta^{(\ell)} &\in& \Omega_{\tilde{\bb}} \;, \text{ for all } \ell\in [M]\;, 
  \end{eqnarray}
  with $\tilde{\bb}= \delta \sqrt{\trace{(L_r)^\dagger}}/\sqrt{rd_1}$, where $L_r$ is the (best) rank $r$ approximation of $L$.
\end{lemma}
Now
combining \eqref{eq:graph_lb_delta1} and \eqref{eq:graph_lb_delta2}, and
maximizing the RHS proves the theorem.
%
%Now if we assume $\delta \leq \bb \sqrt{rd_1}/\trace{\sqrt{(L_r)^\dagger}}$, we get $\Theta^{(\ell)} \in \Omega_\bb$ for $\ell \in [M]$. The above lemma also implies that $\fnorm{\Theta^{(\ell_1)} - \Theta^{(\ell_2)}}^2 \leq 4\delta^2$ which implies,
%\begin{align}
%  \prob{\hN\neq N} &\geq 1- \frac{32 n e^{2\bb} \delta^2 /d_1 + \log 2}{rd_1/256} \geq \frac12\;,
%\end{align}
%where the last inequality holds when $\delta \leq (e^{-\bb}/128)\sqrt{rd_1^2/n}$. Along with \eqref{eq:graph_lb_fano1}, this proves that,
%\begin{align}
%\inf_{\hTheta} \sup_{\Theta^*\in\Omega_\bb} \E\Big[\Lnorm{\hTheta-\Theta^*} \Big] \geq \frac{\delta}{2}\;,
%\end{align}
%for all $\delta \leq \min\{\bb \sqrt{rd_1}/\trace{\sqrt{(L_r)^\dagger}},(e^{-\bb}/128)\sqrt{rd_1^2/n}\}$. Now maximizing the RHS proves the theorem.

%%%%%%%%%%%%%%%%%%%%%%%%%%%%%%%%%%%%%%%%%%%%%%%%%%%%%%
% New version of packing lemma

\subsection{Proof of Lemma \ref{lem:graph_packing2}}
Following the construction in \cite{NW11}, 
we use probabilistic method to prove the existence of the desired family. 
We will show that the following procedure succeeds in producing the desired family with probability at least half, which proves its  existence. 
Let $d=(d_1+d_2)/2$, and suppose $d_2\geq d_1$ without loss of generality. 
For the choice of  $M'=e^{r d_2 /576}$, and for each $\ell \in [M']$, generate a rank-$r$ matrix $\Theta^{(\ell)}\in\reals^{d_1\times d_2}$ as follows: 
\begin{eqnarray}
  \Theta^{(\ell)} &=& \frac{\delta}{\sqrt{r d_2}} U(V^{(\ell)})^T \Big( \id_{d_2 \times d_2} -\sum_{i \in [G]} \frac{g_i g_i^T}{g_i^T g_i} \Big)\;, 
  \label{eq:defpacking}
\end{eqnarray}
where 
$U\in\reals^{d_1\times r}$ is a random orthogonal basis such that $U^TU=\id_{r\times r}$ and $V^{(\ell)} \in\reals^{d_2\times r}$ is a random matrix with each entry $V^{(\ell)}_{ij} \in\{-1,+1\}$ chosen independently and uniformly at random. By construction, notice that,
\begin{align}\label{eq:enet_single_ub}
\Lnorm{\Theta^{(\ell)}}^2 = \frac{\delta^2}{r\,d_2}\fnorm{(V^{(\ell)})^T L^{1/2}}^2 
= \frac{\delta^2}{r\,d_2} \sum_{i \in [d_2]} (V^{(\ell)}_i)^T L V^{(\ell)}_i \leq  \frac{\delta^2}{r\,d_2} \lnorm{L}{2} \sum_{i \in [d_2]} \fnorm{V^{(\ell)}_i}^2 = \delta^2\,,
\end{align} 
where $V^{(\ell)}_i$ is the $i$-th column of $V^{(\ell)}$, since ${g_i}_{i=1}^G$ span the null space of the Laplacian $L$, $\lnorm{L}{2} \leq 1$, and $\fnorm{V^{(\ell)} }=\sqrt{rd_2}$. Now, consider $\fnorm{\Theta^{(\ell_1)}-\Theta^{(\ell_2)}}^2 = \\
(\delta^2/(r d_2))\Lnorm{(V^{(\ell_1)} - V^{(\ell_2)})^T }^2 \equiv f(V^{(\ell_1)},V^{(\ell_2)})$ 
which is a function over $2 r d_2$ i.i.d. random Rademacher variables $V^{(\ell_1)}$ and $V^{(\ell_2)}$ which define $\Theta^{(\ell_1)}$ and $\Theta^{(\ell_2)}$ respectively. 
Since $f$ is Lipschitz in the following sense, 
we can apply McDiarmid's concentration inequality. 
For all $(V^{(\ell_1)},V^{(\ell_2)})$ and $(\tV^{(\ell_1)},\tV^{(\ell_2)})$ that differ in only one variable, say 
$\tV^{(\ell_1)} = V^{(\ell_1)} + 2e_{ij}$, for some standard basis matrix $e_{ij}$, we have  
\begin{align}
  & \big| f(V^{(\ell_1)},V^{(\ell_2)})-f(\tV^{(\ell_1)},\tV^{(\ell_2)}) \big| \;=\; \nonumber\\
  & \;\; \left| \; \frac{\delta^2}{r\, d_2} \Lnorm{(V^{(\ell_1)} - V^{(\ell_2)})^T  }^2 \, 
  - \, \frac{\delta^2}{r\, d_2} \Lnorm{(V^{(\ell_1)} - V^{(\ell_2)} + 2e_{ij})^T }^2 \; \right| \\
  & \;\; =\; \frac{\delta^2}{r\,d_2}  \left|\;  2e_i^T L (V^{(\ell_1)}_j - V^{(\ell_2)}_j) + 4 e_i^T L e_i \; \right| \\
  & \;\; =\; \frac{\delta^2}{r\,d_2} (2 \lnorm{L_i}{1} \lnorm{V^{(\ell_1)}_j - V^{(\ell_2)}_j}{\infty} + 4 P_i) \; \\
  & \;\; \leq \; \frac{12\,\delta^2}{r\,d_2 } \;,
\end{align}
where we used the fact that $\lnorm{L_i}{1} = 2P_i \leq 2$ and $V^{(\ell_1)} - V^{(\ell_2)}$ is entry-wise bounded by 2. 
The expectation $\E[f(V^{(\ell_1)},V^{(\ell_2)})]$ is 
\begin{eqnarray}
  \frac{\delta^2}{r\,d_2}\E\left[\Lnorm{(V^{(\ell_1)} - V^{(\ell_2)} )^T }^2 \right] =
  \frac{2\delta^2}{r\,d_2} \E\left[\Lnorm{(V^{(\ell_1)})^T}^2 \right] \leq 2 \delta^2 \;,
\end{eqnarray}
where we use equation \eqref{eq:enet_single_ub}. Applying McDiarmid's inequality with bounded difference $12\delta^2/(r d_2)$, we get that 
\begin{eqnarray}
  \prob{\, f(V^{(\ell_1)},V^{(\ell_2)}) \leq 2\delta^2 -t \, } &\leq& \exp\Big\{ - \frac{ t^2\,r\,d_2}{144 \,\delta^4 } \Big\} \;,
\end{eqnarray}
Since there are less than $(M')^2$ pairs of $(\ell_1,\ell_2)$, setting $t=\delta^2$ and applying the union bound gives 
\begin{eqnarray}
  \prob{ \min_{\ell_1,\ell_2\in[M']} \fnorm{\Theta^{(\ell_1)}-\Theta^{(\ell_2)}}^2 \geq \delta^2  } &\geq& 1- \exp\Big\{ - \frac{r\,d_2}{144} + 2\log M'\Big\} \geq \frac78\;, 
  \label{eq:mcdiarmid1}
\end{eqnarray}
where we used $M'=\exp\{rd_2/576\}$ and $d_2\geq607$. 

We are left to prove that $\Theta^{(\ell)}$'s are in 
$\Omega_{(8\delta/d_2)\sqrt{2\log d_2}}$ as defined in \eqref{eq:defgraphomega}. 
Since we removed the mean for each connected component, such that $\Theta^{(\ell)} g_i=0$, $\forall i \in [G]$ by construction, 
we only need to show that the maximum entry is bounded by $(8\delta/d_2)\sqrt{2\log d_2}$. 
We first prove an upper bound in \eqref{eq:lb_maxbound} for a fixed $\ell \in[M']$, 
 and use this to show that there exists a large enough subset of matrices satisfying this bound. 
From \eqref{eq:defpacking}, consider  $(UV^T)_{ij} =  \llangle u_i, v_j\rrangle$, where 
$u_i\in\reals^{r}$ is the first $r$ entries of a random vector drawn uniformly from the $d_2$-dimensional sphere, and 
$v_j\in\reals^r$ is drawn uniformly at random from $\{-1,+1\}^r$ with $\|v_j\| = \sqrt{r}$. 
Using Levy's theorem for concentration on the sphere \cite{Led01}, we have 
\begin{eqnarray}
  \prob{ |\llangle u_i,v_j \rrangle| \geq t } &\leq& 2 \exp \Big\{ -\frac{d_2 \,t^2}{8\,r} \Big\}\;. 
\end{eqnarray}
Notice that by the definition \eqref{eq:defpacking}, $\max_{i,j}|\Theta^{(\ell)}_{ij}| \leq (2\delta/\sqrt{r d_2}) \max_{i,j} |\llangle u_i ,v_j\rrangle|$. Setting $t=\sqrt{(32 r/d_2) \log d_2}$ and taking the union bound over all $d_1d_2$ indices, we get 
\begin{eqnarray} 
  \prob{\, \max_{i,j} |\Theta^{(\ell)}_{ij}|  \leq \frac{2\delta\sqrt{32 \log d_2} }{d_2}\,} &\geq& 1-2d_1d_2 \exp \Big\{ -4 \log d_2 \Big\} \;\geq \;\frac12 \;,
  \label{eq:lb_maxbound}
\end{eqnarray}
for a fixed $\ell\in[M']$. 
Consider the event that there exists a subset $S\subset[M']$ of cardinality $M=(1/4)M'$ with the same bound on maximum entry, then 
from \eqref{eq:lb_maxbound} we get 
\begin{align}
  \prob{ \exists S\subset [M']\text{ such that } \lnorm{\Theta^{(\ell)}}{\infty}\leq \frac{2\delta\sqrt{32\log d_2}}{d_2} \text{ for all }\ell\in S} &\geq& \sum_{m=M}^{M'} {M' \choose m} \Big(\frac12\Big)^m\;,
  \label{eq:unioncardinality}
\end{align}
which is larger than half for our choice of $M < M'/2$. 

%When $d_1\geq d_2$, we can apply a similar argument to $\Theta^{(\ell)}= (\delta/\sqrt{r d_1}) V^{(\ell)}(\id-(1/d_2)\ones\ones^T)$, and ensure that there is a family of $\Theta^{(\ell)}$'s 
%of cardinality $M'$ such that $\fnorm{\Theta^{(\ell)}} \leq \delta$, 
%$\fnorm{\Theta^{(\ell_1)} - \Theta^{(\ell_2)}}\geq \delta$, and $\lnorm{\Theta^{(\ell)}}{\infty} \leq 2\delta/\sqrt{r d_1}$. 

%%%%%%%%%%%%%%%%%%%%%%%%%%%%%%%%%%%%%%%%%%%%%%%%%%%%%%

\subsection{Proof of Lemma \ref{lem:graph_packing}}
Inspired from the construction in \cite{NW11}, we furnish the following probabilistic argument for the existence of the desired family. For the choice of $M = \lfloor e^{rd_1/256}\rfloor$, and for each $\ell \in [M]$, generate a rank-$r$ matrix $\Theta^{(\ell)} \in \reals^{d_1 \times d_2}$ as follows:
\begin{eqnarray}
  \Theta^{(\ell)} &=& \frac{\delta}{\sqrt{r d_1}} V^{(\ell)} \sqrt{\Lambda_r^\dagger} U_r^T\;, %\Big( \id_{d_2 \times d_2} -\sum_{g \in \mathcal{G}}\frac{1}{g^\top g}gg^T \Big)\;, 
  \label{eq:graph_defpacking}
\end{eqnarray}
where the columns of $U_r\in\reals^{d_2\times r}$ are the top $r$ singular vectors of $L = U\Lambda U^\top$, $\Lambda_r$ is a diagonal matrix in $\reals^{r\times r}$ and its diagonal elements are the top $r$ singular values of $L$ corresponding to columns of $U_r$, $\dagger$ represents the Moore-Penrose pseudo inverse, and $V^{(\ell)}$ is a random matrix with each entry $V^{(\ell)}_{ij} \in\{-1,+1\}$ chosen independently and uniformly at random. First by definition, $\Lnorm{\Theta^{(\ell)}} = (\delta/ \sqrt{rd_1})\fnorm{V^{(\ell)}}  \leq \delta$, since $\fnorm{V^{(\ell)}}=\sqrt{rd_1}$. 

Define $f$ as $f(V^{(\ell_1)},V^{(\ell_2)}) \equiv\Lnorm{\Theta^{(\ell_1)}-\Theta^{(\ell_2)}}^2 = 
(\delta^2/(r d_1))\fnorm{V^{(\ell_1)} - V^{(\ell_2)} }^2$ 
which is a function of $2 r d_1$ i.i.d. random Rademacher variables.
Now we can apply McDiarmid's concentration inequality since $f$ is Lipschitz as folows. 
For all $(V^{(\ell_1)},V^{(\ell_2)})$ and $(\tV^{(\ell_1)},\tV^{(\ell_2)})$ that differ in only one variable, say 
$\tV^{(\ell_1)} = V^{(\ell_1)} + 2e_{ij}$, for some standard basis matrix $e_{ij}$, we have  
\begin{align}
  &\big|f(V^{(\ell_1)},V^{(\ell_2)})-f(\tV^{(\ell_1)},\tV^{(\ell_2)}) \big| \nonumber \\&=\;   \left| \; \frac{\delta^2}{r\, d_2} \fnorm{ V^{(\ell_1)} - V^{(\ell_2)} }^2 \,- \, \frac{\delta^2}{r\, d_2} \fnorm{ V^{(\ell_1)} - V^{(\ell_2)} + 2e_{ij} }^2 \; \right| \nonumber \\
  &=\; \left|\; \frac{\delta^2}{r\,d_2} \fnorm{2e_{ij}}^2 + \frac{\delta^2}{r\,d_2} \llangle  (V^{(\ell_1)} - V^{(\ell_2)})  , 2e_{ij} \rrangle \; \right| \nonumber \\
  &\leq \; \frac{4\,\delta^2}{r\,d_1 }  \;+\; \frac{\delta^2}{r\,d_1}   \lnorm{V^{(\ell_1)} - V^{(\ell_2)}}{\infty} \, \lnorm{2e_{ij} }{1} \nonumber \\ 
  &\leq \; \frac{8\,\delta^2}{r\,d_1 } \;,
\end{align}
where the penultimate step is true since $(V^{(\ell_1)} - V^{(\ell_2)})$ is entry-wise bounded by 2. 
The expectation $\E[f(V^{(\ell_1)},V^{(\ell_2)})]$ is 
\begin{eqnarray}
  \frac{\delta^2}{r\,d_1}\E\left[\fnorm{(V^{(\ell_1)} - V^{(\ell_2)} ) }^2 \right] &=& 
  \frac{2\delta^2}{r\,d_1} \E\left[\fnorm{V^{(\ell_1)}  }^2 \right]\nonumber \\ 
  &=& 2\,\delta^2\;. 
\end{eqnarray}
Now applying McDiarmid's inequality on the function $f$, we get that 
\begin{eqnarray}
  \prob{\, f(V^{(\ell_1)},V^{(\ell_2)}) \leq 2\delta^2 -t \, } &\leq& \exp\Big\{ - \frac{ t^2\,r\,d_1}{64 \,\delta^4 } \Big\} \;,
\end{eqnarray}
Setting $t=\delta^2$ and applying the union bound gives us,
\begin{eqnarray}
  \prob{ \min_{\ell_1,\ell_2\in[M]} \fnorm{\Theta^{(\ell_1)}-\Theta^{(\ell_2)}}^2 \geq \delta^2  } &\geq& 1- \exp\Big\{ - \frac{r\,d_1}{64} + 2\log M\Big\} > 0\;.
  \label{eq:graph_mcdiarmid1}
\end{eqnarray}
In the last step, we used $M=\lfloor \exp\{rd_1/\;256\} \rfloor$. At last we prove that $\Theta^{(\ell)}$'s are in $\Omega_{\delta\sqrt{ \trace{(L_r)^\dagger}}/{r d_1}}$ as defined in \eqref{eq:defgraphomega}. 
Since we know that $g_i$ belongs to the kernel of $L$ for all $i \in [G]$, $\Theta^{(\ell)} g=0$ by construction \eqref{eq: graph_def_group}. From \eqref{eq:graph_defpacking}, consider  $(V \sqrt{\Lambda_r^\dagger} U^\top_r)_{ij} =  \llangle v_i, \sqrt{\Lambda_r^\dagger}{(u_r)}_j\rrangle$, where ${(u_r)}_j\in\reals^{r}$ is the vector of $i$-th entries of the top $r$ singular vectors of $L$, and $v_i\in\reals^r$ is drawn uniformly at random from $\{-1,+1\}^r$. 

\begin{eqnarray}
  \Big|\llangle v_i, \sqrt{\Lambda_r^\dagger}{(u_r)}_j \rrangle \Big| \leq \lnorm{v_i}{\infty}\lnorm{ \sqrt{\Lambda_r^\dagger}{(u_r)}_j}{1} \leq  \sqrt{\trace{\Lambda_r^\dagger}} = \sqrt{\trace{(L_r)^\dagger}}\;. 
\end{eqnarray}
 The above inequality proves that $\lnorm{\Theta^{(\ell)}}{\infty}$ is upper bounded as desired.

%====================================================================================================
\section{Proof of Theorem \ref{thm:kwise_ub}} 
\label{sec:kwise_ub_proof}

We first introduce some additional notations used in the proof. Recall that $\cL(\Theta)$
is the log likelihood function. Let $\nabla \cL(\Theta) \in \reals^{d_1 \times d_2}$
denote its gradient such that $ \nabla_{ij} \cL(\Theta) = \frac{\partial \cL(\Theta)}{\partial \Theta_{ij} }$.
Let $\nabla^2\cL(\Theta) \in \reals^{d_1d_2 \times d_1 d_2}$ denote its Hessian matrix such that
$ \nabla^2_{ij, i'j'} \cL(\Theta) = \frac{\partial^2 \cL(\Theta)}{\partial \Theta_{ij} \partial \Theta_{i'j'} }$.
By the definition of $\cL(\Theta)$ in \eqref{eq:defkwiseL}, we have
\begin{eqnarray}
  \nabla \cL(\Theta^\ast) &=& -\frac{1}{k\,d_1} \sum_{i=1}^{d_1}  \sum_{\ell=1}^{k} e_{i} (e_{v_{i,\ell}}  - p_{i,\ell } )^\top\;,
\end{eqnarray}
where $p_{i,\ell}$ denotes the conditional choice probability at $\ell$-th position. 
Precisely, $p_{i,\ell} = \sum_{j \in S_{i,\ell} }  p_{j|(i,\ell)} e_j$ where 
$p_{j|(i,\ell)}$ is the probability that item $j$ is chosen at $\ell$-th position from the top by the user $i$ 
conditioned on the top $\ell-1$ choices such that   
$p_{j|(i,\ell)} \equiv \prob{v_{i,\ell}  = j | v_{i,1},\ldots, v_{i,\ell-1},S_{i} } = 
e^{\Theta^*_{ij}}/(\sum_{j'\in S_{i,\ell}} e^{\Theta_{ij'}})$ and 
$S_{i,\ell} \equiv S_i\setminus\{v_{i,1},\ldots,v_{i,\ell-1}\}$, 
where $S_i$ is the set of alternatives presented to the $i$-th user and 
 $v_{i,\ell}$ is the item ranked at the $\ell$-th position by the user $i$.
Notice that for $i \neq i'$, $ \frac{\partial^2 \cL(\Theta)}{\partial \Theta_{ij} \partial \Theta_{i'j'} } =0$
and the Hessian is 
\begin{eqnarray}
\frac{\partial^2 \cL(\Theta)}{\partial \Theta_{ij} \partial \Theta_{ij'} }  &= &
     \frac{1}{k\,d_1} \sum_{\ell=1}^{k} \ind\big(j \in S_{i, \ell} \big) \frac{\partial p_{j|(i,\ell)} }{\partial \Theta_{ij'}} \nonumber\\
&=&  \frac{1}{k\,d_1} \sum_{\ell=1}^{k} \ind\big(j, j' \in S_{i, \ell} \big) \left( p_{j|(i,\ell)} \ind(j=j')  - p_{j|(i,\ell)} p_{j'|(i,\ell)}  \right).
\end{eqnarray}
This Hessian matrix is a block-diagonal matrix $\nabla^2\cL(\Theta)= \diag(H^{(1)}(\Theta), \ldots, H^{(d_1)}(\Theta) )$ with
\begin{align}
H^{(i)}(\Theta) = \frac{1}{k\,d_1} \sum_{\ell=1}^k
 \big(\diag(p_{i,\ell} ) - p_{i,\ell }  p^\top_{i, \ell} \big) \; . \label{eq:DefHessian}
\end{align}

Let $\Delta=\Theta^*-\widehat{\Theta}$ where $\widehat{\Theta}$ is the optimal solution of the convex program in \eqref{eq:kwiseopt}.
We first introduce three key technical lemmas. 
The first lemma follows from Lemma 1 of \cite{NW11},  and
shows that $\Delta$ is approximately low-rank. 
%also has singular values 
%provides an upper bound on the nuclear norm of $\Delta$.
\begin{lemma}
\label{lmm:kwise_deltabound}
If $\lambda\geq 2 \lnorm{\nabla\calL (\Theta^*)}{2}$, then we have
\begin{eqnarray}
  \nucnorm{\Delta} &\le& 4\sqrt{2 r} \fnorm{\Delta} + 4 \sum^{\min\{d_1,d_2\}}_{j = r+1}\sigma_j(\Theta^*) \;, \label{eq:kwise_deltabound}
\end{eqnarray}
for all $r \in[\min\{d_1,d_2\}]$.
\end{lemma}
Proof of the above lemma is omitted because of its similarity to that of Lemma \ref{lmm:graph_deltabound}.
The following lemma provides a bound on the gradient
using the concentration in measure of sum of independent random matrices \cite{Jo11}. 
\begin{lemma}\label{lmm:kwise_gradient2}
  For any positive constant $c \ge 1$ and $k \leq (1/e)\; d_2 (4\log d_2 + \log d_1)$, with probability at least $1-2 d^{-c} - d_2^{-3}$,
  \begin{align}
    &\lnorm{\nabla \calL(\Theta^\ast)}{2} \nonumber \\ &\leq
    \sqrt{\frac{ 4(1+c)  \, \log d} { k\,d_1^2  }} \max \left\{ \sqrt{d_1/d_2},\; e^{2\bb} \sqrt{4(1+c) \log (d)}(8\log d_2 + 2 \log d_1)\log k \right\}
    %3(1+c) e^\bb \frac{k^{1/2}  \log d}{\min\{d_1,d_2\}}
    \;.
  \end{align}
\end{lemma}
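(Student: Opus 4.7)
The plan is to write $\nabla \cL(\Theta^*) = \sum_{i=1}^{d_1} Z_i$ as a sum of independent mean-zero random matrices, one per user, and then apply the matrix Bernstein inequality. Specifically, each summand is $Z_i = -\frac{1}{kd_1}\, e_i w_i^\top$ with $w_i = \sum_{\ell=1}^{k}\bigl(e_{v_{i,\ell}} - p_{i,\ell}\bigr) \in \reals^{d_2}$. Since the sampling model draws $(S_i,v_{i,1},\ldots,v_{i,k})$ independently across users, the $Z_i$'s are independent, and each has zero mean because $e_{v_{i,\ell}}-p_{i,\ell}$ is a martingale difference with respect to the natural filtration $\mathcal{F}_{i,\ell-1}$ generated by $S_i$ and the first $\ell-1$ choices.

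To apply matrix Bernstein I would first control the variance proxy $\sigma^2 = \max\bigl(\|\sum_i \E[Z_i Z_i^\top]\|_2,\; \|\sum_i \E[Z_i^\top Z_i]\|_2\bigr)$. Using martingale orthogonality, $\E[\|w_i\|^2] = \sum_\ell \E[\|e_{v_{i,\ell}}-p_{i,\ell}\|^2] \leq k$, so $\sum_i \E[Z_i Z_i^\top] \preceq (1/(kd_1^2))\, I_{d_1}$. For the other direction, the identity $\E[(e_{v_{i,\ell}}-p_{i,\ell})(e_{v_{i,\ell}}-p_{i,\ell})^\top\mid \mathcal{F}_{i,\ell-1}] = \diag(p_{i,\ell})-p_{i,\ell}p_{i,\ell}^\top \preceq \diag(p_{i,\ell})$ combined with martingale orthogonality and then averaging over the uniform-with-replacement choice of $S_i$ gives $\E[w_i w_i^\top] \preceq (k/d_2)\,I_{d_2}$; in particular, each diagonal entry equals the expected multiplicity $k/d_2$ of a given item in $S_i$. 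Hence $\sum_i \E[Z_i^\top Z_i] \preceq (1/(kd_1 d_2))\, I_{d_2}$. This yields the variance-driven part of the bound, which after choosing $t = \sqrt{2(1+c)\sigma^2\log d}$ matches the first branch of the max, $\sqrt{\tfrac{4(1+c)\log d}{k d_1^2}}\sqrt{d_1/d_2}$.

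The bulk of the work is controlling $R = \max_i \|Z_i\|_2 = \max_i \|w_i\|_2/(kd_1)$, since the naive deterministic bound $\|w_i\|_2 \leq 2k$ is too loose by a factor of $\sqrt{k}$. I would split $w_i = w_i^{(1)} - w_i^{(2)}$ with $w_i^{(1)} := \sum_\ell e_{v_{i,\ell}}$ and $w_i^{(2)} := \sum_\ell p_{i,\ell}$. The $j$-th coordinate of $w_i^{(1)}$ is the multiplicity $m_{i,j}$ of item $j$ in $S_i$, distributed as $\mathrm{Binomial}(k,1/d_2)$; a Chernoff bound with a union over $(i,j)$ (for which the restriction $k \leq (1/e)d_2(4\log d_2 + 2\log d_1)$ is used so that $k/d_2$ is in the sub-exponential regime of Chernoff) gives $\max_{i,j} m_{i,j} \lesssim \log d_2 + \log d_1$ with probability $1-d_2^{-3}$, hence $\|w_i^{(1)}\|_2^2 \leq (\sum_j m_{i,j})(\max_j m_{i,j}) \lesssim k(8\log d_2 + 2\log d_1)$. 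For $w_i^{(2)}$, the dynamic-range bound $p_{j\mid(i,\ell)} \leq e^{2\bb}/|S_{i,\ell}| = e^{2\bb}/(k-\ell+1)$ combined with $\sum_{\ell}\ind(j\in S_{i,\ell}) \leq k$ and harmonic summation gives $\|w_i^{(2)}\|_\infty \leq e^{2\bb}\log k$ deterministically, while $\|w_i^{(2)}\|_1 = k$, so $\|w_i^{(2)}\|_2 \leq \sqrt{\|w_i^{(2)}\|_1 \|w_i^{(2)}\|_\infty} \leq e^{\bb}\sqrt{k\log k}$. Combining yields $\|w_i\|_2 \lesssim e^{2\bb}\sqrt{k(8\log d_2 + 2\log d_1)\log k}$ uniformly in $i$ with high probability, which becomes the effective $R$.

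The final step is to plug $(\sigma^2, R)$ into the matrix Bernstein tail bound $\prob{\|\sum_i Z_i\|_2 \geq t} \leq 2d\exp\bigl(-t^2/(2\sigma^2 + 2Rt/3)\bigr)$ and set $t$ to be the larger of the Gaussian threshold $\sqrt{2(1+c)\sigma^2\log d}$ and the sub-exponential threshold proportional to $R\log d$; the two branches of the $\max$ in the statement correspond exactly to these two regimes, and $\sqrt{4(1+c)\log d}$ appears because the $R\log d$ term is most naturally written as $(R/\sqrt{\sigma^2})\cdot\sqrt{\sigma^2}\log d$ after factoring out $\sqrt{\tfrac{4(1+c)\log d}{kd_1^2}}$. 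I expect the main obstacle to be the high-probability control of $\|w_i\|_2$: a black-box vector Azuma gives a sub-optimal factor of $\sqrt{d_2}$, so the argument genuinely requires the two-part decomposition above (multiplicity-based control of the discrete part, and deterministic/harmonic control of the conditional-mean part) and a careful union bound over the $d_1$ users. The form $(8\log d_2 + 2\log d_1)\log k$ and the prefactor $e^{2\bb}$ track precisely the contributions of the Chernoff union bound and the dynamic-range-capped MNL conditional probabilities, respectively.
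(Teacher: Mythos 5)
Your proposal is correct and follows essentially the same route as the paper: the same per-user rank-one decomposition $\nabla\calL(\Theta^*)=\frac{1}{kd_1}\sum_i e_i w_i^\top$, the same variance computation via martingale orthogonality giving $\sigma^2=\max\{1/(kd_1^2),1/(kd_1d_2)\}$, the same Chernoff/balls-in-bins control of item multiplicities to tame $\|w_i\|_2$ from $O(k)$ down to $\sqrt{k}\cdot\mathrm{polylog}$, and the same two-regime matrix Bernstein conclusion. The only substantive deviations are that your $\sqrt{\|\cdot\|_1\|\cdot\|_\infty}$ interpolation yields a slightly tighter per-summand bound than the paper's (which only helps), and that your claim that $\lnorm{w_i^{(2)}}{\infty}\le e^{2\bb}\log k$ holds \emph{deterministically} is not quite right under sampling with replacement — a repeated item contributes its multiplicity to that coordinate — but the high-probability multiplicity bound you already invoke covers this.
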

Since we are typically interested in the regime where 
the number of samples is much smaller than the dimension $d_1\times d_2$ of the problem, 
the Hessian is typically not positive definite. However, when we restrict our attention to the vectorized $\Delta$ with 
relatively small nuclear norm, then we can prove restricted strong convexity, which gives the following bound. 
%Let $H(\Delta)$ denote the last term in \eqref{eq:kwise_LikelihoodTaylor2}, then the next lemma provides a lower bound.
\begin{lemma}[{\bf Restricted Strong Convexity for collaborative ranking}]
\label{lmm:kwise_hessian2}
Fix any $\Theta \in \Omega _\bb$ and assume $ 24 \,\leq k \leq \min\{ d_1^2, (d_1^2+d_2^2)/(2d_1)\} \log d $.
  %For $\lnorm{\Delta}{\infty} \leq 2\bb$, $d=(d_1+d_2)/2$, and $ (16\min\{d_1,d_2\})/(3 d_1)\leq k \leq d_2 \log d $,
  Under the random sampling model of the alternatives $\{j_{i\ell}\}_{i\in[d_1],\ell\in[k]}$ and
  the random outcome of the comparisons described in section \ref{sec:intro},
  with probability larger than $1-2d^{-2^{18}}$,
  \begin{eqnarray}
    %H(\Delta)
    {\rm Vec}(\Delta)^\top \,\nabla^2\calL(\Theta)\, {\rm Vec}(\Delta)  &\geq& \frac{e^{-4\bb}}{24\,d_1d_2} \fnorm{\Delta}^2\;,
    \label{eq:kwise_hessian2}
  \end{eqnarray}
  for all $\Delta$ in $\calA$ where
  \begin{align}
    \calA = \Big\{ \Delta \in \reals^{d_1\times d_2} \,\big|\, \lnorm{\Delta}{\infty} \leq 2\bb\,, \, \sum_{j\in[d_2]}\Delta_{ij} = 0\text{ for all }i\in[d_1] \text{ and } \fnorm{\Delta}^2 \geq \mu \nucnorm{\Delta}
    \Big\} \;. \label{eq:kwise_defA}
  \end{align}
  with 
  \begin{eqnarray}
    \mu &\equiv& 2^{10}\,e^{2\bb}\, \bb\, d_2 \sqrt{\frac{d_1 \, \log d}{k\,\min\{d_1,d_2\} }} \;.
  \label{eq:defmu}
  \end{eqnarray}
%With probability at least $1-d^{-2}$, for all $\Delta\in\Omega_{2\bb}$,
%\begin{align*}
% Z(\Delta)
%\; \ge \; \frac{e^{-6\bb}\, (k-1) }{3\,d_1d_2} \fnorm{\Delta}^2 - 4 \bb e^{-4 \bb} \sqrt{\frac{k \log d}{d^2}} \nucnorm{ \Delta} - 10 \bb^2 e^{-4 \bb} k \sqrt{ \frac{\log d}{d_1} }.
%\end{align*}
\end{lemma}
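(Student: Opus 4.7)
The plan is to decompose $Q(\Delta) := \text{Vec}(\Delta)^\top \nabla^2\calL(\Theta)\text{Vec}(\Delta) = \sum_{i=1}^{d_1} \Delta_i^\top H^{(i)}(\Theta)\Delta_i$ using the block-diagonal structure from \eqref{eq:DefHessian} and to separate the deterministic expectation from the stochastic fluctuation. First I would establish a pointwise lower bound $\E Q(\Delta) \ge c\, e^{-4\bb} \fnorm{\Delta}^2 / (d_1 d_2)$ for every $\Delta$ with zero row sums. Then I would show uniform concentration $\sup_{\Delta \in \calA} |Q(\Delta) - \E Q(\Delta)| \le \tfrac{1}{2}\E Q(\Delta)$ with the claimed probability; the nuclear-norm-to-Frobenius constraint that defines $\calA$ is exactly what makes the supremum controllable.

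For the expectation step, expand each Hessian block as a weighted Laplacian quadratic form,
\begin{eqnarray*}
\Delta_i^\top (\diag(p_{i,\ell}) - p_{i,\ell}p_{i,\ell}^\top)\Delta_i \;=\; \tfrac{1}{2}\sum_{j,j' \in S_{i,\ell}} p_{j|(i,\ell)} p_{j'|(i,\ell)} (\Delta_{ij}-\Delta_{ij'})^2,
\end{eqnarray*}
lower bound the conditional probabilities by $p_{j|(i,\ell)} \ge e^{-2\bb}/|S_{i,\ell}|$ using $\lnorm{\Theta}{\infty} \le \bb$, and take expectation over the i.i.d.\ uniform choice of the alternatives in $S_i$. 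The row-sum-zero condition on $\Delta$ turns $\sum_{u,v}(\Delta_{iu}-\Delta_{iv})^2$ into $2 d_2 \|\Delta_i\|^2$, which produces the required $1/(d_1 d_2)$ scaling after averaging over positions $\ell \in [k]$.

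For the concentration step, write $Q(\Delta) - \E Q(\Delta)$ as a sum of $d_1$ i.i.d.\ centered random variables indexed by users and apply symmetrization to reduce the expected supremum over $\calA_D := \calA \cap \{\fnorm{\Delta} \le D\}$ to a Rademacher chaos of the form $\E \sup_{\Delta \in \calA_D} |\Iprod{Z}{\Delta\Delta^\top}|$ for a Rademacher matrix sum $Z$. Trace duality then gives
\begin{eqnarray*}
|\Iprod{Z}{\Delta\Delta^\top}| \;\le\; \lnorm{Z}{2}\,\fnorm{\Delta}\,\nucnorm{\Delta},
\end{eqnarray*}
and the restriction $\nucnorm{\Delta} \le \fnorm{\Delta}^2/\mu$ built into $\calA$ absorbs one power of $\fnorm{\Delta}$. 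A matrix Bernstein bound on $\E\lnorm{Z}{2}$ calibrated through the definition of $\mu$ in \eqref{eq:defmu} renders this expected deviation smaller than half the expectation from the first step. McDiarmid's bounded-differences inequality converts the expected deviation to a high-probability one at each scale $D$, and a peeling argument over geometric scales of $\fnorm{\Delta}$ starting from $\sqrt\mu$ makes the result uniform over $\calA$ at only a logarithmic cost, which is already absorbed in $\mu$.

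The main obstacle is obtaining the correct scaling in $k$. A naive application of symmetrization position by position across $\ell \in [k]$ inflates the variance proxy by a factor of $k$, because the $k$ ranked choices from a single user are strongly coupled through the sequential PL conditioning on previous winners. The key trick is to symmetrize at the user level and to exploit the joint independence of the $k$ alternatives $j_{i,1},\ldots,j_{i,k}$, which holds precisely because of sampling with replacement; this is what allows a vectorized matrix Bernstein bound to yield the $\sqrt{d_1\log d/(k \min\{d_1,d_2\})}$ factor appearing in $\mu$, and tracking this $k$-dependence carefully through the peeling is the principal technical work.
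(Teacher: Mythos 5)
Your overall architecture (expectation lower bound of order $e^{-4\bb}\fnorm{\Delta}^2/(d_1d_2)$, plus a uniform deviation bound over $\calB(D)=\calA\cap\{\fnorm{\Delta}\le D\}$ closed by peeling, with $\mu$ calibrated so that the deviation is half the mean) is the same as the paper's, and your first step is essentially correct. But the concentration step as written has two genuine gaps. First, the homogeneity of your duality bound is wrong: you bound the symmetrized deviation by $\lnorm{Z}{2}\,\fnorm{\Delta}\,\nucnorm{\Delta}$ and claim the constraint $\nucnorm{\Delta}\le\fnorm{\Delta}^2/\mu$ ``absorbs one power of $\fnorm{\Delta}$,'' but substituting it yields $\lnorm{Z}{2}\fnorm{\Delta}^3/\mu$, which must be compared against a target of order $\fnorm{\Delta}^2/(d_1d_2)$; the requirement on $\lnorm{Z}{2}$ then degrades linearly in $D$ and the peeling cannot close at the largest scales $D\asymp\bb\sqrt{d_1d_2}$. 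The paper avoids this by first applying the Ledoux--Talagrand contraction with the bound $(\Delta_{i,j}-\Delta_{i,j'})^2\chi\le 4\bb\,|\Delta_{i,j}-\Delta_{i,j'}|\,\chi$ (using $\lnorm{\Delta}{\infty}\le 2\bb$ from $\calA$), so that the chaos is \emph{linearized} and the deviation is bounded by $O(\bb)\cdot\lnorm{W}{2}\cdot\nucnorm{\Delta}$ with no extra $\fnorm{\Delta}$ factor; only then does $\nucnorm{\Delta}\le D^2/\mu$ give the correct $D^2$ scaling. You never invoke contraction, and without it the argument does not go through.

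Second, ``symmetrizing at the user level'' does not by itself decouple the $k^3$ correlated terms: each user's contribution involves indicators $\chi_{i,\ell,\ell',\ell''}$ (which items are ranked above which) that couple all positions through the sequential PL conditioning, so the per-user summand is not a Lipschitz image of a linear functional of $\Delta$ and neither contraction nor a matrix Bernstein bound applies to it directly. The paper needs two additional devices here: (i) the random-utility (Gumbel) representation of the MNL ranking, which rewrites each $\chi_{i,\ell,\ell',\ell''}$ as a function of $2kd_1$ mutually independent variables $(j_{i,\ell},u_{i,\ell})$ --- this is also what legitimizes the McDiarmid bounded-differences step, which is otherwise unclear since changing one alternative perturbs the law of the whole ranking; and (ii) a combinatorial partition of $[k]\times[k]\times[k]$ into $O(k^2)$ groups of pairwise-disjoint triples (a three-team round-robin schedule, Lemma~\ref{lmm:kwise_partition}), within each of which the $Y_{i,\ell,\ell',\ell''}$ are mutually independent and standard symmetrization applies. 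Your diagnosis that naive position-wise symmetrization loses a factor of $k$, and that sampling with replacement is what saves the day, is correct in spirit, but the mechanism you propose does not actually produce the $\sqrt{d_1\log d/(k\min\{d_1,d_2\})}$ rate in $\mu$; these two missing ingredients are where the real technical work of the lemma lives.
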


Building on these lemmas, the proof of Theorem \ref{thm:kwise_ub} is divided into the following two cases.
In both cases, we will show that 
\begin{align}
  \fnorm{\Delta}^2  \;\leq\;  72 \, e^{4\bb}  c_0 \lambda_0  \,d_1d_2\,  \nucnorm{\Delta} \;, 
  \label{eq:errorfrobeniusbound}
\end{align}
 with high probability. Applying Lemma \ref{lmm:kwise_deltabound} %and using the fact that
%\begin{align*}
%\fnorm{\Delta}^2 \leq \mu   \left( 4\sqrt{2r} \fnorm{\Delta} + 4 \sum^{\min\{d_1,d_2\}}_{j=r+1}\sigma_j(\Theta^*) \right),
%\end{align*}
%$x^2 - b x - c \leq 0$ implies $x\leq b + \sqrt{|c|}$, we get that
%\begin{eqnarray}
% \fnorm{\Delta} &\leq&  4 \mu \sqrt{2r} \,+\, \left( 4 \mu \sum_{j=r+1}^{\min\{d_1,d_2\}} \sigma_j(\Theta^*) \right)^{1/2} \;, \label{eq:kwise_ubound1}
%\end{eqnarray}
%Using Eq. \eqref{eq:kwise_deltabound} from Lemma \ref{lmm:kwise_deltabound},
%we get
%\begin{eqnarray}
% \fnorm{\Delta} \leq 288 \sqrt{2}   e^{4 \bb}  c_0 \lambda_0 \, d_1d_2\sqrt{r}    + \left( 288  e^{4\bb}  c_0 \lambda_0  \, d_1d_2\,  \sum^{\min\{d_1,d_2\}}_{j=r+1}\sigma_j(\Theta^*) \right)^{1/2}\;,
%\end{eqnarray}
proves the desired theorem. We are left to show Eq. \eqref{eq:errorfrobeniusbound} holds.

\bigskip
\noindent{\bf Case 1: Suppose $\fnorm{\Delta}^2 \geq  \mu \,\nucnorm{\Delta} $.}
With $\Delta=\Theta^* - \hTheta$, the Taylor expansion yields
\begin{align}
\calL(\widehat{\Theta})=\calL(\Theta^\ast) -  {\llangle \nabla \calL(\Theta^\ast), \Delta \rrangle} + \frac{1}{2} {\rm Vec}(\Delta) \nabla^2\calL(\Theta) {\rm Vec}^\top (\Delta),
\label{eq:LikelihoodTaylor}
\end{align}
%\begin{align}
% &\calL(\widehat{\Theta})=\calL(\Theta^\ast)+  {\llangle \nabla \calL(\Theta^\ast), \Delta \rrangle}  \nonumber\\
% &\;\;\;\; - \frac{1}{2d_1} \sum_{i=1}^{d_1} \sum_{\ell=1}^{k}\sum_{\ell'=1}^{k}
% \llangle \Delta, e_{i,j_{i,\ell}}-e_{i,j_{i,\ell'}}\rrangle^2
% e^{\Theta_{i,j_{i,\ell}}+\Theta_{i,j_{i,\ell'}}}
%\sum_{\ell''=1}^{k}
%   \frac{ {\mathbf I}\Big(\sigma_i(j_{i,\ell''}) < \sigma_i(j_{i,\ell}) \text{ and } \sigma_i(j_{i,\ell''})<\sigma_i(j_{i,\ell'}) \Big)}{ \left( \sum_{j'' \in S(i)\setminus\{v_1(i),\ldots, v_{\sigma_i^{-1}(\ell'')-1}(i)\}} e^{ \Theta_{i,j''} } \right)^2}
% \;,
% \label{eq:LikelihoodTaylor}
%\end{align}
where $\Theta=a \widehat{\Theta} + (1-a) \Theta^\ast$ for some $a \in [0,1]$.
%and $\sigma_i(j)$ for some $j\in S(i)$ is  the ranking of item $j$ by user $i$.
%The gradient is
%\begin{eqnarray}
% \nabla \calL(\Theta^\ast) &=& \frac{1}{d_1} \sum_{i=1}^{d_1}  \sum_{\ell=1}^{k} e_{i} (e_{v_\ell(i)} - p_{S(i)\setminus \{v_{1}(i),\cdots,v_{\ell-1}(i)\}})^\top\;,
%\end{eqnarray}
% where
%$p_{S(i)\setminus \{v_{1}(i),\cdots,v_{\ell-1}(i)\}} = \sum_{j \in S(i)\setminus \{v_{1}(i),\cdots,v_{\ell-1}(i)\}}  \prob{v_{\ell}(i) = j} e_j.$ Since $\lnorm{\Theta^\ast}{\infty} \le \bb$, \prettyref{eq:LikelihoodTaylor} can be upper bounded as
%\begin{align}
%\calL(\widehat{\Theta}) &\le\; \calL(\Theta^\ast)+ \llangle \nabla \calL(\Theta^\ast), \Delta \rrangle \nonumber \\ \;
%&-  \underbrace{\frac{e^{-4\bb}}{2d_1 k^2} \sum_{i=1}^{d_1} \sum_{\ell,\ell'=1}^{k}
%\llangle \Delta, e_{i,j_{i,\ell}}-e_{i,j_{i,\ell'}}\rrangle^2  \sum_{\ell''=1}^{k}  {\mathbf I}\Big(\sigma_i(j_{i,\ell''}) \leq  \sigma_i(j_{i,\ell}) \text{ and } \sigma_i(j_{i,\ell''}) \leq \sigma_i(j_{i,\ell'}) \Big)}_{\equiv H(\Delta)}
%%& \approx \calL(\Theta^\ast)+ \langle \nabla \calL(\Theta^\ast), \Delta \rangle
%%-  \frac{e^{-4\bb}}{2nk} \sum_{i=1}^n \sum_{j,j' \in S(i)}
%%\langle \Delta, e_{u(i),j}-e_{u(i),j'}\rangle^2
%\label{eq:kwise_LikelihoodTaylor2}
%\end{align}
It follows from Lemma \ref{lmm:kwise_hessian2} that with probability at least $1-2d^{-2^{18}}$,
\begin{align*}
   \calL(\widehat{\Theta}) -\calL(\Theta^\ast) & \;\ge\;  -\llangle \nabla \calL(\Theta^\ast), \Delta \rrangle
  + \frac{ e^{-4\bb}}{48\,d_1\,d_2} \fnorm{\Delta}^2\\
  &\; \ge\; - \lnorm{\nabla\calL(\Theta^\ast)}{2} \nucnorm{\Delta}+ \frac{  e^{-4\bb}}{48\,d_1\,d_2} \fnorm{\Delta}^2\;.
\end{align*}
From the definition of $\widehat{\Theta}$ as an optimal solution of the minimization, we have
\begin{align*}
\calL(\widehat{\Theta})-  \calL(\Theta^\ast)  \;\le \; \lambda \left(  \nucnorm{\Theta^\ast} - \nucnorm{\widehat{\Theta}} \right) \;\le\; \lambda \nucnorm{\Delta}\;.
\end{align*}
By the assumption, we choose $\lambda\geq 480\lambda_0$. 
 In view of Lemma \ref{lmm:kwise_gradient2}, this implies that 
  $\lambda \geq 2\lnorm{\nabla\calL (\Theta^*)}{2} $ 
  with probability at least $1-2d^{-3}$. 
   It follows that with probability at least $1-2d^{-3}- 2d^{-2^{18}}$,
\begin{align*}
\frac{ e^{-4\bb}}{48d_1 d_2 } \fnorm{\Delta}^2 \;  \leq \;  \big(\lambda + \lnorm{\nabla\calL(\Theta^*)}{2}\big)\,  \nucnorm{\Delta} \;\leq\; \frac{3 \lambda}{2}  \nucnorm{\Delta} \;.
\end{align*}
%Since from Lemma \ref{lmm:kwise_gradient2},
%$\lnorm{\nabla\calL(\Theta^\ast)}{2}  \leq (12 e^\bb\sqrt{k}\log d )/ (\min\{d_1,d_2\}) $, with probability at least $1-d^{-3}$, we have that
By our assumption on $\lambda \leq c_0\lambda_0$, this proves the desired bound in Eq. \eqref{eq:errorfrobeniusbound}
%\begin{eqnarray}
% \fnorm{\Delta}^2 &\leq&  72  \, e^{4\bb} c_0   \lambda_0  {d_1d_2} \nucnorm{\Delta}   \; . \label{eq:errorfrobeniusbound}
%\end{eqnarray}
%This bound on the Frobenius norm is similar to the following case, and we will treat them together after discussing the next case. 

\noindent{\bf Case 2:  Suppose $\fnorm{\Delta}^2 \leq  \mu \,\nucnorm{\Delta} $.}  
By the definition of $\mu$ and the fact that $c_0 \ge {480}$, it follows that
$\mu \le 72 \, e^{4\bb}  c_0 \lambda_0  \,d_1d_2$, and we get the same bound as in Eq. \eqref{eq:errorfrobeniusbound}.

\subsection{Proof of Lemma \ref{lmm:kwise_gradient2}}
Define $X_i=  -e_{i} \sum_{\ell=1}^{k} (e_{v_{i,\ell} } - p_{i,\ell } )^\top $ such that $\nabla \calL(\Theta^\ast)= \frac{1}{k\, d_1} \sum_{i=1}^{d_1} X_i$, which is a sum of $d_1$ independent random matrices. 
% Note that since $ p_{i,\ell }$ has $(k+1-\ell)$ non-zero entries, each bounded in absolute value by $e^{2\bb}/(k+1-\ell)$,
% we have the following bound deterministically:
%
Although $\lnorm{X_i}{2}$ can be as large as  $O(k)$, 
this occurs with very low probability. 
We make this precise in the following lemma and 
focus on the case where $\lnorm{X_i}{2}= O(\sqrt{k})$ for all $i\in[d_1]$. 
\begin{lemma} \label{lem:balls_bins}
  For a fixed $i\in[d_1]$ and $j\in[d_2]$, 
  if $ k \leq (1/e) \, d_2 \, (4\log d_2+\log d_1)$, then the number of times the item $j$ is observed by the user $i$ is at most $8 (\log d_2) +2(\log d_1)$ with probability 
  larger than $1-1/(d_2^4d_1)$. 
\end{lemma}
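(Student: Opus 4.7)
The claim is a straightforward concentration statement about a binomial random variable, so the plan is a direct Chernoff bound. Since the multiset $S_i$ of $k$ alternatives presented to user $i$ is drawn i.i.d.\ uniformly from $[d_2]$ (with replacement), the number $X$ of times item $j$ appears in $S_i$ is distributed as $\mathrm{Binomial}(k, 1/d_2)$, with mean $\mu = k/d_2$. The lemma is then a tail bound of the form $\mathbb{P}(X \ge t) \le 1/(d_2^4 d_1)$ for the target threshold $t = 8\log d_2 + 2\log d_1$.

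The standard Chernoff bound for binomials gives $\mathbb{P}(X \ge t) \le (e\mu/t)^{t}$ for any $t \ge \mu$. To exploit this I will use the hypothesis $k \le (1/e)\, d_2 (4\log d_2 + \log d_1)$, which rearranges to
\[
e\mu \;=\; ek/d_2 \;\le\; 4\log d_2 + \log d_1 \;=\; \tfrac{1}{2}\,t.
\]
Thus $e\mu/t \le 1/2$ and in particular $t \ge 2e\mu \ge \mu$, so the Chernoff bound applies and yields $\mathbb{P}(X \ge t) \le (1/2)^{t}$.

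It then remains to verify numerically that $(1/2)^{t} \le 1/(d_2^{4} d_1)$. Taking logarithms (natural, as used throughout the paper),
\[
(1/2)^{t} \;=\; \exp\!\bigl(-(8\log 2)\log d_2 \;-\; (2\log 2)\log d_1\bigr) \;=\; d_2^{-8\log 2}\, d_1^{-2\log 2}.
\]
Since $8\log 2 \approx 5.545 > 4$ and $2\log 2 \approx 1.386 > 1$, this is at most $d_2^{-4} d_1^{-1}$, which is exactly the desired bound.

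\textbf{Obstacle.} There is essentially no technical obstacle: once one recognizes the binomial structure, the proof is a one-line Chernoff estimate. The only thing to be careful about is the constants: the factor $1/e$ in the hypothesis on $k$ is chosen precisely so that $e\mu \le t/2$, which makes the bound $(e\mu/t)^t \le 2^{-t}$ strong enough to absorb the $d_2^{-4} d_1^{-1}$ target (with a small margin coming from $\log 2 \approx 0.693$).
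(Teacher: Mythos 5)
Your proof is correct and follows essentially the same route as the paper: both reduce the claim to the balls-in-bins tail bound $\prob{X \ge t} \le \binom{k}{t} d_2^{-t} \le (ek/(d_2 t))^t$ (which is exactly your $(e\mu/t)^t$) and then verify the numerical inequality. Your finish — using the hypothesis on $k$ to get $e\mu/t \le 1/2$ and then checking $2^{-t} \le d_2^{-4}d_1^{-1}$ via $8\log 2 > 4$ and $2\log 2 > 1$ — is in fact cleaner and more transparent than the paper's argument via the auxiliary fact about $(1/x)^x$.
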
 
Proof is given in the end of this Section. 
Applying union bound over the $d_1$ items and $d_2$ users, we have the multiplicity in sampling for any item for all users is bounded by 
$8 (\log d_2) +2(\log d_1)$ with probability at least $1-d_2^{-3}$. 
We denote this event by $\cA$ and let $\indc{\cA}$ be the indicator function that all the multiplicities in sampling are bounded.  
We first upper bound $\lnorm{\left(\sum_iX_i\right) \indc{\cA}}{2}$ using the Matrix Bernstein inequality \cite{Jo11}. 
\begin{align}
  \lnorm{X_i\indc{\cA}}{2}  &= \Big\|\indc{\cA} \sum_{\ell=1}^{k} \big(e_{v_{i,\ell} } - p_{i,\ell } \big) \Big\| \nonumber \\
  &\overset{(a)}{\leq}   \Big\| \indc{\cA} \sum_{\ell=1}^k e_{v_{i,l}}  \Big\|  + \Big\| \indc{\cA} \sum_{\ell=1}^{k} p_{i,\ell }  \Big\| \nonumber \\
  &\overset{(b)}{\leq} (8 (\log d_2) +2(\log d_1)) \sqrt{\min\{k,d_2\}}  \bigg( 1 +  \; \left( \sum_{\ell = 1}^k \frac{e^{2\bb}}{\ell}\right) \bigg) \nonumber \\
  &\overset{(c)}{\leq} \sqrt{k} (8 (\log d_2) +2(\log d_1)) \big( 1 + 2e^{2\bb} \; \log k\big) \nonumber\\
  &\leq 3\sqrt{k} (8 (\log d_2) +2(\log d_1)) e^{2\bb} \; \log k \;,
\end{align}
where $(a)$ is by triangle inequality, $(b)$ is because under the given event $\cA$ each term in $\sum_\ell e_{v_{i,\ell}}$ and $\sum_l p_{i,\ell}$ are upper bounded by $\log d_2$ and $\left(\sum_{\ell = 1}^k \frac{e^{2\bb}}{\ell} \right) \log d_2$ respectively and because there can be at most $\min\{d_2,k\}$ non-zero entries in the two vectors $\sum_{\ell} e_{v_{i, \ell}}$ and $\sum_{\ell} p_{i,\ell}$ and, $(c)$ is due to the fact that $k$-th harmonic number $\sum_{\ell=1}^k \frac{1}{\ell}$ is upper bounded by $\log k$.
%\begin{align*}
%0 \preceq  X_i X_i^\top \preceq e_{u(i)} e_{u(i)}^\top, \quad 0 \preceq X_i^\top X_i \preceq (e_{v(i)} - p_{S(i)}) (e_{v(i)} - p_{S(i)})^\top
%\end{align*}
%It follows that
We also have,
\begin{align}
\lnorm{\sum_i \expect{X_i X_i^\top \indc{\cA} }}{2} & \leq \lnorm{\sum_i \expect{X_i X_i^\top}}{2} \nonumber \\&\leq \lnorm{\sum_{i=1}^{d_1} e_{i} e_{i}^\top \expect{\sum_{\ell, \ell'=1}^k\left(e_{v_{i,\ell}} - p_{i,\ell}\right)^\top\left(e_{v_{i,\ell'}} - p_{i,\ell'}\right)}}{2} \nonumber \\
& = \lnorm{\sum_{i=1}^{d_1} e_{i} e_{i}^\top \expect{\sum_{\ell = 1}^k\left(e_{v_{i,\ell}} - p_{i,\ell}\right)^\top\left(e_{v_{i,\ell}} - p_{i,\ell}\right)}}{2} \nonumber \\
& = \lnorm{\sum_{i=1}^{d_1} e_{i} e_{i}^\top \expect{\sum_{\ell=1}^k e_{v_{i,\ell}}^\top e_{v_{i,\ell}} - p_{i,\ell}^\top p_{i,\ell}}}{2} \nonumber \\
& \leq \lnorm{\sum_{i=1}^{d_1} e_{i} e_{i}^\top \expect{\sum_{\ell=1}^k e_{v_{i,\ell}}^\top e_{v_{i,\ell}}}}{2} \nonumber \\
& = \,k \lnorm{{\mathbf I}_{d_1\times d_1}}{2} = k,
\end{align}
and
\begin{align}
\lnorm{\sum_{i=1}^{d_1} \expect{X_i^\top X_i \indc{\cA} }}{2} &\leq \lnorm{\sum_{i=1}^{d_1} \expect{X_i^\top X_i}}{2}\nonumber \nonumber \\
&\leq \lnorm{\sum_{i=1}^{d_1} \expect{ \sum_{\ell,\ell'=1}^{k} (e_{v_{i,\ell}} - p_{i,\ell }) (e_{v_{i,\ell'} } - p_{i, \ell' })^\top  }}{2} \nonumber \\
& = \lnorm{\sum_{i=1}^{d_1} \expect{ \sum_{\ell=1}^{k} (e_{v_{i,\ell}} - p_{i,\ell }) (e_{v_{i,\ell} } - p_{i, \ell })^\top  }}{2} \\
& = \lnorm{\sum_{i=1}^{d_1} \expect{ \sum_{\ell=1}^{k} e_{v_{i,\ell} }e_{v_{i,\ell} }^\top - p_{i,\ell } p^\top_{i,\ell} }}{2} \nonumber \\
& \leq \lnorm{\sum_{i=1}^{d_1} \expect{  \sum_{\ell=1}^{k} e_{v_{i,\ell} } e_{v_{i,\ell} }^\top }}{2} \nonumber \\
& = \lnorm{\sum_{i=1}^{d_1} \frac{k}{d_2} {\mathbf I}_{d_2 \times d_2}}{2} = \frac{kd_1}{d_2}\; .
\end{align}
By matrix Bernstein inequality \cite{Jo11},
\begin{align*}
  &{\mathbb P} \Big( \lnorm{\nabla \calL(\Theta^\ast) \indc{\cA}}{2} > t \Big) \nonumber\\ &\leq (d_1+d_2)
  \exp \Big( \frac{- k^2\,d_1^2 \, t^2/2}{ (d_1k/\min\{d_2,d_1\} )+ (3e^{2\bb} k^{3/2} d_1 (8 (\log d_2) +2(\log d_1))\log k\; t/3)} \Big)\;,
\end{align*}
which gives the tail probability of $2d^{-c}$ for the choice of
\begin{eqnarray*}
  t&=& \max \left\{  \sqrt{\frac{4(1+c) \, \log d}{ k\,d_1\, \min\{d_2,d_1\} }} \,,\,  \frac{4(1+c)e^{2\bb} \log (d)\; (8 (\log d_2) +2(\log d_1))\log k}{ k^{1/2} \,d_1}\right\} \\
& =& \frac{  \sqrt{ 4(1+c)  \, \log d}  } { k^{1/2}\, d_1  } \max \left\{ \sqrt{d_1/d_2}   \,,\,\;e^{2\bb} \sqrt{4(1+c) \log(d)}\; (8 (\log d_2) +2(\log d_1)) \log k \right\}.
\end{eqnarray*}
Now with a high probability of $1 - \frac{2}{d^c} - \frac{1}{d_2^{3}}$ the desired bound is true.

\subsection{Proof of Lemma \ref{lem:balls_bins}}
In a classical balls-in-bins setting, we consider $k$ as the number of balls and $d_2$ as the number of bins. 
We can consider  the number of balls in a particular bin as the number of times the user $i$ observes item $j$.
 Let the event that this number is at least $\delta$ be denoted by the event $A^j_{\delta}$.
Then, $
\prob{A^j_\delta} \leq  {k \choose \delta} \frac{1}{d_2^\delta} \;
 \overset{}{\leq } \left( \frac{k e}{d_2 \delta}\right)^\delta.
$ 
Using the fact that $(1/x)^x\leq a$ for any $x\geq (2 \log (1/a))/(\log \log (1/a))$, 
we let $x=d_2 \delta/(ke)$ to get   
\begin{eqnarray*}
  \left( \frac{k e}{d_2 \delta}\right)^\delta \leq a^{\frac{ke}{d_2}}\;,
\end{eqnarray*}
for $ \delta \geq (ke/d_2)(2 \log (1/a))/(\log \log (1/a))$. 
Choosing $a=(1/d_2^4d_1)^{d_2/ke}$, we have 
$\prob{A^j_\delta}\leq 1/(d_1d_2^4)$, for 
a choice of  $\delta = 2\; \log (d_2^4d_1 ) \geq 2 \log(d_2^4d_1) / (\log (  (d_2/ke)\log(d_2^4 d_1) ))$. 

%$\delta = \frac{2c_b\; \log d_2}{\log \left(\frac{c_bd_2}{k e}\log d_2 \right)}$, we get 

%We want,
%\begin{align}
%\left( \frac{k e}{d_2 \delta}\right)^\delta \leq \frac{1}{d_2^5}\,
%\end{align}
%which can be simplified as follows,
%\begin{align}
%\left( \frac{d_2 \delta}{k e}\right)^\frac{d_2 \delta}{k e} \geq d_2^\frac{c_b d_2 \delta}{k e}  \nonumber \\
%\implies \,
%\end{align}

%=================================================================================================
\subsection{Proof of Lemma \ref{lmm:kwise_hessian2}}

Recall that the Hessian matrix is a block-diagonal matrix with the $i$-th block
$H^{(i)}(\Theta)$ given by \eqref{eq:DefHessian}. 
We use the following remark from \cite{HOX14} to bound the Hessian. 
\begin{remark}   
   \cite[Claim 1]{HOX14}
  Given $\theta \in   \reals^r,$  let 
  $p$ be the column probability vector with $p_i=e^{\theta_i}/(e^{\theta_1}+\cdots + e^{\theta_\rho})$ for each $i\in[\rho]$  
  and for any positive integer $\rho$. 
  If  $| \theta_i | \leq \bb,$ for all $i \in [\rho]$,  then 
  $$e^{2\bb} \Big(\diag{(p)} - pp^T\Big) \;\succeq\;  \frac{1}{\rho} \diag(\ones) - \frac{1}{\rho^2}\ones\ones^\top  \;.$$
  \label{rem:hess}
\end{remark}
By letting $\ones_{S_{i,\ell}} = \sum_{j \in S_{i,\ell} } e_j$ and applying the above claim, we have
\begin{align*}
  e^{2\alpha} H^{(i)}(\Theta) & \succeq \frac{1}{k\, d_1} \sum_{\ell=1}^k  \left(  \frac{1}{k-\ell+1 }\diag(\ones_{S_{i,\ell}}) - \frac{1}{(k-\ell+1)^2 }  \ones_{S_{i,\ell}}\ones^\top_{S_{i,\ell}} \right) \\
 &= \frac{1}{2\,k\,d_1}  \sum_{\ell=1}^k \frac{1}{(k-\ell+1)^2}\sum_{j, j' \in S_{i,\ell}} (e_j-e_{j'})(e_j-e_{j'})^\top \\
 & \succeq \frac{1}{2\,k^3\, d_1} \sum_{\ell=1}^k \sum_{j, j' \in S_{i,\ell}} (e_j-e_{j'})(e_j-e_{j'})^\top.
\end{align*}
Hence,
\begin{align*}
 {\rm Vec}(\Delta) \nabla^2\calL(\Theta) {\rm Vec}^\top (\Delta)  &=  \sum_{i=1}^{d_1} (\Delta^\top e_i)^\top H^{(i)} (\Theta) (\Delta^\top e_i ) \\
 & \ge   \frac{e^{-2\alpha}}{2\,k^3\,d_1 } \sum_{i=1}^{d_1} \sum_{\ell=1}^k \sum_{j,j' \in S_{i,\ell} } \lnorm{e_i^\top \Delta (e_j - e_{j'} )}{2}^2.
\end{align*}
By changing the order of the summation, we get that
\begin{align*}
\sum_{\ell=1}^k \sum_{j,j' \in S_{i,\ell} } \lnorm{e_i^\top \Delta (e_j - e_{j'} )}{2}^2 &= \sum_{ \ell, \ell'=1}^{k} \Iprod{\Delta}{e_{i, j_{i,\ell}} -e_{i, j_{i,\ell'} } }^2 \sum_{\ell''=1}^k \ind\big(\,\sigma_i(j_{i,\ell^{''}}) \nonumber \\ &\le \min \{\sigma_i(j_{i,\ell}), \sigma_i(j_{i,\ell'} )   \}\,\big).
\end{align*}
Define 
\begin{eqnarray}
  \label{eq:defchi}
  \chi_{i,\ell,\ell',\ell^{''}} &\equiv& \ind\big(\,\sigma_i(j_{i,\ell^{''}}) \le \min \{\sigma_i(j_{i,\ell}), \sigma_i(j_{i,\ell'} )   \}\,\big)\;,
\end{eqnarray}
 and let 
\begin{align*}
  H(\Delta ) \; \equiv \;  \frac{e^{-2\alpha}}{2\,k^3\,d_1 } \sum_{i=1}^{d_1} \sum_{ \ell, \ell'=1}^{k} \Iprod{\Delta}{e_{i, j_{i,\ell}} -e_{i, j_{i,\ell'} } }^2 \sum_{\ell''=1}^k \chi_{i,\ell,\ell',\ell^{''}}  .
\end{align*}
Then we have ${\rm Vec}^\top(\Delta) \nabla^2\calL(\Theta) {\rm Vec} (\Delta) \ge H(\Delta)$. To prove the theorem, it suffices to bound $H(\Delta)$ from the below. 
First, we prove a lower bound on  the expectation $\E[H(\Delta)]$.
Notice that for $\ell\neq \ell'$, the conditional expectation of $\chi_{i,\ell,\ell',\ell''}$'s, given the set of alternatives presented to user $i$ is 
\begin{align*}
  {\mathbb E} \Big[ \sum_{\ell''=1}^k \chi_{i,\ell,\ell',\ell^{''}} \,\big| \, j_{i,1}, \ldots, j_{i,k}\Big]  &= 1+ \sum_{\ell'' \neq \ell, \ell'} \frac{\exp( \theta_{i,j_{i,\ell''}} ) } { \exp( \theta_{i,j_{i,\ell''}} ) +\exp( \theta_{i,j_{i,\ell'}} ) +\exp( \theta_{i,j_{i,\ell}} )}  \\
&\ge 1+ \frac{k-2}{1+2e^{2\bb}} \ge \frac{k}{3 e^{2\bb}}.
\end{align*}
Then,
\begin{eqnarray}
  \E[H(\Delta)] &=&
  \frac{e^{-2\bb}}{2\,k^3\,d_1}  \sum_{i,\ell,\ell'}
{\mathbb E}\Big[\llangle \Delta, e_{i,j_{i,\ell}}-e_{i,j_{i,\ell'}}\rrangle^2  {\mathbb E} \big[ \sum_{\ell''=1}^{k} \chi_{i,\ell,\ell',\ell^{''}} \,\big| \, j_{i,1}, \ldots, j_{i,k} \big] \Big]\nonumber\\
&\geq& \frac{e^{-4\bb}}{6\,k^2\,d_1 }  \sum_{i=1}^{d_1} \sum_{\ell,\ell' \in [k]} \expect{
\llangle \Delta, e_{i,j_{i,\ell}}-e_{i,j_{i,\ell'}}\rrangle^2}   \nonumber\\
&=&  \frac{e^{-4\bb}}{6\,k^2\,d_1 }  \sum_{i=1}^{d_1}
  \sum_{\ell \neq \ell'\in [k]}  \left( \frac{2}{d_2}\sum_{j=1}^{d_2}\Delta_{ij}^2 - \frac{2}{d_2^2}\sum_{j,j'=1}^{d_2} \Delta_{ij}\Delta_{ij'}\right)\nonumber\\
&=& \frac{e^{-4\bb}(k-1)}{3\,k\,d_1\,d_2} \fnorm{\Delta}^2\;,
\label{eq:kwise_hessexp}
\end{eqnarray}
where the last equality holds because  $\sum_{j\in[d_2]}\Delta_{ij}=0$
for $\Delta \in \Omega_{2\bb}$ and for all $i\in[d_1]$.

We are left to prove that $H(\Delta)$ cannot deviate from its mean too much.  
Suppose there exists a
$\Delta \in \calA$ such that Eq. \eqref{eq:kwise_hessian2} is violated, i.e.
$H(\Delta) < (e^{-4 \bb}/(24 \,d_1d_2)) \fnorm{\Delta}^2$. 
We will show this happens with a small probability. 
From Eq. \eqref{eq:kwise_hessexp}, we  get that  for $k\geq 24$, 
\begin{eqnarray}
  \E[H(\Delta)] - H(\Delta) &\geq& \frac{(7k-8)}{24k} \frac{ e^{-4\bb} }{d_1\, d_2} \fnorm{\Delta}^2 \nonumber\\
    &\geq&\frac{ (20/3)\, e^{-4\bb}  }{24 \,d_1 d_2} \fnorm{\Delta}^2 \;. \label{eq:kwise_peeling1}
\end{eqnarray}
We use a peeling argument as in \cite[Lemma 3]{NW11}, \cite{Van00}  
to upper bound the probability that Eq. \eqref{eq:kwise_peeling1} is true. 
We first construct the following family of subsets to cover $\calA$ such that 
$\calA \subseteq \bigcup_{\ell=1}^\infty \calS_\ell$. 
Recall \\ $\mu=2^{10}e^{2\bb} \bb d_2 \sqrt{(d_1\log d)/(k\min\{d_1,d_2\})}$, define in \eqref{eq:defmu}. 
Notice that since for any $\Delta\in \calA$, $\fnorm{\Delta}^2 \ge \mu  \nucnorm{\Delta} \ge \mu \fnorm{\Delta}$, it follows that
$\fnorm{\Delta} \ge \mu$.
%\begin{eqnarray*}
% \fnorm{\Delta}^2 &\geq&  3^{1.3}\,2^{9.5}\,\bb e^{2\bb} d_2 \sqrt{\frac{d_1 \log d}{k\,\min\{d_1,d_2\}}} \nucnorm{\Delta} \\
% &\geq& 3^{1.3}\,2^{9.5}\,\bb e^{2\bb} d_2 \sqrt{\frac{d_1 \log d}{k\,\min\{d_1,d_2\}}} \fnorm{\Delta}  \;,
%\end{eqnarray*}
%we have that $\fnorm{\Delta} \geq 3^{1.3}\,2^{9.5}\,\bb e^{2\bb} d_2 \sqrt{(d_1 \log d)/(k\,\min\{d_1,d_2\})} := \mu$.
Then, we can cover $\calA$ with the family of sets 
\begin{align*}
  \calS_\ell =\Big\{ \Delta\in\reals^{d_1\times d_2} \,\Big|\, &\lnorm{\Delta}{\infty} \leq 2\bb \,,\, \beta^{\ell-1}\mu \leq \fnorm{\Delta} \leq \beta^\ell \mu \,,\,  \nonumber\\ &\sum_{j\in[d_2]} \Delta_{ij}=0 \text{ for all }i\in[d_1], \text{ and }  \nucnorm{\Delta} \leq  \beta^{2\ell}\mu   \Big\} \;,
\end{align*}
where $\beta=\sqrt{10/9}$ and for $\ell\in\{1,2,3,\ldots \}$.
This implies that 
when there exists a $\Delta\in\calA$ such that 
\eqref{eq:kwise_peeling1} holds, then there exists an $\ell\in\Z_+$ such that $\Delta\in \calS_\ell$ and 
\begin{eqnarray}
    \E[H(\Delta)] - H(\Delta) &\geq & \frac{ (20/3)\, e^{-4\bb}  }{24\, d_1 d_2} \beta^{2(\ell-1)} \mu^2 \nonumber\\
    &\geq & \frac{e^{-4\bb} }{4 \,d_1 d_2} \beta^{2 \ell} \mu^2 \;.
    \label{eq:kwise_peeling2} 
\end{eqnarray}

Applying the union bound over $\ell\in\Z_+$, we get from \eqref{eq:kwise_peeling1} and \eqref{eq:kwise_peeling2} that 
\begin{align}
  &\prob{\exists \Delta \in \calA \;,\; H(\Delta) < \frac{ e^{-4\bb} } { 24 \,d_1 d_2 } \fnorm{\Delta}^2 }\; \nonumber \\ &\leq\;
    \sum_{\ell=1}^\infty \prob{ \sup_{\Delta\in\calS_\ell} \big(\; \E[ H(\Delta)] - H(\Delta) \;\big) > \frac{e^{-4 \bb} }{ 4 \,d_1 d_2}(\beta^\ell \mu)^2 } \nonumber\\
    &\leq\;\; \sum_{\ell=1}^\infty \prob{ \sup_{\Delta\in\calB(\beta^\ell\mu)} \big(\; \E[ H(\Delta)] - H(\Delta) \;\big) > \frac{e^{-4 \bb}  }{ 4 \,d_1 d_2}(\beta^\ell \mu)^2 }
    \;, \label{eq:kwise_peeling3}
\end{align}
where we define a new set $\calB(D)$ such that $\calS_\ell \subseteq \calB(\beta^\ell \mu)$: 
\begin{align}
  \calB(D) = \big\{\, \Delta \in \reals^{d_1\times d_2} \,\big|\, &\|\Delta\|_\infty \leq 2\bb, \fnorm{\Delta}\leq D, \nonumber\\ &\sum_{j\in[d_2]} \Delta_{ij}=0 \text{ for all }i\in[d_1],  \mu \nucnorm{\Delta} \leq D^2 \,
      \Big\} \;. \label{eq:defcB}
\end{align}
The following key lemma provides the upper bound on this probability.  
\begin{lemma}
%\begin{eqnarray*}
% M(D) &=& \sup_{\Delta\in\calB(D)} \Big(\; \E[ H(\Delta)] - H(\Delta) \;\Big)\;.
%\end{eqnarray*}
For $(16\min\{d_1,d_2\} \log d)/(3d_1)\leq k\leq d_1^2 \log d$, 
\begin{eqnarray}
  \prob{\sup_{\Delta\in\calB(D)} \Big(\; \E[ H(\Delta)] - H(\Delta) \;\Big) \geq \frac{e^{-4\bb}}{4d_1d_2} D^2 } &\leq& \exp\Big\{ - \frac{e^{-4\bb} \,k\,D^4}{2^{19} \bb^4 d_1 d_2^2 } \Big\}\;.
  \label{eq:kwise_hessian3}
\end{eqnarray}
\label{lmm:kwise_hessian3}
\end{lemma}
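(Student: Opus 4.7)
\textbf{Proof plan for Lemma~\ref{lmm:kwise_hessian3}.} The plan is to combine a bounded-differences concentration inequality for $Z := \sup_{\Delta \in \calB(D)} (\E[H(\Delta)] - H(\Delta))$ with a symmetrization/contraction argument that reduces $\E[Z]$ to the operator norm of a sum of independent random rank-one matrices, which can then be controlled by matrix Bernstein.

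First, I would establish concentration of $Z$ around its mean. The key structural observation is that $H(\Delta) = \frac{e^{-2\alpha}}{2k^3 d_1}\sum_{i=1}^{d_1} f_i(\Delta)$, where $f_i(\Delta) = \Delta_i^\top M_i \Delta_i$ depends only on the samples drawn by user $i$, with $\Delta_i$ the $i$-th row of $\Delta$ and $M_i$ the $d_2 \times d_2$ matrix $M_i = \sum_{\ell,\ell'} w_{i,\ell,\ell'}(e_{j_{i,\ell}} - e_{j_{i,\ell'}})(e_{j_{i,\ell}} - e_{j_{i,\ell'}})^\top$ with weights $w_{i,\ell,\ell'} = \sum_{\ell''} \chi_{i,\ell,\ell',\ell''} \leq k$. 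Since $|\Delta_{ij}|\leq 2\alpha$ on $\calB(D)$, we have $|f_i(\Delta)| \leq 16\alpha^2 k^3$ uniformly, which produces a bounded-differences constant of order $\alpha^2 e^{-2\alpha}/d_1$ per user. McDiarmid then gives $\prob{Z \geq \E[Z]+t} \leq \exp(-c\,t^2 d_1/(e^{-4\alpha}\alpha^4))$. Setting $t$ equal to half of the target threshold $e^{-4\alpha}D^2/(4d_1 d_2)$ and using the range constraint on $D^2/\mu$ (inherited from the definition of $\calB(D)$) is what eventually produces a $k$ in the exponent; the extra $k$ arises because $\mu^2$ carries a factor of $k^{-1}$, and the permissible scale of $D$ is tied to $\mu$.

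Second, I would bound $\E[Z]$ by a symmetrization step followed by a contraction. Independence of users yields $\E[Z] \leq 2\E_X \E_\epsilon \sup_{\Delta\in\calB(D)}\frac{e^{-2\alpha}}{2k^3 d_1}\sum_i \epsilon_i f_i(\Delta)$. Inside this supremum, each $f_i(\Delta)$ is a non-negatively weighted sum of squared linear functionals $(\Delta_{i,j_{i,\ell}}-\Delta_{i,j_{i,\ell'}})^2 = \llangle \Delta, X_{i,\ell,\ell'}\rrangle^2$ with $X_{i,\ell,\ell'} = e_i(e_{j_{i,\ell}}-e_{j_{i,\ell'}})^\top$. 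Because $x\mapsto x^2$ is $8\alpha$-Lipschitz on $[-4\alpha,4\alpha]$ and vanishes at zero, the Ledoux--Talagrand contraction lets me linearize the squares at the cost of a factor $O(\alpha)$, leaving an expression of the form $c\,\alpha\,\E_\epsilon \sup_{\Delta \in \calB(D)}\llangle \Delta, \sum_{i,\ell,\ell'}\epsilon_{i,\ell,\ell'} w_{i,\ell,\ell'} X_{i,\ell,\ell'}\rrangle$.

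Third, I would apply H\"older's inequality $|\llangle \Delta, Y\rrangle| \leq \nucnorm{\Delta}\lnorm{Y}{2}$ to peel off the nuclear-norm radius $\nucnorm{\Delta}\leq D^2/\mu$ guaranteed by the definition of $\calB(D)$ in~\eqref{eq:defcB}. The remaining quantity $\E\lnorm{\sum_{i,\ell,\ell'}\epsilon_{i,\ell,\ell'} w_{i,\ell,\ell'} X_{i,\ell,\ell'}}{2}$ is the spectral norm of a sum of independent zero-mean random matrices of bounded range and variance, which matrix Bernstein \cite{Jo11} bounds by something of order $\sqrt{k^2 d_1 \log d / \min\{d_1,d_2\}}$ after computing $\max\{\lnorm{\E \sum Y Y^\top}{2},\lnorm{\E \sum Y^\top Y}{2}\}$ using only the first and second moments of the sampling distribution. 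Plugging in the specific value of $\mu$ from~\eqref{eq:defmu}, the resulting bound on $\E[Z]$ absorbs into a constant times $e^{-4\alpha}D^2/(d_1 d_2)$ with room to spare, so $\E[Z] \leq e^{-4\alpha}D^2/(8d_1 d_2)$, and McDiarmid closes the loop.

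The main obstacle I anticipate is tracking the $k$-dependence through the contraction and matrix-Bernstein steps: a naive Hoeffding-style bound on $Z$ would lose factors of $k$ (since $|f_i(\Delta)|$ scales as $k^3$), whereas the variance of $f_i(\Delta)$ is only $O(\alpha^4 k)$ because the rank-one summands $X_{i,\ell,\ell'}$ align heavily. Exploiting this gap---essentially using a Bernstein-type variance bound rather than the crude range bound---is what produces the correct factor $k$ in the exponent $e^{-4\alpha}kD^4/(2^{19}\alpha^4 d_1 d_2^2)$, and it also explains why the hypothesis $k \geq (16\min\{d_1,d_2\}\log d)/(3d_1)$ appears: it is exactly the regime in which the variance, rather than the maximum, drives the deviation, so that matrix Bernstein is applied in its sub-Gaussian (rather than Poissonian) tail.
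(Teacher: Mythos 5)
Your high-level architecture (McDiarmid for $Z-\E[Z]$, symmetrization/contraction/H\"older/matrix Bernstein for $\E[Z]$) matches the paper's, but there is a genuine gap in the concentration step that makes your route unable to produce the stated bound. You apply bounded differences at the \emph{user} level: $d_1$ independent blocks, each with difference constant of order $\bb^2 e^{-2\bb}/d_1$, giving $\prob{Z-\E[Z]\geq t}\leq \exp(-c\,t^2 d_1/(e^{-4\bb}\bb^4))$ and hence, at $t\asymp e^{-4\bb}D^2/(d_1d_2)$, an exponent of order $e^{-4\bb}D^4/(\bb^4 d_1 d_2^2)$ --- missing the factor of $k$ that the lemma asserts. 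Your claim that the $k$ is recovered ``because $\mu^2$ carries a factor of $k^{-1}$'' does not work: the lemma is a statement for a fixed $D$ with $kD^4$ in the exponent, and the peeling argument that consumes this lemma needs exactly $k(\beta^\ell\mu)^4/(d_1d_2^2)\gtrsim \log d$; substituting $\mu^4\propto k^{-2}$ into \emph{your} exponent leaves you a full factor of $k$ short, which would force the far more restrictive hypothesis $k\lesssim\sqrt{\max\{d_1,d_2^2/d_1\}\log d}$ and break the downstream theorem. The paper's actual device --- and the missing idea here --- is to use the random utility (Gumbel) representation of the MNL model to write $Z$ as a function of $d_1 k$ \emph{independent} variables $x_{i,\ell}=(j_{i,\ell},u_{i,\ell})$, where $u_{i,\ell}$ is the latent utility; changing a single $x_{i,\ell}$ touches only the $O(k^2)$ of the $k^3$ triples containing index $\ell$, so each bounded difference is $O(\bb^2 e^{-2\bb}/(kd_1))$ and $\sum_i c_i^2 = O(\bb^4 e^{-4\bb}/(kd_1))$, which is precisely where the $k$ in the exponent comes from. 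You gesture at ``using a Bernstein-type variance bound rather than the crude range bound,'' but McDiarmid is a range bound; without either the RUM refinement of the independence structure or a genuinely different functional inequality (e.g.\ Talagrand's concentration for empirical processes), your plan proves only a weaker lemma.

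There is a second, smaller gap in the expectation bound. After symmetrizing with one Rademacher sign $\epsilon_i$ per user, you propose to apply Ledoux--Talagrand contraction to ``linearize the squares'' inside $f_i(\Delta)=\sum_{\ell,\ell'} w_{i,\ell,\ell'}\llangle\Delta,X_{i,\ell,\ell'}\rrangle^2$. Contraction requires one sign per contracted scalar term, i.e.\ an expression of the form $\sum_j \epsilon_j\,\phi_j(t_j(\Delta))$; a single $\epsilon_i$ multiplying a sum of $k^2$ squared functionals (with random, ranking-dependent weights $w_{i,\ell,\ell'}$) is not of this form. To place a sign on each term you first need the terms to be independent, and within a user they are not: they share items and share the ranking. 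The paper resolves this with the round-robin partition of $[k]^3$ into $O(k^2)$ families of mutually independent triples (Lemma \ref{lmm:kwise_partition}), symmetrizing within each family separately before contracting; some such decoupling device is needed in your argument as well.
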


Let $\eta= \exp\left(-\frac{e^{-4\bb}4k(\beta-1.002) \mu^4}{2^{19} \bb^4d_1 d_2^2} \right)$. 
Applying the tail bound to \eqref{eq:kwise_peeling3}, we get 
\begin{eqnarray*}
  \prob{\exists \Delta \in \calA \;,\; H(\Delta) < \frac{ e^{-4\bb} } { 24 \,d_1 d_2 } \fnorm{\Delta}^2 } 
  &\leq & \sum_{\ell=1}^\infty \exp \Big\{-\frac{e^{-4\bb}k(\beta^\ell \mu)^4 }{2^{19} \bb^4 d_1 d_2^2}\Big\} \\
  & \overset{(a)}{\leq} &\sum_{\ell=1}^\infty \exp\Big\{-\frac{e^{-4 \bb}4k\ell (\beta-1.002) \mu^4 }{2^{19} \bb^4d_1 d_2^2}\Big\} \\
  &\leq & \frac{\eta}{1-\eta},
\end{eqnarray*}
where $(a)$ holds because $\beta^{x} \geq x \log\beta \ge x(\beta-1.002)$ for the choice of $\beta=\sqrt{10/9}$.
By the definition of $\mu$,
\begin{align*}
\eta \; = \; \exp\Big\{ - \frac{  2^{23}\,e^{4 \bb} d_2^2 d_1 (\log d)^2 (\beta-1.002) }{k (\min\{d_1,d_2\})^2}  \Big\} \;  \le\;   \exp \{ -\,2^{18}\,\log d\} \;,
\end{align*}
where the last inequality follows from the assumption that 
$k\leq \max\{d_1,d_2^2/d_1\}\log d =  (d_2^2 d_1 \log d) / (\min\{d_1,d_2\})^2 $, 
and $\beta - 1.002\geq 2^{-5}$. 
Since for $d \ge 2$, $\exp\{-2^{18}\log d\} \leq 1/2$ and thus $\eta \le 1/2$, the lemma follows by assembling the last two
displayed inequalities.

%=================================================================================================
\subsection{Proof of Lemma \ref{lmm:kwise_hessian3}}
\label{sec:kwise_hessian3_proof}

Recall that
\begin{align*}
H(\Delta ) = \frac{e^{-2\alpha}}{2\,k^3\,d_1 } \sum_{i=1}^{d_1} \sum_{ \ell, \ell'=1}^{k} \Iprod{\Delta}{e_{i, j_{i,\ell}} -e_{i, j_{i,\ell'} } }^2 \sum_{\ell''=1}^k \chi_{i,\ell,\ell',\ell^{''}}  \;,
\end{align*}
with $\chi_{i,\ell,\ell',\ell^{''}} = \indc{\sigma_i(j_{i,\ell^{''}}) \le \min \{\sigma_i(j_{i,\ell}), \sigma_i(j_{i,\ell'} )   \}}$.
Let $Z = \sup_{\Delta\in\calB(D)}  \E[H(\Delta)] - H(\Delta)$ be the worst-case random deviation of $H(\Delta)$ form its mean.
We prove an upper bound on $Z$ by showing that
$Z - \E[Z] \leq e^{-4 \bb} D^2/(64 d_1d_2) $ with high probability, and $\E[Z] \leq 9 e^{-4 \bb} D^2/(40d_1d_2)$.
This proves the desired claim in Lemma \ref{lmm:kwise_hessian3}.

To prove the concentration of $Z$,
we utilize the random utility model (RUM) theoretic  interpretation of the MNL model.
The random variable $Z$ depends on the random choice of alternatives $\{j_{i,\ell}\}_{i\in[d_1],\ell\in[k]}$
and the random $k$-wise ranking outcomes $\{\sigma_i\}_{i\in[d_1]}$.
The random utility theory, pioneered by  \cite{Thu27,Mar60,Luce59}, 
tells us that the $k$-wise ranking from the MNL model
has the same distribution as first drawing independent
(unobserved) utilities $u_{i, \ell}$'s of the item $j_{i,\ell}$ for user $i$
according to the standard Gumbel Cumulative Distribution Function (CDF) $F(c-\Theta_{i, j_{i,\ell}})$ with $F(c)=e^{-e^{-c} } $,
and then ranking the $k$ items for user $i$ according to their respective utilities.
Given this definition of the MNL model, we have $\chi_{i,\ell,\ell',\ell^{''}} = \indc{u_{i,\ell^{''} }  \ge \max \{ u_{i,\ell } , u_{i,\ell'}    \} }$.
Thus $Z$ is a function of independent choices of the items and their (unobserved) utilities, i.e. $Z=f(\{(j_{i,\ell},u_{i,\ell} ) \}_{i\in[d_1],\ell\in[k]})$.
%\begin{eqnarray*}
%Y_i &\equiv& \sum_{\ell,\ell'\in[k]}
%\Big\{ \llangle \Delta, e_{i,j_{i,\ell}}-e_{i,j_{i,\ell'}}\rrangle^2  \big( \sum_{\ell''=1}^{k}  {\mathbf I}(\sigma_i(j_{i,\ell''}) \leq  \sigma_i(j_{i,\ell}) \text{ and } \sigma_i(j_{i,\ell''}) \leq \sigma_i(j_{i,\ell'})) \big) \Big\}
%\end{eqnarray*}
Let $x_{i,\ell} = (j_{i,\ell}, u_{i,\ell} )$ and write $H(\Delta)$ as $H(\Delta, \{ x_{i,\ell} \}_{i\in[d_1],\ell\in[k]} )$.
This allows us to bound the difference and apply McDiarmid's tail bound. 
Note that for any  $i\in[d_1]$, $\ell\in[k]$,
$x_{1,1} ,\ldots, x_{d_1, k} $, and $x'_{i, \ell} $,
\begin{align*}
& \big|\,f \big( \,x_{1,1}, \ldots,x_{i,\ell},\ldots,x_{d_1, k} \,\big) - f\big(\,x_{1,1} ,\ldots, x'_{i,\ell} ,\ldots, x_{d_1, k} \,\big)\,\big|\\
& =\big| \sup_{\Delta \in \calB(D) } \left( \expect{H(\Delta)} - H(\Delta, x_{1,1}, \ldots,x_{i,\ell},\ldots,x_{d_1, k} ) \right) -\\ &\sup_{\Delta \in \calB(D) } \left( \expect{H(\Delta) } - H(\Delta, x_{1,1} ,\ldots, x'_{i,\ell} ,\ldots, x_{d_1, k}) \right) \big|  \\
& \le \sup_{\Delta\in \calB(D)  } \big| H(\Delta, x_{1,1}, \ldots,x_{i,\ell},\ldots,x_{d_1, k} )  -H(\Delta, x_{1,1} ,\ldots, x'_{i,\ell} ,\ldots, x_{d_1, k})   \big| \\
& \overset{(a)}{\le} \frac{e^{-2\bb} }{2\,k^3\, d_1 }  \sup_{\Delta \in \calB(D) }  \Big\{ 2  \sum_{\ell'\in[k]}
   \llangle \Delta, e_{i,j_{i,\ell}}-e_{i,j_{i,\ell'}}\rrangle^2  \sum_{\ell''=1}^k \chi_{i,\ell,\ell',\ell^{''}} +\\ &\sum_{\ell',\ell''\in[k]}
  \llangle \Delta, e_{i,j_{i,\ell'}}-e_{i,j_{i,\ell''}}\rrangle^2  \chi_{i,\ell',\ell'',\ell } \Big\} \\
& \overset{(b)}{\le} \frac{8 \bb^2 e^{-2\bb} }{  k^3\,d_1 }   \Big\{ 2  \sum_{\ell'\in[k]\backslash\{\ell\} }
    \sum_{\ell''=1}^k \chi_{i,\ell,\ell',\ell^{''} } + \sum_{\ell',\ell''\in[k], \ell' \neq \ell'', }
  \chi_{i,\ell',\ell'',\ell } \Big\} \\
& \le \frac{ 16 \bb^2 e^{-2\bb} }{ k\,d_1 } \;,
\end{align*}
where $(a)$ follows because for a fixed  $i$ and $\ell$, the random variable $x_{i,\ell}=(j_{i,\ell},u_{i,\ell})$ can appear 
in three terms, i.e. $\sum_{\ell',\ell''} \llangle \Delta,e_{i,j_{i,\ell}}-e_{i,j_{i,\ell'}}\rrangle^2\chi_{i,\ell,\ell',\ell''}
+ \sum_{\ell',\ell''} \llangle \Delta,e_{i,j_{i,\ell'}}-e_{i,j_{i,\ell}}\rrangle^2\chi_{i,\ell',\ell,\ell''} + \sum_{\ell',\ell''} \llangle \Delta,e_{i,j_{i,\ell'}}-e_{i,j_{i,\ell''}}\rrangle^2\chi_{i,\ell',\ell'',\ell}$, 
and $(b)$ follows because $|\Delta_{ij}|\leq 2\bb$ for all $i$, $j$ since $\Delta\in \calB(D)$. 
The last inequality follows because
in the worst case, 
$\sum_{\ell'\in[k]\backslash\{\ell\}}   \sum_{\ell''=1}^k \chi_{i,\ell,\ell',\ell^{''} }  \leq k(k-1)/2$ 
and $\sum_{\ell',\ell''\in[k] , \ell' \neq \ell''} \chi_{i,\ell',\ell'',\ell }  \leq  k(k-1)$. 
This holds with equality if $\sigma_i(j_{i,\ell})=k$ and $\sigma_i(j_{i,\ell})=1$, respectively. 
%\begin{align*}
%2  \sum_{\ell'\in[k]\backslash\{\ell\}}
%   \sum_{\ell''=1}^k \chi_{i,\ell,\ell',\ell^{''} } + \sum_{\ell',\ell''\in[k]\backslash\{\ell\}, \ell' \neq \ell''}
% \chi_{i,\ell',\ell'',\ell } &= 2 \sum_{\ell =1}^k \indc{j_{i,\ell} \in S_{i, \ell} } (k-\ell ) + \sum_{\ell=1}^k \indc{v_{i,\ell} = j_{i, \ell} } \binom{k-\ell}{2} \\
% & = k^2-k.
%\end{align*}
By bounded differences inequality, we have
\begin{align*}
\prob{Z - \expect{Z} \ge t } \,\le\, \exp\left( - \frac{k^2\,d_1^2\, t^2}{ 2^7\, \bb^4e^{-4\bb}  d_1 k}\right),
\end{align*}
%Note that $\Delta \in \calB$ and thus
%\begin{align*}
%\fnorm{\Delta}^2 \gtrsim b \|\Delta\|_\ast \sqrt{\frac{d^3 \log d}{n}} \ge b \fnorm{\Delta}   \sqrt{\frac{d^3 \log d}{n}}.
%\end{align*}
%Hence, $\fnorm{\Delta} \gtrsim  b \sqrt{\frac{d^3 \log d}{n}}$.
It follows that for the choice of $t=e^{-4\bb} D^2/(64 d_1d_2)$,
\begin{align*}
\prob{Z - \expect{Z} \ge \frac{e^{-4\bb} D^2}{ 64 d_1d_2} } \le \exp\Big( - \frac{e^{-4\bb} k D^4 }{2^{19} \bb^4  d_1 d_2^2 } \Big) \;.
\end{align*}

We are left to prove the upper bound on $\E[Z]$ using symmetrization and contraction.
Define random variables 
\begin{eqnarray}
  Y_{i,\ell,\ell',\ell''}(\Delta) &\equiv& (\Delta_{i,j_{i,\ell}} - \Delta_{i,j_{i,\ell'}})^2  \chi_{i,\ell,\ell',\ell''} \;,
  \label{eq:defY}
\end{eqnarray}
 where the randomness is in the choice of
alternatives $j_{i,\ell},j_{i,\ell'},$ and $j_{i,\ell''}$, and the outcome of the comparisons of those three alternatives.

The main challenge in applying the symmetrization to 
$\sum_{\ell,\ell',\ell''\in[k]} Y_{i,\ell,\ell',\ell''}(\Delta) $ is that 
we need to 
partition the summation over the set $[k]\times[k]\times[k]$ into 
subsets of independent random variables, such that we can apply the standard symmetrization argument. 
To this end, we prove in the following lemma a
generalization of the well-known problem of 
scheduling a round robin tournament %\cite{Luc1883}, 
to a tournament of matches involving three teams each.   
No teams are present in more than one triple in a single round, and we want to minimize the number of rounds to cover all combination of triples are matched. 
For example, 
when there are $k=6$ teams, there is a simple construction of such a tournament: 
$T_1=\{(1,2,3),(4,5,6)\}$, $T_2=\{1,2,4),(3,5,6)\}$, $T_3=\{(1,2,5),(3,4,6)\}$, 
$T_4=\{(1,2,6),(3,4,5)\}$, $T_5=\{(1,3,4),(2,5,6)\}$, $T_6=\{(1,3,5),(2,4,6)\}$, 
$T_7=\{(1,3,6),(2,4,5)\}$, $T_8=\{(1,4,5),(2,3,6)\}$, $T_9=\{(1,4,6),(2,3,5)\}$, $T_{10}=\{(1,5,6),(2,3,4)\}$. 
This is a perfect scheduling of a tournament with three teams in each match. For a general $k$, the following lemma provides a construction 
with  $O(k^2)$ rounds. 
\begin{lemma}
  There exists a partition $(T_1,\ldots,T_N)$ of $[k]\times[k]\times[k]$ for some $N\leq 24k^2$  such that
  $T_a$'s are disjoint subsets of  $[k]\times[k]\times[k]$,
  $\bigcup_{a\in[N]} T_a = [k]\times[k]\times[k]$,  $|T_a| \leq \lfloor k/3\rfloor  $ and
  for any $a\in[N]$ the set of random variables in $T_a$ satisfy
  \begin{eqnarray*}
    \{Y_{i,\ell,\ell',\ell''}\}_{i\in[d_1],(\ell,\ell',\ell'')\in T_a} \text{ are mutually independent }\;.
  \end{eqnarray*}
  \label{lmm:kwise_partition}
\end{lemma}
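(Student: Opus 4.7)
My plan is to reduce the lemma to a purely combinatorial coloring problem on triples in $[k]^3$ and then solve it by an explicit algebraic construction (with a greedy fallback). Under the RUM interpretation of \eqref{eq:defkwiseMNL} already invoked in Section \ref{sec:kwise_hessian3_proof}, the random variable $Y_{i,\ell,\ell',\ell''}$ in \eqref{eq:defY} is a function solely of the position-indexed pairs $\{(j_{i,m},u_{i,m}):m\in\{\ell,\ell',\ell''\}\}$, and these pairs are mutually independent across $m\in[k]$ and across users $i\in[d_1]$ by the sampling assumption. Consequently, if any two triples $(\ell,\ell',\ell'')$ and $(\tilde\ell,\tilde\ell',\tilde\ell'')$ in a common block $T_a$ have disjoint coordinate sets in $[k]$, the corresponding $Y$'s are independent. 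The task therefore reduces to partitioning $[k]^3$ into $N\leq 24k^2$ ``coordinate-disjoint'' blocks each of cardinality at most $\lfloor k/3\rfloor$.

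The main step will be an orbit decomposition of $[k]^3$ under the diagonal shift action of $\mathbb{Z}_k$. For each pair $(b,c)\in\mathbb{Z}_k^2$ define the orbit $O_{b,c}=\{(a,a+b,a+c):a\in\mathbb{Z}_k\}$; these $k^2$ orbits partition $[k]^3$. Two members $(a,a+b,a+c)$ and $(a',a'+b,a'+c)$ of $O_{b,c}$ are coordinate-disjoint precisely when $a-a'$ avoids the at most seven forbidden differences $\{0,\pm b,\pm c,\pm(b-c)\}$, so the conflict graph induced on each $O_{b,c}\cong\mathbb{Z}_k$ is a Cayley graph of maximum degree at most six. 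A greedy coloring therefore yields at most seven color classes per orbit, each automatically coordinate-disjoint. In the generic case $b,c,b-c\neq 0$ every triple uses three distinct coordinates, so the bound $|T_a|\leq\lfloor k/3\rfloor$ follows from pigeonhole on the at most $k$ coordinates available. Summed over the $k^2$ orbits this already gives at most $7k^2$ blocks for the generic part.

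The main obstacle I expect is the degenerate orbits with $b=0$, $c=0$, or $b=c$, whose triples use only one or two distinct coordinates: inside such an orbit a coordinate-disjoint color class could have up to $\lfloor k/2\rfloor$ or even $k$ elements, violating the size cap $\lfloor k/3\rfloor$. I would handle these $3k-2$ orbits by post-processing: split each oversized class into at most three sub-blocks of size $\lfloor k/3\rfloor$, contributing only $O(k)$ extra blocks. Combined with the generic count this yields $N\leq 7k^2+O(k)\leq 24k^2$ for $k\geq 24$, as assumed in Lemma \ref{lmm:kwise_hessian2}.

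If the algebraic bookkeeping becomes cumbersome, a cleaner but slightly wasteful fallback is to greedily color the global conflict graph $G$ on $[k]^3$ in which two triples are adjacent iff their coordinate sets intersect. Any triple has at most three distinct coordinates, and each coordinate value $v\in[k]$ appears in exactly $k^3-(k-1)^3=3k^2-3k+1$ triples, so $G$ has maximum degree at most $9k^2-1$ and greedy coloring produces $\leq 9k^2$ coordinate-disjoint color classes. Splitting each oversized class into sub-blocks of size $\lfloor k/3\rfloor$ costs a further factor of at most $\lceil k/\lfloor k/3\rfloor\rceil\leq 4$ for $k\geq 6$, yielding $N\leq 36k^2$ blocks, which is within a constant of the target and can be sharpened to $24k^2$ by the finer accounting described above. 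Independence of $\{Y_{i,\ell,\ell',\ell''}\}_{i,(\ell,\ell',\ell'')\in T_a}$ then follows immediately from the coordinate-disjointness of the triples in $T_a$ together with the product structure of the underlying independent collection $\{(j_{i,m},u_{i,m})\}_{i\in[d_1],m\in[k]}$.
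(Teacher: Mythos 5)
Your reduction is the same one the paper makes: via the RUM interpretation, $Y_{i,\ell,\ell',\ell''}$ depends only on the independent pairs $(j_{i,m},u_{i,m})$ for $m\in\{\ell,\ell',\ell''\}$, so pairwise coordinate-disjointness of the triples in a block gives mutual independence, and the problem becomes a purely combinatorial partition of $[k]^3$. Your construction of that partition, however, is genuinely different. The paper indexes blocks by the two sums $(\ell+\ell',\ell'+\ell'')$ over strictly increasing triples $\ell<\ell'<\ell''$; two triples in the same block that share any index are forced to coincide, so each block is automatically coordinate-disjoint, and a factor of $6$ from permuting the three positions covers the remaining ordered triples, giving $N=6(2k-5)^2\leq 24k^2$. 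You instead index by the difference pattern $(\ell'-\ell,\ell''-\ell)$ modulo $k$, obtaining $k^2$ orbits that are \emph{not} automatically disjoint, and then greedily color the degree-at-most-$6$ Cayley conflict graph inside each orbit into at most $7$ coordinate-disjoint classes, with a separate splitting step for the $3k-2$ degenerate orbits. Your route costs this extra (easy) coloring argument but buys a better leading constant, $7k^2+O(k)$ versus $6(2k-5)^2$, and it covers all of $[k]^3$ uniformly, including triples with repeated indices such as $(\ell,\ell,\ell')$; these do appear in the sum defining $H(\Delta)$ but are not literally reached by the paper's increasing-triples-plus-permutations construction, so your version is in that respect cleaner. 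One small slip: splitting a class of size $k$ into pieces of size at most $\lfloor k/3\rfloor$ can require four pieces rather than three when $3\nmid k$, but this only perturbs the $O(k)$ term and the final bound $N\leq 24k^2$ survives with room to spare.
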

Now, we are ready to partition the summation. 
\begin{eqnarray}
  {\mathbb E}\big[ Z \big] &=& \frac{e^{-2\bb}}{2\,k^3\,d_1} {\mathbb E} \Big[  \sup_{\Delta\in\calB(D)} \sum_{i\in[d_1]} \sum_{\ell,\ell',\ell''\in[k]}
    \big\{\E[Y_{i,\ell,\ell',\ell''}(\Delta) ] - Y_{i,\ell,\ell',\ell''}(\Delta)\big\}  \Big] \nonumber \\
    &=&  \frac{e^{-2\bb}}{2\,k^3\,d_1}  {\mathbb E} \Big[ \sup_{\Delta\in\calB(D)} \sum_{i\in[d_1]}  \sum_{a\in[N]} \sum_{(\ell,\ell',\ell'') \in T_a }  \big\{\E[Y_{i,\ell,\ell',\ell''}(\Delta) ] - Y_{i,\ell,\ell',\ell''}(\Delta)\big\} \Big] \nonumber\\
    &\leq& \frac{e^{-2\bb}}{2\,k^3\,d_1}  \sum_{a\in[N]}   {\mathbb E} \Big[ \sup_{\Delta\in\calB(D)} \sum_{i\in[d_1]} \sum_{(\ell,\ell',\ell'') \in T_a }  \big\{\E[Y_{i,\ell,\ell',\ell''}(\Delta) ] - Y_{i,\ell,\ell',\ell''}(\Delta)\big\} \Big] \nonumber\\
    &\leq& \frac{e^{-2\bb}}{k^3\,d_1}  \sum_{a\in[N]}   {\mathbb E} \Big[ \sup_{\Delta\in\calB(D)} \sum_{i\in[d_1]} \sum_{(\ell,\ell',\ell'') \in T_a }   \xi_{i,\ell,\ell',\ell''} Y_{i,\ell,\ell',\ell''}(\Delta) \Big] \nonumber\\
    &=& \frac{e^{-2\bb}}{k^3\,d_1}  \sum_{a\in[N]}  {\mathbb E} \Big[ \sup_{\Delta\in\calB(D)} \sum_{i\in[d_1]} \sum_{(\ell,\ell',\ell'') \in T_a }   \xi_{i,\ell,\ell',\ell''} (\Delta_{i,j_{i,\ell}} - \Delta_{i,j_{i,\ell'}})^2 \chi_{i,\ell,\ell',\ell''}  \Big]\;,
    \label{eq:kwise_symbound3}
\end{eqnarray}
where the first inequality follows from the fact that sum of the supremum is no less than the supremum of the sum,
and the second inequality follows from standard symmetrization argument applied to independent random variables $\{Y_{i,\ell,\ell',\ell''}(\Delta) \}_{i\in[d_1],(\ell,\ell',\ell'')\in T_a}$  with i.i.d.\ Rademacher random variables $\xi_{i,\ell,\ell',\ell''}$'s.
Since $ (\Delta_{i,j_{i,\ell}} - \Delta_{i,j_{i,\ell'}})^2 \chi_{i,\ell,\ell',\ell''} \leq 4\bb  |\Delta_{i,j_{i,\ell}} - \Delta_{i,j_{i,\ell'}}| \chi_{i,\ell,\ell',\ell''}$,
we have by the Ledoux-Talagrand contraction inequality \cite{ledoux2013probability} that
\begin{align}
  &{\mathbb E} \Big[ \sup_{\Delta\in\calB(D)} \sum_{i\in[d_1]} \sum_{(\ell,\ell',\ell'') \in T_a }   \xi_{i,\ell,\ell',\ell''} (\Delta_{i,j_{i,\ell}} - \Delta_{i,j_{i,\ell'}})^2  \chi_{i,\ell,\ell',\ell''} \Big] \nonumber\\
  & \leq \; 8\bb {\mathbb E} \Big[ \sup_{\Delta\in\calB(D)} \sum_{i\in[d_1]} \sum_{(\ell,\ell',\ell'') \in T_a }   \xi_{i,\ell,\ell',\ell''} \, \chi_{i,\ell,\ell',\ell''}\, \llangle \Delta, e_i(e_{j_{i,\ell}} - e_{j_{i,\ell'}})^T \rrangle  \Big] \label{eq:kwise_symbound2}
\end{align}
Applying H\"older's inequality, we get that
\begin{align}
  &\Big|  \sum_{i\in[d_1]} \sum_{(\ell,\ell',\ell'') \in T_a }   \xi_{i,\ell,\ell',\ell''} \,\chi_{i,\ell,\ell',\ell''} \, \llangle \Delta, e_i(e_{j_{i,\ell}} - e_{j_{i,\ell'}})^T \rrangle     \Big| \nonumber \\
  & \leq \; \nucnorm{\Delta}  \lnorm{ \sum_{i\in[d_1]} \sum_{(\ell,\ell',\ell'') \in T_a }   \xi_{i,\ell,\ell',\ell''} \,  \chi_{i,\ell,\ell',\ell''} \, \big( e_i(e_{j_{i,\ell}} - e_{j_{i,\ell'}})^T \big)   }{2} \;.   \label{eq:kwise_symbound}
\end{align}

We are left to prove that the expected value of the right-hand side of the above inequality is bounded by
$C \nucnorm{\Delta}  \sqrt{k d_1 \log d / \min\{d_1,d_2\}} $ for some numerical constant $C$. 
For $i\in[d_1]$ and $(\ell,\ell',\ell'')\in T_a$, let $W_{i,\ell,\ell',\ell''} =\xi_{i,\ell,\ell',\ell''} \, \chi_{i,\ell,\ell',\ell''}\, \big( e_i(e_{j_{i,\ell}} - e_{j_{i,\ell'}})^T \big) $ 
be independent zero-mean random matrices, such that
\begin{eqnarray*}
  \lnorm{W_{i,\ell,\ell',\ell''}}{2} = \lnorm{\xi_{i,\ell,\ell',\ell''} \, \chi_{i,\ell,\ell',\ell''}\, \big( e_i(e_{j_{i,\ell}} - e_{j_{i,\ell'}})^T \big) }{2} \leq \sqrt{2} \;,
\end{eqnarray*}
almost surely, and
\begin{eqnarray*}
  \E[W_{i,\ell,\ell',\ell''}W_{i,\ell,\ell',\ell''}^T] &=& \E[\big( e_i(e_{j_{i,\ell}} - e_{j_{i,\ell'}})^T (e_{j_{i,\ell}} - e_{j_{i,\ell'}}) e_i^T\big)  \chi_{i,\ell,\ell',\ell''} ] \\
  &=& 2  \expect{ \chi_{i,\ell,\ell',\ell''} } e_ie_i^T \\
  &\preceq& 2 e_ie_i^T\;,
\end{eqnarray*}
and
\begin{eqnarray*}
  \E[W_{i,\ell,\ell',\ell''}^T W_{i,\ell,\ell',\ell''}]&=& \E[\big(  (e_{j_{i,\ell}} - e_{j_{i,\ell'}}) e_i^Te_i(e_{j_{i,\ell}} - e_{j_{i,\ell'}})^T\big)  \chi_{i,\ell,\ell',\ell''} ] \\
  &\preceq& \E[ (e_{j_{i,\ell}} - e_{j_{i,\ell'}}) e_i^Te_i(e_{j_{i,\ell}} - e_{j_{i,\ell'}})^T ] \\
  &=& \frac{2}{d_2} {\mathbf I}_{d_2 \times d_2} - \frac{2}{d_2^2} \ones\ones^\top
  \;.
\end{eqnarray*}
This gives
\begin{eqnarray*}
  \sigma^2 &=& \max\left\{ \lnorm{ \sum_{\substack{i\in[d_1] \\ (\ell,\ell',\ell'')\in T_a}} \E[W_{i,\ell,\ell',\ell''}W_{i,\ell,\ell',\ell''}^T]}{2} , \lnorm{ \sum_{\substack{i\in[d_1] \\ (\ell,\ell',\ell'')\in T_a}} \E[W_{i,\ell,\ell',\ell''}^TW_{i,\ell,\ell',\ell''}]}{2} \right\} \\
  &\leq& \max\left\{ 2 |T_a|\,,\, \frac{2d_1 |T_a| }{d_2} \right\}   \,=\, \frac{2d_1|T_a|}{\min\{d_1,d_2\}} \leq \frac{2d_1 k }{3 \min\{d_1,d_2\}} \;,
\end{eqnarray*}
since we have designed $T_a$'s such that $|T_a|\leq k/3$.
Applying matrix Bernstein inequality \cite{Jo11} yields the tail bound
\begin{eqnarray*}
  \prob{\lnorm{\sum_{i\in[d_1]} \sum_{(\ell,\ell',\ell'')\in T_a} W_{i,\ell,\ell',\ell''} }{2} \geq t } &\leq& (d_1+d_2) \exp\Big( \frac{-t^2/2}{\sigma^2 + \sqrt{2} t/3} \Big) \;.
\end{eqnarray*}
Choosing $t= \max \big\{\, \sqrt{32 k d_1 \log d /(3\min\{d_1,d_2\})} , (16\sqrt{2}/3) \log d \,\big\}$,
we obtain with probability at least  $1-2d^{-3}$,
\begin{eqnarray*}
  \lnorm{\sum_{i\in[d_1]} \sum_{(\ell,\ell',\ell'')\in T_a} W_{i,\ell,\ell',\ell''} }{2} &\leq&
    \max \left\{ \sqrt{\frac{32 k d_1 \log d }{3 \min\{d_1 , d_2\}}} \,,\, \frac{16 \sqrt{2} \log d}{3} \right\}\;.
\end{eqnarray*}
It follows from the fact
$\lnorm{\sum_{i\in[d_1]}\sum_{(\ell,\ell',\ell'')\in T_a} W_{i,\ell,\ell',\ell''}}{2} \leq
\sum_{i,(\ell,\ell',\ell'')} \lnorm{W_{i,\ell,\ell',\ell''}}{2} \leq \frac{\sqrt{2} d_1 k }3$ that
\begin{eqnarray*}
  \E\left[\lnorm{\sum_{i\in[d_1]}\sum_{(\ell,\ell',\ell'')\in T_a} W_{i,\ell,\ell',\ell''}}{2}\right] &\leq&
    \max\left\{ \sqrt{\frac{32 k d_1 \log d }{3 \min\{d_1 , d_2\}}},  \frac{16 \sqrt{2} \log d}{3} \right\} \,+\,  \frac{ 2 \sqrt{2} d_1 k}{3d^3}\\
    &\leq & 2\sqrt{\frac{32 k d_1 \log d }{3 \min\{d_1 , d_2\}}} \;,
\end{eqnarray*}
where the last inequality follows from the assumption that $(16\min\{d_1,d_2\} \log d)/(3d_1)  \leq k\leq d_1^2 \log d $.
Substituting this in the RHS of Eq. \eqref{eq:kwise_symbound},
and then together with Eqs. \eqref{eq:kwise_symbound2} and \eqref{eq:kwise_symbound3},
this gives the following desired bound:
\begin{eqnarray*}
  \E[Z] &\leq& \sum_{a\in[N]} \sup_{\Delta\in\calB(D)} \frac{16 \bb e^{-2\bb} }{k^3\,d_1  } \sqrt{\frac{32 k d_1 \log d }{3 \min\{d_1,d_2\}}} \nucnorm{\Delta}\\
%   &\leq& \frac{2^{10} e^{-4\bb}  D^2 k }{c_1 d_1 } \sqrt{\frac{3  }{d_2 \min\{d_1,d_2\}}}  \;,
    &\leq& \sum_{a\in[N]}   \frac{e^{-4\bb}\sqrt{2}}{16\sqrt{3} k^2\,d_1\,d_2}  \underbrace{\Big(2^{10} e^{2\bb} \bb d_2 \sqrt{\frac{d_1 \log d}{k \min\{d_1,d_2\} } } \Big)}_{=\mu} \nucnorm{\Delta} \\
    &\leq & \frac{9 e^{-4 \bb} D^2}{40 d_1d_2}\;,
\end{eqnarray*}
where the last inequality holds because $N\leq 4 k^2$ and  $ \mu \nucnorm{\Delta} \leq D^2$.

%====================================================================================================
\subsection{Proof of Lemma \ref{lmm:kwise_partition}}
Recall that $Y_{i,\ell,\ell',\ell''}(\Delta) = (\Delta_{i,j_{i,\ell}} - \Delta_{i,j_{i,\ell'}})^2  \chi_{i,\ell,\ell',\ell''}$, as defined in \eqref{eq:defY}. 
From the random utility model (RUM) interpretation of the MNL model presented in Section~\ref{sec:intro}, 
it is not difficult to show that $Y_{i,\ell,\ell',\ell''}$ and $Y_{i,\tell,\tell',\tell''}$ are 
mutually independent if the two triples $(\ell,\ell',\ell'')$ and $(\tell,\tell',\tell'')$ do not overlap, i.e., no index is present in both triples.   

Now, borrowing the terminologies from round robin tournaments, 
we construct a schedule for a tournament with $k$ teams where each match involve three teams. 
Let $T_{a,b}$ denote a set of triples playing at the same round, indexed by two integers $a\in\{3,\ldots,2k-3\}$ and $b\in\{5,\ldots,2k-1\}$. 
Hence, there are total $N=(2k-5)^2$ rounds. 

Each round $(a,b)$ consists of disjoint triples and is defined as 
\begin{eqnarray*}
  T_{a,b} &\equiv& \big\{(\ell,\ell',\ell'') \in [k]\times[k]\times[k] \,|\, \ell<\ell'<\ell'', \ell+\ell'=a, \text{ and } \ell'+\ell''=b \big\}\;.
\end{eqnarray*}

We need to prove that $(a)$ there is no missing triple; and $(b)$ no team plays twice in a single round. 
First, for any ordered triple $(\ell,\ell',\ell'')$, there exists $a\in\{3,\ldots,2k-3\}$ and $b\in\{5,\ldots,2k-1\}$ such that 
$\ell+\ell'=a$ and $\ell'+\ell''=b$. This proves that all ordered triples are covered by the above construction. 
Next, given a pair $(a,b)$, no two triples in $T_{a,b}$ can share the same team. 
Suppose there exists two distinct ordered triples $(\ell,\ell',\ell'')$ and $(\tell,\tell',\tell'')$ both in $T_{a,b}$, and one of the triples are shared.  
Then, from the two equations $\ell+\ell'=\tell+\tell'=a$ and $\ell'+\ell''=\tell'+\tell''=b $, it follows that all three indices must be the same, which 
is a contradiction. 
This proves the desired claim for ordered triples. 

One caveat is that we wanted to cover the whole $[k]\times[k]\times[k]$, and not just the ordered triples. 
In the above construction, for example, a triple $(3,2,1)$ does not appear. 
This can be resolved by simply taking all $T_{a,b}$'s from the above construction, and 
make 6 copies of each round, and permuting all the triples in each copy according to 
the same permutation over $\{1,2,3\}$. This increases the total rounds to $N=6(2k-5)^2\leq 24k^2$. 
Note that $|T_{a,b}|\leq\lfloor k/3\rfloor$ since no item can be in more than one triple. 

%====================================================================================================
\section{Proof of Estimating Approximate Low-rank Matrices in Corollary \ref{cor:kwise_appxlowrank}}
\label{sec:kwise_cor_proof}
We follow closely the proof of a similar corollary in \cite{NW11}. 
First fix a threshold $\tau>0$, and set $r=\max\{j|\sigma_j(\Theta^*)>\tau\}$. With this choice of $r$, we have 
\begin{eqnarray*}
  \sum_{j=r+1}^{\min\{d_1,d_2\}} \sigma_j(\Theta^*) \;=\; \tau \sum_{j=r+1}^{\min\{d_1,d_2\}} \frac{\sigma_j(\Theta^*)}{\tau} \;\leq\; \tau \sum_{j=r+1}^{\min\{d_1,d_2\}} \Big(\frac{\sigma_j(\Theta^*)}{\tau}\Big)^q \;\leq\; \tau^{1-q} \rho_q \;.
\end{eqnarray*}
Also, since $r\tau^q \leq \sum_{j=1}^r \sigma_j(\Theta^*)^q \leq \rho_q$, it follows that 
$\sqrt{r} \leq \sqrt{\rho_q} \tau^{-q/2}$. Using these bounds, Eq. \eqref{eq:kwise_ub} is now 
\begin{eqnarray*}
  \fnorm{\hTheta-\Theta}^2\;\leq\;  \underbrace{288\sqrt{2}c_0 e^{4\bb}d_1d_2\lambda_0}_{ = A} \,\big( \sqrt{\rho_q} \tau^{-q/2} \fnorm{\hTheta-\Theta} + \tau^{1-q}\rho_q \,\big) \;.
\end{eqnarray*}
With the choice of  $\tau = A$ and due to the fact that $x^2 \leq b x + c$ implies $x \leq (b + \sqrt{b^2 + 4c})/2$ we have,
\begin{eqnarray*}
  \fnorm{\hTheta-\Theta} \;\leq\; 2 \sqrt{ \rho_q } A^{(2-q)/2} \;.
\end{eqnarray*}

%====================================================================================================
\section{Proof of the Information-theoretic Lower Bound in Theorem \ref{thm:kwise_lb}}
\label{sec:kwise_lb_proof}

The proof uses information-theoretic methods %\cite{Yu97,YB99}, 
which reduces the estimation problem to a multiway hypothesis testing problem. 
To prove a lower bound on the expected error, it suffices to prove that,
\begin{eqnarray}
  \sup_{\Theta^*\in\Omega_\bb } \prob{ \fnorm{\hTheta-\Theta^*}^2 \geq \frac{\delta^2}{4} } &\geq& \frac{1}{2}
  \;.\label{eq:lb_fano1}
\end{eqnarray}
To prove the above claim, we follow the standard recipe of constructing a packing in $\Omega_\bb$. Consider a family $\{\Theta^{(1)},\ldots,\Theta^{(M(\delta)}\}$ of 
$d_1\times d_2$ dimensional matrices contained in $\Omega_\bb$ satisfying 
$\fnorm{\Theta^{(\ell_1)}-\Theta^{(\ell_2)}}\geq\delta$ for all $\ell_1,\ell_2,\in[M(\delta)]$. 
We will use $M$ to refer to $M(\delta)$ for simplify the notation. 
 Suppose we draw an index $L\in[M(\delta)]$ uniformly at random, and 
 we are given direct observations 
 $\sigma_i$ as per MNL model with $\Theta^*=\Theta^{(L)}$ on a randomly chosen set of $k$ items $S_i$ 
 for each user $i\in[d_1]$. 
It follows from triangular inequality that 
\begin{eqnarray}
  \sup_{\Theta^*\in\Omega_\bb } \prob{ \fnorm{\hTheta-\Theta^*}^2 \geq \frac{\delta^2}{4} } &\geq& \prob{\hL \neq L}\;,
  \label{eq:lb_fano2}
\end{eqnarray}
where $\hL$ is the resulting best estimate of the multiway hypothesis testing on $L$. 
 The generalized Fano's inequality gives 
 \begin{eqnarray}
  \prob{\hL\neq L | S(1),\ldots,S(d_1)} &\geq& 1-\frac{I(\hL;L) + \log 2}{\log M} \\
  &\geq& 1-\frac{ {M \choose 2}^{-1} \sum_{\ell_1,\ell_2\in[M]} D_{\rm KL}(\Theta^{(\ell_1)} \| \Theta^{(\ell_2)}) +\log 2}{\log M} \label{eq:kwise_fano}\;,
 \end{eqnarray}
where $D_{\rm KL}(\Theta^{(\ell_1)}\|\Theta^{(\ell_2)})$ denotes the Kullback-Leibler divergence between the distributions of the partial rankings 
$\prob{\sigma_1,\ldots,\sigma_{d_1}|\Theta^{(\ell_1)},S(1),\ldots,S(d_1)}$ and \\
$\prob{\sigma_1,\ldots,\sigma_{d_1}|\Theta^{(\ell_2)},S(1),\ldots,S(d_1)}$. 
The second inequality follows from a standard technique, which we repeat here for completeness. 
Let $\Sigma=\{\sigma_1,\ldots,\sigma_{d_1}\}$ denote the observed outcome of comparisons. 
Since $L\text{--} \Theta^{(L)}\text{--}\Sigma\text{--}\hL$ form a Markov chain, the data processing inequality gives 
$I(\hL;L) \leq I(\Sigma;L)$. For simplicity, we drop the conditioning on the set of alternatives $\{S(1),\ldots,S(d_1)\}$, and 
and let $p(\cdot)$ denotes joint, marginal, and conditional distribution of respective random variables. 
It follows that 
\begin{eqnarray} 
  I(\Sigma;L) &=& \sum_{\ell\in[M],\Sigma}  p(\Sigma|\ell)\frac{1}{M} \log \frac{p(\ell,\Sigma)}{p(\ell)p(\Sigma)} \nonumber\\
    &=& \frac{1}{M} \sum_{\ell\in[M]} \sum_{\Sigma}  p(\Sigma|\ell) \log \frac{p(\Sigma|\ell)}{\frac{1}{M}\sum_{\ell'}p(\Sigma|\ell')}    \nonumber\\
    &\leq& \frac{1}{M^2} \sum_{\ell,\ell'\in[M]} \sum_{\Sigma}  p(\Sigma|\ell) \log \frac{p(\Sigma|\ell)}{p(\Sigma|\ell')} \nonumber\\
    &=& \frac{1}{M^2} \sum_{\ell,\ell'\in[M]} D_{\rm KL} (\Theta^{(\ell_1)} \| \Theta^{(\ell_2)} )\;,
\end{eqnarray}
where the first inequality follows from Jensen's inequality.
To compute the KL-divergence, recall that from the RUM interpretation of the MNL model (see Section \ref{sec:intro}),  
one can generate sample rankings $\Sigma$  by drawing random variables with exponential distributions with mean $e^{\Theta^*_{ij}}$'s. Precisely, let  
$X^{(\ell)} = [X^{(\ell)}_{ij}]_{i\in[d_1],j\in S_i} $ denote the set of random variables, where 
$X^{(\ell)}_{ij}$ is drawn from the exponential distribution with mean $e^{-\Theta^{(\ell)}_{ij}}$. 
The MNL ranking follows by ordering the alternatives in each $S_i$ according to this $\{X^{(\ell)}_{ij}\}_{j\in S_i}$ 
by ranking the smaller ones on the top. This forms a 
Markov chain $L\text{--}X^{(L)}\text{--}\Sigma$, and the 
standard data processing inequality gives 
\begin{eqnarray}
  D_{\rm KL}(\Theta^{(\ell_1)}\|\Theta^{(\ell_2)} ) &\leq& D_{\rm KL}( X^{(\ell_1)} \|X^{(\ell_2)} ) \\ 
    &=& \sum_{i\in[d_1]} \sum_{j\in S_i}   \Big\{ e^{\Theta^{(\ell_1)}_{ij}-\Theta^{(\ell_2)}_{ij}} - (\Theta^{(\ell_1)}_{ij}-\Theta^{(\ell_2)}_{ij}) -1 \Big\} \\
    &\leq& \frac{e^{2\bb}}{4\bb^2} \sum_{i\in[d_1]} \sum_{j\in S_i}    (\Theta^{(\ell_1)}_{ij}-\Theta^{(\ell_2)}_{ij})^2 \;,
\end{eqnarray}
where the last inequality follows from the fact that $e^x-x-1 \leq (e^{2\bb}/(4\bb^2))x^2$ for any $x\in[-2\bb,2\bb]$.
Taking expectation over the randomly chosen set of alternatives,  
\begin{eqnarray}
  \E_{S(1),\ldots,S(d_1)} [D_{\rm KL}(\Theta^{(\ell_1)}\|\Theta^{(\ell_2)})] &\leq&  
  \frac{e^{2\bb}\,k }{4\,\bb^2\, d_2}\fnorm{\Theta^{(\ell_1)}-\Theta^{(\ell_2)}}^2\;.
\end{eqnarray}
Combined with \eqref{eq:kwise_fano}, we get that 
\begin{eqnarray}
  \prob{\hL\neq L} &=& \E_{S(1),\ldots,S(d_1)}[\prob{\hL\neq L|S(1),\ldots,S(d_1)} ]\\
    &\geq & 1- \frac{ {M \choose 2}^{-1}  \sum_{\ell_1,\ell_2\in[M]}  (e^{2\bb} k /(4\bb^2 d_2))\fnorm{\Theta^{(\ell_1)} - \Theta^{(\ell_2)}}^2 + \log 2}{\log M}\;,
\end{eqnarray}
The remainder of the proof relies on the following probabilistic packing. 
\begin{lemma}
  \label{lem:packing} 
  Let $d_2 \geq d_1 \geq 607$ be positive integers. 
  Then for each $r\in\{1,\ldots,d_1 \}$, and for any positive 
  $\delta>0$ there exists a family of $d_1\times d_2$ dimensional matrices 
  $\{\Theta^{(1)},\ldots,\Theta^{(M(\delta))}\}$ with cardinality 
  $M(\delta) = \lfloor (1/4)\exp(r d_2 /576)\rfloor$ such that each matrix is rank $r$ and the following bounds hold:
  \begin{eqnarray}
    \fnorm{\Theta^{(\ell)}}&\leq& \delta \;, \text{ for all }\ell\in[M]\\
    \fnorm{\Theta^{(\ell_1)} - \Theta^{(\ell_2)}} &\geq& \delta \;, \text{ for all } 
    \ell_1, \ell_2\in[M] \\
    \Theta^{(\ell)} &\in& \Omega_{\tilde{\bb}} \;, \text{ for all } \ell\in [M]\;, 
  \end{eqnarray}
  with $\tilde{\bb}=   (8\delta/d_2)\sqrt{2\log d} $ for $d=(d_1+d_2)/2$.
\end{lemma}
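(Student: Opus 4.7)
My plan mirrors the probabilistic construction that the paper itself uses to prove Lemma \ref{lem:graph_packing}, suitably adapted so that the structural side condition is row-sum zero (the constraint defining $\Omega_{\tilde\bb}$) rather than lying in the kernel of a graph Laplacian. Let $W \in \reals^{d_2 \times (d_2-1)}$ be a fixed matrix whose columns form an orthonormal basis of $\ones_{d_2}^{\top}$'s orthogonal complement, chosen to be ``flat'' so that $\|W_{j,\cdot}\|_{2}^{2} = O(1)$ uniformly in $j \in [d_2]$ (for instance, by removing the constant row from a discrete Fourier or Hadamard basis). Let $U \in \reals^{d_1 \times r}$ have orthonormal columns with flat rows satisfying $\|U_{i,\cdot}\|_{2}^{2} = r/d_1$. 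For $\ell = 1,\ldots,N$ with $N$ chosen slightly larger than $M(\delta)$, draw $V^{(\ell)} \in \{-1,+1\}^{(d_2-1)\times r}$ with i.i.d.\ Rademacher entries, and set
\[
\Theta^{(\ell)} \;=\; \frac{\delta}{\sqrt{r(d_2-1)}}\; U\,(V^{(\ell)})^{\top}W^{\top}.
\]
All the structural requirements hold deterministically on each realization: the rank is at most $r$ through $U$; the row sums vanish because $W^{\top}\ones_{d_2}=0$; and using $U^{\top}U = I_r$, $W^{\top}W = I_{d_2-1}$ together with $V^{(\ell)} \in \{\pm 1\}^{(d_2-1)\times r}$, one has $\fnorm{\Theta^{(\ell)}}^{2} = c^{2}\fnorm{V^{(\ell)}}^{2} = \delta^{2}$ for every realization.

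What remains is to show that a random family of this form satisfies the pairwise separation and the entrywise bound simultaneously with positive probability. For the separation I would repeat the McDiarmid argument from the proof of Lemma \ref{lem:graph_packing}: the function $f(V^{(\ell_1)},V^{(\ell_2)}) = \fnorm{V^{(\ell_1)} - V^{(\ell_2)}}^{2}$ depends on $2r(d_2-1)$ independent Rademacher variables with per-coordinate bounded difference $\leq 4$ and mean $2r(d_2-1)$, so McDiarmid gives $\prob{f < r(d_2-1)} \leq \exp(-r(d_2-1)/16)$; after rescaling this becomes $\prob{\fnorm{\Theta^{(\ell_1)} - \Theta^{(\ell_2)}} < \delta} \leq \exp(-rd_{2}/C_{1})$ for an absolute constant $C_1$. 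For the entrywise bound I would use Hoeffding: each $\Theta^{(\ell)}_{ij} = c\sum_{k,m}U_{ik}W_{jm}V^{(\ell)}_{mk}$ is a weighted Rademacher sum whose sub-Gaussian parameter equals $c\,\|U_{i,\cdot}\|\,\|W_{j,\cdot}\|$, which, thanks to the flatness of $U$ and $W$, is $O(\delta/\sqrt{d_1 d_2})$, giving $\prob{|\Theta^{(\ell)}_{ij}| > t} \leq 2\exp(-C_{2}t^{2}d_{1}d_{2}/\delta^{2})$. Taking $t = (8\delta/d_{2})\sqrt{2\log d}$ and union-bounding simultaneously over the $\binom{N}{2}$ pairs for separation and over the $Nd_{1}d_{2}$ entries for the $\ell_\infty$ bound produces positive probability that a single realization of the family satisfies every requirement; a standard pruning/Chernoff argument then extracts a deterministic sub-family of size at least $M(\delta)$.

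The main obstacle I anticipate is the entrywise $\ell_\infty$ control, which is very close to the elementary lower bound $\fnorm{\Theta}/\sqrt{d_{1}d_{2}} \leq \lnorm{\Theta}{\infty}$: achieving $\tilde\bb = (8\delta/d_{2})\sqrt{2\log d}$ simultaneously with cardinality $\exp(rd_{2}/576)$ forces the construction to spread its mass as uniformly as possible (hence the orthonormal, small-entry choices of $U$ and $W$), and the specific constants $576$ and $8\sqrt{2\log d}$ arise exactly from balancing the exponent in the McDiarmid separation bound against the exponent in the Hoeffding union bound over entries. The dimensional regime $d_2 \geq d_1 \geq 607$ in the hypothesis is what makes this balance feasible, and the careful bookkeeping of these two tail bounds against one another is where I expect almost all of the technical effort to concentrate.
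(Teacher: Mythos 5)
Your construction is essentially the paper's own: the paper also sets $\Theta^{(\ell)} = \frac{\delta}{\sqrt{rd_2}}U(V^{(\ell)})^T\big(\id - \frac{1}{d_2}\ones\ones^T\big)$ with i.i.d.\ Rademacher $V^{(\ell)}$, proves the pairwise separation by the identical McDiarmid argument on $\fnorm{V^{(\ell_1)}-V^{(\ell_2)}}^2$, establishes the entrywise bound only with per-matrix probability $1/2$, and then prunes to a sub-family of size $M'/4$ --- which is exactly where the factor $1/4$ in $M(\delta)$ comes from. Your two deviations (a fixed flat orthonormal $W$ spanning $\ones^\perp$ in place of the centering projection, and a deterministic flat $U$ so that Hoeffding replaces Levy's spherical concentration for the entries) are simplifications rather than a different route; they make the entries honest weighted Rademacher sums, and note that any orthonormal basis $W$ of $\ones^\perp$ is automatically flat since $WW^\top = \id - \frac1{d_2}\ones\ones^\top$ has constant diagonal, while a flat $U$ exists by Schur--Horn.

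The one step you should not wave through is the quantitative closing of the $\ell_\infty$ union bound. Your own computation gives per-entry sub-Gaussian parameter $\delta/\sqrt{d_1d_2}$, so at $t = (8\delta/d_2)\sqrt{2\log d}$ the tail is $\exp\big(-64\,(d_1/d_2)\log d\big)$, and the union bound over $d_1d_2$ entries only closes when $d_1$ and $d_2$ are comparable up to a logarithmic factor. Indeed, for $d_2 \gg d_1 \log d$ the stated $\tilde\bb$ falls below $\delta/(2\sqrt{d_1d_2})$, which every matrix with $\fnorm{\Theta^{(\ell)}}\geq \delta/2$ (forced for all but one member of the family by the separation property) must exceed, so no construction can do better there. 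This is a defect of the lemma as stated rather than of your plan --- the paper's Levy step writes $\exp(-d_2t^2/(8r))$ where the ambient dimension of the rows of $U$ is $d_1$, so it too really only delivers $\lnorm{\Theta^{(\ell)}}{\infty}\lesssim (\delta/\sqrt{d_1d_2})\sqrt{\log d_2}$ --- but you should either record the hypothesis $d_2 = O(d_1\log d)$ or state the $\ell_\infty$ bound with $\sqrt{d_1d_2}$ in the denominator, which your Hoeffding calculation yields cleanly and which suffices for the downstream use in Theorem \ref{thm:kwise_lb}.
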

We omit the proof of the above lemma since it is similar to that of Lemma \ref{lem:graph_packing2}. Suppose $\delta\leq \alpha  d_2/(8\sqrt{2\log d}) $ such that 
the matrices in the packing set are entry-wise bounded by $\bb$,  then the above lemma 
implies that $\fnorm{\Theta^{(\ell_1)} - \Theta^{(\ell_2)}}^2 \leq4\delta^2$, which gives 
\begin{eqnarray*}
  \prob{ \hL\neq L } &\geq& 1- \frac{\frac{e^{2\bb} k \delta^2}{\bb^2 d_2} + \log 2}{\frac{rd}{576} - 2\log 2}  \;\geq\; \frac{1}{2}\;,
\end{eqnarray*}
where the last inequality holds for $\delta^2 \leq (\bb^2 d_2/(e^{2\bb} k))((rd/1152) -2\log 2)$.  
If we assume $rd \geq 3195$ for simplicity, 
this bound on $\delta$ can be simplified to $\delta\leq \bb e^{-\bb} \sqrt{r\,d_2\, d/(2304\, k)}$.
Together with \eqref{eq:lb_fano1} and \eqref{eq:lb_fano2}, this proves that for all $\delta\leq\min\{ \bb d_2  /(8\sqrt{2\log d}), \\\bb e^{-\bb}  \sqrt{\,r\,d_2\, d/(2304\, k)} \}$, 
\begin{eqnarray*}
  \inf_{\hTheta} \sup_{\Theta^*\in\Omega_{\bb}} \E\Big[\, \fnorm{\hTheta-\Theta^*} \,\Big] &\geq& \frac{\delta}{4}\;.
\end{eqnarray*}
Choosing $\delta$ appropriately to maximize the right-hand side finishes the proof of the desired claim.

\section{Proof of Pairwise Rank Breaking in Theorem \ref{thm:pairwise_ub}}
\label{sec:pairwise-theorem}

Analogous to Section \ref{sec:kwise_ub_proof}, we define the gradient  $\nabla \calL (\Theta)$ as $\nabla_{ij} \calL = \frac{\partial\calL(\Theta)}{\partial \Theta_{ij}}$ and $\Delta \equiv \hat{\Theta} - \Theta^*$, and  
provide two main technical lemmas. 

\begin{lemma}
  \label{lem:pairwise_deltabound} 
  If $\lambda\geq 2 \lnorm{\nabla\calL (\Theta^*)}{2}$, then we have,
  \begin{align}
  \nucnorm{\Delta} &\leq 4\sqrt{2 r} \fnorm{\Delta} + 4 \sum^{\min\{d_1,d_2\}}_{j = \rho+1}\sigma_j(\Theta^*) \;, \label{eq:pairwise_deltabound}
  \end{align}
for all $\rho \in[\min\{d_1,d_2\}]$.\
\end{lemma}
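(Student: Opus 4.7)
The plan is to mirror the argument already carried out for Lemma \ref{lmm:graph_deltabound} in Appendix \ref{sec:graph_deltabound}, the only real difference being that here we work directly with $\Theta^*$ rather than with $\Theta^* L^{1/2}$, so no graph Laplacian weighting appears. The role of the weighted nuclear norm $\Lnucnorm{\cdot}$ is simply played by $\nucnorm{\cdot}$, and the role of $\Lnorm{\cdot}$ by $\fnorm{\cdot}$.

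First I would set up the subspace decomposition tied to the top-$r$ singular structure of $\Theta^*$. Write the SVD $\Theta^* = U\Sigma V^\top$, let $U_r$ and $V_r$ collect the top-$r$ left and right singular vectors, and define $T$ to be the subspace of matrices of the form $U_r A^\top + B V_r^\top$. Then $\calP_T(M) = U_rU_r^\top M + MV_rV_r^\top - U_rU_r^\top MV_rV_r^\top$ and $\calP_{T^\perp}(M) = (I - U_rU_r^\top)M(I - V_rV_r^\top)$. Split $\Delta = \calP_T(\Delta) + \calP_{T^\perp}(\Delta) \equiv \Delta' + \Delta''$. The point is that $\calP_T(\Theta^*) = U_r\Sigma_r V_r^\top$ has row and column spaces orthogonal to $\Delta''$, so by Lemma 2.3 of \cite{RFP10},
\begin{equation*}
\nucnorm{\calP_T(\Theta^*) - \Delta''} = \nucnorm{\calP_T(\Theta^*)} + \nucnorm{\Delta''}.
\end{equation*}

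Next I would apply the triangle inequality three times, exactly as in \eqref{eq:graph_nucleardecomp}, to lower bound $\nucnorm{\hat\Theta}$ by
\begin{equation*}
\nucnorm{\Theta^*} + \nucnorm{\Delta''} - 2\nucnorm{\calP_{T^\perp}(\Theta^*)} - \nucnorm{\Delta'}.
\end{equation*}
On the other hand, since $\hat\Theta$ minimizes $-\calL(\Theta) + \lambda \nucnorm{\Theta}$ over $\Omega_\bb$ and $-\calL$ is convex, together with Cauchy--Schwarz we get
\begin{equation*}
\lambda\bigl(\nucnorm{\hat\Theta} - \nucnorm{\Theta^*}\bigr) \le -\calL(\Theta^*) + \calL(\hat\Theta) \le \llangle \nabla\calL(\Theta^*), \Delta \rrangle \le \lnorm{\nabla\calL(\Theta^*)}{2}\,\nucnorm{\Delta} \le \tfrac{\lambda}{2}\nucnorm{\Delta},
\end{equation*}
where the last step uses the hypothesis $\lambda \ge 2\lnorm{\nabla\calL(\Theta^*)}{2}$.

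Combining these two displayed inequalities yields
\begin{equation*}
2\bigl(\nucnorm{\Delta''} - 2\nucnorm{\calP_{T^\perp}(\Theta^*)} - \nucnorm{\Delta'}\bigr) \le \nucnorm{\Delta} \le \nucnorm{\Delta'} + \nucnorm{\Delta''},
\end{equation*}
which rearranges to $\nucnorm{\Delta''} \le 3\nucnorm{\Delta'} + 4\nucnorm{\calP_{T^\perp}(\Theta^*)}$, and then by the triangle inequality $\nucnorm{\Delta} \le 4\nucnorm{\Delta'} + 4\nucnorm{\calP_{T^\perp}(\Theta^*)}$. Finally, since $\Delta' = U_rU_r^\top \Delta + (I - U_rU_r^\top)\Delta V_r V_r^\top$ is a sum of two rank-$r$ matrices, $\Delta'$ has rank at most $2r$, so $\nucnorm{\Delta'} \le \sqrt{2r}\,\fnorm{\Delta'} \le \sqrt{2r}\,\fnorm{\Delta}$. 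Noting $\nucnorm{\calP_{T^\perp}(\Theta^*)} = \sum_{j=\rho+1}^{\min\{d_1,d_2\}} \sigma_j(\Theta^*)$ closes the proof.

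There is no real obstacle here: the argument is purely algebraic once the subspace decomposition is set up, and it is structurally identical to the graph-sampling version with the weighting stripped away. The only thing to double-check is that the gradient bound hypothesis $\lambda \ge 2\lnorm{\nabla\calL(\Theta^*)}{2}$ is consistent with the choice of $\lambda_0$ in \eqref{eq:pairwise-lambda} used elsewhere in Appendix \ref{sec:pairwise-theorem}; this consistency is what drives the choice of constant $2\sqrt{32(c+4)}$ in Theorem \ref{thm:pairwise_ub}, and is established by a separate gradient-concentration lemma analogous to Lemma \ref{lmm:kwise_gradient2}.
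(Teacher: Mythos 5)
Your proof is correct and is essentially the paper's own argument: the paper dispatches this lemma by pointing to Lemma \ref{lmm:kwise_deltabound}, whose proof is in turn deferred to the detailed proof of Lemma \ref{lmm:graph_deltabound}, and that proof is exactly the subspace decomposition, the RFP Lemma 2.3 additivity step, the triangle-inequality chain, and the optimality-plus-convexity bound that you reproduce with the Laplacian weighting stripped away. Nothing further is needed.
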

  \begin{proof}
    This follows from the proof of Lemma \ref{lmm:kwise_deltabound}, which  only depends on the convexity of $\calL(\Theta)$. 
  \end{proof}

\begin{lemma}
  \label{lem:pairwise_deriv} 
  For any positive constant $c \geq 1$,  if $k \leq \max\{d_1, d_2^2/d_1\} \log d \text{ and } d_1 \geq 4$  then  with probability at least $1 - 2d^{-c}$,
  \begin{align}
     \lnorm{\nabla \calL(\Theta^\ast)}{2} 
      \;\leq \;
      \sqrt{\frac{ 16 (c+4)\log d } { k\,d_1^2  }}   
      \max \left\{ \sqrt{ \max \left\lbrace \frac{1}{4}, \frac{d_1}{d_2}\right\rbrace}  \,,\, \frac{2 }{3} \sqrt{\frac{2(c+4)\log d  }{k}} \right\} \;. \label{eq:pairwise-deriv-ineq}
  \end{align}
% \begin{align}
%   & \lnorm{\nabla \calL_{p}(\Theta^\ast)}{2} 
%     \;\leq \nonumber  \\
%     &\sqrt{\frac{ 32 [(1+c)\log d + \log (k-1)]} { k\,d_1^2  }}   
%     \max \left\{ \sqrt{\frac{1}{2} \max \left\lbrace \frac{1}{4}, \frac{d_1}{d_2}\right\rbrace}  \,,\, \frac{2\min \{e^{2\bb}/2,1 \}}{3} \sqrt{\frac{[(1+c)\log d + \log (k-1)]}{k}} \right\} \;. \label{eq:pairwise-deriv-ineq}
% \end{align}
\end{lemma}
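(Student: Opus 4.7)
The strategy closely parallels that of Lemma~\ref{lmm:kwise_gradient2}: decompose the gradient $\nabla\calL(\Theta^*)$ as a sum of independent, mean-zero random matrices indexed by users, and control its spectral norm via a non-commutative matrix Bernstein concentration inequality \cite{Jo11}. Differentiating the pairwise log-likelihood \eqref{eq:pairwise-LL}, each rank-broken pair $(m_1, m_2)$ within user $i$ contributes a rank-one matrix of the form $(y_{i,m_1,m_2} - p^*_{i,m_1,m_2})\,e_i(e_{u_{i,m_1}} - e_{u_{i,m_2}})^\top/\binom{k}{2}$, where $p^*_{i,m_1,m_2} = e^{\Theta^*_{i,u_{i,m_1}}}/(e^{\Theta^*_{i,u_{i,m_1}}} + e^{\Theta^*_{i,u_{i,m_2}}})$ is the Luce pairwise probability and $y_{i,m_1,m_2} = \indc{h_i(m_1,m_2) = u_{i,m_1}}$. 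Bundling all $\binom{k}{2}$ pair contributions within user $i$ into a single random matrix $X_i$, I obtain $\nabla\calL(\Theta^*) = (1/d_1)\sum_{i=1}^{d_1} X_i$ with $\E[X_i] = 0$ (by Luce's choice axiom applied to each pair) and the $X_i$'s mutually independent across users; crucially, the pair contributions \emph{within} a fixed $X_i$ are not mutually independent, but this is irrelevant once each user's contribution is treated as a single summand.

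To apply matrix Bernstein I need a high-probability bound $R$ on $\opnorm{X_i}$ and variance proxies $\opnorm{\sum_i \E[X_i X_i^\top]}$ and $\opnorm{\sum_i \E[X_i^\top X_i]}$. Since $X_i$ is supported on row $i$ only, $\opnorm{X_i}$ equals $\|\rho_i\|/\binom{k}{2}$, where the $j$-th coordinate of $\rho_i$ is a sum of at most $c_j^{(i)}(k - c_j^{(i)}) + \binom{c_j^{(i)}}{2}$ terms of magnitude $\le 1$, with $c_j^{(i)}$ the multiplicity of item $j$ in user $i$'s sample. Combining a multiplicity concentration bound analogous to Lemma~\ref{lem:balls_bins} with the fact that $\rho_i$ has at most $\min\{k, d_2\}$ nonzero coordinates yields the required control on $R$ throughout the admissible regime $k \le \max\{d_1, d_2^2/d_1\}\log d$.

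For the variance proxies, I would observe that $\sum_i \E[X_i X_i^\top]$ is diagonal with entries $\E[\fnorm{X_i}^2]$, while $\sum_i \E[X_i^\top X_i]$ commutes (in expectation) with the $d_2$-dimensional item permutation symmetry and is therefore proportional to $\id - \ones\ones^\top/d_2$, whose top eigenvalue I would pin down by computing a single diagonal entry of $\E[\rho_i \rho_i^\top]$ via direct enumeration. The key technical calculation is $\E[\fnorm{X_i}^2] = O(1/k)$, which uses the Gumbel random-utility representation of MNL: any two pairs $(m_1, m_2)$ and $(m_1', m_2')$ with disjoint indices are measurable functions of disjoint independent Gumbel variables and are therefore independent, so only the $O(k^3)$ pair-pair covariances that share at least one index contribute nontrivially, not all $O(k^4)$ of them. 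Substituting $\sigma^2 = O(\max\{1, d_1/d_2\}/k)$ into matrix Bernstein at deviation level $t = \max\{\sqrt{2(c+4)\sigma^2 \log d},\,(2/3)(c+4) R \log d\}$ produces a tail probability at most $(d_1+d_2)\exp(-(c+4)\log d) \le 2 d^{-c}$, and the two arguments of the maximum in \eqref{eq:pairwise-deriv-ineq} correspond exactly to these two sub-terms after rescaling by $1/d_1$.

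The main obstacle is the within-user dependence of the $\binom{k}{2}$ rank-broken pairs, which naively threatens to inflate $\E[\fnorm{X_i}^2]$ by a factor of $k$ and thus produce a bound weaker than $\lambda_0$ by a $\sqrt{k}$ factor. The RUM representation resolves this: disjoint-index pairs are independent, so only $O(k^3)$ of $O(k^4)$ pair-pair covariances are nonzero, giving exactly the $O(1/k)$ second moment needed to match $\lambda_0$. A secondary subtlety is that the admissible range for $k$ in the lemma extends beyond $d_2$, so the multiplicity bound from Lemma~\ref{lem:balls_bins} is not directly applicable when $k \gg d_2$; in that regime one instead uses the law-of-large-numbers bound $c_j^{(i)} = O(k/d_2)$ together with concentration to control $R$, which remains sharp enough to yield the claim.
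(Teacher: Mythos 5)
Your high-level plan (matrix Bernstein plus the Gumbel/RUM observation that rank-broken pairs with disjoint position indices are independent) is the right family of ideas, but the specific decomposition you chose --- bundling all ${k \choose 2}$ pair contributions of user $i$ into a single summand $X_i$ and applying Bernstein to the $d_1$ independent matrices $X_i$ --- cannot recover the lemma as stated. The obstruction is the range parameter. The second argument of the max in \eqref{eq:pairwise-deriv-ineq} multiplies out to $\tfrac{2\sqrt{32}(c+4)\log d}{3\,k\,d_1}$, i.e.\ it scales as $1/k$; in your scheme this term is $(2/3)(c+4)R\log d/d_1$ with $R\geq \lnorm{X_i}{2}$, so you would need $R=O(1/k)$. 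But $X_i$ is a normalized sum of ${k\choose 2}$ matrices of spectral norm up to $\sqrt 2$ with only partial cancellation, and its typical size is already $\sqrt{\E\fnorm{X_i}^2}=\Theta(1/\sqrt k)$, so no almost-sure (or high-probability) bound better than $O(1/\sqrt k)$ is available. Worse, your actual route to controlling $R$ --- multiplicities $c_j^{(i)}=O(\log d)$ and at most $\min\{k,d_2\}$ nonzero coordinates --- gives $R=O(\sqrt{\log d/k}\cdot\sqrt{\log d})$ at best, costs an extra conditioning event whose failure probability is not of the form $2d^{-c}$, and makes the range term $\gtrsim (\log d)^{3/2}/(d_1\sqrt k)$. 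In the typical regime $d_1\lesssim d_2$ this term dominates the variance term $\sqrt{\max\{1/4,d_1/d_2\}\log d/k}/d_1$, so your final bound is weaker than \eqref{eq:pairwise-deriv-ineq} by a factor of order $\log d$, and your claim that the two sub-terms of your Bernstein threshold "correspond exactly" to the two arguments of the max in the lemma is not correct.

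The paper avoids this entirely by exploiting the disjoint-pair independence in a different way: it partitions $\calP_0$ into $k-1$ round-robin rounds $\calP_a$ of $\lfloor k/2\rfloor$ mutually disjoint pairs, applies matrix Bernstein separately to each $\tY_a=\sum_i\sum_{(m_1,m_2)\in\calP_a}\tX_{i,m_1,m_2}$ (each summand has deterministic norm $\le\sqrt 2$ and per-round variance $k\max\{1/4,d_1/d_2\}$), and then combines via a union bound and the triangle inequality, so the effective normalization is $(k-1)/(d_1{k\choose 2})=2/(d_1k)$. That is what produces the $1/k$ scaling of the second term with clean constants and failure probability exactly $2d^{-c}$. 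A secondary inaccuracy in your writeup: $\sum_i\E[X_i^\top X_i]$ is \emph{not} proportional to $\id-\ones\ones^\top/d_2$ by permutation symmetry, since $\Theta^*$ breaks the item symmetry; one must instead establish a PSD domination $\E[\tX^\top\tX]\preceq (2/d_2)\id$ by direct computation, as the paper does. Your covariance-counting argument showing $\E\fnorm{X_i}^2=O(1/k)$ is correct and would give the right first (variance) term, but to prove the lemma as stated you should switch to the round-robin partition for the concentration step.
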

The proof of this lemma is provided in Section \ref{sec:pairwise_deriv_proof}. 
We will simplify the above lemma by assuming, $2(c+4) \log d \leq k$ which implies the last term in RHS is less than equal to first term,

\begin{align}
 \frac{2 }{3} \sqrt{\frac{2(4+c)\log d  }{k}} \leq \sqrt{\frac{1}{4}}\;. \label{eq: pairwise-max-eliminate}
%\frac{2\min \{e^{2\bb}/2,1 \}}{3} \sqrt{\frac{[(1+c)\log d + \log (k-1)]}{k}} &\overset{(a)}{\leq} \frac{2}{3} \sqrt{\frac{[(1+c)\log d + \log (k-1)]}{k}} \nonumber \\
%&\overset{(b)}{\leq} \frac{2}{3} \sqrt{\frac{[(1+c)\log d + 3 \log d]}{k}} \overset{(c)}{\leq} \sqrt{\frac{1}{8}}\;, \label{eq: pairwise-max-eliminate}
\end{align}

\eqref{eq: pairwise-max-eliminate} simplifies \eqref{eq:pairwise-deriv-ineq} as,
\begin{align}
\lnorm{\nabla \calL(\Theta^\ast)}{2} 
      &\leq \sqrt{\frac{ 16(c+4)\log d} { k\,d_1^2  } \max \left\lbrace \frac{1}{4}, \frac{d_1}{d_2}\right\rbrace} \nonumber \\
      &\leq \sqrt{\frac{ 32 d\;(c+4)\log d} { k\,d_1^2\;d_2}} \nonumber \\      &\overset{(a)}{\leq} \sqrt{32(c+4)} \lambda\;,
\end{align}
where $(a)$ is due to \eqref{eq:pairwise-lambda} .

For Lemma \ref{lem:pairwise_deltabound} and further proof of Theorem\ref{thm:pairwise_ub} we want $\lambda \geq 2\lnorm{\nabla \calL(\Theta)}{2}$, therefore we assume that,
\begin{align}
\lambda \in [2\sqrt{32 (c+4)} \lambda, c_p \lambda], \text{ for some } c_p \geq 2\sqrt{32(c+4)} \label{eq: pairwise_cp}
\end{align}

Similar to the k-wise ranking,we will divide the proof into two cases and each part we will prove that $\fnorm{\Delta}^2 \leq 36 e^{2 \alpha}\;c\;\lambda\;d_1d_2\;\nucnorm{\Delta}$ with probability at least $1 - 2/d^c - 2/d^{2^{13}}$. We define a new constant $\mu$ as,
\begin{align}
\mu = 16 \alpha \sqrt{\frac{48\;d_1 d_2^2 \log d}{k\; \min\{d_1,d_2\}}} \label{eq:pairwise-mu}\;.
\end{align}

\textbf{Case 1: Assume $\mu \nucnorm{\Delta} \leq \fnorm{\Delta}^2 $}.\\
Since $\calL$ is a sum of a linear function of $\Theta$ and log-sum-exponential functions, which are convex, we know that $\calL$ is a convex function of $\Theta$. Therefore, by convexity and Taylor expansion we get,
\begin{align}
\label{eq: taylor-pairwise}
\calL(\hat{\Theta}) &= \calL(\Theta^*) - {\llangle\nabla \calL(\Theta^*), \Delta \rrangle}\;\; + \\
& \frac{1}{2!\;d_1 {k \choose 2}}\sum_{i=1}^{d_1} \sum_{(m_1, m_2) \in \calP_0} \frac{e^{\Theta_{i, u_{i,m_1}}}e^{\Theta_{i, u_{i,m_2}}}}{\Big(e^{\Theta_{i,u_{i,m_1}}} + e^{\Theta_{i,u_{i,m_2}}}\Big)^2} \Big(\Delta_{i, u_{i,m_1}} - \Delta_{i, u_{i,m_2}}\Big)^2 \nonumber\;,
\end{align}
where $\Theta = a\Theta^* + (1-a)\hat{\Theta}$ for some $a \in [0,1]$ and $\calP_0 = \{(i,j) |\;\; 1 \leq i < j \leq k\}$. We lower bound the final term in \eqref{eq: taylor-pairwise} as,
\begin{align}
\frac{1}{2!\;d_1 {k \choose 2}}\sum_{i=1}^{d_1} \sum_{(m_1, m_2) \in \calP_0} &\frac{e^{\Theta_{i, u_{i,m_1}}}e^{\Theta_{i, u_{i,m_2}}}}{\Big(e^{\Theta_{i, u_{i,m_1}}} + e^{\Theta_{i, u_{i,m_2}}}\Big)^2} \Big(\Delta_{i, u_{i,m_1}} - \Delta_{i, u_{i,m_2}}\Big)^2 \nonumber \\ 
&\overset{(a)}{\geq} \frac{1}{2\;d_1 {k \choose 2}}\sum_{i=1}^{d_1} \sum_{(m_1, m_2) \in \calP_0} \frac{e^{-\alpha}e^{\alpha}}{(e^{-\alpha}+e^{\alpha})^2} \Big(\Delta_{i, u_{i,m_1}} - \Delta_{i, u_{i,m_2}}\Big)^2 \nonumber \\
&\geq \frac{1}{2\;d_1 {k \choose 2}}\sum_{i=1}^{d_1} \sum_{(m_1, m_2) \in \calP_0} \frac{e^{-2\alpha}}{4}\Big(\Delta_{i, u_{i,m_1}} - \Delta_{i, u_{i,m_2}}\Big)^2\;, \label{eq:pairwise-hess-lower}
\end{align}
where $(a)$ is due to the fact that $\Delta_{ij}$'s are upper and lower bounded by $\alpha$ and $-\alpha$ respectively. We can bound this term further according to the following Lemma.

\begin{lemma}
  \label{lem:pairwise_hessian} 
  For $(4 \log d)/9 \leq k \leq \max \{d_1,d_2^2/d_1 \} \log d$, with probability at least $1 - 2d^{-2^{13}}$,
  \begin{align}
  \frac{1}{d_1 {k \choose 2}}\sum_{i=1}^{d_1} \sum_{(m_1, m_2) \in \calP_0} \Big(\Delta_{i, u_{i,m_1}} - \Delta_{i, u_{i,m_2}}\Big)^2 \geq \frac{1}{3d_1d_2} \fnorm{\Delta}^2\;,
  \end{align}
for all $\Delta \in \calA_p$ where,
  \begin{align}
  \label{eq:pairwise-A-set}
  \calA =\Bigg\lbrace \Delta \in \mathbb{R}^{d_1 \times d_2}\;\Big|\; \lnorm{\Delta}{\infty} \leq 2\alpha,\; \sum_{j \in [d_2]} \Delta_{ij} = 0 \text{ , for all } i \in [d_2],\text{ and } \mu \nucnorm{\Delta} \leq \fnorm{\Delta}^2 \Bigg\rbrace\;.
  \end{align}
\end{lemma}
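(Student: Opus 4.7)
The plan is to establish this restricted strong convexity statement by following the same template as the proof of Lemma \ref{lmm:kwise_hessian2}: compute the expectation, peel the set $\calA$ into Frobenius-norm shells, and inside each shell control the worst-case deviation using McDiarmid's inequality together with symmetrization. Write $H(\Delta) \equiv \frac{1}{d_1 {k \choose 2}} \sum_i \sum_{(m_1,m_2)\in\calP_0} (\Delta_{i,u_{i,m_1}} - \Delta_{i,u_{i,m_2}})^2$. Since the $u_{i,m}$'s are drawn i.i.d.\ uniformly over $[d_2]$ and each row of $\Delta \in \calA$ sums to zero, for any fixed pair $(m_1,m_2)$,
\begin{eqnarray*}
\E[(\Delta_{i,u_{i,m_1}} - \Delta_{i,u_{i,m_2}})^2] \;=\; \frac{2}{d_2}\sum_j \Delta_{ij}^2 - \frac{2}{d_2^2}\Big(\sum_j \Delta_{ij}\Big)^2 \;=\; \frac{2}{d_2}\sum_j \Delta_{ij}^2,
\end{eqnarray*}
so that $\E[H(\Delta)] = (2/(d_1 d_2))\,\fnorm{\Delta}^2$. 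Thus the target bound $(1/(3 d_1 d_2))\fnorm{\Delta}^2$ leaves a gap of $(5/(3 d_1 d_2))\fnorm{\Delta}^2$ for the concentration deviation to absorb.

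For the peeling step I would cover $\calA$ by shells $\calS_\ell = \{\Delta \in \calA : \beta^{\ell-1}\mu \le \fnorm{\Delta} \le \beta^\ell \mu\}$ for $\beta = \sqrt{10/9}$, using the fact that $\Delta \in \calA$ implies $\fnorm{\Delta} \ge \mu$ (from $\mu \nucnorm{\Delta} \le \fnorm{\Delta}^2$ and $\fnorm{\Delta} \le \nucnorm{\Delta}$). Each shell is contained in the set $\calB(D) \equiv \{\Delta : \lnorm{\Delta}{\infty} \le 2\alpha, \sum_j \Delta_{ij}=0, \fnorm{\Delta}\le D, \mu\nucnorm{\Delta} \le D^2\}$ at $D = \beta^\ell \mu$, so by a union bound it suffices to prove that $Z_D \equiv \sup_{\Delta \in \calB(D)} (\E[H(\Delta)] - H(\Delta)) \le D^2/(d_1 d_2)$ with probability at least $1 - \exp(-\Omega(k D^4/(\alpha^4 d_1 d_2^2)))$, then sum the resulting geometric series in $\ell$. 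The choice of $\mu$ in \eqref{eq:pairwise-mu} is calibrated so that this sum is dominated by $d^{-2^{13}}$ when $k \le \max\{d_1,d_2^2/d_1\}\log d$.

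To prove the concentration of $Z_D$ I would first apply McDiarmid: changing a single $u_{i,m}$ affects only the $k-1$ pairs containing index $m$ in user $i$, each contributing at most $4\alpha^2/(d_1 {k \choose 2})$, so the bounded difference is $O(\alpha^2/(d_1 k))$ across $d_1 k$ independent variables, yielding $\prob{Z_D - \E[Z_D] \ge t} \le \exp(-\Omega(t^2 d_1 k / \alpha^4))$. Next I bound $\E[Z_D]$ by symmetrization, and here the main technical obstacle arises: the summands indexed by $(m_1,m_2) \in \calP_0$ are \emph{not} independent because pairs sharing a common index $m$ reuse the same random draw $u_{i,m}$. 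The remedy, mirroring Lemma \ref{lmm:kwise_partition}, is to partition $\calP_0$ into $O(k)$ matchings $M_1,\ldots,M_N$ via a standard round-robin tournament schedule, so that within each $M_t$ no index $m$ appears twice and the variables $\{(\Delta_{i,u_{i,m_1}} - \Delta_{i,u_{i,m_2}})^2 : i\in[d_1], (m_1,m_2)\in M_t\}$ are mutually independent. Applying Rademacher symmetrization within each $M_t$, then the Ledoux-Talagrand contraction (using $|\Delta_{i,u_{i,m_1}} - \Delta_{i,u_{i,m_2}}|\le 2\alpha$), followed by H\"older's inequality, reduces the problem to bounding $\E \lnorm{\sum_i \sum_{(m_1,m_2)\in M_t} \varepsilon_{i,m_1,m_2}\, e_i (e_{u_{i,m_1}} - e_{u_{i,m_2}})^\top}{2}$.

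This spectral-norm expectation is handled by matrix Bernstein: each summand has operator norm $\sqrt{2}$, and a routine computation gives $\E[WW^\top] \preceq 2 e_i e_i^\top$ and $\E[W^\top W] \preceq (2/d_2)\id$, leading to the bound $O(\sqrt{|M_t| d_1 \log d/\min\{d_1,d_2\}})$. Summing over the $O(k)$ matchings and using $\nucnorm{\Delta} \le D^2/\mu$, I obtain $\E[Z_D] = O(\alpha \sqrt{d_1 \log d / (k \min\{d_1,d_2\})} \cdot D^2/\mu)$. Plugging in $\mu = 16\alpha \sqrt{48 d_1 d_2^2 \log d/(k \min\{d_1,d_2\})}$ from \eqref{eq:pairwise-mu} makes this at most $D^2/(2 d_1 d_2)$, and combining with the McDiarmid deviation (at $t = D^2/(2 d_1 d_2)$) gives $Z_D \le D^2/(d_1 d_2)$ with the required probability. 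Summing the tail probabilities across shells, the geometric sum is dominated by the first term thanks to $\beta^{4\ell} \ge 4\ell(\beta-1)$, and the stated probability $1 - 2d^{-2^{13}}$ follows under the assumption on $k$. The delicate step is the matching construction and the contraction/Bernstein bookkeeping that must deliver the \emph{square-root} dependence on $k$ to match $\mu$; a naive bound that keeps the $k^2$ pairs dependent would lose this factor.
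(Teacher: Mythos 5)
Your proposal follows essentially the same route as the paper's proof: compute $\E[H(\Delta)]=2\fnorm{\Delta}^2/(d_1d_2)$ using the zero row sums, peel $\calA$ into Frobenius-norm shells with $\beta=\sqrt{10/9}$, control each shell via McDiarmid's bounded differences plus a symmetrization in which $\calP_0$ is partitioned round-robin into $O(k)$ matchings of independent pairs, and finish with Ledoux--Talagrand contraction, H\"older, and matrix Bernstein. The constants differ slightly from the paper's (which targets a deviation of $\tfrac{3}{2}D^2/(d_1d_2)$ per shell), but the argument and all key ingredients are the same.
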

The proof is given in  Section \ref{sec:pairwise_hessian_proof}. Now using Lemma \ref{lem:pairwise_hessian} and \eqref{eq:pairwise-hess-lower} with high probability we get,
\begin{align}
\frac{1}{2!\;d_1 {k \choose 2}}\sum_{i=1}^{d_1} \sum_{(m_1, m_2) \in \calP_0} &\frac{e^{\Theta_{i, u_{i,m_1}}}e^{\Theta_{i, u_{i,m_2}}}}{\Big(e^{\Theta_{i, u_{i,m_1}}} + e^{\Theta_{i, u_{i,m_2}}}\Big)^2} \Big(\Delta_{i, u_{i,m_1}} - \Delta_{i, u_{i,m_1}}\Big)^2 \geq \frac{e^{-2\alpha}}{24\;d_1\;d_2} \fnorm{\Delta}^2\;.
\end{align}

Incorporating the above inequality in \eqref{eq: taylor-pairwise} we obtain,
\begin{align}
\frac{e^{-2\alpha}}{24\;d_1\;d_2} \fnorm{\Delta}^2 \leq \calL(\hat{\Theta}) - \calL(\Theta^*) + {\llangle\nabla \calL(\Theta^*), \Delta \rrangle}\;.
\end{align}

From the definition of $\hat{\Theta}$ we have $\calL(\hat{\Theta}) - \calL(\Theta^*) \leq \lambda\left(\nucnorm{\Theta^*} - \nucnorm{\hat{\Theta}}\right) \leq \lambda \nucnorm{\Delta}$, and we assume that $\lambda \geq 2\sqrt{32(c+1)} \; \lambda$, so that $\lambda \geq 2 \lnorm{\nabla \calL \left( \Theta^* \right)}{2}$ is true with a probability of at least $1 - 2d^{-c}$ from Lemma \ref{lem:pairwise_deriv} . These give us the following with at least probability $1 - 2d^{-c} - 2d^{-2^{13}}$.
\begin{align}
\frac{e^{-2\alpha}}{24\;d_1\;d_2} \fnorm{\Delta}^2 &\leq \lambda \nucnorm{\Delta} + \lnorm{\nabla \calL(\Theta^*)}{2}\nucnorm{\Delta}\, \nonumber \\
&\leq \frac{3 \lambda}{2} \nucnorm{\Delta}
\end{align}
which gives us,
\begin{align}
\fnorm{\Delta}^2 &\leq 36 e^{2 \alpha}\;\lambda\;d_1d_2\;\nucnorm{\Delta}  \nonumber \\
&\overset{(a)}{\leq} 36 e^{2 \alpha}\;c_p\;\lambda\;d_1d_2\;\nucnorm{\Delta}\;,
\end{align}
where $(a)$ is due to the fact that $\lambda \leq c_p \lambda$.

\textbf{Case 2: Assume $\fnorm{\Delta}^2 \leq \mu \nucnorm{\Delta}$}.\\
Here we prove that $\mu \leq 36\;e^{2\alpha}\;c_p \lambda\;d_1d_2$.
\begin{align}
\frac{\mu}{36\;e^{2\alpha}\;c_p \lambda\;d_1d_2} 
&\overset{(a)}{\leq} \frac{\alpha}{e^{2\alpha}} \times \frac{16 \sqrt{48}}{72 \sqrt{32(c+4)}} \times \sqrt{\frac{d_1 d_2}{min\{d_1,d_2\}d}} \nonumber \\
&\overset{(b)}{\leq} 1 \times \frac{16 \sqrt{48}}{72 \sqrt{32 \times 4}} \times \sqrt{\frac{\max\{d_1,d_2\}}{d}} \nonumber \\
&\overset{(c)}{\leq} \sqrt{\frac{\max\{d_1,d_2\}}{2d}} \nonumber \\
&\overset{(d)}{\leq}1\;,
\end{align}
where $(a)$ is by substituting $\mu$, $\lambda$ and $c_p$ from \eqref{eq:pairwise-mu}, \eqref{eq:pairwise-lambda} and \eqref{eq: pairwise_cp} respectively, $(b)$ is because $x \leq e^x$ $(c)$ is because $d = (\max\{d_1,d_2\} + \min\{d_1,d_2\})/2$.

Now combining the above result with \eqref{eq:pairwise_deltabound} we get with probability at least $1 - 2d^{-c} - 2d^{-2^{13}}$,
\begin{align}
\frac{1}{d_1 d_2}\fnorm{\Delta}^2 &\leq 144\sqrt{2}e^{2\alpha}c_p \lambda \sqrt{r} \fnorm{\Delta} + 144e^{2\alpha}c_p \lambda \sum^{\min\{d_1,d_2\}}_{j = \rho+1}\sigma_j(\Theta^*) \;.
\end{align}

%====================================================================================================
\subsection{Proof of Lemma \ref{lem:pairwise_deriv}} 
\label{sec:pairwise_deriv_proof}
%Here we want to upper bound the $2$-norm of the derivative of the log likelihood at $\Theta = \Theta^*$. 
From definition of $\calL(\Theta)$ in \eqref{eq:pairwise-LL} we get, 
\begin{align}
\nabla \calL_{p}(\Theta^\ast) = \frac{1}{d_1 {k \choose 2}} \sum_{i = 1}^{d_1} \sum_{(m_1, m_2) \in \calP_{0}} \frac{e_i\left(e_{l_i\left(m_1, m_2\right)} - e_{h_i\left(m_1, m_2\right)}\right)^\top}{\exp\left(\Theta_{i, h_i\left(m_1,m_2\right)}^\ast - \Theta_{i, l_i\left(m_1,m_2\right)}^\ast\right) + 1} \;,
\end{align}
where $\calP_0 = \{(i,j) |\;\; 1 \leq i < j \leq k\}$. 
We use the matrix Bernstein inequality \cite{Jo11} for the sum of independent matrices. 
Similar to Lemma \ref{lmm:bundle_partition}, 
we can partition the set of all pairs $\calP_0$ into 
$(k-1)$ sets $\{\calP_a\}_{a\in[k-1]}$ of $k/2$ disjoint pairs each. 
%This helps us to use matrix Bernstein inequality. Therefore,
Define $Y_a \equiv \sum_{i = 1}^{d_1} \sum_{(m_1, m_2) \in \calP_a} \tX_{i, m_1, m_2 }$, and 
\begin{align*}
\tX_{i, m_1, m_2}  \;\;\equiv\;\;  \frac{\exp\left(\Theta_{i, l_i\left(m_1,m_2\right)}^\ast\right)}{\exp\left(\Theta_{i,  h_i\left(m_1,m_2\right)}^\ast\right) + \exp\left(\Theta_{i,  l_i\left(m_1,m_2\right)}^\ast\right)} e_i\left(e_{l_i\left(m_1, m_2\right)} - e_{h_i\left(m_1, m_2\right)}\right)^\top\;, 
\end{align*}
such that 
\begin{align}
\nabla \calL_{p}(\Theta^\ast) = \frac{1}{d_1 {k \choose 2}} \sum_{a = 1}^{k-1} \tY_a \;.  \label{eq:pairwise-partition} 
\end{align}

% where without loss of generality we assume that $m_1$ is ranked higher to $m_2$ and,
For a fixed value of $a$, it is easy to see that $\tX_{i,m_1,m_2}$'s are independent. Further, we can easily show that $\expect{\tX_{i, m_1, m_2}} = 0$, and $\|\tX_{i, m_1, m_2}\|_2 \leq \sqrt{2}$. We also have,
\begin{align}
&\expect{\tX_{i, m_1, m_2} \tX_{i, m_1, m_2}^\top} \nonumber \\
  & \preceq 2 \,e_ie_i^\top\, \expect{ \expect{\frac{\exp\left(\Theta_{i, l_i\left(m_1,m_2\right)}^\ast\right)^2}{\left(\exp\left(\Theta_{i, u_{i,m_1}}^\ast\right) + \exp\left(\Theta_{i, u_{i,m_2}}^\ast\right)\right)^2}\, \Bigg| \, u_{i,m_1},u_{i,m_1}  }} \nonumber \\
  &\overset{(a)}{=} 2\;e_ie_i^\top\, \expect{\frac{\exp\left(\Theta_{i u_{i,m_1}}^\ast\right)\exp\left(\Theta_{i u_{i,m_2}}^\ast\right)}{\left(\exp\left(\Theta_{i, u_{i,m_1}}^\ast\right) + \exp\left(\Theta_{i, u_{i,m_2}}^\ast\right)\right)^2}} \nonumber \\
  &\overset{(b)}{\preceq} \frac{1}{2} e_i e_i^\top  \label{eq:pairwise-xtx} \;,
\end{align}
where we get $(a)$ from the MNL model for the random choice of $l_i (m_1,m_2) $, $(b)$ is due to the fact that $xy/(x+y)^2 \leq 1/4$ for all $x,y>0$. 

Let $p_{i,m_1, m_2} \equiv \frac{\left(\exp(\Theta_{i, u_{i,m_1}}^\ast)e_{u_{i,m_1}} + \exp(\Theta_{i, u_{i,m_2}}^\ast)e_{u_{i,m_2}}\right)}{\left(\exp(\Theta_{i, u_{i,m_1}}^\ast) + \exp(\Theta_{i, u_{i,m_2}}^\ast)\right)}$, then we have,
\begin{align}
\expect{\tX_{i, m_1, m_2}^\top \tX_{i, m_1, m_2}}
  &= \expect{(e_{h_i\left(m_1,m_2\right)} - p_{i,m_1, m_2})(e_{h_i\left(m_1,m_2\right)} - p_{i,m_1, m_2})^\top} \nonumber \\
  &= \expect{e_{h_i\left(m_1,m_2\right)}e_{h_i\left(m_1,m_2\right)}^\top} - \expect{p_{i,m_1, m_2}p_{i,m_1, m_2}^\top} \nonumber \\
% &\overset{(a)}{\preceq} 2 \expect{e_{u_{i,m_1}}e_{u_{i,m_1}}^\top \mathbb{I}(l_i\left(m_1, m_2\right) = u_{i,m_1}) + e_{u_{i,m_2}}e_{u_{i,m_2}}^\top \mathbb{I}(l_i\left(m_1, m_2\right) = u_{i,m_2})}\nonumber \\
  &\overset{(a)}{\preceq} \expect{e_{u_{i,m_1}}e_{u_{i,m_1}}^\top + e_{u_{i,m_2}}e_{u_{i,m_2}}^\top} \nonumber \\ 
  & =\frac{2}{d_2}{\mathbf I}_{d_2 \times d_2} \label{eq:pairwise-xxt}  \;,
\end{align}
where $(a)$ comes from the fact that $p_{i,m_1, m_2}p_{i,m_1, m_2}^\top$ is a positive semi-definite matrix.
Therefore using \eqref{eq:pairwise-xtx} and \eqref{eq:pairwise-xxt}, we get % the standard deviation of $\sum_{(i,m_1,m_2)} \tX_{i,m_2,m_2}$ is,

\begin{align}
\sigma^2
  &\equiv %\max \left\lbrace \lnorm{\expect{Y_a Y_a^\top}}{2},\lnorm{\expect{Y_a^\top Y_a}}{2}\right\rbrace \nonumber \\
  \left \lbrace \lnorm{\sum_{\substack{i \in [d_1]\\ (m_1,m_2)\in \calP_a }} \expect{\tX_{i,m_1,m_2}\tX_{i,m_1,m_2}^\top}}{2},\lnorm{\sum_{\substack{i \in [d_1]\\ (m_1,m_2)\in \calP_a }} \expect{\tX_{i,m_1,m_2}^\top \tX_{i,m_1,m_2}}}{2}\right\rbrace \nonumber \\
% &\leq \max \left\lbrace \frac{d_1 k/2}{d_1} \, \frac{1}{2} \left(1 - \frac{1}{d_2}\right) , \frac{d_1 k}{2} \, \frac{2}{d_2}\right\rbrace \nonumber \\
  &\leq k \max \left\lbrace \frac{1}{4} , \frac{d_1}{d_2}\right\rbrace \;.
\end{align}
Define $\rho \equiv \max \left\lbrace 1/4 , d_1/d_2\right\rbrace$, then  
by the matrix Bernstein inequality \cite{Jo11}, $\forall
\;\;a \in [k-1]$,

\begin{align*}
  {\mathbb P} \Big( \lnorm{\tY_a}{2} > t \Big) \leq (d_1+d_2)
  \exp \Bigg( \frac{- t^2/2}{ k \rho + \sqrt{2} t/3} \Bigg)\;,
\end{align*}
which gives a tail probability of $2d^{-c}/(k-1)$ for the choice of
\begin{align}
  t =  \max \left\{  \sqrt{4 k \rho\, ((1+c)\log d + \log(k-1))} \,,\,  \frac{4\sqrt{2} ((1+c)\log d + \log(k-1))}{3}\right\}\;.
\end{align}
For this choice of $t$, 
 using union bound we can get the probabilistic bound on the derivative of log likelihood as, 
\begin{align}
  {\mathbb P} \Bigg(  \|\nabla \calL_{p}(\Theta^\ast)\|_2 \geq \frac{k-1}{d_1 {k \choose 2}}t \Bigg)
  &\leq {\mathbb P} \Bigg(  \sum_{a=1}^{k-1}\lnorm{\tY_a}{2} \geq (k-1)t \Bigg) \nonumber \\
  &\overset{(a)}{\leq} {\mathbb P} \Bigg(  \max_{a \in [k-1]}\lnorm{\tY_a}{2} \geq t \Bigg) \nonumber \\
  &\overset{(b)}{\leq}  \sum_{a=1}^{k-1} {\mathbb P} \Big(\lnorm{\tY_a}{2} \geq t \Big) \nonumber \\
% & \leq (k-1) \frac{2}{{(k-1)d^{c}}} \\
  & = 2\,d^{-c}\;,
\end{align}
where we obtain $(a)$ by pigeon-hole principle which implies that among a set of numbers, there should be, at the very least one number greater or equal to the average of the set of numbers and $(b)$ by union-bound. Assuming $k \leq  \max\{d_1,d_2^2/d_1\} \log d $ and $d_1 \geq 4$, we have,
\begin{align}
(c+1) \log d + \log(k-1) \leq (c+4) \log d\;, 
\end{align}
from $\log (k-1) \leq \log \left(\max\{d_1,d_2^2/d_1\} \log d\right) \leq \log (\left((d_1^2+d_2^2) \log d )/d_1\right) \leq \log \left((4\;d^2 \log d)/d_1\right) \\ \leq 3 \log d$. 
This proves the desired lemma. 

%====================================================================================================
\subsection{Proof of Lemma \ref{lem:pairwise_hessian}} 
\label{sec:pairwise_hessian_proof}

With a slight abuse of notation, we define $\tH$ as 
\begin{eqnarray}
\tH(\Delta) &\equiv& \frac{1}{d_1 {k \choose 2}}\sum_{i=1}^{d_1} \sum_{(m_1, m_2) \in \calP_0} \Big(\Delta_{i, u_{i,m_1}} - \Delta_{i, u_{i,m_2}}\Big)^2\;  \;,
\end{eqnarray}
and provide a lower bound. The mean  is easily computed as 
\begin{align}
\expect{\tH(\Delta)} %&= \frac{1}{d_1 {k \choose 2}}\sum_{i=1}^{d_1} \sum_{(m_1, m_2) \in \calP_0} \expect{\Big(\Delta_{i, u_{i,m_1}} - \Delta_{i, u_{i,m_2}}\Big)^2} \nonumber \\
&= \frac{1}{d_1 {k \choose 2}}\sum_{i=1}^{d_1} \sum_{(m_1, m_2) \in \calP_0}\left[ \frac{2}{d_2}\sum_{j \in [d_2]}\Delta_{ij}^2 - \frac{2}{d_2^2}\sum_{j \in [d_2]}\Delta_{ij} \sum_{j' \in [d_2]}\Delta_{ij'} \right] \nonumber \\
&= \frac{2}{d_1d_2} \fnorm{\Delta}^2\;,
\end{align}
where we used the fact that $\sum_j \Delta_{ij}=0$.
%We are interested in lower bounding the probability that $\tH(\Delta) \geq \frac{1}{3d_1d_2}\fnorm{\Delta}^2$ for all $\Delta\;\in\;\calA$,
%\begin{align}
%{\mathbb P} \Big(M  \geq  \frac{1}{3d_1d_2}\fnorm{\Delta}^2 \textit{\;,\;\;\;for all } \Delta \in \calA \Big) &= 1 - {\mathbb P} \Big(\exists \; \Delta \in \calA \textit{ s.t. } M \leq  \frac{1}{3d_1d_2}\fnorm{\Delta}^2 \Big)\;,
%\end{align}
%or in other words 
We want to upper bound the probability that $\tH(\Delta) \leq \frac{1}{3d_1d_2}\fnorm{\Delta}^2$ for some $\Delta\;\in\;\calA$. As in the case of k-wise ranking we using the following peeling argument used in \cite[Lemma 3]{NW11}, \cite{Van00}. The strategy is to split this above event as union of many event events as follows. We construct 
the following family of subsets $\{\cal\tilde{S}_{\ell}\}$ such that $\calA \subseteq \cup_{\ell = 1}^{\infty} \cal\tilde{S}_{\ell}$ and,
\begin{align}
{\cal\tilde{S}}_{\ell} = \Bigg\lbrace \Delta \in \mathbb{R}^{d_1 \times d_2}\;\Big|\; &\lnorm{\Delta}{\infty} \leq 2\alpha,\; \beta^{\ell - 1}\mu \leq \fnorm{\Delta} \leq \beta^{\ell}\mu,\; \nonumber \\&\sum_{j \in [d_2]} \Delta_{ij} = 0 \text{ for all } i \in [d_2],\text{ and } \nucnorm{\Delta} \leq \beta^{2 \ell}\mu \Bigg\rbrace\;,
\end{align}
where $\beta = \sqrt{10/9}$ and $\ell \in \{1,2,3,\ldots\}$. This is true since, for any $\Delta \in \calA$, $\fnorm{\Delta}^2 \geq \mu \nucnorm{\Delta}$ and this implies $\fnorm{\Delta}^2 \geq \mu \fnorm{\Delta}$ (or, $\fnorm{\Delta} \geq \mu$). Also note that,
\begin{align}
\tH(\Delta) \leq \frac{1}{3d_1d_2} \fnorm{\Delta}^2
&\implies \frac{2}{d_1d_2}\fnorm{\Delta}^2  - \tH(\Delta) \geq \frac{5}{3d_1d_2} \fnorm{\Delta}^2 \nonumber \\
&\implies \left(\expect{ \tH(\Delta)} - \tH(\Delta)\right) \geq \frac{5}{3d_1d_2} \fnorm{\Delta}^2\;. 
\end{align} 
Therefore using union bound we get,
\begin{align}
&{\mathbb P} \Big(\exists \; \Delta \in \calA \textit{ s.t. } \tH(\Delta) \leq  \frac{1}{3d_1d_2}\fnorm{\Delta}^2 \Big) \nonumber \\
&\leq \sum_{\ell = 1}^\infty {\mathbb P}\left(\sup_{\Delta \in {\cal\tilde{S}}_\ell} (\expect{\tH(\Delta)} - \tH(\Delta)) \geq \frac{5}{3d_1d_2}\fnorm{\Delta}^2 \right) \nonumber \\
&\overset{(a)}{\leq} \sum_{\ell = 1}^\infty {\mathbb P}\left(\sup_{\Delta \in {\cal\tilde{S}}_\ell} (\expect{\tH(\Delta)} - \tH(\Delta)) \geq \frac{3}{2d_1d_2}(\beta^\ell \mu)^2 \right) \nonumber \\
&\overset{(b)}{\leq} \sum_{\ell = 1}^\infty {\mathbb P}\left(\sup_{\Delta \in {\cal\tilde{B}}(\beta^\ell \mu)} (\expect{\tH(\Delta)} - \tH(\Delta)) \geq \frac{3}{2d_1d_2}(\beta^\ell \mu)^2 \right)\;, \label{eq:pairwise-prob-upper}
\end{align}
where $\cal\tilde{B}(D)$ is defined as,
\begin{align}
\label{eq:pairwise-B-set}
{\cal\tilde{B}}(D) =\Bigg\lbrace \Delta \in \mathbb{R}^{d_1 \times d_2}\;\Big|\; &\lnorm{\Delta}{\infty} \leq 2\alpha,\; \fnorm{\Delta} \leq D,\; \nonumber \\&\sum_{j \in [d_2]} \Delta_{ij} = 0 \text{ for all } i \in [d_2],\text{ and } \mu \nucnorm{\Delta} \leq D^2 \Bigg\rbrace\;,
\end{align}
and $(a)$is true because for $\Delta \in {\cal{\tilde{S}}}_l$,
\begin{align}
\frac{5}{3d_1d_2}\fnorm{\Delta}^2 \geq \frac{5}{3d_1d_2}(\beta^{\ell - 1} \mu)^2 = \frac{3}{2d_1d_2}(\beta^\ell\mu)^2\;,
\end{align}
and $(b)$ is true because ${\cal\tilde{S}}_\ell \subset {\cal\tilde{B}}(\beta^\ell {\mu})$. Now we use following lemma to upper bound \eqref{eq:pairwise-prob-upper}.
\begin{lemma}
\label{lem:pairwise_mcdiarmid} 
For $4 (\log d)/3 \leq k \leq d^2 \log d$,
\begin{align}
{\mathbb P}\left(\sup_{\Delta \in {\cal\tilde{B}}(D)} (\expect{\tH(\Delta)} - \tH(\Delta)) \geq \frac{3}{2d_1d_2}D^2 \right) \leq \exp\left(\frac{- k D^4}{2048\;\alpha^4\;d_1 d_2^2}\right)
\end{align}
\end{lemma}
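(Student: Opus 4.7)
My plan is to follow the same two-step concentration-plus-symmetrization recipe used in the proof of Lemma \ref{lmm:kwise_hessian3} for the $k$-wise case, but specialized to the simpler pairwise-rank-broken summand. Define $Z \equiv \sup_{\Delta \in \tilde{\calB}(D)}\bigl(\E[\tH(\Delta)]-\tH(\Delta)\bigr)$. The goal is to show that $\E[Z] \le D^2/(d_1 d_2)$ and that $Z$ concentrates around its mean with sub-Gaussian tail of variance proxy $O(\alpha^4/(d_1 k))$, so that choosing $t = D^2/(2 d_1 d_2)$ produces the stated bound $\exp(-kD^4/(2048\alpha^4 d_1 d_2^2))$.

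For the concentration step, I would view $Z$ as a function of the $d_1 k$ independent random alternatives $\{u_{i,m}\}_{i \in [d_1], m \in [k]}$ and apply the bounded-differences inequality. Replacing one coordinate $u_{i,m}$ changes only the $k-1$ summands in $\tH(\Delta)$ that involve position $m$ for user $i$, and each summand $(\Delta_{i,u_{i,m_1}}-\Delta_{i,u_{i,m_2}})^2$ is bounded by $(4\alpha)^2 = 16\alpha^2$ since $\|\Delta\|_\infty \le 2\alpha$. The per-coordinate bounded difference is therefore at most $(k-1)\cdot 16\alpha^2 /(d_1 \binom{k}{2}) = 32\alpha^2/(d_1 k)$, and McDiarmid then yields $\prob{Z-\E[Z]\ge t} \le \exp(-t^2 d_1 k /(512\alpha^4))$, which gives the claimed tail at $t = D^2/(2d_1 d_2)$.

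The main work is bounding $\E[Z]$. Here I would mimic Section \ref{sec:kwise_hessian3_proof}: partition $\calP_0$ into $k-1$ perfect matchings $\{\calP_a\}$ (the standard round-robin schedule on $k$ vertices), so that within any fixed $a$ the summands indexed by $(i,(m_1,m_2))$ with $(m_1,m_2)\in\calP_a$ are mutually independent across $i$ and across disjoint pairs. Triangle inequality over $a$, followed by the Gin\'e--Zinn symmetrization with Rademacher $\xi_{i,m_1,m_2}$'s, replaces $\E[\tH(\Delta)]-\tH(\Delta)$ by $\sim \frac{2}{d_1 \binom{k}{2}}\E\sup_\Delta \sum \xi_{i,m_1,m_2}(\Delta_{i,u_{i,m_1}}-\Delta_{i,u_{i,m_2}})^2$. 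Since $x\mapsto x^2$ is $8\alpha$-Lipschitz on $[-4\alpha,4\alpha]$, the Ledoux--Talagrand contraction inequality removes the square and reduces the expression to $\E\sup_\Delta \langle \Delta, W_a\rangle$ with $W_a \equiv \sum_{i,(m_1,m_2)\in\calP_a} \xi_{i,m_1,m_2}\, e_i(e_{u_{i,m_1}}-e_{u_{i,m_2}})^\top$. H\"older's inequality then gives $|\langle\Delta,W_a\rangle|\le \nucnorm{\Delta}\lnorm{W_a}{2}$, and a matrix Bernstein bound (exactly as in the proof of Lemma \ref{lem:pairwise_deriv}, where I have already computed $\E[W_a W_a^\top]\preceq \tfrac12 |\calP_a| \sum_i e_ie_i^\top$ and $\E[W_a^\top W_a]\preceq (2|\calP_a|/d_2)I$) yields $\E\lnorm{W_a}{2}=O\!\bigl(\sqrt{k d_1\log d/\min\{d_1,d_2\}}\bigr)$ per matching, hence $\E[Z]\le \frac{c\,\alpha}{d_1 k}\sqrt{\frac{k d_1\log d}{\min\{d_1,d_2\}}}\,\nucnorm{\Delta}$ after summing over the $k-1$ matchings. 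By the definition of $\mu$ in \eqref{eq:pairwise-mu} and the constraint $\mu\nucnorm{\Delta}\le D^2$ on $\tilde{\calB}(D)$, this simplifies to $\E[Z]\le D^2/(d_1 d_2)$ for an appropriate constant absorbed into $\mu$.

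The primary obstacle, and what pins down the constant $2048$, will be tracking all numerical factors through the symmetrization, contraction, and matrix Bernstein steps and verifying that the assumption $4(\log d)/3 \le k \le d^2 \log d$ is exactly what is needed so that the matrix-Bernstein bound takes the $\sqrt{k}$ (variance) branch rather than the $\log d$ (Bernstein) branch, and so that the operator-norm estimate itself, rather than its expectation, concentrates enough that the $\nucnorm{\Delta}$ coefficient in the symmetrized bound is deterministic. Once those calibrations match, combining the two steps as described gives the stated tail bound.
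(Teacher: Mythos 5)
Your proposal follows essentially the same route as the paper's proof: McDiarmid's bounded-differences inequality with per-coordinate difference $32\alpha^2/(d_1k)$ for the concentration of $Z$ around its mean, and a bound $\E[Z]\le D^2/(d_1d_2)$ obtained by partitioning $\calP_0$ into $k-1$ matchings, symmetrizing, applying Ledoux--Talagrand contraction and H\"older's inequality, and controlling the resulting operator norm via matrix Bernstein, with the constraint $\mu\nucnorm{\Delta}\le D^2$ closing the argument. The only slip is a minor constant (the relevant bound for the symmetrized matrices is $\E[\tW\tW^\top]\preceq 2e_ie_i^\top$, not $\tfrac12 e_ie_i^\top$ as in the gradient lemma), which does not affect the scheme.
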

Proof has been relegated to Section \ref{sec:pairwise_mcdiarmid_proof}. Now by \eqref{eq:pairwise-prob-upper} and Lemma \ref{lem:pairwise_mcdiarmid} we get,
\begin{align}
{\mathbb P} \Big(\exists \; \Delta \in \calA \textit{ s.t. } \tH(\Delta) \leq  \frac{1}{3d_1d_2}\fnorm{\Delta}^2 \Big) 
&\leq \sum_{\ell = 1}^\infty \exp\left(\frac{- k \left(\beta^\ell\;\mu \right)^4}{2048\;\alpha^4\;d_1 d_2^2}\right) \nonumber \\
&\overset{(a)}{\leq} \sum_{\ell = 1}^\infty \exp\left(\frac{- 2^{13} \; 9 \;\beta^{4\ell} \; d_1 d_2^2 \log^2 d}{k\; \min^2\{d_1, d_2\}} \right) \nonumber \\
&\overset{(b)}{\leq} \sum_{\ell = 1}^\infty \exp\left(\frac{- 2^{13}\;9\;4\ell \times \frac{1}{36} \; d_1 d_2^2 \log^2 d}{k\; \min^2\{d_1, d_2\}} \right) \nonumber \\
&\overset{(c)}{\leq} \sum_{\ell = 1}^\infty \exp\left(- 2^{13} \; \ell \; \log d\right) \nonumber \\
&= \sum_{\ell = 1}^\infty \left(\frac{1}{d^{2^{13}}}\right)^\ell \nonumber \\
&\overset{(d)}{=} \frac{1/d^{2^{13}}}{1 - 1/d^{2^{13}}} \nonumber \\
&\overset{(e)}{\leq} \frac{2}{d^{2^{13}}} \;,
\end{align}
where we get $(a)$ by substituting $\mu$ from \eqref{eq:pairwise-mu}, $(b)$ by the fact that for $\beta = \sqrt{10/9}$ and $x \geq 1$, $\beta^x \geq x \log \beta \geq x (\beta - 1) \geq x/32$, $(c)$ is obtained by assuming $k \leq \max\{d_1,d_2^2/d_1\} \log d$, we get $(d)$ because we are summing an infinite geometric sequence with common ratio of $1/d^{2^{13}}$ and $(e)$ is because for $d \geq 2$, $1/d^{2^{13}}$ is less than $1/2$.

%====================================================================================================
\subsection{Proof of Lemma \ref{lem:pairwise_mcdiarmid}} 
\label{sec:pairwise_mcdiarmid_proof}

With a slight abuse of notations, let $\tilde{Z} \equiv  \sup_{\Delta \in {\cal\tilde{B}}(D)} \left(\expect{\tH(\Delta)} - \tH(\Delta) \right)$. Notice that $\tZ$ is a function of $d_1 k$ random variables, $\{u_{i,\ell}\}_{i \in [d_1], \ell \in [k]}$. % and we know the probability distribution for each of them. Therefore we will use 
We apply the McDiarmid's bounded differences inequality. 
Let $\tZ_1$ and $\tZ_2$ be two realizations of $\tZ$ where value of only one random variable $ u_{i',\ell'} $ is changed to $ u'_{i',\ell'} $. Also with a little more abuse of notation the two realizations of $\tH(\Delta)$ are written as $\tH(\Delta', u_{1,1}, \ldots, u_{i',\ell'}, \ldots, u_{d_1,k})$ and $\tH(\Delta', u_{1,1}, \ldots, u'_{i',\ell'}, \ldots, u_{d_1,k})$. We let $\Delta^*$ be the maximizer of $\max\{\tZ_1,\tZ_2\}$. Maximum absolute difference between them is upper bounded as follows,
\begin{align}
&|\tZ_1 - \tZ_2| \nonumber \\
= &\Bigg| \max_{\Delta \in {\cal\tilde{B}}(D)} \left(\expect{\tH(\Delta)} - \tH(\Delta, u_{1,1}, \ldots, u_{i',\ell'}, \ldots, u_{d_1,k}) \right)  - \nonumber \\
&\sup_{\Delta' \in {\cal\tilde{B}}(D)} \left(\expect{\tH(\Delta')} - \tH(\Delta', u_{1,1}, \ldots, u'_{i',\ell'}, \ldots, u_{d_1,k}) \right)\Bigg| \nonumber \\
\overset{(a)}{\le} &\Bigg| \left(\expect{\tH(\Delta^*)} - \tH(\Delta^*, u_{1,1}, \ldots, u_{i',\ell'}, \ldots, u_{d_1,k}) \right) - \nonumber\\ &\left(\expect{\tH(\Delta^*)} - \tH(\Delta^*, u_{1,1}, \ldots, u'_{i',\ell'}, \ldots, u_{d_1,k}) \right)\Bigg| \nonumber \\
\overset{}{\le} & \sup_{\Delta \in {\cal\tilde{B}}(D)} \Bigg| \tH(\Delta, u_{1,1}, \ldots, u_{i',\ell'}, \ldots, u_{d_1,k})- \tH(\Delta, u_{1,1}, \ldots, u'_{i',\ell'}, \ldots, u_{d_1,k}) \Bigg| \nonumber \\
\overset{(b)}{\le} & \sup_{\Delta \in {\cal\tilde{B}}(D)} \Bigg| \frac{1}{d_1{k \choose 2}} \sum_{\ell \neq \ell'} \left(\Delta_{i',u_{i',\ell}} - \Delta_{i',u_{i',\ell'}}\right)^2 - \left( \Delta_{i',u_{i',\ell}} - \Delta_{i',u'_{i',\ell'}} \right)^2 \Bigg| \nonumber \\
\overset{(c)}{\le} &\frac{1}{d_1{k \choose 2}} (k-1)  \left( 4 \alpha \right)^2  = \frac{32 \alpha^2}{d_1k} \label{eq:pairwise-bdd-diff}.
\end{align}
where $(a)$ follows from the fact that $\Delta^*$ is maximizer of $\max\{\tZ_1,\tZ_2\}$, $(b)$ is due to the fact that the terms which change because of $u'_{i',\ell'}$ are the $k-1$ difference square terms between $\Delta_{i u_{i',\ell \neq \ell'}}$ and $\Delta_{i,\; u_{i',\ell'}}$ and $(c)$ is because maximum and minimum value of difference square terms are $(4\alpha)^2$ and $0$ respectively. Using McDiarmid's bounded differences inequality we get,
\begin{align}
{\mathbb P}\{\tZ - \expect{\tZ} \geq \epsilon\} \leq \exp\left( - \frac{2 \epsilon^2}{d_1k \left(\frac{32 \alpha^2}{d_1 k}\right)^2 } \right)\;, \label{eq: pairwise-mcdiarmid-bound}
\end{align}
because of \eqref{eq:pairwise-bdd-diff} and the fact that there are $d_1 k$ random variables. We upper bound $\expect{\tZ}$ as follows.
\begin{align}
\expect{\tZ} &\overset{}{=} \mathbb{E}\; \sup_{\Delta \in {\cal\tilde{B}}(D)} \frac{1}{d_1 {k \choose 2}}\sum_{i=1}^{d_1} \sum_{(m_1, m_2) \in \calP_0} \expect{\Big(\Delta_{i,\; u_{i,m_1}} - \Delta_{i,\; u_{i,m_2}}\Big)^2} - \Big(\Delta_{i,\; u_{i,m_1}} - \Delta_{i,\; u_{i,m_2}}\Big)^2 \nonumber \\
&\overset{(a)}{\leq} \mathbb{E}\; \sup_{\Delta \in {\cal\tilde{B}}(D)} \frac{1}{d_1 {k \choose 2}}\sum_{i=1}^{d_1} \sum_{(m_1, m_2) \in \calP_0} 2\txi_{i,m_1,m_2} \Big(\Delta_{i,\; u_{i,m_1}} - \Delta_{i,\; u_{i,m_2}}\Big)^2 \nonumber \\
&\overset{(b)}{\leq} \mathbb{E}\; \sup_{\Delta \in {\cal\tilde{B}}(D)} \frac{1}{d_1 {k \choose 2}}\sum_{i=1}^{d_1} \sum_{a=1}^{k-1} \sum_{(m_1, m_2) \in \calP_a} 2\txi_{i,m_1,m_2} \Big(\Delta_{i,\; u_{i,m_1}} - \Delta_{i,\; u_{i,m_2}}\Big)^2 \nonumber \\
&\overset{(c)}{\leq}  \sum_{a=1}^{k-1}  \mathbb{E}\; \sup_{\Delta \in {\cal\tilde{B}}(D)} \frac{1}{d_1 {k \choose 2}}\sum_{i=1}^{d_1} \sum_{(m_1, m_2) \in \calP_a} 2\txi_{i,m_1,m_2} \Big(\Delta_{i,\; u_{i,m_1}} - \Delta_{i,\; u_{i,m_2}}\Big)^2 \label{eq:pairwise-mcd-expect}\;,
\end{align}
where $(a)$ is by standard symmetrization technique as used in k-wise ranking and  \\$\{\xi_{i,m_1,m_2}\}_{i \in [d_1],\; m_1,m_2 \in [k]}$ are i.i.d. Rademacher variables, $(b)$ is due to the fact that we can partition set of all pairs into $k-1$ independent sets as in \eqref{eq:pairwise-partition} and $(c)$ is because of fact that supremum of sum is less than or equal to sum of supremum and the linearity of expectation. Since $|\Delta_{i,\; u_{i,m_1}} - \Delta_{i,\; u_{i,m_2}}| \leq 4\alpha$, we can use Ledoux-Talagrand contraction inequality \cite{ledoux2013probability} on \eqref{eq:pairwise-mcd-expect} to get,
\begin{align}
E[\tZ] &\leq \sum_{a=1}^{k-1}  \mathbb{E}\; \sup_{\Delta \in {\cal\tilde{B}}(D)} \frac{1}{d_1 {k \choose 2}}\sum_{i=1}^{d_1} \sum_{(m_1, m_2) \in \calP_a} 2\txi_{i,m_1,m_2} \Big(\Delta_{i,\; u_{i,m_1}} - \Delta_{i,\; u_{i,m_2}}\Big)^2 \nonumber \\
&\leq \sum_{a=1}^{k-1}  \mathbb{E}\; \sup_{\Delta \in {\cal\tilde{B}}(D)} \frac{1}{d_1 {k \choose 2}}\sum_{i=1}^{d_1} \sum_{(m_1, m_2) \in \calP_a} 4\alpha \; 2\txi_{i,m_1,m_2} \Big(\Delta_{i,\; u_{i,m_1}} - \Delta_{i,\; u_{i,m_2}}\Big) \nonumber \\
&\overset{(a)}{\leq} \sum_{a=1}^{k-1} \frac{8 \alpha}{d_1 {k \choose 2}} \mathbb{E}\; \sup_{\Delta \in {\cal\tilde{B}}(D)} {\llangle \sum_{i=1}^{d_1} \sum_{(m_1, m_2) \in \calP_a} \tW_{i,m_1,m_2}, \Delta \rrangle} \nonumber \\
&\overset{(b)}{\leq} \sum_{a=1}^{k-1} \frac{8 \alpha}{d_1 {k \choose 2}} \expect{\lnorm{\sum_{i=1}^{d_1} \sum_{(m_1, m_2) \in \calP_a} \tW_{i,m_1,m_2}}{2}} \sup_{\Delta \in {\cal\tilde{B}}(D)} \nucnorm{\Delta}\;, \label{eq:pairwise-group-expect}
\end{align}
where we get $(a)$ by putting $\tW_{i,m_1,m_2} = \txi_{i,m_1,m_2} e_i (e_{u_{i,m_1}} - e_{u_{i,m_2}})^\top$ and $(b)$ is due to H\"older's inequality $\left(\llangle x, y\rrangle \leq \lnorm{x}{2} \nucnorm{y} \right)$. Now we use Bernstein's inequality \cite{Jo11} to upperbound the above expectation terms. First fix $a$ to value in $[k-1]$. We can easily show that $\tW_{i,m_1,m_2}$ is zero mean and, 
\begin{align}
\label{eq: pairwise-deterministic-bdd}
\lnorm{\tW_{i,m_1,m_2}}{2} \leq \sqrt{2}\;.
\end{align}

We also get,
\begin{align}
\expect{\tW_{i, m_1, m_2} \tW_{i, m_1, m_2}^\top} &= 2e_ie_i^\top \expect{1 - e_{u_{i,m_1}}^\top e_{u_{i,m_2}}} \nonumber \\
&\preceq e_ie_i^\top \left(2 - \frac{2}{d_2}\right) \nonumber \\
&\preceq 2e_ie_i^\top \label{eq:pairwise-ztz} \;,
\end{align}
and,
\begin{align}
\expect{\tW_{i, m_1, m_2}^\top \tW_{i, m_1, m_2}} &= \expect{2 e_{u_{i,m_1}}e_{u_{i,m_1}}^\top - 2 e_{u_{i,m_1}}e_{u_{i,m_2}}^\top} \nonumber \\
&\preceq \frac{2}{d_2}{\mathbf I}_{d_2 \times d_2}  - \frac{2}{d_2^2}\mathbf{11}_{d_2 \times d_2} \nonumber \\
&\preceq \frac{2}{d_2}{\mathbf I}_{d_2 \times d_2} \label{eq:pairwise-zzt}  \;.
\end{align}

Therefore, using \eqref{eq:pairwise-ztz} and \eqref{eq:pairwise-zzt}, the standard deviation of $\sum_{(i,m_1,m_2)} Z_{i,m_2,m_2}$ is,

\begin{align}
\sigma^2
  &= \max \left\lbrace \lnorm{\sum_{\substack{i \in [d_1] \\(m_1,m_2) \in \calP_a}} \expect{\tW_{i,m_2,m_2}\tW_{i,m_2,m_2}^\top}}{2}, \lnorm{\sum_{\substack{i \in [d_1] \\(m_1,m_2) \in \calP_a}} \expect{\tW_{i,m_2,m_2}^\top \tW_{i,m_2,m_2}}}{2}\right\rbrace \nonumber \\
  &\leq \max \left\lbrace \frac{d_1 k}{2} \frac{2}{d_1} \lnorm{{\mathbf I}}{2}, \frac{d_1 k}{2} \frac{2}{d_2}\lnorm{{\mathbf I}}{2}\right\rbrace \nonumber \\
  &= \frac{kd_1}{\min\{d_1,d_2\}}\;. 
\end{align}

By matrix Bernstein inequality \cite{Jo11}, $\forall
\;\;a \in [k-1]$,

\begin{align*}
  {\mathbb P} \left(\lnorm{\sum_{i \in [d_1]} \sum_{(m_1,m_2) \in \calP_a} \tW_{i,m_2,m_2}}{2} > t \right) \leq (d_1+d_2)
  \exp \Big( \frac{- t^2/2}{ 2k d_1 / \min\{d_1,d_2\}+ \sqrt{2} t/3} \Big)\;,
\end{align*}
which gives a tail probability of $2d^{-c_1}$ for the choice of
\begin{align}
  t &= \max \left\{ \sqrt{\frac{8 k d_1 \left((1+c_1)\log d \right)}{\min\{d_1,d_2\}}} \,,\,  \frac{4\sqrt{2} \left((1+c_1)\log d \right) }{3}\right\} \nonumber \\
  & = \sqrt{\frac{8 k d_1 \left((1+c_1)\log d \right) }{\min\{d_1,d_2\}}}, \text{when $k \geq 4(c_1 + 1) \log d / 9$}\;.
\end{align}

Therefore $\forall$ $a \in [k-1]$,
\begin{align}
\expect{\lnorm{\sum_{i=1}^{d_1} \sum_{(m_1, m_2) \in \calP_a} \tW_{i,m_2,m_2}}{2}} \leq \sqrt{\frac{8 k d_1 \left((1+c_1)\log d\right) }{\min\{d_1,d_2\}}} \;\; + \;\; \frac{2}{d^{c_1}} \frac{\sqrt{2}d_1 k}{2}\;, \label{eq:pairwise-single-expect}
\end{align}
because from \eqref{eq: pairwise-deterministic-bdd} we get $\lnorm{\sum_{\substack{i\in[d_1]\\ (m_1, m_2) \in \calP_a}} \tW_{i,m_2,m_2}}{2} \leq \sum_{\substack{i\in[d_1]\\ (m_1, m_2) \in \calP_a}} \lnorm{\tW_{i,m_2,m_2}}{2} \leq \frac{d_1 k}{2(\sqrt{2})}$. From \eqref{eq:pairwise-group-expect} and \eqref{eq:pairwise-single-expect}, putting $c_1 = 2$, we get,
\begin{align}
\expect{\tZ} &\overset{}{\leq}\sum_{a=1}^{k-1} \frac{8 \alpha}{d_1 {k \choose 2}} \left( \sqrt{\frac{24\; k d_1 \log d}{\min\{d_1,d_2\}}} \;\; + \;\; \frac{\sqrt{2}d_1 k}{{d^2}} \right) \sup_{\Delta \in {\cal\tilde{B}}(D)} \nucnorm{\Delta} \nonumber \\
&\overset{(a)}{\leq} 8\alpha \left(2 \sqrt{\frac{24\; \log d}{k\;d_1 \min\{d_1,d_2\}}} \;\; + \;\; \frac{2\sqrt{2}}{d^2} \right) \frac{D^2}{\mu} \nonumber \\
&\overset{(b)}{\leq} 16\alpha \sqrt{\frac{48 \log d}{k\;d_1 \min\{d_1,d_2\}}} D^2 \frac{1}{16\alpha} \sqrt{\frac{k\; \min\{d_1,d_2\}}{48 d_1 d_2^2 \log d}} \nonumber \\
&= \frac{D^2}{d_1 d_2}\;,
\end{align}
where $(a)$ is obtained because of \eqref{eq:pairwise-B-set} which gives $\sup_{D \in \calB(D)} \nucnorm{\Delta} \leq D^2/\mu$ and $(b)$ can be got by assuming $k \leq d^2 \log d$. Using the above bound in \eqref{eq: pairwise-mcdiarmid-bound} we get,
\begin{align}
{\mathbb P}\{\tZ - D^2/(d_1 d_2) \geq \epsilon\} \leq {\mathbb P}\{\tZ - \expect{\tZ} \geq \epsilon\} \leq  \exp\left( - \frac{2 \epsilon^2}{d_1k \left(\frac{32 \alpha^2}{d_1 k}\right)^2 } \right)\;,
\end{align}
and using $\epsilon = D^2/(2d_1 d_2)$ will get us the required bound.

%====================================================================================================
\section{Proof of Bundled Choices Theorem \ref{thm:bundle_ub}} 
\label{sec:bundle_ub_proof}

We use similar notations and techniques as the proof of Theorem \ref{thm:kwise_ub} in Appendix \ref{sec:kwise_ub_proof}. 
From the definition of $\cL(\Theta)$ in Eq. \eqref{eq:defbundleL}, we have for the true parameter $\Theta^*$, the gradient evaluated at the true parameter is 
\begin{eqnarray}
  \nabla \cL(\Theta^*) &=& -\frac{1}{n}\sum_{i=1}^n (e_{u_i} e_{v_i}^T - p_i) \;, 
\end{eqnarray}
where $p_i$ denotes the conditional probability of the MNL choice for the $i$-th sample. Precisely, 
$p_i=\sum_{j_1\in S_i}\sum_{j_2\in T_i} p_{j_1,j_2|S_i,T_i} e_{j_1}e_{j_2}^T$ where 
$p_{j_1,j_2|S_i,T_i}$ is the probability that the pair of items $(j_1,j_2)$ is chosen 
 at the $i$-th sample such that $p_{j_1,j_2|S_i,T_i} \equiv \prob{(u_i,v_i)=(j_1,j_2)| S_i,T_i} = e^{\Theta^*_{j_1,j_2}}/(\sum_{j'_1\in S_i,j'_2\in T_i} e^{\Theta^*_{j_1',j_2'}})$, 
 where $(u_i,v_i)$ is the pair of items selected by the $i$-th user among the set of pairs of  alternatives $S_i\times T_i$. 
 The Hessian can be computed as 
 \begin{align}
 &  \frac{\partial^2 \cL(\Theta)}{\partial\Theta_{j_1,j_2}\,\partial\Theta_{j_1',j_2'}} \;=\; \frac{1}{n} \sum_{i=1}^n \ind\big( (j_1,j_2) \in S_i\times T_i \big) \frac{\partial p_{j_1,j_2|S_i,T_i}}{\partial \Theta_{j_1',j_2'}}\\
    & \; = \frac{1}{n} \sum_{i=1}^n  \ind\big( (j_1,j_2),(j_1',j_2') \in S_i\times T_i \big) \,\Big( p_{j_1,j_2|S_i,T_i} \ind((j_1,j_2)=(j_1',j_2')) -
      p_{j_1,j_2|S_i,T_i}p_{j_1',j_2'|S_i,T_i} \Big)\;, \label{eq:bundle_hess} 
 \end{align}
We use $\nabla^2\cL(\Theta) \in \reals^{d_1d_2\times d_1d_2}$ to denote this Hessian. 
Let $\Delta = \Theta^*-\hTheta$ where $\hTheta$ is an optimal solution to the convex optimization in \eqref{eq:bundleopt}. 
We introduce the following  key technical lemmas. The following lemma provides a bound on the gradient
using the concentration of measure for sum of independent random matrices \cite{Jo11}. 
\begin{lemma}\label{lmm:bundle_gradient2}
  For any positive constant $c \ge 1$ and $n \geq  (4(1+c)e^{2\bb} d_1d_2 \log d)/\max\{d_1,d_2\}$, with probability at least $1-2 d^{-c}$,
  \begin{eqnarray}
    \lnorm{\nabla \calL(\Theta^\ast)}{2} &\leq&
    \sqrt{\frac{ 4(1+c) e^{2\bb} \max\{d_1,d_2\} \, \log d} { d_1\,d_2\,n  }} 
    \;.
  \end{eqnarray}
\end{lemma}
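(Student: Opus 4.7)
The plan is to apply matrix Bernstein inequality to the sum of $n$ i.i.d.\ zero-mean random matrices. Writing $X_i \equiv -(e_{u_i}e_{v_i}^\top - p_i)$, we have $\nabla\calL(\Theta^*) = \frac{1}{n}\sum_{i=1}^n X_i$, where $\mathbb{E}[X_i \mid S_i, T_i] = 0$ since $p_i$ is by definition the conditional expectation of $e_{u_i}e_{v_i}^\top$. This places us in exactly the setting covered by the matrix Bernstein inequality from \cite{Jo11}, so what remains is to control the almost-sure spectral norm of each summand and the matrix variance.

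For the uniform bound, $\|X_i\|_2 \le \|e_{u_i}e_{v_i}^\top\|_2 + \|p_i\|_2 \le 1 + 1 = 2$, since $p_i$ is entrywise a sub-probability vector viewed as a matrix, so its Frobenius (and hence spectral) norm is at most $1$. For the variance proxy, conditioning on $(S_i,T_i)$ yields
\begin{eqnarray*}
\mathbb{E}[X_iX_i^\top \mid S_i,T_i] \;=\; \mathbb{E}[e_{u_i}e_{u_i}^\top \mid S_i,T_i] - p_ip_i^\top \;\preceq\; \sum_{j_1 \in S_i} q_{j_1}^{(i)}\, e_{j_1}e_{j_1}^\top,
\end{eqnarray*}
where $q_{j_1}^{(i)} = \sum_{j_2\in T_i} p_{j_1,j_2|S_i,T_i}$ is the marginal choice probability on the first coordinate. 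Using the dynamic range bound $p_{j_1,j_2|S_i,T_i}\le e^{2\bb}/(k_1k_2)$, we get $q_{j_1}^{(i)} \le e^{2\bb}/k_1$. Taking total expectation and using that $S_i$ consists of $k_1$ i.i.d.\ uniform draws from $[d_1]$, $\mathbb{E}[\sum_{j_1\in S_i} e_{j_1}e_{j_1}^\top] = (k_1/d_1) I_{d_1}$, so $\mathbb{E}[X_iX_i^\top] \preceq (e^{2\bb}/d_1) I_{d_1}$. The symmetric argument on the other side gives $\mathbb{E}[X_i^\top X_i] \preceq (e^{2\bb}/d_2) I_{d_2}$. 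Hence
\begin{eqnarray*}
\sigma^2 \;\equiv\; \max\Bigl\{\bigl\|\textstyle\sum_i \mathbb{E}[X_iX_i^\top]\bigr\|_2,\,\bigl\|\textstyle\sum_i \mathbb{E}[X_i^\top X_i]\bigr\|_2\Bigr\} \;\leq\; \frac{n\,e^{2\bb}}{\min\{d_1,d_2\}}.
\end{eqnarray*}

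Matrix Bernstein \cite{Jo11} then gives, for any $t>0$,
\begin{eqnarray*}
\mathbb{P}\Bigl\{\,\bigl\|\textstyle\sum_{i=1}^n X_i\bigr\|_2 \ge t\,\Bigr\} \;\leq\; (d_1+d_2)\,\exp\!\Bigl(-\frac{t^2/2}{\sigma^2 + 2t/3}\Bigr).
\end{eqnarray*}
I will set $t = n\cdot\sqrt{4(1+c) e^{2\bb}\max\{d_1,d_2\}\log d\,/\,(n\,d_1 d_2)}$, so that $t^2/(2\sigma^2) = 2(1+c)\log d$. The lower bound $n \ge 4(1+c)e^{2\bb}d_1d_2\log d/\max\{d_1,d_2\} = 4(1+c)e^{2\bb}\min\{d_1,d_2\}\log d$ guarantees $t \le \sigma^2$, which ensures the sub-Gaussian term $\sigma^2$ dominates the sub-exponential term $2t/3$ in the denominator, and so the exponent is at most $-(1+c)\log d$. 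The prefactor $(d_1+d_2)=2d$ then yields probability at most $2d^{-c}$, and dividing by $n$ gives the stated bound. The only subtle step is verifying that the ``sub-Gaussian regime'' condition on $t$ holds under the assumed sample-size lower bound; everything else is routine matrix Bernstein bookkeeping.
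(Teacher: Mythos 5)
Your proposal is correct and follows essentially the same route as the paper: the same decomposition $\nabla\calL(\Theta^*)=\frac1n\sum_i X_i$ with $X_i=-(e_{u_i}e_{v_i}^\top-p_i)$, the same variance bound $\sigma^2\le ne^{2\bb}/\min\{d_1,d_2\}$ obtained by conditioning on $(S_i,T_i)$ and using the dynamic-range bound on the choice probabilities, and the same application of matrix Bernstein with the sample-size assumption used to keep the sub-Gaussian term dominant. The only cosmetic differences are your cruder almost-sure bound $\|X_i\|_2\le 2$ (versus the paper's $1+e^{2\bb}/\sqrt{k_1k_2}$, which is immaterial here) and your single choice of $t$ with an explicit regime check instead of the paper's max-of-two-terms choice.
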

Since we are typically interested in the regime where 
the number of samples is much smaller than the dimension $d_1\times d_2$ of the problem, 
the Hessian is typically not positive definite. However, when we restrict our attention to the vectorized $\Delta$ with 
relatively small nuclear norm, then we can prove restricted strong convexity, which gives the following bound. 
%Let $H(\Delta)$ denote the last term in \eqref{eq:kwise_LikelihoodTaylor2}, then the next lemma provides a lower bound.
\begin{lemma}[{\bf Restricted Strong Convexity for bundled choice modeling}]
\label{lmm:bundle_hessian2}
Fix any $\Theta \\\in \Omega_\bb$ and assume $ (\min \{d_1,d_2\}/\min\{k_1,k_2\})\log d \leq n \leq \min\{d^5 \log d, k_1 k_2 \max\{d_1^2,d_2^2\}   \log d\} $.
  %For $\lnorm{\Delta}{\infty} \leq 2\bb$, $d=(d_1+d_2)/2$, and $ (16\min\{d_1,d_2\})/(3 d_1)\leq k \leq d_2 \log d $,
  Under the random sampling model of the alternatives $\{j_{i a}\}_{i\in[n],a\in[k_1]}$ from the first set of items $[d_1]$, 
  $\{j_{i b}\}_{i\in[n],b\in[k_1]}$ from the second set of items $[d_2]$ and
  the random outcome of the comparisons described in section \ref{sec:intro},
  with probability larger than $1-2d^{-2^{25}}$,
  \begin{eqnarray}
    %H(\Delta)
    {\rm Vec}(\Delta)^\top \,\nabla^2\calL(\Theta)\, {\rm Vec}(\Delta)  &\geq& \frac{e^{-2\bb}}{8\, d_1\,d_2} \fnorm{\Delta}^2\;,
    \label{eq:bundle_hessian2}
  \end{eqnarray}
  for all $\Delta$ in $\calA$ where
  \begin{eqnarray}
    \calA = \Big\{ \Delta \in \reals^{d_1\times d_2} \,\big|\, \lnorm{\Delta}{\infty} \leq 2\bb\,, \, \sum_{j_1\in[d_1],j_2\in[d_2]}\Delta_{j_1j_2} = 0\,\text{ and } \fnorm{\Delta}^2 \geq \mu' \nucnorm{\Delta}
    \Big\} \;. \label{eq:bundle_defA}
  \end{eqnarray}
  with 
  \begin{eqnarray}
    \mu' &\equiv& 2^{10}\, \bb\,d_1 d_2 \sqrt{\frac{ \log d}{n\,\min\{d_1,d_2\}\,\min\{k_1,k_2\} }} \;.
  \label{eq:bundle_defmu}
  \end{eqnarray}
\end{lemma}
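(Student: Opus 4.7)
\bigskip
\noindent\textbf{Proof proposal for Lemma \ref{lmm:bundle_hessian2}.}

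The plan is to adapt the proof strategy of Lemma \ref{lmm:kwise_hessian2}, exploiting the fact that in the bundled choice setting the $n$ samples are themselves i.i.d., while the inner dependence (between pairs within one sample) requires the same kind of combinatorial handling as the $k$-wise case. First, I would apply Remark~\ref{rem:hess} to each block of the block-diagonal Hessian. Since $\lnorm{\Theta}{\infty} \leq \bb$, the MNL probability vector $p_i$ over $S_i \times T_i$ satisfies
\begin{equation*}
e^{2\bb} (\diag(p_i) - p_i p_i^\top) \succeq \frac{1}{k_1 k_2} \diag(\ones_{S_i \times T_i}) - \frac{1}{(k_1k_2)^2} \ones_{S_i \times T_i} \ones_{S_i \times T_i}^\top = \frac{1}{2(k_1 k_2)^2} \sum_{(j_1,j_2),(j_1',j_2') \in S_i \times T_i} (e_{j_1,j_2} - e_{j_1',j_2'})(e_{j_1,j_2} - e_{j_1',j_2'})^\top,
\end{equation*}
so defining
\begin{equation*}
H(\Delta) \;\equiv\; \frac{e^{-2\bb}}{2n(k_1 k_2)^2} \sum_{i=1}^n \sum_{(j_1,j_2),(j_1',j_2') \in S_i \times T_i} \big(\Delta_{j_1,j_2} - \Delta_{j_1',j_2'}\big)^2,
\end{equation*}
we have ${\rm Vec}(\Delta)^\top \nabla^2\calL(\Theta) {\rm Vec}(\Delta) \geq H(\Delta)$, and it suffices to lower bound $H(\Delta)$.

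Next, I would compute $\mathbb{E}[H(\Delta)]$ using sampling with replacement. For any distinct pair of positions $(a,b) \neq (a',b')$ in $[k_1] \times [k_2]$, independence of the sampled items plus the zero-sum constraint $\sum_{j_1,j_2} \Delta_{j_1,j_2} = 0$ gives $\mathbb{E}[\Delta_{j_{1,a},j_{2,b}} \Delta_{j_{1,a'},j_{2,b'}}] = 0$ in each of the subcases ($a=a'$, $b=b'$, or all four distinct), so
\begin{equation*}
\mathbb{E}[H(\Delta)] \;\geq\; \frac{e^{-2\bb}(k_1 k_2 - 1)}{k_1 k_2\, d_1 d_2}\, \fnorm{\Delta}^2 \;\geq\; \frac{e^{-2\bb}}{2\, d_1 d_2}\, \fnorm{\Delta}^2
\end{equation*}
(assuming $k_1 k_2 \geq 2$). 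So to show $H(\Delta) \geq \tfrac{e^{-2\bb}}{8 d_1 d_2}\fnorm{\Delta}^2$ on $\calA$, it is enough to show that the deviation $\mathbb{E}[H(\Delta)] - H(\Delta)$ does not exceed $\tfrac{3 e^{-2\bb}}{8 d_1 d_2}\fnorm{\Delta}^2$ uniformly over $\calA$.

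This uniform control is obtained by the standard peeling argument from \cite{NW11,Van00}, as in the proof of Lemma~\ref{lmm:kwise_hessian2}. Cover $\calA$ by the shells
\begin{equation*}
\calS_\ell \;=\; \Big\{\Delta : \lnorm{\Delta}{\infty} \leq 2\bb,\ \textstyle\sum \Delta_{j_1,j_2}=0,\ \beta^{\ell-1}\mu' \leq \fnorm{\Delta} \leq \beta^{\ell}\mu',\ \nucnorm{\Delta} \leq (\beta^\ell \mu')^2/\mu'\Big\},
\end{equation*}
with $\beta = \sqrt{10/9}$, and each shell is contained in a ``ball'' $\calB(\beta^\ell \mu')$ defined in the style of \eqref{eq:defcB}. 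By union bound it suffices to control $\prob{\sup_{\Delta \in \calB(D)}(\mathbb{E}[H(\Delta)] - H(\Delta)) \geq \tfrac{e^{-2\bb}}{4 d_1 d_2}D^2}$ for each $D = \beta^\ell \mu'$, and then sum a geometric series against the choice of $\mu'$ in \eqref{eq:bundle_defmu}.

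The concentration at each scale proceeds in two steps. First, since samples $i\in[n]$ are independent and each $h_i(\Delta) \equiv \tfrac{e^{-2\bb}}{2(k_1k_2)^2} \sum_{\text{pairs}}(\Delta_{j_1,j_2}-\Delta_{j_1',j_2'})^2$ is bounded by $8 e^{-2\bb} \bb^2$, a direct application of McDiarmid's bounded differences inequality (bounded differences of order $8 e^{-2\bb}\bb^2/n$) yields Gaussian-type concentration around the mean. Second, the expected supremum is bounded by symmetrization over the $n$ independent samples: $\mathbb{E}\sup(\mathbb{E}H - H) \leq 2\mathbb{E}\sup_{\Delta \in \calB(D)} \tfrac{1}{n}\sum_i \varepsilon_i h_i(\Delta)$. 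The main obstacle, and the step most analogous to the $k$-wise case, is the inner structure of $h_i(\Delta)$: the $(k_1k_2)^2$ summands share random items. I would resolve this by the partition trick (the bundled analogue of Lemma~\ref{lmm:kwise_partition}, which is presumably the deferred Lemma~\ref{lmm:bundle_partition}), splitting pairs-of-pairs into $O(k_1 k_2)$ groups of mutually position-disjoint pairs-of-pairs. Within each group the summands are conditionally independent given the Rademacher signs, so Ledoux-Talagrand contraction (using $|a-b|^2 \leq 4\bb |a-b|$ since $\lnorm{\Delta}{\infty} \leq 2\bb$) linearizes each summand, Hölder's inequality gives a bound of $\nucnorm{\Delta}$ times the spectral norm of the symmetrized sum, and matrix Bernstein \cite{Jo11} provides that spectral norm bound with variance proxy scaling as $n/\min\{d_1,d_2\}$. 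Tracking constants and comparing to the $\mu'$ in \eqref{eq:bundle_defmu} yields the desired tail bound $\exp(-2^{25}\log d)$, and summing over the peeling gives the stated probability.

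The hard part is the combinatorial/partition step together with getting the sharp dependence on $\min\{k_1,k_2\}$ rather than a crude $k_1 k_2$: this is where the restriction $\lambda \in [8\lambda, c_1\lambda]$ with $c_1 \geq 128/\sqrt{\min\{k_1,k_2\}}$ enters, and it parallels the $k$-wise analysis in Appendix~\ref{sec:kwise_hessian3_proof} where the symmetrization must exploit the independence of pairs (here, of pairs-of-pairs) carefully to avoid a loose $\sqrt{k_1 k_2}$ factor.
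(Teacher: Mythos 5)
Your outline follows the paper's proof of Lemma \ref{lmm:bundle_hessian2} essentially step for step: Remark \ref{rem:hess} reduces the Hessian quadratic form to the same $H(\Delta)$, the mean is computed via the zero-sum constraint, $\calA$ is covered by peeling shells contained in balls $\calB'(D)$, and each scale is controlled by McDiarmid plus symmetrization with the partition of Lemma \ref{lmm:bundle_partition}, Ledoux--Talagrand contraction, H\"older, and matrix Bernstein. One intermediate claim in your mean computation is, however, false: for $a=a'$, $b\neq b'$ one has $\expect{\Delta_{j_{1,a},j_{2,b}}\Delta_{j_{1,a},j_{2,b'}}}=d_1^{-1}d_2^{-2}\sum_{j_1}\big(\sum_{j_2}\Delta_{j_1 j_2}\big)^2\geq 0$, since only the total sum of $\Delta$ is constrained to vanish, not its row or column sums (and symmetrically for $b=b'$, $a\neq a'$). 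These terms enter $\E[H(\Delta)]$ with a negative sign; by Cauchy--Schwarz they are of relative size $O(1/k_1+1/k_2)$, so your bound $\E[H(\Delta)]\geq \tfrac{e^{-2\bb}}{2d_1d_2}\fnorm{\Delta}^2$ survives only when $k_1,k_2$ are not too small. (The paper's own computation silently drops $\E\big[(\sum_{j_1\in S_i,j_2\in T_i}\Delta_{j_1 j_2})^2\big]$ by invoking $\E[\sum\Delta]=0$, so it shares this blemish.)

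The more substantive gap is the granularity of your McDiarmid step. Treating each of the $n$ samples as one bounded-difference variable with increment $O(\bb^2/n)$ gives a tail of order $\exp(-c\,n t^2/\bb^4)$; with $t\asymp e^{-2\bb}D^2/(d_1d_2)$ and $D=\mu'$ from \eqref{eq:bundle_defmu}, the resulting exponent is of order $\max\{d_1,d_2\}^2(\log d)^2/\big(n\min\{k_1,k_2\}^2\big)$, which exceeds $2^{25}\log d$ only for $n\lesssim \max\{d_1^2,d_2^2\}\log d/\min\{k_1,k_2\}^2$ --- far short of the assumed range $n\leq k_1k_2\max\{d_1^2,d_2^2\}\log d$. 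The paper instead applies bounded differences to the $k_1k_2n$ individual item draws, each with increment $O\big(\bb^2\max\{k_1,k_2\}/(k_1^2k_2^2 n)\big)$, which shrinks the variance proxy by a factor of $k_1k_2\min\{k_1,k_2\}^2$ and yields precisely the exponent $n\,k_1k_2\min\{k_1^2,k_2^2\}D^4/(2^{10}\bb^4 d_1^2d_2^2)$ of Lemma \ref{lmm:bundle_hessian3} needed to match the calibration of $\mu'$. So the sharp $k$-dependence comes not only from the symmetrization/partition step you emphasize but also from this finer bounded-differences decomposition; as written, your concentration step fails for large $k_1,k_2$. Both issues are repairable within your outline without changing its architecture.
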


Building on these lemmas, the proof of Theorem \ref{thm:bundle_ub} is divided into the following two cases.
In both cases, we will show that 
\begin{align}
  \fnorm{\Delta}^2  \;\leq\;  12 \, e^{2\bb}  c_1 \lambda \,d_1d_2\,  \nucnorm{\Delta} \;, 
  \label{eq:bundle_errorfrobeniusbound}
\end{align}
 with high probability. Finally, applying an omitted result similar to Lemma \ref{lmm:kwise_deltabound} proves the desired theorem. We are left to show Eq. \eqref{eq:bundle_errorfrobeniusbound} holds.

\bigskip
\noindent{\bf Case 1: Suppose $\fnorm{\Delta}^2 \geq  \mu' \,\nucnorm{\Delta} $.}
With $\Delta=\Theta^* - \hTheta$, the Taylor expansion yields
\begin{align}
\calL(\widehat{\Theta})=\calL(\Theta^\ast) -  {\llangle \nabla \calL(\Theta^\ast), \Delta \rrangle} + \frac{1}{2} {\rm Vec}(\Delta) \nabla^2\calL(\Theta) {\rm Vec}^\top (\Delta),
\end{align}
where $\Theta=a \widehat{\Theta} + (1-a) \Theta^\ast$ for some $a \in [0,1]$.
It follows from Lemma \ref{lmm:bundle_hessian2} that with probability at least $1-2d^{-2^{25}}$,
\begin{align*}
   \calL(\widehat{\Theta}) -\calL(\Theta^\ast) 
  &\; \ge\; - \lnorm{\nabla\calL(\Theta^\ast)}{2} \nucnorm{\Delta}+ \frac{  e^{-2\bb}}{8 \,d_1\,d_2} \fnorm{\Delta}^2\;.
\end{align*}
From the definition of $\widehat{\Theta}$ as an optimal solution of the minimization, we have
\begin{align*}
\calL(\widehat{\Theta})-  \calL(\Theta^\ast)  \;\le \; \lambda \left(  \nucnorm{\Theta^\ast} - \nucnorm{\widehat{\Theta}} \right) \;\le\; \lambda \nucnorm{\Delta}\;.
\end{align*}
By the assumption, we choose $\lambda\geq 8 \lambda_0$. 
 In view of Lemma \ref{lmm:bundle_gradient2}, this implies that 
  $\lambda \geq 2\lnorm{\nabla\calL (\Theta^*)}{2} $ 
  with probability at least $1-2d^{-3}$. 
   It follows that with probability at least $1-2d^{-3}- 2d^{-2^{25}}$,
\begin{align*}
\frac{ e^{-2\bb}}{8 d_1 d_2 } \fnorm{\Delta}^2 \;  \leq \;  \big(\lambda + \lnorm{\nabla\calL(\Theta^*)}{2}\big)\,  \nucnorm{\Delta} \;\leq\; \frac{3 \lambda}{2}  \nucnorm{\Delta} \;.
\end{align*}
%Since from Lemma \ref{lmm:kwise_gradient2},
%$\lnorm{\nabla\calL(\Theta^\ast)}{2}  \leq (12 e^\bb\sqrt{k}\log d )/ (\min\{d_1,d_2\}) $, with probability at least $1-d^{-3}$, we have that
By our assumption on $\lambda \leq c_1 \lambda_0$, this proves the desired bound in Eq. \eqref{eq:bundle_errorfrobeniusbound}
%\begin{eqnarray}
% \fnorm{\Delta}^2 &\leq&  72  \, e^{4\bb} c_0   \lambda_0  {d_1d_2} \nucnorm{\Delta}   \; . \label{eq:errorfrobeniusbound}
%\end{eqnarray}
%This bound on the Frobenius norm is similar to the following case, and we will treat them together after discussing the next case. 

\noindent{\bf Case 2:  Suppose $\fnorm{\Delta}^2 \leq  \mu' \,\nucnorm{\Delta} $.}  
By the definition of $\mu$ and the fact that $c_1 \ge 128/\sqrt{\min\{k_1,k_2\}}$, it follows that 
$\mu' \le 12  \, e^{2\bb}  c_1 \lambda   \,d_1d_2$, and we get the same bound as in Eq. \eqref{eq:bundle_errorfrobeniusbound}.

%==================================================================================================
\subsection{Proof of Lemma \ref{lmm:bundle_gradient2}}

Define $X_i=-(e_{u_i}e_{v_i}^T-p_i)$ such that $\nabla\cL(\Theta^*) = (1/n)\sum_{i=1}^n X_i$, which is a sum of 
$n$ independent random matrices. Note that since $p_i$ is entry-wise bounded by $e^{2\bb}/(k_1k_2)$, 
\begin{eqnarray*}
  \lnorm{X_i}{2} &\leq& 1 + \frac{e^{2\bb}}{\sqrt{k_1 k_2}} \;,
\end{eqnarray*}
and 
\begin{eqnarray}
  \sum_{i=1}^n \E[X_iX_i^T]  &= &  \sum_{i=1}^n (\E[e_{u_i}e_{u_i}^T] - p_ip_i^T)\\
    &\preceq& \sum_{i=1}^n \E[e_{u_i}e_{u_i}^T]\\
    & \preceq&\frac{e^{2\bb}\,n}{d_1} \id_{d_1\times d_1}\;,
\end{eqnarray}
where the last inequality follows from the fact that for any given $S_i$, 
$u_i$ will be chosen with probability at most $e^{2\bb}/k_1$, 
if it is in the set $S_i$ which happens with probability $k_1/d_1$.
Therefore, 
\begin{eqnarray}
  \lnorm{\sum_{i=1}^n \E[X_iX_i^T]}{2}  &\leq &  \frac{e^{2\bb} \,n}{d_1} \;.
\end{eqnarray}
Similarly, 
\begin{eqnarray}
  \lnorm{\sum_{i=1}^n \E[X_i^TX_i]}{2}  &\leq &  \frac{e^{2\bb} \,n}{d_2} \;.
\end{eqnarray}
Applying matrix Bernstein inequality \cite{Jo11}, we get 
\begin{align}
  &\prob{ \lnorm{\nabla \cL(\Theta^*)}{2} > t} \nonumber \\&\leq (d_1+d_2) \exp\Big\{ \frac{- n^2 t^2/2}{(e^{2\bb}n\max\{d_1,d_2\}/(d_1d_2)) \,+\, ( (1 + (e^{2\bb}/\sqrt{k_1k_2}))n t /3)} \Big\}\;, 
\end{align}
which gives the desired tail probability of $2d^{-c}$ for the choice of 
\begin{eqnarray*}
  t &=& \max \Big\{ \sqrt{\frac{4(1+c) e^{2\bb} \max\{d_1,d_2\} \log d}{d_1d_2n}} \,,\, \frac{4(1+c) (1+\frac{e^{2\bb}}{\sqrt{k_1k_2}})\log d}{3n} \Big\} \\
  &=&\sqrt{\frac{4(1+c) e^{2\bb} \max\{d_1,d_2\} \log d}{d_1d_2n}} \;,
\end{eqnarray*}
where the last equality follows from the assumption, 
$n\geq (4(1+c) e^{2\bb} d_1d_2 \log d )/\max\{d_1,d_2\}$.

%==================================================================================================
\subsection{Proof of  Lemma \ref{lmm:bundle_hessian2}}
Thee quadratic form of the Hessian defined in \eqref{eq:bundle_hess} can be lower bounded by 
\begin{eqnarray}
  {\rm Vec}(\Delta)^T\, \nabla^2\cL(\Theta)\, {\rm Vec}(\Delta)  &\geq & \underbrace{\frac{e^{-2\bb}}{2 \,k_1^2\,k_2^2\,n}\sum_{i=1}^n  \sum_{j_1,j_1'\in S_i}\sum_{j_2,j_2'\in T_i} \big( \Delta_{j_1,j_2}-\Delta_{j_1',j_2'} \big)^2  }_{\equiv H(\Delta)}\;,
  \label{eq:bundle_defH} 
\end{eqnarray}
which follows from Remark \ref{rem:hess}. 
To lower bound $H(\Delta)$, we first compute the mean: 
\begin{eqnarray}
  \E[H(\Delta)] &=& \frac{e^{-2\bb}}{2 \,k_1^2\,k_2^2\,n}\sum_{i=1}^n \E\big[ \sum_{j_1,j_1'\in S_i}\sum_{j_2,j_2'\in T_i} \big( \Delta_{j_1,j_2}-\Delta_{j_1',j_2'} \big)^2 \big]\\
  &=& \frac{e^{-2\bb}}{ \,d_1\,d_2}\fnorm{\Delta}^2 \;,
\end{eqnarray}
where we used the fact that $\E[\sum_{j_1\in S_i,j_2\in T_i}\Delta_{j_1,j_2}]=(k_1k_2/(d_1d_2))\sum_{j_1'\in[d_1],j_2'\in[d_2]}\Delta_{j_1',j_2'}=0$ for $\Delta \in \Omega_{2\bb}$ in \eqref{eq:defbundleomega}. 

We now prove that $H(\Delta)$ does not deviate from its mean too much. 
Suppose there exists a $\Delta \in \calA$ defined in \eqref{eq:bundle_defA} such that 
Eq. \eqref{eq:bundle_hessian2} is violated, i.e. $ H(\Delta) < (e^{-2\bb}/(8 d_1d_2))\fnorm{\Delta}^2$. 
In this case, 
\begin{eqnarray}
  \E[H(\Delta)] - H(\Delta)  &\geq& \frac{7\,e^{-2\bb}}{8 d_1d_2} \fnorm{\Delta}^2 \;.
\end{eqnarray}
We will show that this happens with a small probability. 
We use the same peeling argument as in Appendix \ref{sec:kwise_ub_proof} with 
\begin{align*}
  \calS_\ell = \Big\{ \Delta\in\reals^{d_1\times d_2} \,|\,& \lnorm{\Delta}{\infty} \leq 2\bb, \beta^{\ell-1}\mu' \leq \fnorm{\Delta}\leq\beta^\ell \mu', \nonumber\\ &\sum_{j_1\in[d_1],j_2\in[d_2]}\Delta_{j_1,j_2}=0, \text{ and }\nucnorm{\Delta}\leq \beta^{2\ell} \mu' \Big\}\;,
\end{align*}
where $\beta=\sqrt{10/9}$ and for $\ell\in\{1,2,3,\ldots\}$, and $\mu'$ is defined in \eqref{eq:bundle_defmu}.
By the peeling argument, there exists an $\ell\in \Z_+$ such that $\Delta\in\calS_\ell$ and 
\begin{eqnarray}
  \E[H(\Delta)] - H(\Delta)  &\geq& \frac{7\,e^{-2\bb}}{8 d_1d_2} \beta^{2\ell-2}(\mu')^2  \;\geq\;  \frac{7\,\,e^{-2\bb}}{9\, d_1d_2} \beta^{2\ell}(\mu')^2 \;.
\end{eqnarray}
Applying the union bound over $\ell\in\Z_+$, 
\begin{align}
  &\prob{\exists \Delta \in \calA \;,\; H(\Delta) < \frac{ e^{-2\bb} } { 8\, d_1\, d_2 } \fnorm{\Delta}^2 }\; \nonumber \\&\leq\;
    \sum_{\ell=1}^\infty \prob{ \sup_{\Delta\in\calS_\ell} \big(\; \E[ H(\Delta)] - H(\Delta) \;\big) > \frac{7\,e^{-2 \bb} }{ 9 d_1 d_2}(\beta^\ell \mu' )^2 } \nonumber\\
    &\hspace{3.6cm}\leq\;\; \sum_{\ell=1}^\infty \prob{ \sup_{\Delta\in\calB'(\beta^\ell\mu')} \big(\; \E[ H(\Delta)] - H(\Delta) \;\big) > \frac{7e^{-2 \bb}  }{ 9   d_1 d_2}(\beta^\ell \mu')^2 }
    \;, \label{eq:bundle_peeling3}
\end{align}
where we define the set $\calB'(D)$ such that $\calS_\ell \subseteq \calB'(\beta^\ell \mu')$: 
\begin{align}
  \calB'(D) = \big\{\, \Delta \in \reals^{d_1\times d_2} \,\big|\, \|\Delta\|_\infty \leq 2\bb, \fnorm{\Delta}\leq D, \sum_{j_1\in[d_1],j_2\in[d_2]} \Delta_{j_1j_2}=0,  \mu' \nucnorm{\Delta} \leq D^2 \,
      \Big\} \;. \label{eq:bundle_defcB}
\end{align}
The following key lemma provides the upper bound on this probability.  

\begin{lemma}
For $ (\min\{d_1,d_2\}/\min\{k_1,k_2\}) \log d \leq n \leq d^5\log d$, 
\begin{align}
  \prob{\sup_{\Delta\in\calB'(D)} \Big(\; \E[ H(\Delta)] - H(\Delta) \;\Big) \geq \frac{e^{-2\bb} D^2}{2 d_1d_2} } &\leq& \exp\Big\{ - \frac{ n\,\min\{k_1^2,k_2^2\}\,k_1k_2\,D^4}{2^{10} \bb^4 d_1^2 d_2^2 } \Big\}\;.
  \label{eq:bundle_hessian3}
\end{align}
\label{lmm:bundle_hessian3}
\end{lemma}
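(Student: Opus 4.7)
The plan is to mimic the three-step strategy already used to prove Lemma~\ref{lmm:kwise_hessian3}: (i) apply McDiarmid's bounded differences inequality to show that $Z \equiv \sup_{\Delta \in \calB'(D)}(\E[H(\Delta)]-H(\Delta))$ concentrates around $\E[Z]$; (ii) upper bound $\E[Z]$ by symmetrization followed by Ledoux--Talagrand contraction and matrix Bernstein; (iii) combine the two bounds at the deviation scale $t = e^{-2\bb}D^2/(4 d_1 d_2)$ to deduce the claim.

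For the bounded-differences step, I would view $Z$ as a function of the $n$ independent sample tuples $(S_i,T_i,u_i,v_i)_{i\in[n]}$. Because $H(\Delta)$ depends only on the offered subsets $(S_i,T_i)$ and each summand $(\Delta_{j_1,j_2}-\Delta_{j_1',j_2'})^2$ is bounded by $16\bb^2$ uniformly over $\calB'(D)\subseteq\Omega_{2\bb}$, replacing one $(S_i,T_i)$ perturbs $H(\Delta)$ by at most $O(\bb^2 e^{-2\bb}/n)$. McDiarmid's inequality then yields $\prob{Z-\E[Z]\ge t} \le \exp\{-n t^2/(C\bb^4 e^{-4\bb})\}$, and the choice $t=e^{-2\bb}D^2/(4 d_1 d_2)$ already produces an exponent of the advertised order $-n D^4/(C\bb^4 d_1^2 d_2^2)$; the sharp $k$-dependence must come out of the $\E[Z]$-step below.

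For the expectation, standard symmetrization gives $\E[Z] \le 2\E[\sup_\Delta (1/n)\sum_i \xi_i H_i(\Delta)]$ with i.i.d.\ Rademachers $\xi_i$. Each $H_i(\Delta)$ is a sum of squares of the linear functionals $\Delta_{j_1,j_2}-\Delta_{j_1',j_2'}$, each bounded by $4\bb$, so the Ledoux--Talagrand contraction inequality linearizes the summands at a cost of a factor $8\bb$. Writing the resulting linear functional as $\Iprod{\Delta}{W}$, H\"older's inequality bounds $\E[Z]$ by $C\bb\, e^{-2\bb}\,\big(\sup_{\Delta\in\calB'(D)}\nucnorm{\Delta}\big)\, \E[\lnorm{W}{2}]$. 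I would then compute $\E[WW^\top]$ and $\E[W^\top W]$ exploiting that $S_i,T_i$ are independent multisets of size $k_1$ and $k_2$ over $[d_1]$ and $[d_2]$, and apply the matrix Bernstein inequality (as in Lemma~\ref{lmm:kwise_gradient2}) to bound $\E[\lnorm{W}{2}]$ up to poly-logarithmic factors. Substituting $\nucnorm{\Delta} \le D^2/\mu'$ together with the definition of $\mu'$ in \eqref{eq:bundle_defmu} renders $\E[Z]$ negligible relative to $t$ under the sample-size hypothesis $n \ge (\min\{d_1,d_2\}/\min\{k_1,k_2\})\log d$.

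The main obstacle will be obtaining the sharp $k_1,k_2$-dependence encoded by the factor $\min\{k_1^2,k_2^2\}k_1 k_2$ in the exponent. A naive treatment of the $k_1^2 k_2^2$ quadruples $(\ell_1,\ell_1',\ell_2,\ell_2')$ inside a single $H_i$ loses a factor of order $\sqrt{k_1 k_2}$ because these quadruples share indices of $S_i$ and $T_i$ and hence are not mutually independent given $i$. To recover the sharp rate I plan to carry out a partition argument analogous to Lemma~\ref{lmm:kwise_partition}: schedule all quadruples into $O(\max\{k_1,k_2\}^2)$ rounds, each consisting of $\Omega(\min\{k_1,k_2\})$ pairwise index-disjoint quadruples so that the random matrices within a round are genuinely independent. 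Summing the per-round matrix Bernstein bounds and tracking the number of rounds produces the advertised $\min\{k_1^2,k_2^2\}k_1 k_2$ scaling, at which point combining the resulting $\E[Z]$-bound with the McDiarmid tail from step (i) yields the stated tail inequality.
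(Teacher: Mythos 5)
Your overall architecture (McDiarmid for $Z-\E[Z]$, then symmetrization + Ledoux--Talagrand + matrix Bernstein for $\E[Z]$, with a tournament-style partition of the quadruples to restore independence) is exactly the strategy the paper uses, and your $\E[Z]$ step, including the partition into rounds of index-disjoint quadruples, is essentially the paper's Lemma~\ref{lmm:bundle_partition} argument. The problem is in step (i), and it is not a detail: you apply bounded differences at the level of the $n$ tuples $(S_i,T_i)$, so each coordinate change can wipe out an entire $H_i$ and your per-coordinate influence is $O(\bb^2 e^{-2\bb}/n)$, giving $\sum_i c_i^2 = O(\bb^4 e^{-4\bb}/n)$ and hence a tail of $\exp\{-n t^2/(C\bb^4 e^{-4\bb})\}$. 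At $t = e^{-2\bb}D^2/(4d_1d_2)$ that is $\exp\{-nD^4/(C\bb^4 d_1^2d_2^2)\}$ --- which is \emph{not} the advertised exponent: the lemma claims an extra factor of $\min\{k_1^2,k_2^2\}\,k_1k_2$ inside the exponential, and your bound loses it entirely.

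Your plan to recover this factor "from the $\E[Z]$-step" cannot work. The final bound is $\prob{Z\ge 2t}\le\prob{Z-\E[Z]\ge t}$ once you have shown $\E[Z]\le t$; the expectation bound only fixes the \emph{threshold} at which the tail applies, while the \emph{rate} of decay is determined solely by the bounded-differences constant in McDiarmid. No amount of sharpening $\E[Z]$ changes the exponent. The paper gets the $k$-dependence precisely by running McDiarmid over the individual item draws $j_{i,a}\in S_i$ and $j_{i,b}\in T_i$ rather than over the $n$ tuples: resampling a single item perturbs only the $O(k_1k_2^2)$ (resp.\ $O(k_1^2k_2)$) quadruples containing it out of the $k_1^2k_2^2$ in $H_i$, so the per-coordinate influence carries an extra factor of $1/k_1$ or $1/k_2$, and summing the squared influences over all $O(k_1k_2n)$ coordinates produces the $n\,k_1k_2\min\{k_1^2,k_2^2\}$ scaling in the exponent (this is the computation giving the paper's bounded difference $(8\bb^2 e^{-2\bb}\max\{k_1,k_2\})/(k_1^2k_2^2 n)$). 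You need to redo step (i) at that finer granularity; the partition argument you describe belongs in the $\E[Z]$ step, where it indeed yields the stated sample-size condition via $\mu'$, but it is not the source of the exponent's $k$-dependence.
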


Let $\eta= \exp\left(-\frac{nk_1k_2 \min\{k_1^2,k_2^2\} (\beta-1.002) (\mu')^4}{2^{10} \bb^4d_1^2 d_2^2} \right)$. 
Applying the tail bound to \eqref{eq:bundle_peeling3}, we get 
\begin{eqnarray*}
  \prob{\exists \Delta \in \calA \;,\; H(\Delta) < \frac{ e^{-2\bb} } { 8  \,d_1 d_2 } \fnorm{\Delta}^2 } 
  &\leq & \sum_{\ell=1}^\infty \exp \Big\{
  - \frac{ n\,k_1k_2\,\min\{k_1^2,k_2^2\}\,(\beta^\ell \mu')^4}{2^{10} \bb^4 d_1^2 d_2^2
  }\Big\} \\
  & \overset{(a)}{\leq} &\sum_{\ell=1}^\infty \exp\Big\{-\frac{n k_1k_2\min\{k_1^2,k_2^2\}\ell (\beta-1.002) (\mu')^4 }{2^{10} \bb^4 d_1^2 d_2^2}\Big\} \\
  &\leq & \frac{\eta}{1-\eta},
\end{eqnarray*}
where $(a)$ holds because $\beta^{x} \geq x \log\beta \ge x(\beta-1.002)$ for the choice of $\beta=\sqrt{10/9}$.
By the definition of $\mu'$,
\begin{align*}
\eta \; = \; \exp\Big\{ - \frac{  2^{30}\,k_1 k_2  \max\{ d_2^2, d_1^2\} (\log d)^2 (\beta-1.002) }{ n}  \Big\} \;  \le\;   \exp \{ -\,2^{25}\,\log d\} \;,
\end{align*}
where the last inequality follows from the assumption that 
$n \leq  k_1 k_2  \max\{d_1^2,d_2^2\} \log d$, 
and $\beta - 1.002\geq 2^{-5}$. 
Since for $d \ge 2$, $\exp\{-2^{25}\log d\} \leq 1/2$ and thus $\eta \le 1/2$, the lemma follows by assembling the last two
displayed inequalities.

%==================================================================================================
\subsection{Proof of Lemma \ref{lmm:bundle_hessian3}}
Let $Z \equiv \sup_{\Delta\in\calB'(D)} \E[H(\Delta)]-H(\Delta)$ and consider the tail bound using McDiarmid's inequality. 
Note that $Z$ has a bounded difference of 
$(8\bb^2e^{-2\bb}\max\{k_1,k_2\} )/(k_1^2k_2^2n)$ when one of the $k_1k_2n$ independent random variables are 
changed, which gives 
\begin{eqnarray}
  \prob{Z-\E[Z] \geq t} &\leq& \exp \Big(- \frac{k_1^4 k_2^4 n^2 t^2 }{64\bb^4 e^{-4\bb} \max\{k_1^2,k_2^2\}k_1k_2n} \Big)\;. 
\end{eqnarray}
With the choice of $t=D^2/(4e^{2\bb} \, d_1d_2)$, this gives 
\begin{eqnarray}
  \prob{Z-\E[Z] \geq \frac{e^{-2\bb}}{4 d_1d_2}D^2} &\leq& \exp \Big(- \frac{k_1^3 k_2^3 n D^4 }{2^{10} \bb^4  d_1^2d_2^2\max\{k_1^2,k_2^2\}} \Big)\;. 
\end{eqnarray}
We first construct a partition of the space similar to Lemma \ref{lmm:kwise_partition}. 
Let 
\begin{eqnarray}
  \tk &\equiv&  \min\{k_1,k_2\}   \;. \label{eq:bundle_deftk}
\end{eqnarray}
\begin{lemma}
  There exists a partition $(\cT_1,\ldots,\cT_N)$ of $\{[k_1]\times[k_2]\}\times\{[k_1]\times[k_2]\}$ for some $N\leq 2k_1^2k_2^2/\tk$  such that
  $\cT_\ell$'s are disjoint subsets,
  $\bigcup_{\ell\in[N]} \cT_\ell =  \{[k_1]\times[k_2]\}\times\{[k_1]\times[k_2]\}$,  $ |\cT_\ell| \leq \tk $ and
  for any $\ell \in[N]$ the set of random variables in $\cT_\ell$ satisfy
  \begin{eqnarray*}
    \{ (\Delta_{j_{i,a},j_{i,b}} - \Delta_{j_{i,a'},j_{i,b'}} )^2\}_{i\in[n],((a,b),(a',b'))\in \cT_\ell} \text{ are mutually independent }\;.
  \end{eqnarray*}
  where $j_{i,a}$ for $i\in[n]$ and $a\in[k_1]$ denote the $a$-th chosen item to be included in the set $S_i$. 
  \label{lmm:bundle_partition}
\end{lemma}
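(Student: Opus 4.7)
My plan is to construct the partition $\{\cT_\ell\}$ in two stages, mirroring the tournament-scheduling approach used in the proof of Lemma \ref{lmm:kwise_partition}, but now adapted to pairs of pairs drawn from two different index sets $[k_1]$ and $[k_2]$. The key observation is that two tuples $((a_1,b_1),(a_1',b_1'))$ and $((a_2,b_2),(a_2',b_2'))$ evaluated at the same sample $i$ yield independent random variables if and only if $\{a_1,a_1'\}\cap\{a_2,a_2'\}=\emptyset$ in $[k_1]$ and $\{b_1,b_1'\}\cap\{b_2,b_2'\}=\emptyset$ in $[k_2]$, since the sampled items $j_{i,a}$ (resp.\ $j_{i,b}$) are i.i.d.\ uniform across distinct indices $a\in[k_1]$ (resp.\ $b\in[k_2]$), and across samples $i$ everything is independent.

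In Stage 1, I would partition $[k_1]\times[k_1]$ into a family of matchings $\{P^{(1)}_\ell\}_{\ell=1}^{N_1}$, where each $P^{(1)}_\ell$ consists of pairs whose indices are pairwise disjoint. A standard round-robin schedule on $k_1$ teams (extended to ordered pairs) decomposes the $k_1(k_1-1)$ non-diagonal ordered pairs into $2(k_1-1)$ matchings of size $k_1/2$, and the $k_1$ diagonal pairs $(a,a)$ split into two further matchings of size $k_1/2$. This yields $N_1 \leq 2k_1$ matchings each of size at most $k_1/2$. The identical construction on $[k_2]\times[k_2]$ produces $\{Q_m\}_{m=1}^{N_2}$ with $N_2\leq 2k_2$ and $|Q_m|\leq k_2/2$. (Odd values of $k_1$ or $k_2$ can be handled by the standard dummy-team trick without affecting the asymptotic bounds.) In Stage 2, I would further refine each Cartesian product $P^{(1)}_\ell\times Q_m$: assigning all of its $|P^{(1)}_\ell|\cdot|Q_m|$ tuples to a single group would violate independence, since the same $P^{(1)}_\ell$-pair (or $Q_m$-pair) would appear in multiple tuples. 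Instead, I would use the classical decomposition of the complete bipartite graph $K_{|P^{(1)}_\ell|,|Q_m|}$ into $\max(|P^{(1)}_\ell|,|Q_m|)$ perfect matchings of size $\min(|P^{(1)}_\ell|,|Q_m|)$, and declare each such matching to be one $\cT_\ell$.

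Within any resulting $\cT_\ell$, distinct tuples use disjoint $[k_1]$-indices (because they come from distinct pairs of the common matching $P^{(1)}_{\ell'}$) and disjoint $[k_2]$-indices (because the bipartite matching uses each $Q_{m'}$-pair at most once, and $Q_{m'}$ is itself a matching), which delivers the desired mutual independence. The size bound follows immediately: $|\cT_\ell|\leq \min(|P^{(1)}_{\ell'}|,|Q_{m'}|)\leq \min(k_1,k_2)/2\leq\tk$. For the count, $N = \sum_{\ell,m} \max(|P^{(1)}_\ell|,|Q_m|)\leq N_1 N_2 \cdot \max(k_1,k_2)/2 \leq 2k_1 k_2 \max(k_1,k_2) = 2 k_1^2 k_2^2/\tk$, matching the stated bound. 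The only nontrivial obstacle is numerical: achieving the constant $2$ in $N\leq 2k_1^2 k_2^2/\tk$ requires the Stage 1 matchings to be balanced near the uniform size $k_\iota/2$ (in particular the diagonal block must be split), since the naive bound $\max(a,b)\leq a+b$ would give the looser constant $4$. The combinatorial content beyond this is routine and parallels the triple-scheduling argument of Lemma \ref{lmm:kwise_partition}.
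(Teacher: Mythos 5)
Your construction is correct, and it actually supplies a proof that the paper never writes down: for this lemma the authors only remark that the partition is constructed ``similar to Lemma \ref{lmm:kwise_partition},'' whose own proof uses an algebraic indexing trick (grouping triples by the sums $\ell+\ell'$ and $\ell'+\ell''$) rather than explicit matchings, and which settles for a loose count ($N\le 24k^2$ there). Your two-stage route --- round-robin matchings of $[k_1]^2$ and of $[k_2]^2$ separately, then a proper edge coloring of the complete bipartite graph between a fixed row-matching $P^{(1)}_\ell$ and a fixed column-matching $Q_m$ --- is more explicit and sharper: it meets the stated bound $N\le 2k_1^2k_2^2/\tk$ essentially with equality, and the independence check is airtight, since within one $\cT_\ell$ the row-index pairs are distinct members of a single matching of $[k_1]^2$ and the column-index pairs are distinct members of a single matching of $[k_2]^2$, so the four-index supports are pairwise disjoint and the variables are functions of disjoint subcollections of the i.i.d.\ draws $\{j_{i,a}\}$ and $\{j_{i,b}\}$. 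Two minor points. The ``if and only if'' in your independence criterion should just be ``if''; only sufficiency is needed and the converse is false for degenerate $\Delta$. And the dummy-team fix for odd $k_1$ or $k_2$ does perturb the constant: with $N_1\le 2k_1+2$ blocks of size as small as $(k_1-1)/2$, the product bound overshoots $2k_1^2k_2^2/\tk$ by a factor $1+O(1/\tk)$, so either absorb the bye-team's diagonal pair $(a^*,a^*)$ into its round to restore block size $(k_1+1)/2$, or state the count with a marginally larger constant; neither affects the downstream use in Lemma \ref{lmm:bundle_hessian3}, where only the order $N=O(k_1^2k_2^2/\tk)$ enters the bound on $\E[Z]$.
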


Now we prove an upper bound on $\E[Z]$ using the  symmetrization technique. 
Recall that $j_{i,a}$ is independently and uniformly chosen from $[d_1]$ for $i\in[n]$ and $a\in[k_1]$. 
Similarly, $j_{i,b}$ is independently and uniformly chosen from $[d_1]$ for $i\in[n]$ and $b\in[k_2]$. 
\begin{align}
  &\E[Z] \nonumber\\ &=
   \frac{e^{-2\bb}}{2 \,k_1^2\,k_2^2\,n} \E\left[ \sup_{\Delta\in\calB'(D)} \sum_{i=1}^n  \sum_{\substack{a,a'\in[k_1]  \\ b,b'\in[k_2]}} \E\big[\big( \Delta_{j_{i,a},j_{i,b}}-\Delta_{j_{i,a'},j_{i,b'}} \big)^2 \big] - \big( \Delta_{j_{i,a},j_{i,b}}-\Delta_{j_{i,a'},j_{i,b'}} \big)^2  \right] \nonumber\\
   &\leq \frac{e^{-2\bb}}{2 \,k_1^2\,k_2^2\,n} \sum_{\ell\in[N]} \E\left[ \sup_{\Delta\in\calB'(D)} \sum_{i=1}^n  \sum_{(j_1,j_2,j_1', j_2')\in \cT_\ell} \E\big[\big( \Delta_{j_1,j_2}-\Delta_{j_1',j_2'} \big)^2 \big] - \big( \Delta_{j_1,j_2}-\Delta_{j_1',j_2'} \big)^2  \right]\nonumber\\
   &\leq \frac{e^{-2\bb}}{ \,k_1^2\,k_2^2\,n} \sum_{\ell\in[N]} \E\left[ \sup_{\Delta\in\calB'(D)} \sum_{i=1}^n  \sum_{(j_1,j_2,j_1', j_2')\in \cT_\ell} \xi_{i,j_1,j_2,j_1',j_2'} \big( \Delta_{j_1,j_2}-\Delta_{j_1',j_2'} \big)^2  \right]\;, \label{eq:bundle_conc1}
\end{align}
where the first inequality follows for the fact that the supremum of the sum is smaller than the sum of supremum, and the second inequality follows from standard symmetrization with i.i.d. Rademacher random variables $\xi_{i,j_1,j_2,j_1',j_2'}$'s. 
It follows from Ledoux-Talagrand contraction inequality \cite{ledoux2013probability} that 
\begin{align}
  & \E\left[ \sup_{\Delta\in\calB'(D)} \sum_{i=1}^n  \sum_{(j_1,j_2,j_1', j_2')\in \cT_\ell} \xi_{i,j_1,j_2,j_1',j_2'} \big( \Delta_{j_1,j_2}-\Delta_{j_1',j_2'} \big)^2  \right] \\
  & \;\;\; \leq\; 8\bb\,
    \E\left[ \sup_{\Delta\in\calB'(D)} \sum_{i=1}^n  \sum_{(j_1,j_2,j_1', j_2')\in \cT_\ell} \xi_{i,j_1,j_2,j_1',j_2'} \big( \Delta_{j_1,j_2}-\Delta_{j_1',j_2'} \big)  \right] \\
  & \;\;\; \leq \; 8\bb\,   \E\left[ \sup_{\Delta\in\calB'(D)} \nucnorm{\Delta} \lnorm{\sum_{i=1}^n  \sum_{(j_1,j_2,j_1', j_2')\in \cT_\ell} \xi_{i,j_1,j_2,j_1',j_2'} \big( e_{j_1,j_2}-e_{j_1',j_2'} \big)}{2}  \right] \\
  &\;\;\;\leq \; \frac{8 \bb D^2}{\mu'}\E\left[   \lnorm{\sum_{i=1}^n  \sum_{(j_1,j_2,j_1', j_2')\in \cT_\ell} \xi_{i,j_1,j_2,j_1',j_2'} \big( e_{j_1,j_2}-e_{j_1',j_2'} \big)}{2}  \right] \;, \label{eq:bundle_conc2}
\end{align}
where the second inequality follows for the H\"older's inequality and 
 the last inequality follows from $\mu'\nucnorm{\Delta}\leq D^2$ for all $\Delta \in \calB'(D)$.
To bound the expected spectral norm of the random matrix, 
we use matrix Bernstein's inequality. 
Note that 
$\lnorm{ \xi_{i,j_1,j_2,j_1',j_2'} c }{2}\leq\sqrt{2}$ almost surely,
$\E[( e_{j_1,j_2}-e_{j_1',j_2'})( e_{j_1,j_2}-e_{j_1',j_2'})^T] \preceq (2/d_1)\id_{d_1 \times d_1}$, and 
$\E[( e_{j_1,j_2}-e_{j_1',j_2'})^T( e_{j_1,j_2}-e_{j_1',j_2'})] \preceq (2/d_2)\id_{d_2 \times d_2}$. 
It follows that $\sigma^2=2n|\cT_\ell|/\min\{d_1,d_2\} $, where $|\cT_\ell|\leq \min\{k_1,k_2\}$. 
It follows that 
\begin{align*}
  &\prob{\lnorm{\sum_{i=1}^n  \sum_{(j_1,j_2,j_1', j_2')\in \cT_\ell} \xi_{i,j_1,j_2,j_1',j_2'} \big( e_{j_1,j_2}-e_{j_1',j_2'} \big)}{2}> t} \\ &\leq (d_1+d_2)\exp\Big\{\frac{-t^2/2}{{2n\min\{k_1,k_2\}}/{\min\{d_1,d_2\}} +{\sqrt{2} t}/{3}}\Big\}\;,
\end{align*}
Choosing $t=\max\{ \sqrt{64 n (\min\{k_1,k_2\}/\min\{d_1,d_2\}) \log d} ,(16\sqrt{2}/3)\log d\}$, we obtain a bound on the spectral norm of $t$ with probability 
at least $1-2d^{-7}$. 
From the fact that \\$\lnorm{\sum_{i=1}^n  \sum_{(j_1,j_2,j_1', j_2')\in \cT_\ell} \xi_{i,j_1,j_2,j_1',j_2'} \big( e_{j_1,j_2}-e_{j_1',j_2'} \big)}{2} \leq (n /\sqrt{2})\min\{k_1,k_2\}$, it follows that 
\begin{align}
& \E\left[   \lnorm{\sum_{i=1}^n  \sum_{(j_1,j_2,j_1', j_2')\in \cT_\ell} \xi_{i,j_1,j_2,j_1',j_2'} \big( e_{j_1,j_2}-e_{j_1',j_2'} \big)}{2}  \right] \\
  &\;\;\; \leq\;  \max\Big\{ \sqrt{ \frac{64\,n\,\min\{k_1,k_2\}\log d}{\min\{d_1,d_2\}} } ,(16\sqrt{2}/3)\log d\Big\} + \frac{2n \min\{k_1,k_2\}}{\sqrt{2} d^7}\\
  & \;\;\; \leq \; \sqrt{ \frac{66\,n\,\min\{k_1,k_2\}\log d}{\min\{d_1,d_2\}}} 
\end{align}
which follows form the assumption that 
$n \min\{k_1,k_2\} \geq \min\{d_1,d_2\} \log d$ and 
$n\leq d^5\log d$.
Substituting this bound in \eqref{eq:bundle_conc1}, and \eqref{eq:bundle_conc2}, we get that 
\begin{eqnarray}
  \E[Z] &\leq& \frac{16 e^{-2\bb} \bb D^2}{\mu' }\sqrt{\frac{66 \log d}{n \min\{k_1,k_2\} \min\{d_1,d_2\}}}\\
    &\leq& \frac{e^{-2\bb} D^2 }{4\, d_1d_2 }\;. 
\end{eqnarray}
%==================================================================================================

\section{Proof of the Information-theoretic Lower Bound in Theorem \ref{thm:bundle_lb}}
\label{sec:bundle_lb_proof}
This proof follow closely the proof of Theorem \ref{thm:kwise_lb} in Appendix \ref{sec:kwise_lb_proof}. 
We apply the generalized Fano's inequality in the same way to get Eq. \eqref{eq:kwise_fano} 
\begin{eqnarray}
  \prob{\hL\neq L} 
  &\geq& 1-\frac{ {M \choose 2}^{-1} \sum_{\ell_1,\ell_2\in[M]} D_{\rm KL}(\Theta^{(\ell_1)} \| \Theta^{(\ell_2)}) +\log 2}{\log M} \label{eq:bundle_fano}\;,
\end{eqnarray}

The main challenge in this  case is that we can no longer directly apply the RUM interpretation to compete 
$D_{\rm KL}(\Theta^{(\ell_1)} \| \Theta^{(\ell_2)})$. 
This will result in over estimating the KL-divergence,  
because this approach does not take into account that we  only take the top winner, out of those $k_1k_2$ alternatives. 
Instead, we compute the divergence directly, and provide an appropriate bound. Let the set of $k_1$ rows and $k_2$ columns chosen in one of the $n$ samples be $S \subset [d_1]$ and $T \subset [d_2]$ respectively. Then,
%Consider the $i$-th sample , .... 
\begin{eqnarray*}
  &&D_{\rm KL}(\Theta^{(\ell_1)} \| \Theta^{(\ell_2)}) \\ &\overset{(a)}{=}& \frac{n}{{d_1 \choose k_1}{d_2 \choose k_2}} \sum_{S, T} \sum_{\substack{i \in S \\ j \in T}} \frac{e^{\Theta_{ij}^{(\ell_1)}}}{\sum_{\substack{i' \in S \\ j' \in T}} e^{\Theta_{i'j'}^{(\ell_1)}}} \log\left(\frac{e^{\Theta_{ij}^{(\ell_1)}}\sum_{\substack{i' \in S \\ j' \in T}} e^{\Theta_{i'j'}^{(\ell_2)}}}{e^{\Theta_{ij}^{(\ell_2)}}\sum_{\substack{i' \in S \\ j' \in T}} e^{\Theta_{i'j'}^{(\ell_1)}}}\right)\\ 
    &\overset{(b)}{\leq}& \frac{n}{{d_1 \choose k_1}{d_2 \choose k_2}} \sum_{S, T} \left(\sum_{\substack{i , j }}\frac{{e^{2\Theta_{ij}^{(\ell_1)}}}\sum_{\substack{i', j' }}e^{\Theta_{i'j'}^{(\ell_2)}} - e^{\Theta_{ij}^{(\ell_1)} + \Theta_{ij}^{(\ell_2)}}\sum_{\substack{i' , j' }}e^{\Theta_{i'j'}^{(\ell_1)}}}{e^{\Theta_{ij}^{(\ell_2)}} \left(\sum_{\substack{i',j' }} e^{\Theta_{i'j'}^{(\ell_1)}} \right)^2  }\right) \\
    &\overset{(c)}{\leq}& \frac{ne^{2\alpha}}{k_1^2 k_2^2 {d_1 \choose k_1}{d_2 \choose k_2}} \sum_{S,T} \sum_{i,j} \left (e^{2\Theta_{ij}^{(\ell_1)}-\Theta_{ij}^{(\ell_2)} }  \sum_{i', j' }e^{\Theta_{i'j'}^{(\ell_2)}} - e^{\Theta_{ij}^{(\ell_1)}} \sum_{i', j' }e^{\Theta_{i'j'}^{(\ell_1)}} \right)
    \\
    & = & \frac{ne^{2\alpha}}{k_1^2 k_2^2 {d_1 \choose k_1}{d_2 \choose k_2}} \sum_{S, T} \left( \sum_{\substack{i' , j' }} e^{\Theta_{i'j'}^{(\ell_2)}} \sum_{\substack{i,j }} \frac{\left(e^{\Theta_{ij}^{(\ell_1)}} - e^{\Theta_{ij}^{(\ell_2)}}\right)^2}{e^{\Theta_{ij}^{(\ell_2)}}}  - \Big(\sum_{i,j}  ( e^{\Theta_{ij}^{(\ell_1)}} - e^{\Theta_{ij}^{(\ell_2)}} )\Big)^2 \right) \\
    &\overset{(d)}{\leq}& \frac{ne^{4\alpha}}{k_1 k_2 {d_1 \choose k_1}{d_2 \choose k_2}} \sum_{S, T} \sum_{\substack{i , j }}\left(e^{\Theta_{ij}^{(\ell_1)}} - e^{\Theta_{ij}^{(\ell_2)}}\right)^2\\
    &\overset{(e)}{\leq}& \frac{ne^{5\alpha}}{k_1 k_2 {d_1 \choose k_1}{d_2 \choose k_2}} \sum_{S, T} \sum_{\substack{i , j }}\left(\Theta_{ij}^{(\ell_1)} - \Theta_{ij}^{(\ell_2)}\right)^2\\
    &\overset{(f)}{=}& \frac{ne^{5\alpha}}{d_1 d_2} \fnorm{\Theta_{ij}^{(\ell_1)} - \Theta_{ij}^{(\ell_2)}}^2
% D_{\rm KL}(\Theta^{(\ell_1)} \| \Theta^{(\ell_2)}) &\overset{(a)}{=}& \sum_{\substack{i \in S \\ j \in T}} \frac{e^{\Theta_{ij}^{(\ell_1)}}}{\sum_{\substack{i' \in S \\ j' \in T}} e^{\Theta_{i'j'}^{(\ell_1)}}} \log\left(\frac{e^{\Theta_{ij}^{(\ell_1)}}\sum_{\substack{i' \in S \\ j' \in T}} e^{\Theta_{i'j'}^{(\ell_2)}}}{e^{\Theta_{ij}^{(\ell_2)}}\sum_{\substack{i' \in S \\ j' \in T}} e^{\Theta_{i'j'}^{(\ell_1)}}}\right)\\ 
%   &\overset{(b)}{\leq}& \frac{\left(\sum_{\substack{i \in S \\ j \in T}}\frac{{e^{2\Theta_{ij}^{(\ell_1)}}}\sum_{\substack{i' \in S \\ j' \in T}}e^{\Theta_{i'j'}^{(\ell_2)}} - e^{\Theta_{ij}^{(\ell_1)} + \Theta_{ij}^{(\ell_2)}}\sum_{\substack{i' \in S \\ j' \in T}}e^{\Theta_{i'j'}^{(\ell_1)}}}{\Theta_{ij}^{(\ell_2)}}\right)}{\left(\sum_{\substack{i \in S \\ j \in T}} e^{\Theta_{ij}^{(\ell_1)}} \right)^2} \\
%   &\overset{(c)}{\leq}& \frac{e^{2\alpha}}{k_1^2 k_2^2} \sum_{\substack{i' \in S \\ j' \in T}} e^{\Theta_{ij}^{(\ell_2)}} \sum_{\substack{i \in S \\ j \in T}}\left(\frac{\left(e^{\Theta_{ij}^{(\ell_2)}} - e^{\Theta_{ij}^{(\ell_2)}}\right)^2}{e^{\Theta_{ij}^{(\ell_2)}}}\right)\\
%   &\overset{(d}{\leq}& \frac{e^{4\alpha}}{k_1 k_2} \sum_{\substack{i \in S \\ j \in T}}\left(e^{\Theta_{ij}^{(\ell_2)}} - e^{\Theta_{ij}^{(\ell_2)}}\right)^2\\
%   &\overset{(e)}{\leq}& \frac{e^{5\alpha}}{k_1 k_2} \sum_{\substack{i \in S \\ j \in T}}\left(\Theta_{ij}^{(\ell_2)} - \Theta_{ij}^{(\ell_2)}\right)^2.
\end{eqnarray*}

Here $(a)$ is by definition of KL-distance and the fact that $S$, $T$ are chosen uniformly from all possible such sets and $(b)$ is due to the fact that $\log(x) \leq x-1$ with $x=({e^{\Theta_{ij}^{(\ell_1)}}\sum_{\substack{i' \in S,j' \in T}} e^{\Theta_{i'j'}^{(\ell_2)}}})/({e^{\Theta_{ij}^{(\ell_2)}}\sum_{\substack{i' \in S, j' \in T}} e^{\Theta_{i'j'}^{(\ell_1)}}})$. The constants at $(c)$   is due to the fact that each element of $\Theta^{(\ell_1)}$ is upper bounded by $\alpha$ and lower bounded by $-\alpha$. We can get $(d)$ by 
removing the second term which is always negative, and using the bond of $\alpha$.  $(e)$ is obtained because $e^x$ where $-\alpha \leq x \leq \alpha$ is Lipschitz continuous with Lipschitz constant $e^\alpha$. At last $(f)$ is obtained by simple counting of the occurrences of each $ij$. 
% \sum \Theta_{ij}=0
% \cT 
% explain and define j_{i,a} j_{i,b} along with S_i, T_i
Thus we have,
\begin{eqnarray}
  \prob{\hL\neq L} 
  &\geq& 1-\frac{ {M \choose 2}^{-1} \sum_{\ell_1,\ell_2\in[M]} \frac{ne^{5\alpha}}{d_1 d_2} \fnorm{\Theta_{ij}^{(\ell_2)} - \Theta_{ij}^{(\ell_2)}}^2 +\log 2}{\log M},
\end{eqnarray}

The remainder of the proof relies on the following probabilistic packing.

\begin{lemma}
  \label{lem:bundle_packing} 
  Let $d_2 \geq d_1$ be sufficiently large positive integers. 
  Then for each $r\in\{1,\ldots,d_1 \}$, and for any positive 
  $\delta>0$ there exists a family of $d_1\times d_2$ dimensional matrices 
  $\{\Theta^{(1)},\ldots,\\\Theta^{(M(\delta))}\}$ with cardinality 
  $M(\delta) = \lfloor (1/4)\exp(r d_2 /576)\rfloor$ such that each matrix is rank $r$ and the following bounds hold:
  \begin{eqnarray}
    \fnorm{\Theta^{(\ell)}}&\leq& \delta \;, \text{ for all }\ell\in[M]\\
    \fnorm{\Theta^{(\ell_1)} - \Theta^{(\ell_2)}} &\geq& \frac12\delta \;, \text{ for all } 
    \ell_1, \ell_2\in[M] \label{eq:packing_lower}\\
    \Theta^{(\ell)} &\in& \Omega_{\tilde{\bb}} \;, \text{ for all } \ell\in [M]\;, 
  \end{eqnarray}
  with $\tilde{\bb}=   (8\delta/d_2)\sqrt{2\log d} $ for $d=(d_1+d_2)/2$.
\end{lemma}

Suppose $\delta\leq \alpha d_2/(8 \sqrt{2 \log d)}$ such that 
the matrices in the packing set are entry-wise bounded by $\bb$,  then the above lemma \ref{lem:bundle_packing} implies that $\fnorm{\Theta^{(\ell_1)} - \Theta^{(\ell_2)}}^2 \leq4\delta^2$, which gives 
\begin{eqnarray}
  \label{eq:prob-2item}
  \prob{ \hL\neq L } &\geq& 1- \frac{\frac{e^{5\bb}n4\delta^2}{d_1d_2} + \log 2}{\frac{rd_2}{576} - 2\log2}  \;\geq\; \frac{1}{2}\;,
\end{eqnarray}
where the last inequality holds for $\delta^2 \leq (r d_1 d_2^2/(1152e^{5\bb}n))$ and assuming $rd_2 \geq 1600$. Together with \eqref{eq:prob-2item} and \eqref{eq:packing_lower}, this proves that for all $\delta\leq\min\{ \alpha d_2/(8 \sqrt{2 \log d)},\\ \sqrt{r d_1 d_2^2/(9216\,e^{5\bb}n)}\}$, 
\begin{eqnarray*}
  \inf_{\hTheta} \sup_{\Theta^*\in\Omega_{\bb}} \E\Big[\, \fnorm{\hTheta-\Theta^*} \,\Big] &\geq& \delta/4\;.
\end{eqnarray*}
Choosing $\delta$ appropriately to maximize the right-hand side finishes the proof of the desired claim. 
Also by symmetry, we can apply the same argument to get similar bound with $d_1$ and $d_2$ interchanged.

%====================================================================================================
\subsection{Proof of Lemma \ref{lem:bundle_packing}} 
\label{sec:bundle_packing_proof}

We show that the following procedure succeeds in producing the desired family with probability at least half, which proves its  existence. 
Let $d=(d_1+d_2)/2$, and suppose $d_2\geq d_1$ without loss of generality. 
For the choice of  $M'=e^{r d_2 /576}$, and for each $\ell \in [M']$, generate a rank-$r$ matrix $\Theta^{(\ell)}\in\reals^{d_1\times d_2}$ as follows: 
\begin{eqnarray}
  \Theta^{(\ell)} &=& \frac{\delta}{\sqrt{r d_2}} U(V^{(\ell)})^T - \frac{\delta}{\sqrt{r d_2}} \frac{\ones^TU(V^{(\ell)})^T\ones}{d_1d_2}\ones\ones^T \;, 
  \label{eq:bundle_defpacking}
\end{eqnarray}
where 
$U\in\reals^{d_1\times r}$ is a random orthogonal basis such that $U^TU=\id_{r\times r}$ and $V^{(\ell)} \in\reals^{d_2\times r}$ 
is a random matrix with each entry $V^{(\ell)}_{ij} \in\{-1,+1\}$ chosen independently and uniformly at random. 
By construction, notice that $\fnorm{\Theta^{(\ell)}} \leq (\delta/\sqrt{rd_2})\fnorm{U(V^{(\ell)})^T}  = \delta$. 

Now, by triangular inequality, we have 
\begin{eqnarray*}
    &&\fnorm{\Theta^{(\ell_1)}-\Theta^{(\ell_2)}} \\&\geq& 
    \frac{\delta}{\sqrt{r d_2}} \fnorm{U(V^{(\ell_1)} - V^{(\ell_2)})^T } -  \frac{\delta\,|\ones^T U(V^{(\ell_1)} - V^{(\ell_2)})^T\ones|}{d_1d_2\sqrt{r d_2}}  \fnorm{ \ones\ones^T } \\
    &\geq& \frac{\delta}{\sqrt{r d_2}} \underbrace{\fnorm{V^{(\ell_1)} - V^{(\ell_2)} }}_{A} \,-\,  \frac{\delta}{\sqrt{r\, d_1\,d_2^2}}\big(\underbrace{|\ones^T U(V^{(\ell_1)})^T\ones|}_{B} +|\ones^T U (V^{(\ell_2)})^T \ones|\big)   \;.
\end{eqnarray*}
We will prove that the first term is bounded by $A\geq \sqrt{rd_2}$ with probability at least $7/8$ for all $M'$ matrices, 
and we will show that we can find $M$ matrices such that the second term is bounded by 
$B\leq 8 \sqrt{2 r d_2 \log (32 r) \log (32 d)}$ with probability at least $7/8$. 
Together, this proves that with probability at least $3/4$, there exists $M$ matrices such that 
\begin{eqnarray*}
    \fnorm{\Theta^{(\ell_1)}-\Theta^{(\ell_2)}} &\geq& \delta \Big(1-\sqrt\frac{2^7 \log (32r) \,\log (32d)}{ d_1 d_2}\Big)  \; \geq \; \frac{1}{2}\delta\;,
\end{eqnarray*}
for all $\ell_1$, $\ell_2\in[M]$ and for sufficiently large $d_1$ and $d_2$. 

Applying similar McDiarmid's inequality as Eq. \eqref{eq:mcdiarmid1} in Appendix \ref{sec:kwise_lb_proof}, it follows that 
$A^2 \geq r d_2 $ with probability at least $7/8$ for $M'= e^{r d_2/576}$ and a sufficiently large $d_2$.  

To prove a bound on $B$, we will show that for a given $\ell$, 
\begin{eqnarray}
    \prob{|\ones^TU(V^{(\ell)})^T\ones| \leq 8 \sqrt{2 r d_2 \log (32 r) \log (32 d)}} \;\geq\; \frac78 \;. \label{eq:bundle_diffbound1}
\end{eqnarray}
Then using the similar technique as in \eqref{eq:unioncardinality}, it follows that we can find $M=(1/4)M'$ matrices 
all satisfying this bound and also the bound on the max-entry in \eqref{eq:maxentrybound2}. 
We are left to prove \eqref{eq:bundle_diffbound1}. We apply a series of concentration inequalities. Let $H_1$ be the event that $\{|\llangle V^{(\ell)}_i,\ones \rrangle| \leq \sqrt{2 d_2 \log(32 r)} \text{ for all } i\in[r]\}$.
Then, applying the standard Hoeffding's inequality, we get that 
$\prob{ H_1 } \geq 15/16$, where $V^{(\ell)}_i$ is the $i$-th column of $V^{(\ell)}$. 
We next change the variables and represent $\ones^T U$ as 
$\sqrt{d_1} u^T \tilde{U}$, where $u$ is drawn uniformly at random from the unit sphere and $\tilde{U}$ is a $r$ dimensional subspace drawn uniformly  at random. 
By symmetry, $\sqrt{d_1} u^T \tilde{U}$ have the same distribution as $\ones^T U$. 
Let $H_2$ be the event that $\{ | \llangle \tilde{U}_i,(V^{(\ell)})^T \ones \rrangle | \leq \sqrt{16 r (d_2/d_1) \log(32r)\log(32 d) } \text{ for all } i \in [d_1]\}$, where $\tilde{U}_i$ is the $i$-th row of $\tilde{U}$. 
Then, applying Levy's theorem for concentration on the sphere \cite{Led01}, we have 
$\prob{H_2 | H_1} \geq 15/16$.
Finally, let $H_3$ be the event that 
$\{| \sqrt{d_1} \llangle u , \tilde{U} (V^{(\ell)})^T \rrangle \ones | \leq 8\sqrt{2 r d_2 \log(32 r)\log(32 d)} \}$. 
Then, again applying Levy's concentration, we get 
$\prob{H_3|H_1,H_2} \geq 15/16$. 
Collecting all three concentration inequalities, we get that with probability at least $13/16$, 
$|\ones^TU(V^{(\ell)})^T\ones| \leq 8\sqrt{2rd_2 \log(32r) \log (32d)}$, which proves Eq. \eqref{eq:bundle_diffbound1}.

We are left to prove that $\Theta^{(\ell)}$'s are in 
$\Omega_{(8\delta/d_2)\sqrt{2\log d_2}}$ as defined in \eqref{eq:defbundleomega}. 
 Similar to Eq. \eqref{eq:lb_maxbound}, applying Levy's concentration gives 
\begin{eqnarray}
  \prob{\, \max_{i,j} |\Theta^{(\ell)}_{ij}|  \leq \frac{2\delta\sqrt{32 \log d_2} }{d_2}\,} &\geq& 1-2 \exp \Big\{ -2 \log d_2 \Big\} \;\geq \;\frac12 \;,
  \label{eq:maxentrybound2}
\end{eqnarray}
for a fixed $\ell\in[M']$. 
 Then using the similar technique as in \eqref{eq:unioncardinality},  
it follows that  there exists $M=(1/4)M'$ matrices 
 all satisfying this bound and also the bound on $B$ in Eq. \eqref{eq:bundle_diffbound1}. 

\iffalse
\section*{Appendix A.}
\label{app:theorem}

% Note: in this sample, the section number is hard-coded in. Following
% proper LaTeX conventions, it should properly be coded as a reference:

%In this appendix we prove the following theorem from
%Section~\ref{sec:textree-generalization}:

In this appendix we prove the following theorem from
Section~6.2:

\noindent
{\bf Theorem} {\it Let $u,v,w$ be discrete variables such that $v, w$ do
not co-occur with $u$ (i.e., $u\neq0\;\Rightarrow \;v=w=0$ in a given
dataset $\dataset$). Let $N_{v0},N_{w0}$ be the number of data points for
which $v=0, w=0$ respectively, and let $I_{uv},I_{uw}$ be the
respective empirical mutual information values based on the sample
$\dataset$. Then
\[
	N_{v0} \;>\; N_{w0}\;\;\Rightarrow\;\;I_{uv} \;\leq\;I_{uw}
\]
with equality only if $u$ is identically 0.} \hfill\BlackBox

\noindent
{\bf Proof}. We use the notation:
\[
P_v(i) \;=\;\frac{N_v^i}{N},\;\;\;i \neq 0;\;\;\;
P_{v0}\;\equiv\;P_v(0)\; = \;1 - \sum_{i\neq 0}P_v(i).
\]
These values represent the (empirical) probabilities of $v$
taking value $i\neq 0$ and 0 respectively.  Entropies will be denoted
by $H$. We aim to show that $\fracpartial{I_{uv}}{P_{v0}} < 0$....\\

{\noindent \em Remainder omitted in this sample. See http://www.jmlr.org/papers/ for full paper.}
\fi

\vskip 0.2in
%\bibliography{ranking}
\end{document}